\newcommand*{\ICML}{}
\newcommand*{\CAMREADY}{}
	\renewcommand{\cite}[1]{\citep{#1}}
	\definecolor{mydarkblue}{rgb}{0,0.08,0.55}
	\renewenvironment{abstract}
	{\centerline{\large\bf Abstract}\vspace{0.7ex}%
		\bgroup\leftskip 20pt\rightskip 20pt\small\noindent\ignorespaces}%
	{\par\egroup\vskip 0.25ex}
	\title{\selectfont\textbf{Implicit Regularization in Hierarchical Tensor Factorization and Deep Convolutional Neural Networks}}
	\author{%
		Noam Razin\\ 
		{\fontsize{10pt}{10pt}\selectfont Tel Aviv University}\\
		{\fontsize{9pt}{9pt}\selectfont\texttt{noamrazin@mail.tau.ac.il}} \\
		\and
		Asaf Maman\\
		{\fontsize{10pt}{10pt}\selectfont Tel Aviv University}\\
		{\fontsize{9pt}{9pt}\selectfont\texttt{asafmaman@mail.tau.ac.il}}		
		\and
		Nadav Cohen\\
		{\fontsize{10pt}{10pt}\selectfont Tel Aviv University}\\
		{\fontsize{9pt}{9pt}\selectfont\texttt{cohennadav@cs.tau.ac.il}}		
	}	
	\date{}
	\newtheorem{claim}[theorem]{Claim}
	\newtheorem{fact}[theorem]{Fact}
	\newtheorem{procedure}{Procedure}
	\newtheorem{conjecture}{Conjecture}	
	\newtheorem{hypothesis}{Hypothesis}	
	\newcommand{\qed}{\hfill\ensuremath{\blacksquare}}
	\newtheorem{lemma}{Lemma}
	\newtheorem{theorem}{Theorem}
	\newtheorem{proposition}{Proposition}
	\theoremstyle{definition}
	\newtheorem{definition}{Definition}
	\newcommand{\lem}{Lem.}
	\newcommand{\lems}{Lem.}
	\newcommand{\thm}{Thm.}
	\newcommand{\prop}{Prop.}
	\newcommand{\defin}{Def.}
	\newcommand{\eq}{Eq.}
	\newcommand{\eqs}{Eqs.}
	\newcommand{\fig}{Fig.}
	\newcommand{\figs}{Figs.}
	\newcommand{\tab}{Table}
	\newcommand{\sect}{Section}
	\newcommand{\sects}{Sections}
	\newcommand{\subsect}{Section}
	\newcommand{\subsects}{Sections}	
	\newcommand{\app}{Appendix}
	\newcommand{\subapp}{Appendix}
	\newcommand{\lem}{Lemma}
	\newcommand{\lems}{Lemmas}
	\newcommand{\thm}{Theorem}
	\newcommand{\prop}{Proposition}
	\newcommand{\defin}{Definition}
	\newcommand{\eq}{Equation}
	\newcommand{\eqs}{Equations}
	\newcommand{\fig}{Figure}
	\newcommand{\figs}{Figures}
	\newcommand{\tab}{Table}
	\newcommand{\sect}{Section}
	\newcommand{\sects}{Sections}
	\newcommand{\subsect}{Section}
	\newcommand{\subsects}{Sections}	
	\newcommand{\app}{Appendix}
	\newcommand{\subapp}{Appendix}
\definecolor{green}{rgb}{0.0, 0.6, 0}
\def\be{\begin{equation}}
	\def\ee{\end{equation}}
\def\beas{\begin{eqnarray*}}
	\def\eeas{\end{eqnarray*}}
\def\bea{\begin{eqnarray}}
	\def\eea{\end{eqnarray}}
\newcommand{\hbf}{{\mathbf h}}
\newcommand{\xbf}{{\mathbf x}}
\newcommand{\ybf}{{\mathbf y}}
\newcommand{\ubf}{{\mathbf u}}
\newcommand{\vbf}{{\mathbf v}}
\newcommand{\wbf}{{\mathbf w}}
\newcommand{\ebf}{{\mathbf e}}
\newcommand{\Abf}{{\mathbf A}}
\newcommand{\Bbf}{{\mathbf B}}
\newcommand{\Wbf}{{\mathbf W}}
\newcommand{\Xbf}{{\mathbf X}}
\newcommand{\Qbf}{{\mathbf Q}}
\newcommand{\Ubf}{{\mathbf U}}
\newcommand{\Vbf}{{\mathbf V}}
\newcommand{\A}{{\mathcal A}}
\newcommand{\B}{{\mathcal B}}
\newcommand{\RR}{{\mathcal R}}
\renewcommand{\S}{{\mathcal S}}
\newcommand{\T}{{\mathcal T}}
\newcommand{\U}{{\mathcal U}}
\newcommand{\V}{{\mathcal V}}
\newcommand{\W}{{\mathcal W}}
\renewcommand{\L}{\mathcal{L}}
\newcommand{\R}{{\mathbb R}}
\newcommand{\N}{{\mathbb N}}
\newcommand{\abs}[1]{\left\lvert #1 \right\rvert}
\newcommand{\norm}[1]{\left\| #1 \right\|}
\newcommand{\normbig}[1]{\bigl \| #1 \bigr \|}
\newcommand{\normnoflex}[1]{\| #1 \|}
\newcommand{\inprod}[2]  {\left\langle{#1},{#2}\right\rangle}
\newcommand{\inprodbig}[2]  {\bigl \langle{#1},{#2} \bigr \rangle}
\newcommand{\inprodBig}[2]  {\Bigl \langle{#1},{#2} \Bigr \rangle}
\newcommand{\inprodbigg}[2]  {\biggl \langle{#1},{#2} \biggr \rangle}
\newcommand{\inprodnoflex}[2]{\langle{#1},{#2}\rangle}
\newcommand{\vectbig}[1]{\text{vec}\bigl ( #1 \bigr )} 
\newcommand{\vectnoflex}[1]{\text{vec}( #1 )} 
\newcommand{\mat}[2]{\left\llbracket#1 ; #2\right\rrbracket}
\newcommand{\matbig}[2]{\bigl\llbracket#1 ; #2 \bigr\rrbracket}
\newcommand{\tenp}{\otimes}
\newcommand{\kronp}{\odot}
\definecolor{xcolor-gray}{gray}{0.95}
\newcommand{\rank}{\mathrm{rank}}
\newcommand{\seprank}{\mathrm{sep}}
\newcommand{\matrixend}{\Wbf_{\text{M}}}
\newcommand{\tftensorend}{\W_{\text{T}}}
\newcommand{\tensorend}{\W_{\text{H}}}
\newcommand{\tensorendmap}{\mathcal{H}}
\newcommand{\htmodetree}{\T}
\newcommand{\mfendloss}{\L_{\text{M}}}
\newcommand{\mfobj}{\phi_{\text{M}}}
\newcommand{\tfendloss}{\L_{\text{T}}}
\newcommand{\tfobj}{\phi_{\text{T}}}
\newcommand{\htfendloss}{\L_{\text{H}}}
\newcommand{\htfobj}{\phi_{\text{H}}}
\newcommand{\mfcomp}[1]{\mathcal{C}^{ ( #1 ) }_{\text{M}}}
\newcommand{\tfcomp}[1]{\mathcal{C}^{ ( #1 ) }_{\text{T}}}
\newcommand{\htfcomp}[2]{\mathcal{C}^{ ( #1, #2 ) }_{\text{H}}}
\newcommand{\mfsing}[1]{\sigma^{ ( #1 ) }_{\text{M}}}
\newcommand{\tfcompnorm}[1]{\sigma^{ ( #1 ) }_{\text{T}}}
\newcommand{\htfcompnorm}[2]{\sigma^{ ( #1, #2 ) }_{\text{H}}}
\newcommand{\tensorpart}[2]{\W^{(#1, #2)}}
\newcommand{\weightmat}[1]{\Wbf^{(#1)}}
\newcommand{\padcol}{\mathrm{PadC}}
\newcommand{\padrow}{\mathrm{PadR}}
\newcommand{\localcomp}{\mathrm{LC}}
\newcommand{\children}{C}
\newcommand{\subtree}{\htmodetree}
\newcommand{\parent}{Pa}
\newcommand{\interior}{\mathrm{int}}
\newcommand{\lsib}{\mathrm{\overleftarrow{S}}}
\newcommand{\rsib}{\mathrm{\overrightarrow{S}}}
\DeclareFontFamily{U}{mathx}{\hyphenchar\font45}
\DeclareFontShape{U}{mathx}{m}{n}{<-> mathx10}{}
\DeclareSymbolFont{mathx}{U}{mathx}{m}{n}
\DeclareMathAccent{\widebar}{0}{mathx}{"73}
\definecolor{darkspringgreen}{rgb}{0.09, 0.45, 0.27}
	\renewcommand{\endnote}[1]{\null} 
	\newcommand*{\ABBR}{}
	\newcommand*{\ABBR}{}
	\newcommand*{\ABBR}{}
	\newcommand*{\ABBR}{}
	\newcommand*{\ABBR}{}
	\newcommand{\eg}{{\it e.g.}}
	\newcommand{\ie}{{\it i.e.}}
	\newcommand{\cf}{{\it cf.}}
\begin{document}
	

	\ifdefined\ARXIV
		\setenumerate{itemsep=0pt}
		\setitemize{itemsep=1pt}
		\setlength{\parskip}{0.1em}
	
		\maketitle
	\fi
	\ifdefined\NEURIPS
	\title{Paper Title}
		\author{
			Author 1 \\
			Author 1 Institution \\	
			\texttt{author1@email} \\
			\And
			Author 1 \\
			Author 1 Institution \\	
			\texttt{author1@email} \\
		}
		\maketitle
	\fi
	\ifdefined\CVPR
		\title{Paper Title}
		\author{
			Author 1 \\
			Author 1 Institution \\	
			\texttt{author1@email} \\
			\and
			Author 2 \\
			Author 2 Institution \\
			\texttt{author2@email} \\	
			\and
			Author 3 \\
			Author 3 Institution \\
			\texttt{author3@email} \\
		}
		\maketitle
	\fi
	\ifdefined\AISTATS
		\twocolumn[
		\aistatstitle{Paper Title}
		\ifdefined\CAMREADY
			\aistatsauthor{Author 1 \And Author 2 \And Author 3}
			\aistatsaddress{Author 1 Institution \And Author 2 Institution \And Author 3 Institution}
		\else
			\aistatsauthor{Anonymous Author 1 \And Anonymous Author 2 \And Anonymous Author 3}
			\aistatsaddress{Unknown Institution 1 \And Unknown Institution 2 \And Unknown Institution 3}
		\fi
		]	
	\fi
	\ifdefined\ICML
		\icmltitlerunning{Implicit Regularization in Hierarchical Tensor Factorization and Deep Convolutional Neural Networks}
		\twocolumn[
		\icmltitle{Implicit Regularization in Hierarchical Tensor Factorization \\ and Deep Convolutional Neural Networks} 
		\icmlsetsymbol{equal}{*}
		\begin{icmlauthorlist}
			\icmlauthor{Noam Razin}{tau} 
			\icmlauthor{Asaf Maman}{tau}
			\icmlauthor{Nadav Cohen}{tau}
		\end{icmlauthorlist}
		\icmlaffiliation{tau}{Blavatnik School of Computer Science, Tel Aviv University, Israel}
		\icmlcorrespondingauthor{Noam Razin}{noamrazin@mail.tau.ac.il}
		\icmlkeywords{Implicit Regularization, Hierarchical Tensor Factorization, Convolutional Neural Networks}
		\vskip 0.3in
		]
		\printAffiliationsAndNotice{} 
	\fi
	\ifdefined\ICLR
		\title{Paper Title}
		\author{
			Author 1 \\
			Author 1 Institution \\
			\texttt{author1@email}
			\And
			Author 2 \\
			Author 2 Institution \\
			\texttt{author2@email}
			\And
			Author 3 \\ 
			Author 3 Institution \\
			\texttt{author3@email}
		}
		\maketitle
	\fi
	\ifdefined\COLT
		\title{Paper Title}
		\coltauthor{
			\Name{Author 1} \Email{author1@email} \\
			\addr Author 1 Institution
			\And
			\Name{Author 2} \Email{author2@email} \\
			\addr Author 2 Institution
			\And
			\Name{Author 3} \Email{author3@email} \\
			\addr Author 3 Institution}
		\maketitle
	\fi

	\begin{abstract}
In the pursuit of explaining implicit regularization in deep learning, prominent focus was given to matrix and tensor factorizations, which correspond to simplified neural networks.
It was shown that these models exhibit an implicit tendency towards low matrix and tensor ranks, respectively.
Drawing closer to practical deep learning, the current paper theoretically analyzes the implicit regularization in hierarchical tensor factorization, a model equivalent to certain deep convolutional neural networks.
Through a dynamical systems lens, we overcome challenges associated with hierarchy, and establish implicit regularization towards low hierarchical tensor rank.
This translates to an implicit regularization towards locality for the associated convolutional networks.
Inspired by our theory, we design explicit regularization discouraging locality, and demonstrate its ability to improve the performance of modern convolutional networks on non-local tasks, in defiance of conventional wisdom by which architectural changes are needed.
Our work highlights the potential of enhancing neural networks via theoretical analysis of their implicit regularization.
\end{abstract}

	\ifdefined\COLT
		\medskip
		\begin{keywords}
			\emph{TBD}, \emph{TBD}, \emph{TBD}
		\end{keywords}
	\fi

	
	\section{Introduction}
\label{sec:intro}

One of the central mysteries in deep learning is the ability of neural networks to generalize when having far more learnable weights than training examples.
The fact that this generalization takes place even in the absence of any explicit regularization~\cite{zhang2017understanding} has given rise to a common view by which gradient-based optimization induces an \emph{implicit regularization}~---~a tendency to fit data with predictors of low complexity (see, \eg,~\citet{neyshabur2017implicit}).
Efforts to mathematically formalize this intuition have led to theoretical focus on \emph{matrix} and \emph{tensor factorizations}.\footnote{
	The term “tensor factorization'' refers throughout to the classic \emph{CP factorization} (see~\citet{kolda2009tensor,hackbusch2012tensor} for an introduction to tensor factorizations).
}

Matrix factorization refers to minimizing a given loss (over matrices) by parameterizing the solution as a product of matrices, and optimizing the resulting objective via gradient descent.
Tensor factorization is a generalization of this procedure to multi-dimensional arrays.
There, a tensor is learned through gradient descent over a sum-of-outer-products parameterization.
Various works analyzed the implicit regularization in matrix and tensor factorizations~\cite{gunasekar2017implicit,li2018algorithmic,du2018algorithmic,arora2019implicit,razin2020implicit,chou2020gradient,li2021towards,razin2021implicit,ge2021understanding}.
Though initially conjectured to be equivalent to norm minimization (see~\citet{gunasekar2017implicit}), recent studies~\cite{arora2019implicit,razin2020implicit,chou2020gradient,li2021towards,razin2021implicit} suggest that this is not the case in general, and instead adopt a dynamical systems approach.
They establish that gradient descent (with small learning rate and near-zero initialization) induces a momentum-like effect on the components of a factorization, leading them to move slowly when small and quickly when large.
This implies a form of incremental learning that results in low matrix rank solutions for matrix factorization, and low tensor rank solutions for tensor factorization.

From a deep learning perspective, matrix factorization can be seen as a linear neural network (fully connected neural network with linear activation), and, in a similar vein, tensor factorization corresponds to a certain shallow (depth two) non-linear convolutional neural network (see~\citet{cohen2016expressive,razin2021implicit}).
As theoretical surrogates for deep learning, the practical relevance of these models is limited.
The former lacks non-linearity, while the latter misses depth~---~both crucial features of modern neural networks.
A natural extension of matrix and tensor factorizations that accounts for both non-linearity and depth is \emph{hierarchical tensor factorization},\footnote{
	The term “hierarchical tensor factorization'' refers throughout to a variant of the Hierarchical Tucker factorization \cite{hackbusch2009new}, presented in \sect~\ref{sec:htf}.
}
which corresponds to a class of \emph{deep non-linear} convolutional neural networks~\cite{cohen2016expressive} that have demonstrated promising performance in practice~\cite{cohen2014simnets,cohen2016deep,sharir2016tensorial,stoudenmire2018learning,grant2018hierarchical,felser2021quantum}, and have been key to the study of expressiveness in deep learning~\cite{cohen2016expressive,cohen2016convolutional,cohen2017inductive,cohen2017analysis,cohen2018boosting,sharir2018expressive,levine2018benefits,levine2018deep,balda2018tensor,khrulkov2018expressive,khrulkov2019generalized,levine2019quantum}.

In this paper, we conduct the first theoretical analysis of implicit regularization in hierarchical tensor factorization.
As opposed to tensor factorization, which is a simple construct dating back to at least the early 20'th century~\cite{hitchcock1927expression}, hierarchical tensor factorization was formally introduced only recently~\cite{hackbusch2009new}, and is much more elaborate.
We circumvent the challenges brought forth by the added hierarchy through identification of \emph{local components}, and characterization of their evolution under gradient descent (with small learning rate and near-zero initialization).
The characterization reveals that they are subject to a momentum-like effect, identical to that in matrix and tensor factorizations.
Accordingly, local components are learned incrementally, leading to solutions with low \emph{hierarchical tensor rank}~---~a central concept in tensor analysis~\cite{grasedyck2010hierarchical,grasedyck2013literature}. 
Theoretical and empirical demonstrations validate our analysis.

For the deep convolutional networks corresponding to hierarchical tensor factorization, hierarchical tensor rank is known to measure the strength of dependencies modeled between spatially distant input regions (patches of pixels in the context of image classification)~---~see~\citet{cohen2017inductive,levine2018benefits,levine2018deep}.
The established tendency towards low hierarchical tensor rank therefore implies a bias towards local (short-range) dependencies, in accordance with the fact that convolutional networks often struggle or completely fail to learn tasks entailing long-range dependencies (see, \eg,~\citet{wang2016temporal,linsley2018learning,mlynarski2019convolutional,hong2020graph, kim2020disentangling}).
However, while this failure is typically attributed solely to a limitation in expressive capability (\ie~to an inability of convolutional networks to represent functions modeling long-range dependencies~---~see~\citet{cohen2017inductive,linsley2018learning,kim2020disentangling}), our analysis reveals that it also originates from implicit regularization.
This suggests that the difficulty in learning long-range dependencies may be countered via \emph{explicit} regularization, in contrast to conventional wisdom by which architectural modifications are needed.
Through a series of controlled experiments we confirm this prospect, demonstrating that explicit regularization designed to promote high hierarchical tensor rank can significantly improve the performance of modern convolutional networks (\eg~ResNet18 and ResNet34 from~\citet{he2016deep}) on tasks involving long-range dependencies.

Our results bring forth the possibility that deep learning architectures considered suboptimal for certain tasks (\eg~convolutional networks for natural language processing tasks) may be greatly improved through a right choice of explicit regularization.
Theoretical understanding of implicit regularization may be key to discovering such regularizers.

\medskip

The remainder of the paper is organized as follows.
\sect~\ref{sec:prelim} outlines existing dynamical characterizations of implicit regularization in matrix and tensor factorizations.
\sect~\ref{sec:htf} presents the hierarchical tensor factorization model, as well as its interpretation as a deep non-linear convolutional network.
In \sect~\ref{sec:inc_rank_lrn} we characterize the dynamics of gradient descent over hierarchical tensor factorization, establishing that they lead to low hierarchical tensor rank.
\sect~\ref{sec:low_htr_implies_locality} explains why low hierarchical tensor rank means locality for the corresponding convolutional network.
In \sect~\ref{sec:countering_locality} we demonstrate that the locality of modern convolutional networks can be countered using dedicated explicit regularization.
Finally, \sect~\ref{sec:related} reviews related work and \sect~\ref{sec:summary} provides a summary.

	\section{Preliminaries: Matrix and Tensor Factorizations}
\label{sec:prelim}

To put our work into context, we overview known characterizations of implicit regularization in matrix and tensor factorizations.

Throughout the paper, when referring to a norm we mean the standard Frobenius (Euclidean) norm, denoted $\norm{\cdot}$.
For $N \in \N$, we let $[N] := \{ 1, \ldots, N \}$.
For vectors, matrices, or tensors, parenthesized superscripts denote elements in a collection, \eg~$( \wbf^{(n)} \in \R^D )_{ n = 1}^N$, while subscripts refer to entries, \eg~$\Wbf_{i,j} \in \R$ is the $(i,j)$'th entry of $\Wbf \in \R^{D, D'}$.
An exception to this rule are the subscripts “M'', “T'', and “H'', which signify relation to matrix, tensor, and hierarchical tensor factorizations, respectively.
A colon indicates all entries in an axis, \eg~$\Wbf_{i, :} \in \R^{D'}$ is the $i$'th  row and $\Wbf_{:, j} \in \R^{D}$ is the $j$'th column of~$\Wbf$.

\begin{figure*}[t]
	\vspace{-1mm}
	\begin{center}
		\hspace{-3mm}
		\includegraphics[width=1\textwidth]{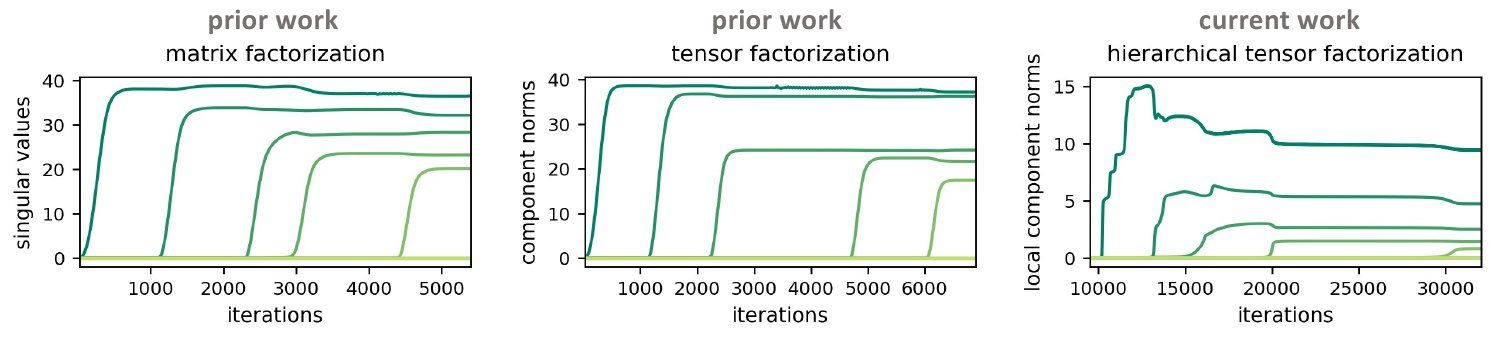}
	\end{center}
	\vspace{-4mm}
	\caption{
		Dynamics of gradient descent over matrix, tensor, and hierarchical tensor factorizations~---~incremental learning leads to low matrix, tensor, and hierarchical tensor ranks, respectively.
		\textbf{Left:} top $10$ singular values of the end matrix in a depth $3$ matrix factorization when minimizing the mean squared error over observed entries from a matrix rank $5$ ground truth (matrix completion loss).
		\textbf{Middle:} top $10$ component norms of an order $3$ tensor factorization when minimizing the mean squared error over observed entries from a tensor rank $5$ ground truth (tensor completion loss).
		\textbf{Right:} top $10$ local component norms at node $\{1, 2, 3, 4 \}$ of an order $4$ hierarchical tensor factorization induced by a perfect binary mode tree (\defin~\ref{def:mode_tree}), when minimizing the mean squared error over observed entries from a hierarchical tensor rank $(5, 5, 5, 5, 5, 5)$ (\defin~\ref{def:ht_rank}) ground truth (tensor completion loss).
		\textbf{All:} initial factorization weights were sampled independently from a zero-mean Gaussian distribution. 
		Notice that, in accordance with existing analyses for matrix and tensor factorizations (\sect~\ref{sec:prelim}) and our analysis for hierarchical tensor factorization (\sect~\ref{sec:inc_rank_lrn}), the singular values, component norms, and local component norms move slowly when small and quickly when large, creating an incremental learning process that results in effectively low matrix, tensor, and hierarchical tensor rank solutions, respectively.
		In all factorizations this implicit regularization led to accurate reconstruction of the low rank ground truth (reconstruction errors were $0.001$, $0.001$, and $0.005$, respectively).
		For further details such as loss definitions and factorization dimensions, as well as additional experiments for hierarchical tensor factorization, see	\app~\ref{app:experiments}.
	}
	\label{fig:mf_tf_htf_dynamics}
	\vspace{-1mm}
\end{figure*}

\subsection{Matrix Factorization: Incremental Matrix Rank Learning}
\label{sec:prelim:mf}

Consider the task of minimizing a differentiable and locally smooth\footnote{
	A differentiable function $g: \R^D \to \R$ is \emph{locally smooth} if for any compact subset $\B \subset \R^D$ there exists $\beta \in \R_{\geq 0}$ such that $\normnoflex{ \nabla g(\xbf) - \nabla g(\ybf) } \leq \beta \cdot \norm{\xbf - \ybf}$ for all $\xbf, \ybf \in \B$.
}
loss $\mfendloss : \R^{D, D'} \,{\to}\, \R_{\geq 0}$ ($D, D' \in \N$).
For example, $\mfendloss$ can be a matrix completion loss~---~mean squared error over observed entries from a ground truth matrix.
Matrix factorization with hidden dimensions $D_2, \ldots, D_{L} \in \N$ refers to parameterizing the solution $\matrixend \in \R^{D, D'}$ as a product of $L$ matrices, \ie~as $\matrixend = \Wbf^{(1)} \cdots \Wbf^{(L)}$, where $\Wbf^{(l)} \in \R^{D_{l }, D_{l + 1}}$ for $l = 1, \ldots, L$, $D_1 := D$, and $D_{L + 1} := D'$, and minimizing the resulting objective $\mfobj \big ( \Wbf^{(1)}, \ldots, \Wbf^{(L)} \big ) := \mfendloss (\matrixend)$ using gradient descent.
We call $\matrixend$ the \emph{end matrix} of the factorization.
It is possible to explicitly constrain the values that $\matrixend$ can take by limiting the hidden dimensions $D_2, \ldots, D_L$.
However, from an implicit regularization perspective, the case of interest is where the search space is unconstrained, thus we consider $D_2, \ldots, D_L \geq \min \{ D, D' \}$.
Matrix factorization can be viewed as applying a linear neural network for minimizing $\mfendloss$, and as such, serves a prominent theoretical model in deep learning (see \eg~\citet{gunasekar2017implicit,du2018algorithmic,li2018algorithmic,arora2019implicit,gidel2019implicit,mulayoff2020unique,blanc2020implicit,gissin2020implicit,razin2020implicit,chou2020gradient,yun2021unifying,li2021towards}).

Several characterizations of implicit regularization in matrix factorization have suggested that gradient descent, with small learning rate and near-zero initialization, induces a form of incremental matrix rank learning~\cite{gidel2019implicit,gissin2020implicit,chou2020gradient,li2021towards}.
Below we follow the presentation of~\citet{arora2019implicit}, which in line with other analyses, modeled small learning rate through the infinitesimal limit, \ie~via \emph{gradient flow}:
\[
\begin{split}
\frac{d}{dt} \Wbf^{(l)} (t) = - \frac{\partial}{\partial \Wbf^{(l) } } \mfobj \big ( \Wbf^{(1)} (t), \ldots, \Wbf^{(L)} (t) \big )
\end{split}
\]
for all $t \geq 0$ and $l \in [L]$.
Under gradient flow, the difference $\Wbf^{(l)} (t)^\top \Wbf^{(l)} (t) - \Wbf^{(l + 1)} (t) \Wbf^{(l + 1)} (t)^\top$ remains constant through time for any $l \in [L - 1]$~\cite{arora2018optimization}.
This implies that the \emph{unbalancedness magnitude}, defined as $\max\nolimits_{l} \normnoflex{  \Wbf^{(l)} (t)^\top \Wbf^{(l)} (t) - \Wbf^{(l + 1)} (t) \Wbf^{(l + 1)} (t)^\top }$, does not change through time, thus becomes relatively small as optimization moves away from the origin, more so the closer initialization is to zero.
Accordingly, it is common practice to treat the case of unbalancedness magnitude zero as an idealization of standard near-zero initializations (see, \eg,~\citet{saxe2014exact,arora2018optimization,bartlett2018gradient,lampinen2019analytic,arora2019implicit,elkabetz2021continuous,bah2022learning}).

With unbalancedness magnitude zero, the $r$'th singular value of the end matrix $\matrixend (t) = \Wbf^{(1)} ( t ) \cdots \Wbf^{(L)} ( t )$ ($r \in \left [ \min \{ D, D' \} \right ]$), denoted $\mfsing{r} (t) \in \R$, evolves by (\cf~\citet{arora2019implicit}):\footnote{
	The dynamical characterization of singular values in \eq~\eqref{eq:sing_val_mf_dyn} requires $\mfendloss$ to be analytic, a property met by standard loss functions such as the square and cross-entropy losses.
}
\be
\vspace{0.5mm}
\frac{d}{dt} \mfsing{r} (t) = \mfsing{r} (t)^{2 - \frac{2}{L}} L \inprodbig{ - \nabla \mfendloss ( \matrixend (t) ) }{ \mfcomp{r} (t)  }
\text{\,,}
\label{eq:sing_val_mf_dyn}
\ee
where $\mfcomp{r} (t) := \ubf^{(r)} (t) \vbf^{(r)} (t)^\top \in \R^{D, D'}$ is the $r$'th singular component of $\matrixend (t)$, meaning $\ubf^{(r)} (t) \in \R^D$ and $\vbf^{(r)} (t) \in \R^{D'}$ are, respectively, left and right singular vectors of $\matrixend (t)$ corresponding to~$\mfsing{r} (t)$.
As evident from \eq~\eqref{eq:sing_val_mf_dyn}, 
two factors govern the evolution rate of a singular value $ \mfsing{r} (t)$.
The first factor, $\inprodnoflex{ - \nabla \mfendloss ( \matrixend (t) ) }{ \mfcomp{r} (t)  }$, is a projection of the singular component $\mfcomp{r} (t)$ onto $- \nabla \mfendloss ( \matrixend (t) )$, the direction of steepest descent with respect to the end matrix.
The more the singular component is aligned with $- \nabla \mfendloss ( \matrixend (t) )$, the faster the singular value grows.
The second, more critical factor, is $\mfsing{r} (t)^{2 - \frac{2}{L } } L$, which implies that the rate of change of the singular value is proportional to its size exponentiated by $2 - 2 / L$ (recall that $L$ is the depth of the matrix factorization).
This brings rise to a momentum-like effect, which attenuates the movement of small singular values and accelerates the movement of large ones.
We may thus expect that if the matrix factorization is initialized near the origin, singular values progress slowly at first, and then, one after the other they reach a critical threshold and quickly rise, until convergence is attained.
Such incremental learning phenomenon leads to low matrix rank solutions.
It is demonstrated empirically in \fig~\ref{fig:mf_tf_htf_dynamics} (left), which reproduces an experiment from~\citet{arora2019implicit}.
We note that under certain technical conditions, the incremental matrix rank learning phenomenon can be used to prove exact matrix rank minimization~\cite{li2021towards}.

\subsection{Tensor Factorization: Incremental Tensor Rank Learning}
\label{sec:prelim:tf}

A depth two matrix factorization boils down to parameterizing a sought-after solution as a sum of tensor (outer) products between column vectors of $\Wbf^{(1)}$ and row vectors of $\Wbf^{(2)}$. 
Namely, since $\matrixend = \Wbf^{(1)} \Wbf^{(2)}$ we may write $\matrixend = \sum_{r = 1}^{R} \Wbf^{(1)}_{:, r} \tenp \Wbf^{(2)}_{r, :}$, where $R$ is the dimension shared between $\Wbf^{(1)}$ and $\Wbf^{(2)}$, and $\tenp$ stands for the tensor product.\footnote{
	Given $\U \in \R^{D_1, \ldots, D_N}, \V \in \R^{ H_1, \ldots, H_K }$, their tensor product $\U \tenp \V \in \R^{D_1, \ldots, D_N, H_1, \ldots, H_K}$ is defined by $[\U \tenp \V]_{d_1, \ldots, d_{N + K}} = \U_{d_1, \ldots, d_N} \cdot \V_{d_{N + 1}, \ldots, d_{N + K}}$.
	For vectors $\ubf \in \R^D, \vbf \in \R^{D'}$, the tensor product $\ubf \tenp \vbf$ is equal to $\ubf \vbf^\top \in \R^{D, D'}$.
}
Note that the minimal number of summands $R$ required for $\matrixend$ to express a given matrix $\Wbf$ is precisely the latter's matrix rank.

By allowing each summand to be a tensor product of more than two vectors, we may transition from a factorization for matrices to a factorization for tensors.
In tensor factorization, a sought-after solution $\tftensorend \in \R^{D_1, \ldots, D_N}$~---~an \emph{order} $N \geq 3$ tensor with \emph{modes} (axes) of \emph{dimensions} $D_1, \ldots, D_N \in \N$~---~is parameterized as $\tftensorend = \sum_{r = 1}^R \Wbf^{(1)}_{:,r} \tenp \cdots \tenp \Wbf^{(N)}_{:, r}$, where $\Wbf^{(n)} \in \R^{D_n, R}$ for $n \in [N]$.
Each term $ \Wbf^{(1)}_{:,r} \tenp \cdots \tenp \Wbf^{(N)}_{:, r}$ in this sum is called a \emph{component}, and $\tftensorend$ is referred to as the \emph{end tensor} of the factorization.
Given a differentiable and locally smooth loss $\tfendloss : \R^{D_1, \ldots, D_N} \to \R_{\geq 0}$, \eg~mean squared error over observed entries from a ground truth tensor (\ie~a tensor completion loss), the goal is to minimize the objective $\tfobj \big (\Wbf^{(1)}, \ldots, \Wbf^{(N)} \big ) := \tfendloss ( \tftensorend )$.
In analogy with matrix factorization, the minimal number of components $R$ required for $\tftensorend$ to express a given tensor $\W \in \R^{D_1, \ldots, D_N}$ is defined to be the latter's \emph{tensor rank}, and the case of interest is when $R$ is sufficiently large to not restrict tensor rank (\ie~to admit an unconstrained search space).

Similarly to how matrix factorization corresponds to a linear neural network, tensor factorization is known (see \citet{cohen2016expressive,razin2021implicit}) to be equivalent to a certain shallow (depth two) non-linear convolutional network (with multiplicative non-linearity).
By virtue of this equivalence, illustrated in \fig~\ref{fig:tf_htf_as_convnet} (top), tensor factorization is considered closer to practical deep learning than matrix factorization.

\begin{figure*}[t]
	\vspace{-2mm}
	\begin{center}
		\includegraphics[width=0.77\textwidth]{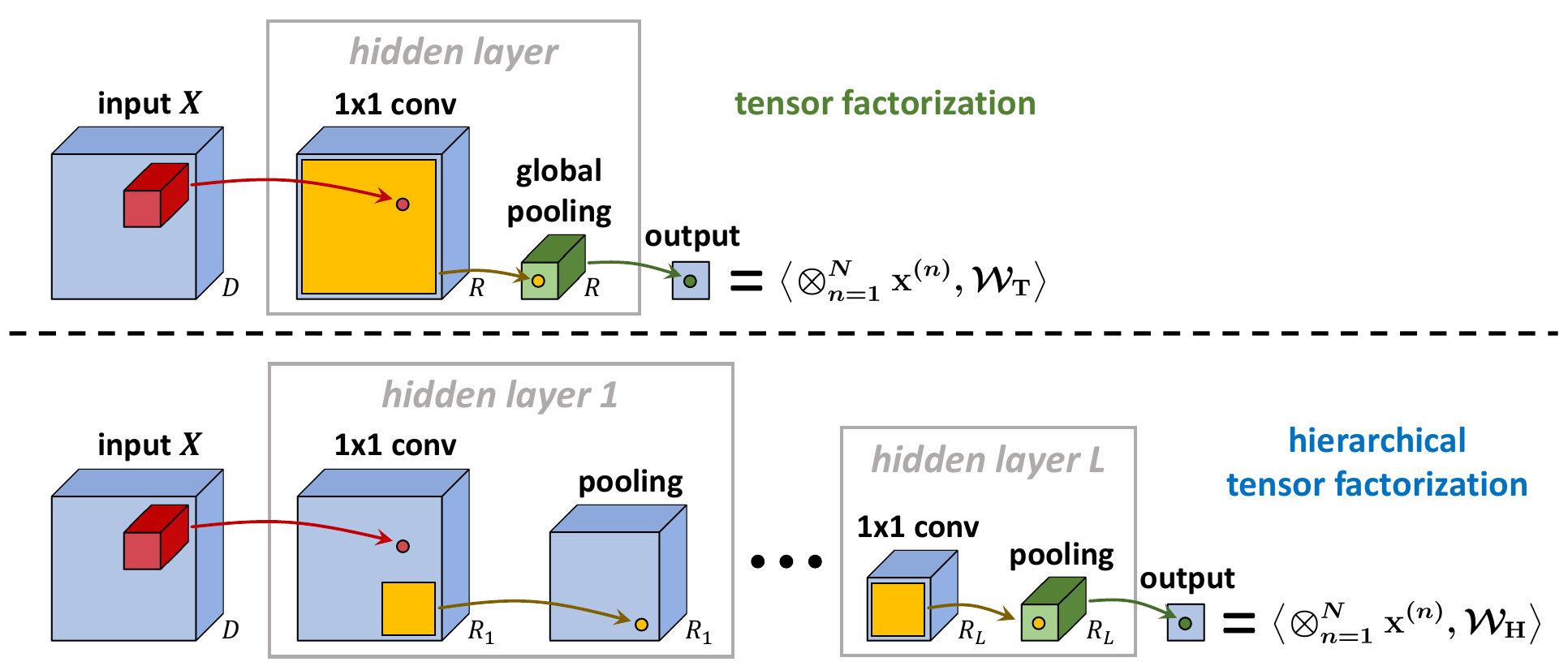}
	\end{center}
	\vspace{-2.75mm}
	\caption{
		Tensor factorization corresponds to a class of shallow (depth two) non-linear convolutional networks, while hierarchical tensor factorization corresponds to a class of \emph{deep} non-linear convolutional networks.
		These correspondences have been studied extensively (see references in \subsect~\ref{sec:htf:informal}).
		For completeness, we briefly describe them herein and provide a formal proof in \app~\ref{app:htf_cnn}.
		\textbf{Top:} the shallow network equivalent to tensor factorization processes an input $( \xbf^{(1)}, \ldots, \xbf^{(N)} ) \in \R^{D_1} \times \cdots \times \R^{D_N}$ (illustration assumes $D_1 = \cdots = D_N = D$ to avoid clutter) using a single hidden layer, which consists of: \emph{(i)} locally connected linear operator with $R$ channels, computing \smash{$( \Wbf^{(1)} )^\top \xbf^{(1)}, \ldots, ( \Wbf^{(N)} )^\top \xbf^{(N)}$} with learnable weights $\Wbf^{(1)}, \ldots, \Wbf^{(N)}$ (this operator is referred to as ‘‘$1 \times 1$ conv'' in appeal to the common case of weight sharing, \ie~$\Wbf^{(1)} = \cdots = \Wbf^{(N)}$); and \emph{(ii)} channel-wise global product pooling (multiplicative non-linearity).
		Summing over the resulting activations then yields the scalar output $\inprodbig{ \tenp_{n = 1}^N \xbf^{(n)} }{ \sum_{r = 1}^R \tenp_{n = 1}^N \Wbf^{(n)}_{:, r} } = \inprodbig{  \tenp_{n = 1}^N \xbf^{(n)}  }{ \tftensorend }$.
		Hence, functions realized by this class of networks are naturally represented via tensor factorization, where the number of components $R$ and the weight matrices $\Wbf^{(1)}, \ldots, \Wbf^{(N)}$ of the factorization correspond to the width and learnable weights of the network, respectively.
		\textbf{Bottom:} for a hierarchical tensor factorization induced by a perfect $P$-ary mode tree (\defin~\ref{def:mode_tree}), the equivalent network is a deep variant of that associated with tensor factorization.
		It has $L = \log_P N$ hidden layers instead of just one, with channel-wise product pooling operating over windows of size $P$ as opposed to globally.
		After passing an input \smash{$( \xbf^{(1)}, \ldots, \xbf^{(N)} ) \in \R^{D_1} \times \cdots \times \R^{D_N}$} through all hidden layers, a final linear layer produces the network's scalar output $\inprodbig{  \tenp_{n = 1}^N \xbf^{(n)}  }{ \tensorend }$, where $\tensorend$ is the end tensor of the hierarchical tensor factorization (\eq~\eqref{eq:ht_end_tensor}), whose weight matrices are equal to the network's learnable weights.
		Thus, functions realized by this class of networks are naturally represented via hierarchical tensor factorization.
		We note that, as shown in~\citet{cohen2016convolutional}, by considering \emph{generalized hierarchical tensor factorizations} it is possible to account for various non-linearities beyond multiplicative, \ie~for product pooling being converted to a different pooling operator (\eg~max or average), optionally preceded by a non-linear activation (\eg~rectified linear unit).
	}
	\label{fig:tf_htf_as_convnet}
	\vspace{-2mm}
\end{figure*}

As in matrix factorization, gradient flow over tensor factorization induces invariants of optimization.
In particular, the differences between squared norms of vectors in the same component, \ie~\smash{$\normnoflex{ \Wbf_{:,r}^{ (n) } (t) }^2 - \normnoflex{ \Wbf_{:, r}^{ (n') } (t) }^2$} for $n, n' \in [N]$ and $r \in [R]$, are constant through time~\cite{razin2021implicit}.
This leads to the following definition of unbalancedness magnitude: \smash{$\max_{n, n', r} \big | \normnoflex{ \Wbf_{:,r}^{ (n) } (t) }^2 - \normnoflex{ \Wbf_{:, r}^{ (n') } (t) }^2  \big |$}, which does not change during optimization, therefore remains small throughout if initialization is close to the origin.
Under the idealized assumption of unbalancedness magnitude zero (corresponding to infinitesimally small initialization), the norm of the $r$'th component in the factorization ($r \in [R]$), \ie~\smash{$\tfcompnorm{r} (t) := \normnoflex{ \tenp_{n = 1}^N \Wbf^{(n)}_{:, r} (t) }$}, evolves by (\cf~\citet{razin2021implicit}): 
\be
\frac{d}{dt} \tfcompnorm{r} (t) = \tfcompnorm{r} (t)^{2 - \frac{2}{N}} N \inprodbig{ - \nabla \tfendloss ( \tftensorend (t) ) }{ \tfcomp{r} (t) }
\text{\,,}
\label{eq:comp_norm_tf_dyn}
\ee
where $\tfcomp{r} (t) :=  \tenp_{n = 1}^N \widebar{\Wbf}^{(n)}_{:, r} (t)$, with $\widebar{\Wbf}^{(n)}_{:, r} (t)$ defined as $\Wbf^{(n)}_{:, r} (t) / \normnoflex{\Wbf^{(n)}_{:, r} (t)}$ for all $n \in [N]$ (by convention, if $\Wbf^{(n)}_{:, r} (t) = 0$ then $\widebar{\Wbf}^{(n)}_{:, r} (t) = 0$), denotes the $r$'th normalized component.
Comparing \eq~\eqref{eq:comp_norm_tf_dyn} to \eq~\eqref{eq:sing_val_mf_dyn} reveals that the evolution rate of a component norm in tensor factorization is \emph{structurally identical} to that of a singular value in matrix factorization.
Specifically, it is determined by two factors, analogous to those in \eq~\eqref{eq:sing_val_mf_dyn}:
\emph{(i)}~a projection of the normalized component \smash{$\tfcomp{r} (t)$ onto $ - \nabla \tfendloss ( \tftensorend (t) )$}, which encourages growth of components that are aligned with the direction of steepest descent with respect to the end tensor; 
and
\emph{(ii)}~\smash{$\tfcompnorm{r} (t)^{2 - \frac{2}{N}} N$}, which induces a momentum-like effect, leading component norms to move slower when small and faster when large.
This suggests that, in analogy with matrix factorization, components tend to be learned incrementally, yielding a bias towards low tensor rank.
\fig~\ref{fig:mf_tf_htf_dynamics} (middle) demonstrates the phenomenon empirically, reproducing an experiment from~\citet{razin2021implicit}.
Similarly to the case of matrix factorization, under certain technical conditions, the incremental tensor rank learning phenomenon can be used to prove exact tensor rank minimization~\cite{razin2021implicit}.

	\section{Hierarchical Tensor Factorization}
\label{sec:htf}

In this section we present the hierarchical tensor factorization model.
We begin by informally introducing the core concepts (\subsect~\ref{sec:htf:informal}), after which we delve into the formal definitions (\subsect~\ref{sec:htf:formal}).

\subsection{Informal Overview and Interpretation as Deep Non-Linear Convolutional Network}
\label{sec:htf:informal}

As discussed in \subsect~\ref{sec:prelim:tf}, tensor factorization produces an order $N$ end tensor through a sum of components, each combining $N$ vectors using the tensor product operator.
It is customary to represent this computation through a shallow tree structure with $N$ leaves, corresponding to the weight matrices $\weightmat{1}, \ldots, \weightmat{N}$, that are directly connected to the root, which computes the end tensor $\tftensorend = \sum_{r = 1}^R \Wbf^{(1)}_{:,r} \tenp \cdots \tenp \Wbf^{(N)}_{:, r}$.

Generalizing this scheme to an arbitrary tree gives rise to hierarchical tensor factorization.
Given a tree, or formally, a \emph{mode tree} of the hierarchical tensor factorization (\defin~\ref{def:mode_tree}), the scheme progresses from leaves to root.
Each internal node combines tensors produced by its children to form higher-order tensors, until finally the root outputs an order $N$ end tensor.
Different mode trees bring about different hierarchical tensor factorizations, which are essentially a composition of many local tensor factorizations, each corresponding to a different location in the mode tree.
We refer to the components of these local tensor factorizations as the \emph{local components} (\defin~\ref{def:local_comp}) of the hierarchical factorization~---~see \fig~\ref{fig:tf_htf_comp_tree} for an illustration.

\begin{figure*}
	\vspace{-0.25mm}
	\begin{center}
		\includegraphics[width=0.86\textwidth]{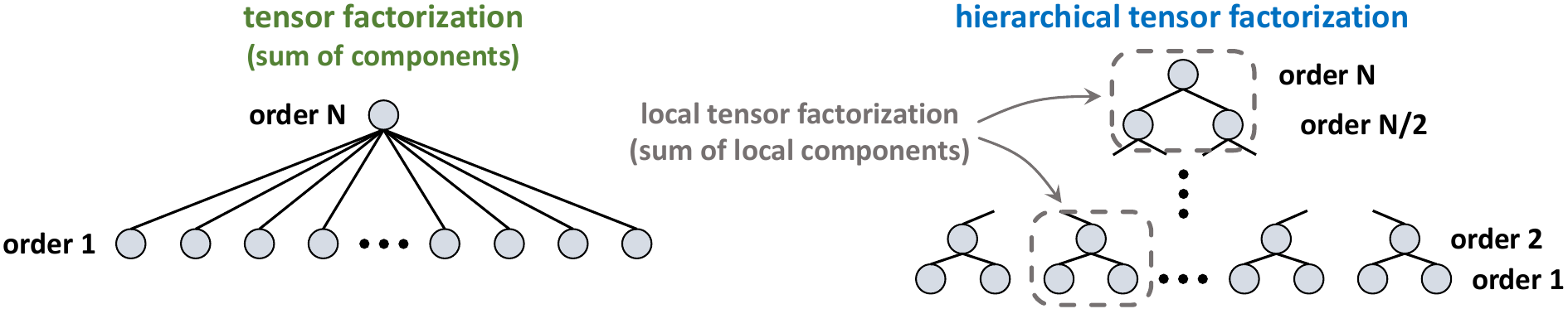}
	\end{center}
	\vspace{-2.75mm}
	\caption{
		Hierarchical tensor factorization consists of multiple local tensor factorizations.
		\textbf{Left:} tensor factorization represents an order $N$ tensor as a sum of components, each combining $N$ vectors through the tensor product operator.
		Accordingly, it is represented by a shallow tree where all leaves are directly connected to the root.
		\textbf{Right:} hierarchical tensor factorization adheres to an arbitrary tree structure (figure depicts a perfect binary tree), producing an order $N$ tensor by iteratively combining multiple local tensor factorizations.
		The components of the local tensor factorizations constituting the hierarchical tensor factorization are defined to be its local components.
		For a formal description of hierarchical tensor factorization see \subsect~\ref{sec:htf:formal}.
	}
	\label{fig:tf_htf_comp_tree}
	\vspace{-1.5mm}
\end{figure*}

A mode tree of a hierarchical tensor factorization induces a notion of rank called \emph{hierarchical tensor rank} (\defin~\ref{def:ht_rank}).
The hierarchical tensor rank is a tuple whose entries correspond to locations in the mode tree.
The value held by an entry is characterized by the number of local components at the corresponding location, similarly to how tensor rank is characterized by the number of components in a tensor factorization (see \subsect~\ref{sec:prelim:tf}).
Motivated by matrix and tensor ranks being implicitly minimized in matrix and tensor factorizations, respectively (\sect~\ref{sec:prelim}), in \sect~\ref{sec:inc_rank_lrn} we explore the possibility of hierarchical tensor rank being implicitly minimized in hierarchical tensor factorization.
That is, we investigate the prospect of gradient descent (with small learning rate and near-zero initialization) over hierarchical tensor factorization learning solutions that can be represented with few local components at all locations of the mode tree.

\vspace{-2mm}

\paragraph*{Equivalence to a class of deep non-linear convolutional networks}

As discussed in \sect~\ref{sec:prelim}, matrix factorization can be seen as a linear neural network, and, in a similar vein, tensor factorization corresponds to a certain shallow (depth two) non-linear convolutional network.
A drawback of these models as theoretical surrogates for deep learning is that the former lacks non-linearity, while the latter misses depth.
Hierarchical tensor factorization accounts for both of these limitations: for appropriate mode trees, it is known (see~\citet{cohen2016expressive}) to be equivalent to a class of deep non-linear convolutional networks (with multiplicative non-linearity).
These networks have demonstrated promising performance in practice~\cite{cohen2014simnets,cohen2016deep,sharir2016tensorial,stoudenmire2018learning,grant2018hierarchical,felser2021quantum}, and their equivalence to hierarchical tensor factorization has been key to the study of expressiveness in deep learning~\cite{cohen2016expressive,cohen2016convolutional,cohen2017inductive,cohen2017analysis,cohen2018boosting,sharir2018expressive,levine2018benefits,levine2018deep,balda2018tensor,khrulkov2018expressive,khrulkov2019generalized,levine2019quantum}.
The equivalence is illustrated in \fig~\ref{fig:tf_htf_as_convnet} (bottom) and rigorously proven in \app~\ref{app:htf_cnn}.

\subsection{Formal Presentation}
\label{sec:htf:formal}
The structure of a hierarchical tensor factorization is determined by a mode tree.
\begin{definition}
\label{def:mode_tree}
	Let $N \in \N$.
	A \emph{mode tree} $\T$ over $[N]$ is a rooted tree in which:
	\begin{itemize}[topsep=0pt, partopsep=0pt,itemsep=1pt]
		\item every node is labeled by a subset of $[N]$;
		\item there are exactly $N$ leaves, labeled $\{ 1 \}, \ldots, \{ N \}$; and
		\item the label of an interior (non-leaf) node is the union of the labels of its children.
	\end{itemize}
\end{definition}

\noindent
We identify nodes with their labels, \ie~with the corresponding subsets of~$[N]$, and accordingly treat~$\T$ as a subset of~$2^{[N]}$.
Furthermore, we denote the set of all interior nodes by $\interior (\T) \subset \T$, the parent of a non-root node $\nu \in \T \setminus \{ [N] \}$ by $\parent (\nu) \in \T$, and the children of $\nu \in \interior (\T)$ by $\children (\nu) \subset \T$.
When enumerating over children of a node, \ie~over $\children (\nu)$ for $\nu \in \interior (\htmodetree)$, an arbitrary fixed ordering is assumed.

One may consider various mode trees, each leading to a different hierarchical tensor factorization.
Notable choices include: \emph{(i)} a shallow tree (comprising only leaves and root), which reduces the hierarchical tensor factorization to a tensor factorization (\subsect~\ref{sec:prelim:tf}); and \emph{(ii)} a perfect binary tree (applicable if $N$ is a power of $2$) whose corresponding hierarchical tensor factorization is perhaps the most extensively studied.
\fig~\ref{fig:tf_htf_comp_loc_comp}(a) illustrates these two choices.

\begin{figure*}
	\vspace{-1mm}
	\begin{center}
		\includegraphics[width=0.9\textwidth]{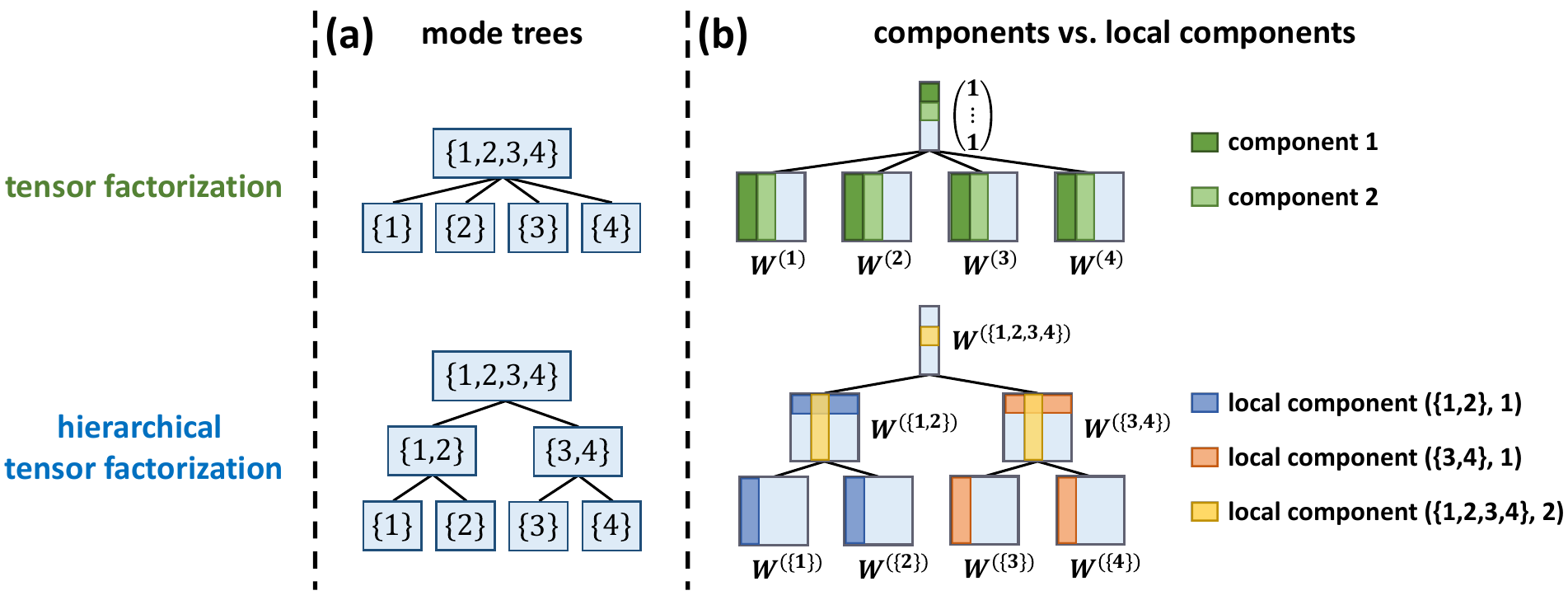}
	\end{center}
	\vspace{-3mm}
	\caption{
		\textbf{(a)} Exemplar mode trees (\defin~\ref{def:mode_tree}) for order $N = 4$ hierarchical tensor factorization.
		Top corresponds to the degenerate case of tensor factorization, while bottom represents the most common choice (perfect binary tree).
		\textbf{(b)} Components of a tensor factorization (top) vs. local components (\defin~\ref{def:local_comp}) of a hierarchical tensor factorization (bottom).
		The $r$'th component of a tensor factorization can be seen as the tensor product between a linear coefficient, which is set to~$1$, and the $r$'th columns of $\Wbf^{(1)}, \ldots, \Wbf^{(N)}$.
		The local components of a hierarchical tensor factorization are the components of the local tensor factorizations forming it. For example, the $r$'th local component at node $\{1, 2\}$ in the hierarchical tensor factorization illustrated above is the tensor product between the $r$'th row of $\weightmat{\{1, 2\}}$ and the $r$'th columns of its children's weight matrices $\weightmat{\{ 1 \}}$ and $\weightmat{\{ 2\}}$.
	}
	\label{fig:tf_htf_comp_loc_comp}
	\vspace{-1.5mm}
\end{figure*}

Along with a mode tree $\htmodetree$, what defines a hierarchical tensor factorization are the number of local components at each interior node, denoted $( R_{\nu} \in \N )_{\nu \in \interior ( \htmodetree )}$.
The induced hierarchical tensor factorization is parameterized by weight matrices $( \weightmat{\nu} \in \R^{R_\nu, R_{\parent (\nu)} } )_{ \nu \in \htmodetree}$, where $R_{\parent ( [N] )} := 1$ and $R_{\{1\}} := D_1, \ldots, R_{\{N\}} := D_N$.
It creates the \emph{end tensor} $\tensorend \in \R^{D_1, \ldots, D_N}$ by constructing intermediate tensors of increasing order while traversing $\htmodetree$ from leaves to root as follows:
\be
\begin{split}
	&\text{for all $\nu \in \left \{ \{1\}, \ldots, \{ N\} \right \}$ and $r \in [ R_{ \parent (\nu ) } ]$:} \\
	& \quad\underbrace{ \tensorpart{\nu}{r} }_{\text{order $1$}} := \weightmat{\nu}_{:, r} \text{\,,} \\[1mm]
	&\text{for all $\nu \in \interior (\htmodetree) \setminus \left \{ [N] \right \}$ and $r \in [ R_{\parent (\nu )} ]$ (traverse} \\
	& \text{interior nodes of $\htmodetree$ from leaves to root, non-inclusive):} \\
	&\quad \underbrace{\tensorpart{\nu}{r}}_{\text{order $\abs{\nu}$}} := \pi_\nu \biggl ( \sum\nolimits_{r' = 1}^{R_\nu} \weightmat{\nu}_{r', r} \Bigl [ \tenp_{\nu_c \in \children ( \nu )} \tensorpart{ \nu_c }{ r' } \Bigr ] \biggr ) \text{\,,} \\[0mm]
	&\underbrace{\tensorend}_{\text{order $N$}} := \pi_{ [N] } \biggl ( \sum\nolimits_{r' = 1}^{R_{ [N] }} \weightmat{ [N] }_{r', 1} \Bigl [ \tenp_{ \nu_c \in \children ( [N] )} \tensorpart{\nu_c}{r'} \Bigr ] \biggr ) 
	\text{\,,}
\end{split}
\label{eq:ht_end_tensor}
\ee
where $\pi_\nu$, for $\nu \in \htmodetree$, is a \emph{mode permutation} operator which arranges the modes (axes) of its input such that they comply with an ascending order of $\nu$.\footnote{
For $\nu \in \interior (\htmodetree)$, denote its $K := \abs{ \children (\nu) }$ children by $\nu_1, \ldots, \nu_K$, and the elements of $\nu_k$ by $j^k_1 < \cdots < j^k_{ \abs{\nu_k}}$ for $k \in [K]$.
Let $h : [\abs{\nu}] \to [\abs{\nu}]$ be the permutation sorting $\big ( j^1_1 , \ldots,  j^1_{ \abs{\nu_1} } ,  \ldots, j^{K}_1, \ldots, j^K_{ \abs{\nu_K} } \big )$ in ascending order.
Then, the mode permutation operator for $\nu$ is defined by: $\pi_\nu (\W)_{d_1, \ldots, d_{ \abs{\nu} } } = \W_{d_{h(1)}, \ldots, d_{h ( \abs{\nu} ) }}$, where $\W$ is an order $\abs{ \nu }$ tensor.
}

Hierarchical tensor factorization can be viewed as a composition of multiple local tensor factorizations, one for each interior node in the mode tree.
The local tensor factorization for $\nu \in \interior (\htmodetree)$ comprises $R_\nu$ components, referred to as the local components at node $\nu$ of the hierarchical tensor factorization~---~see \fig~\ref{fig:tf_htf_comp_loc_comp}(b) for an illustration, and \defin~\ref{def:local_comp} below.

\begin{definition}
	\label{def:local_comp}
	For $\nu \in \interior (\htmodetree)$ and $ r \in [R_\nu]$, the $(\nu, r)$'th \emph{local component} of the hierarchical tensor factorization is $\weightmat{\nu}_{r, :} \! \tenp \! \bigl ( \tenp_{\nu_c \in \children (\nu)} \weightmat{\nu_c}_{:, r} \bigr )$.
	We use $\localcomp (\nu, r)$ to denote the set comprising $\weightmat{\nu}_{r, :}$ and $\big ( \weightmat{\nu_c}_{:, r} \big )_{\nu_c \in \children ( \nu ) }$, and \smash{$\htfcompnorm{\nu}{r} := \normbig{ \tenp_{\wbf \in \localcomp (\nu, r)} \wbf }$} to denote the norm of the $(\nu, r)$'th local component.
\end{definition}

Mode trees of hierarchical tensor factorizations give rise to the notion of hierarchical tensor rank (\cf~\citet{grasedyck2013literature}), which is based on matrix ranks of specific \emph{matricizations} of a tensor (\cf~\subsect~3.4 in~\citet{kolda2006multilinear}).

\begin{definition}
\label{def:matricization}
The \emph{matricization} of $\W \in \R^{D_1, \ldots, D_N}$ with respect to $I \subset [N]$, denoted $\mat{\W}{I} \in \R^{\prod_{i \in I} D_i, \prod_{j \in [N] \setminus I} D_j}$, is its arrangement as a matrix where rows correspond to modes indexed by $I$ and columns correspond to the remaining modes.\footnote{
	Denoting the elements in $I$ by $i_1 < \cdots < i_{\abs{I}}$ and those in $[N] \setminus I$ by $j_1 < \cdots < j_{ N - \abs{I} }$, the matricization $\mat{\W}{I}$ holds the entries of $\W$ such that $\W_{d_1, \ldots, d_N}$ is placed in row index $1 + \sum_{l = 1}^{\abs{I}} (d_{i_{l}} - 1) \prod_{l' = 1}^{l - 1} D_{i_{l'}}$ and column index $1 + \sum_{l = 1}^{  N - \abs{I} } ( d_{ j_{l} } - 1 ) \prod_{l' = 1}^{l - 1} D_{ j_{l'} }$.
}
\end{definition}

\begin{definition}
	\label{def:ht_rank}
	The \emph{hierarchical tensor rank} of $\W \in \R^{D_1, \ldots, D_N}$ with respect to mode tree $\T$ is the tuple comprising the matrix ranks of $\W$'s matricizations according to all nodes in $\htmodetree$ except for the root, \ie~$( \rank \mat{ \W }{ \nu  } )_{\nu \in {\T \setminus \{ [N] \} }}$.
	The order of entries in the tuple does not matter as long as it is consistent.
\end{definition}
\noindent
Unless stated otherwise, when referring to the hierarchical tensor rank of a hierarchical tensor factorization's end tensor, the rank is with respect to the mode tree of the factorization. 
Hierarchical tensor rank differs markedly from tensor rank.
Specifically, even when the hierarchical tensor rank is low, \ie~the matrix ranks of matricizations according to all nodes in the mode tree are low, the tensor rank is typically extremely high (exponential in the order of the tensor~---~see~\citet{cohen2016deep}).

Lemma~\ref{lem:loc_comp_ht_rank_bound} below states that the number of local components in a hierarchical tensor factorization controls the hierarchical tensor rank of its end tensor.
More precisely, $R_\nu$~---~the number of local components at $\nu \in \interior (\htmodetree)$~---~upper bounds the matrix rank of matricizations according to the children of $\nu$.

\begin{lemma}[adaptation of \thm~7 in~\citet{cohen2018boosting}]
	\label{lem:loc_comp_ht_rank_bound}
	For any interior node $\nu \in \interior (\htmodetree)$ and child $\nu_c \in \children (\nu)$, it holds that $\rank \mat{\tensorend }{ \nu_c} \leq R_{\nu}$.
\end{lemma}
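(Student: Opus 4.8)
The plan is to reduce the statement to a structural fact about $\tensorend$: that, up to a fixed permutation of its modes (which does not affect the rank of the relevant matricization), the end tensor can be written as $\tensorend = \sum_{r=1}^{R_\nu} \tensorpart{\nu_c}{r} \tenp \G^{(r)}$ for suitable order $N - \abs{\nu_c}$ tensors $\G^{(1)}, \ldots, \G^{(R_\nu)}$, where $\tensorpart{\nu_c}{1}, \ldots, \tensorpart{\nu_c}{R_\nu}$ are the intermediate tensors computed at $\nu_c$ in \eq~\eqref{eq:ht_end_tensor} (recall they are indexed by $r \in [R_{\parent(\nu_c)}] = [R_\nu]$). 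Granting this, \defin~\ref{def:matricization} gives that $\mat{\tensorend}{\nu_c}$ equals, up to row and column permutations, the matrix $\sum_{r=1}^{R_\nu} \vect{\tensorpart{\nu_c}{r}}\,\vect{\G^{(r)}}^\top$; being a sum of $R_\nu$ matrices of rank at most one, it has rank at most $R_\nu$, which is the claim.

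To establish the structural fact I would induct along the path $\nu = \mu_0, \mu_1 = \parent(\mu_0), \ldots, \mu_K = [N]$ from $\nu$ to the root, proving the stronger statement that for every $k$ and every valid index $t$ there exist order $\abs{\mu_k} - \abs{\nu_c}$ tensors $\G^{(k,t,1)}, \ldots, \G^{(k,t,R_\nu)}$ with $\tensorpart{\mu_k}{t}$ equal, up to a fixed mode permutation, to $\sum_{r=1}^{R_\nu} \tensorpart{\nu_c}{r} \tenp \G^{(k,t,r)}$. The base case $k = 0$ follows directly from the defining equation of $\tensorpart{\nu}{t}$ in \eq~\eqref{eq:ht_end_tensor}: one isolates the factor $\tensorpart{\nu_c}{r}$ from the tensor product over $\children(\nu)$ and collects the remaining factors, together with the scalar $\weightmat{\nu}_{r,t}$, into $\G^{(0,t,r)}$. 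For the inductive step, note $\mu_k \in \children(\mu_{k+1})$; substituting the inductive hypothesis for $\tensorpart{\mu_k}{\cdot}$ into the defining equation for $\tensorpart{\mu_{k+1}}{t}$, using bilinearity of $\tenp$ to pull the summation over $r$ outside, and absorbing the nested mode permutations and the tensor products with the other children's tensors into new order $\abs{\mu_{k+1}} - \abs{\nu_c}$ tensors $\G^{(k+1,t,r)}$, yields the claim at level $k+1$. The case $\mu_K = [N]$ then delivers the desired expression for $\tensorend$.

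The only genuinely delicate point is the bookkeeping of the mode-permutation operators $\pi_{(\cdot)}$ and of the orderings inside the tensor products, which is needed to ensure that after the final permutation the modes originating from $\tensorpart{\nu_c}{r}$ are precisely those indexed by $\nu_c$. This is purely combinatorial: since each $\pi_\nu$ sorts modes according to their leaf labels (see the footnote to \eq~\eqref{eq:ht_end_tensor}), the modes contributed by the subtree rooted at $\nu_c$ remain contiguous throughout the leaves-to-root traversal and end up in positions $\nu_c$. Two elementary facts, both immediate from \defin~\ref{def:matricization}, are used implicitly above and would be stated explicitly: for tensors $\U$ and $\V$ whose mode sets are $\nu_c$ and $[N] \setminus \nu_c$ respectively, the matricization $\mat{\U \tenp \V}{\nu_c}$ equals $\vect{\U}\,\vect{\V}^\top$ and hence has rank at most one; and $\rank \mat{\cdot}{\cdot}$ is subadditive with respect to sums of tensors and is invariant under permuting modes together with the matricization index set. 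I expect this permutation bookkeeping — rather than any conceptual issue — to be the main obstacle to a fully rigorous write-up; an alternative, equivalent route avoiding explicit permutations is to carry the whole argument at the level of matricizations, repeatedly invoking the two facts just mentioned.
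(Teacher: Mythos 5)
Your plan tracks the paper's proof conceptually: both induct along the path from $\nu$ to the root, propagating a decomposition in which $\tensorpart{\nu_c}{1}, \ldots, \tensorpart{\nu_c}{R_\nu}$ serve as the $R_\nu$ building blocks whose count eventually bounds the rank of $\mat{\tensorend}{\nu_c}$. The paper carries out the induction at the level of matricizations, writing $\mat{\tensorpart{\mu}{r}}{\nu_c|_\mu}$ in the form $\Qbf\,\Bbf^{(\nu_c)}\Abf^{(\mu,r)}\widebar{\Qbf}$ with $\Bbf^{(\nu_c)}$ the matrix whose $R_\nu$ columns are $\vect{\tensorpart{\nu_c}{r}}$ and $\Qbf,\widebar{\Qbf}$ permutation matrices, propagating $\Bbf^{(\nu_c)}$ upward via \lem~\ref{lem:kronp_mixed_prod_row_vec}; you propose instead to propagate a tensor-level expansion $\tensorpart{\mu}{t} = \pi\bigl(\sum_r \tensorpart{\nu_c}{r}\tenp\G^{(r)}\bigr)$ and matricize only at the root, which you yourself flag as equivalent to the matricization route. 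Same proof, different bookkeeping.

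One inaccuracy in your combinatorial side-remark: the modes contributed by the subtree rooted at $\nu_c$ do \emph{not} in general stay contiguous once the intermediate $\pi_\mu$ operators are applied (take $\nu_c = \{1,3\}$ with sibling $\{2\}$; after $\pi_{\{1,2,3\}}$ they are split by the mode for index~$2$). This is inessential, since matricization with respect to $\nu_c$ does not care about contiguity; the property you actually need, and which does hold because every $\pi_\mu$ sorts by leaf label, is that in $\tensorend$ the modes originating from $\tensorpart{\nu_c}{r}$ land precisely at the positions in $\nu_c$. Also worth making explicit in a full write-up is that the permutation $\pi$ is the same for every $r \in [R_\nu]$ (it depends only on $\htmodetree$), without which the $R_\nu$ rank-one summands would not combine into a single low-rank matrix via linearity of the matricization operator.
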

\begin{proof}
Deferred to \subapp~\ref{app:proofs:loc_comp_ht_rank_bound}.
\end{proof}
\vspace{-1mm}
\noindent
We may explicitly restrict the hierarchical tensor rank of end tensors $\tensorend$ (\eq~\eqref{eq:ht_end_tensor}) by limiting $( R_\nu )_{\nu \in \interior (\htmodetree)}$.
However, since our interest lies in the implicit regularization of gradient descent, \ie~in the types of end tensors it will find without explicit constraints, we treat the case where $( R_\nu )_{\nu \in \interior (\htmodetree)}$ can be arbitrarily large.

Given a differentiable and locally smooth loss $\htfendloss : \R^{D_1, \ldots, D_N} \to \R_{\geq 0}$, we consider parameterizing the solution as a hierarchical tensor factorization (\eq~\eqref{eq:ht_end_tensor}), and optimizing the resulting (non-convex) objective:
\be
\htfobj \big ( \big ( \weightmat{\nu} \big )_{\nu \in \htmodetree} \big ) := \htfendloss ( \tensorend )
\text{\,.}
\label{eq:htf_obj}
\ee
In line with analyses of implicit regularization in matrix and tensor factorizations (see \sect~\ref{sec:prelim}), we model small learning rate for gradient descent via gradient flow:
\be
\frac{d}{dt} \weightmat{\nu} (t) = - \frac{\partial}{ \partial \weightmat{\nu} } \htfobj \big (  \big ( \weightmat{\nu'} (t) \big )_{\nu' \in \htmodetree} \big )
\label{eq:gf_htf}
\ee
for all $t \geq 0$ and $\nu \in \htmodetree$, where $( \weightmat{\nu} (t) )_{\nu \in \htmodetree}$ denote the weight matrices at time $t$ of optimization.

Over matrix and tensor factorizations, gradient flow initialized near zero is known to minimize matrix and tensor ranks, respectively (\sect~\ref{sec:prelim}).
In particular, it leads to solutions that can be represented using few components.
A natural question that arises is whether a similar phenomenon takes place in hierarchical tensor factorization: does gradient flow with small initialization learn solutions that can be represented with few local components at all locations of the mode tree?
That is, does it learn solutions of low hierarchical tensor rank?
In \sect~\ref{sec:inc_rank_lrn} we answer this question affirmatively.
	
	\section{Analysis: Incremental Hierarchical Tensor Rank Learning}
\label{sec:inc_rank_lrn}

In this section we theoretically analyze the implicit regularization in hierarchical tensor factorization.
Our analysis extends known results for matrix and tensor factorizations outlined in \sect~\ref{sec:prelim}.
In particular, we show that the implicit regularization in hierarchical tensor factorization induces an incremental learning process that results in low hierarchical tensor rank, similarly to how matrix and tensor factorizations incrementally learn solutions with low matrix and tensor ranks, respectively.
To facilitate this extension, while overcoming the challenges arising from the complexity of the hierarchical tensor factorization model, we characterize the evolution of the local components introduced in \sect~\ref{sec:htf}.
Our analysis is delivered in \subsects~\ref{sec:inc_rank_lrn:evolution},~\ref{sec:inc_rank_lrn:imp_htr_min}, and~\ref{sec:inc_rank_lrn:partial_order}.
For the convenience of the reader, \subsect~\ref{sec:inc_rank_lrn:informal} provides an informal overview.

\subsection{Informal Overview}
\label{sec:inc_rank_lrn:informal}

As discussed in \sect~\ref{sec:prelim}, for both matrix and tensor factorizations, there exists an invariant of optimization whose deviation from zero is referred to as unbalancedness magnitude, and it is common to treat the case of unbalancedness magnitude zero as an idealization of standard near-zero initializations.
With unbalancedness magnitude zero, singular values in a matrix factorization evolve by \eq~\eqref{eq:sing_val_mf_dyn}, and component norms in a tensor factorization move per \eq~\eqref{eq:comp_norm_tf_dyn}.
Equations~\eqref{eq:sing_val_mf_dyn} and~\eqref{eq:comp_norm_tf_dyn} are structurally identical, and are interpreted as implying incremental learning of singular values and component norms, respectively, \ie~of matrix and tensor ranks, respectively.
This interpretation was initially supported by experiments (such as those reported in \fig~\ref{fig:mf_tf_htf_dynamics} (left and middle)), and later via proofs of exact matrix and tensor rank minimization under certain technical conditions.

In \subsect~\ref{sec:inc_rank_lrn:evolution} we show that in analogy with matrix and tensor factorizations, hierarchical tensor factorization entails an invariant of optimization (\lem~\ref{lem:loc_comp_sq_norm_diff_invariant}), which leads to a corresponding notion of unbalancedness magnitude (\defin~\ref{def:unbal_mag}).
For the canonical case of unbalancedness magnitude zero (corresponding to standard near-zero initializations), we prove that the norm of the $r$'th local component associated with node $\nu$ in the mode tree, denoted $\htfcompnorm{\nu}{r} (t)$, evolves by (\thm~\ref{thm:loc_comp_norm_bal_dyn}):
\be
\vspace{3.5mm}
\frac{d}{dt} \htfcompnorm{\nu}{r} \! (t) \! = \!  \htfcompnorm{\nu}{r} \! (t)^{2 - \frac{2}{ L_{\nu} } } L_\nu \inprodbig{- \nabla \htfendloss ( \tensorend (t) ) }{ \htfcomp{\nu}{r} (t) }
\text{,}
\label{eq:informal_loc_comp_norm_bal_dyn}
\ee
where $L_\nu$ is the number of weight vectors in the local component and $\htfcomp{\nu}{r} (t)$ is the direction it imposes on the end tensor $\tensorend (t)$.
\app~\ref{app:dyn_arbitrary} generalizes the above theorem by relieving the assumption of unbalancedness magnitude zero.
Namely, it establishes that \eq~\eqref{eq:informal_loc_comp_norm_bal_dyn} holds approximately when unbalancedness magnitude at initialization is small.
\eq~\eqref{eq:informal_loc_comp_norm_bal_dyn} is structurally identical to Equations~\eqref{eq:sing_val_mf_dyn} and~\eqref{eq:comp_norm_tf_dyn}, therefore the evolution rate of a local component norm in hierarchical tensor factorization mirrors the evolution rates of a singular value in matrix factorization and a component norm in tensor factorization.
One is thus led to interpret \eq~\eqref{eq:informal_loc_comp_norm_bal_dyn} as implying incremental learning of local component norms, \ie~of hierarchical tensor rank (see \sect~\ref{sec:htf}).
We support this interpretation through experiments analogous to those typically conducted for supporting the interpretation of Equations~\eqref{eq:sing_val_mf_dyn} and~\eqref{eq:comp_norm_tf_dyn} as implying incremental learning of matrix and tensor ranks, respectively~---~see \fig~\ref{fig:mf_tf_htf_dynamics} (right) as well as \app~\ref{app:experiments}.
Moreover, we consider technical conditions similar to those assumed for proving exact matrix and tensor rank minimization by matrix and tensor factorizations, respectively, and establish theoretical results aimed at facilitating a proof of exact hierarchical tensor rank minimization~---~see \subsect~\ref{sec:inc_rank_lrn:imp_htr_min}.
Completing the missing steps for deriving such a proof is regarded as a promising direction for future work.

Lastly, we discuss the fact that hierarchical tensor rank does not adhere to a natural total ordering, and the potential of partially ordered complexity measures to further our understanding of implicit regularization in deep learning.
See \subsect~\ref{sec:inc_rank_lrn:partial_order} for details.

\subsection{Evolution of Local Component Norms}
\label{sec:inc_rank_lrn:evolution}

\lem~\ref{lem:loc_comp_sq_norm_diff_invariant} below establishes an invariant of optimization: the differences between squared norms of weight vectors in the same local component are constant through time.
\begin{lemma}
	\label{lem:loc_comp_sq_norm_diff_invariant}
	For all $\nu \in \interior (\htmodetree)$, $r \in [R_\nu]$, and $\wbf, \wbf' \in \localcomp (\nu, r)$:
	\[
	\norm{ \wbf (t) }^2 - \norm{ \wbf' (t) }^2 = \norm{ \wbf (0) }^2 - \norm{ \wbf' (0) }^2 \quad , t \geq 0
	\text{\,.}
	\]
\end{lemma}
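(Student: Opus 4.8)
The plan is to exploit a rescaling symmetry in the parameterization of the end tensor. Fix $\nu \in \interior(\htmodetree)$, $r \in [R_\nu]$, and two vectors $\wbf, \wbf' \in \localcomp(\nu, r)$ (if $\wbf = \wbf'$ the claim is trivial, so assume they are distinct). Recall that $\localcomp(\nu, r)$ consists of the $r$'th row $\weightmat{\nu}_{r,:}$ together with the $r$'th columns $\weightmat{\nu_c}_{:,r}$ of the weight matrices of the children $\nu_c \in \children(\nu)$; in particular $\wbf$ and $\wbf'$ occupy disjoint blocks of the parameter vector. First I would verify that the end tensor $\tensorend$ of \eq~\eqref{eq:ht_end_tensor} is unchanged under the transformation that replaces $\wbf$ by $c\wbf$ and $\wbf'$ by $c^{-1}\wbf'$, for any $c > 0$, leaving all other entries of $\big(\weightmat{\nu'}\big)_{\nu' \in \htmodetree}$ fixed. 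Tracing the leaves-to-root construction, the $r$'th row of $\weightmat{\nu}$ enters $\tensorend$ only as the coefficients $\weightmat{\nu}_{r,k}$ multiplying $\tenp_{\nu_c \in \children(\nu)} \tensorpart{\nu_c}{r}$ inside the intermediate tensors $\tensorpart{\nu}{k}$, while the $r$'th column of $\weightmat{\nu_c}$ enters only, and linearly, in the formation of $\tensorpart{\nu_c}{r}$ (which in turn enters $\tensorend$ solely through the same summand); rows and columns carrying a different index belong to disjoint summands and are untouched. Since the mode permutation operators are linear and the tensor product is multilinear, scaling one factor among $\big\{\weightmat{\nu}_{r,:}, \big(\weightmat{\nu_c}_{:,r}\big)_{\nu_c \in \children(\nu)}\big\}$ by $c$ and another by $c^{-1}$ leaves every $\tensorpart{\nu}{k}$ invariant, hence leaves $\tensorend$ and therefore $\htfobj = \htfendloss(\tensorend)$ invariant.

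Next I would differentiate this invariance at $c = 1$. Letting $f(c)$ denote $\htfobj$ evaluated with $\wbf$ replaced by $c\wbf$ and $\wbf'$ by $c^{-1}\wbf'$, the identity $f(c) \equiv f(1)$ together with the chain rule yields the Euler-type relation $\inprodnoflex{\nabla_{\wbf}\htfobj}{\wbf} = \inprodnoflex{\nabla_{\wbf'}\htfobj}{\wbf'}$, where $\nabla_{\wbf}\htfobj$ denotes the block of the gradient $\nabla_{\weightmat{\cdot}}\htfobj$ corresponding to $\wbf$ (for instance the $r$'th row of $\nabla_{\weightmat{\nu}}\htfobj$ when $\wbf = \weightmat{\nu}_{r,:}$). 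Finally, invoking the gradient flow \eqref{eq:gf_htf}, for any such block one has $\frac{d}{dt}\norm{\wbf(t)}^2 = 2\inprodnoflex{\wbf(t)}{\frac{d}{dt}\wbf(t)} = -2\inprodnoflex{\wbf(t)}{\nabla_{\wbf}\htfobj}$; the Euler relation then gives $\frac{d}{dt}\norm{\wbf(t)}^2 = \frac{d}{dt}\norm{\wbf'(t)}^2$, so $\norm{\wbf(t)}^2 - \norm{\wbf'(t)}^2$ has vanishing time derivative and equals its value at $t = 0$, as claimed.

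I expect the only mildly delicate step to be the bookkeeping in the first paragraph, namely carefully checking that the row $\weightmat{\nu}_{r,:}$ and the columns $\weightmat{\nu_c}_{:,r}$ appear in $\tensorend$ exactly through the single multilinear expression $\weightmat{\nu}_{r,k}\big[\tenp_{\nu_c \in \children(\nu)} \tensorpart{\nu_c}{r}\big]$ and nowhere else (uniformly, whether or not $\nu$ or $\parent(\nu)$ is the root), which follows from the disjointness of the indices summed over in \eq~\eqref{eq:ht_end_tensor}. Beyond that, the argument is the direct local analogue, applied at each interior node of the mode tree, of the homogeneity-based conservation laws already established for matrix and tensor factorizations (\sect~\ref{sec:prelim}).
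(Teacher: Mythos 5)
Your proof is correct, and it takes a genuinely different route from the paper's. The paper proves \lem~\ref{lem:loc_comp_sq_norm_diff_invariant} by appealing to \lem~\ref{lem:weightvec_sq_norm_time_deriv}, which computes the time derivative of each squared norm \emph{explicitly}:
\[
\frac{d}{dt}\norm{\weightmat{\nu}_{r,:}(t)}^2 \;=\; 2\,\htfcompnorm{\nu}{r}(t)\,\inprodbig{-\nabla\htfendloss(\tensorend(t))}{\htfcomp{\nu}{r}(t)} \;=\; \frac{d}{dt}\norm{\weightmat{\nu_c}_{:,r}(t)}^2
\text{\,,}
\]
and that computation in turn rests on \lems~\ref{lem:ht_weightvec_grad} and~\ref{lem:tensorend_comp_eq_zeroing_cols_or_rows}, which express $\partial\htfobj/\partial\weightmat{\nu}_{r,:}$ and $\partial\htfobj/\partial\weightmat{\nu_c}_{:,r}$ in terms of the local component direction $\htfcomp{\nu}{r}$. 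You bypass the gradient computation entirely by exhibiting the rescaling symmetry $\wbf\mapsto c\wbf$, $\wbf'\mapsto c^{-1}\wbf'$ of the end-tensor map, differentiating the invariance at $c=1$ to obtain the Euler-type identity $\inprodnoflex{\nabla_{\wbf}\htfobj}{\wbf}=\inprodnoflex{\nabla_{\wbf'}\htfobj}{\wbf'}$, and then substituting the gradient flow dynamics. This is the Noether-style ``continuous symmetry implies conserved quantity'' reasoning familiar from matrix and tensor factorizations; it is shorter and more conceptual, and you correctly isolate the only nontrivial step---verifying that the members of $\localcomp(\nu,r)$ enter $\tensorend$ exclusively as multiplicatively coupled factors in the single expression $\weightmat{\nu}_{r,r'}\big[\tenp_{\nu_c\in\children(\nu)}\tensorpart{\nu_c}{r}\big]$, uniformly over $r'\in[R_{\parent(\nu)}]$. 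What the paper's longer route buys is the explicit common value of the two derivatives, $2\htfcompnorm{\nu}{r}(t)\inprodnoflex{-\nabla\htfendloss(\tensorend(t))}{\htfcomp{\nu}{r}(t)}$, which is reused verbatim in the proofs of \thm~\ref{thm:loc_comp_norm_bal_dyn} and \thm~\ref{thm:loc_comp_norm_unbal_dyn}; your symmetry argument establishes equality without exhibiting that value.
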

\begin{proof}[Proof sketch (proof in \subapp~\ref{app:proofs:loc_comp_sq_norm_diff_invariant})]
	A straightforward derivation shows that $\tfrac{d}{dt} \normnoflex{ \wbf (t) }^2 = \tfrac{d}{dt} \normnoflex{\wbf' (t) }^2$ for all $t \geq 0$.
	Integrating both sides with respect to time completes the proof.
\end{proof}
\noindent
The above invariant leads to the following definition of unbalancedness magnitude.
\begin{definition}
	\label{def:unbal_mag}
	The \emph{unbalancedness magnitude} of a hierarchical tensor factorization (\eq~\eqref{eq:ht_end_tensor}) is:
	\[
	\max\nolimits_{ \nu \in \interior (\htmodetree) , r \in [R_\nu] , \wbf , \wbf' \in \localcomp (\nu, r) } \abs{ \norm{\wbf }^2 - \norm{\wbf'}^2 }
	\text{\,.}
	\]
\end{definition}
\noindent
\lem~\ref{lem:loc_comp_sq_norm_diff_invariant} implies that the unbalancedness magnitude remains constant throughout optimization.
In the common regime of near-zero initialization, it will start off small, and stay small throughout.
The closer initialization is to zero, the smaller the unbalancedness magnitude is.
In accordance with analyses for matrix and tensor factorizations (see \sect~\ref{sec:prelim}), we treat the case of unbalancedness magnitude zero as an idealization of standard near-zero initializations.
\thm~\ref{thm:loc_comp_norm_bal_dyn} analyzes this case, characterizing the dynamics for norms of local components.

\begin{theorem}
	\label{thm:loc_comp_norm_bal_dyn}
	Assume unbalancedness magnitude zero at initialization.
	Let $\tensorend (t)$ denote the end tensor (\eq~\eqref{eq:ht_end_tensor}) and $\big ( \htfcompnorm{\nu}{r} (t) \big )_{\nu \in \interior (\htmodetree), r \in [R_\nu] }$ denote the norms of local components (\defin~\ref{def:local_comp}) at time $t \geq 0$ of optimization.
	Then, for any $\nu \in \interior (\htmodetree)$ and $r \in[R_\nu]$:
	\be
	\frac{d}{dt} \htfcompnorm{\nu}{r} \! (t) \! = \!  \htfcompnorm{\nu}{r} \! (t)^{2 - \frac{2}{ L_{\nu} } } L_\nu \inprodbig{- \nabla \htfendloss ( \tensorend (t) ) }{ \htfcomp{\nu}{r} (t) }
	\text{\,,}
	\label{eq:loc_comp_norm_bal_dyn}
	\ee
	where $L_\nu := \abs{ \children (\nu) } + 1$ is the number of weight vectors in a local component at node $\nu$, and $\htfcomp{\nu}{r} (t) \in \R^{D_1, \ldots, D_N}$ is the end tensor obtained by normalizing the $r$'th local component at node $\nu$ and setting all other local components at node $\nu$ to zero, \ie~by replacing  in \eq~\eqref{eq:ht_end_tensor} $\tensorpart{\nu}{r'}$ with $\pi_{\nu} \big ( \big ( \htfcompnorm{\nu}{r} \big )^{-1} \weightmat{\nu}_{r, r'} \big [  \tenp_{\nu_c \in \children (\nu)} \tensorpart{\nu_c}{r}  \big ] \big )$ for all $r' \in [ R_{\parent ( \nu )}]$.
	By convention, $\htfcomp{\nu}{r} (t) = 0$ if $\htfcompnorm{\nu}{r} (t) = 0$.
\end{theorem}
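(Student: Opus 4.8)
The plan is to reduce the claim, one interior node at a time, to the same two‑factor dynamics already recorded for matrix and tensor factorizations in \sect~\ref{sec:prelim}. Fix $\nu \in \interior (\htmodetree)$ and $r \in [R_\nu]$, and enumerate the $L_\nu$ weight vectors of $\localcomp (\nu, r)$ as $\wbf_1 := \weightmat{\nu}_{r,:}$ together with $\wbf_2, \ldots, \wbf_{L_\nu}$, the columns $\weightmat{\nu_c}_{:,r}$ ranging over $\nu_c \in \children (\nu)$. The first step is the structural fact that $\tensorend$ is a multilinear function of these vectors, of degree one in each. Tracing the recursion \eqref{eq:ht_end_tensor}: row $r$ of $\weightmat{\nu}$ enters only through the $s = r$ summand of $\tensorpart{\nu}{\cdot}$, which multiplies $\tenp_{\nu_c \in \children(\nu)} \tensorpart{\nu_c}{r}$; for each child $\nu_c$, column $r$ of $\weightmat{\nu_c}$ enters $\tensorpart{\nu_c}{r}$ linearly, with coefficients the subtree tensors $\tensorpart{\mu}{s}$ at the children $\mu$ of $\nu_c$, none of which involve a vector of $\localcomp(\nu,r)$; and every further step up the tree is a weighted sum of tensor products, hence linear. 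This yields
\[
\tensorend \;=\; \Phi \bigl( \wbf_1 \tenp \wbf_2 \tenp \cdots \tenp \wbf_{L_\nu} \bigr) \;+\; \E \text{\,,}
\]
where $\Phi$ is a linear map determined by the weights outside $\localcomp(\nu,r)$ and the relevant subtree tensors — crucially not by the rows $r' \neq r$ of $\weightmat{\nu}$ — and $\E$ does not depend on any vector of $\localcomp(\nu,r)$. I would then check that this matches the construction of $\htfcomp{\nu}{r}$ in the statement: setting the other local components at $\nu$ to zero deletes exactly the $\E$ term and leaves $\Phi$ untouched, while the ensuing normalization divides by $\htfcompnorm{\nu}{r} = \prod_{i=1}^{L_\nu} \norm{\wbf_i}$, so $\htfcomp{\nu}{r} = \htfcompnorm{\nu}{r}^{-1}\, \Phi(\wbf_1 \tenp \cdots \tenp \wbf_{L_\nu})$, with $\htfcomp{\nu}{r} = 0$ when $\htfcompnorm{\nu}{r} = 0$.

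With this in place the dynamics follow directly. Write $\mathbf{g} := - \nabla \htfendloss(\tensorend(t))$ and let $\Phi^*$ be the adjoint of $\Phi$. Since the objective equals $\htfendloss(\tensorend)$ and, with the other slots fixed, $\tensorend$ depends on $\wbf_i$ through the affine map $\wbf_i \mapsto \Phi(\wbf_1 \tenp \cdots \tenp \wbf_{L_\nu}) + \E$, the chain rule gives that $\tfrac{d}{dt}\wbf_i$ is the vector obtained by contracting $\Phi^*(\mathbf g)$ against $\wbf_j$ in slot $j$ for every $j \neq i$. Taking the inner product with $\wbf_i$ then produces a quantity independent of $i$:
\[
\tfrac{d}{dt} \norm{\wbf_i(t)}^2 \;=\; 2 \inprodbig{\wbf_i}{\tfrac{d}{dt}\wbf_i} \;=\; 2 \inprodbig{\Phi^*(\mathbf g)}{\wbf_1 \tenp \cdots \tenp \wbf_{L_\nu}} \;=\; 2 \inprodbig{\mathbf g}{\Phi(\wbf_1 \tenp \cdots \tenp \wbf_{L_\nu})} \;=\; 2\, \htfcompnorm{\nu}{r}(t) \inprodbig{ - \nabla \htfendloss(\tensorend(t)) }{ \htfcomp{\nu}{r}(t) } \text{\,.}
\]
(In particular this recovers the invariant of \lem~\ref{lem:loc_comp_sq_norm_diff_invariant}.)

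To finish I would invoke the hypothesis. Unbalancedness magnitude zero at initialization means $\norm{\wbf_i(0)}$ is common to all $i$; by \lem~\ref{lem:loc_comp_sq_norm_diff_invariant} the pairwise squared‑norm gaps stay zero, so $\norm{\wbf_1(t)} = \cdots = \norm{\wbf_{L_\nu}(t)} =: q(t)$ for all $t \geq 0$, hence $\htfcompnorm{\nu}{r}(t) = q(t)^{L_\nu}$. Using $q \dot q = \tfrac12 \tfrac{d}{dt}\norm{\wbf_i}^2 = \htfcompnorm{\nu}{r}\inprodbig{-\nabla\htfendloss(\tensorend)}{\htfcomp{\nu}{r}}$ and $q^{L_\nu - 2} = \bigl(\htfcompnorm{\nu}{r}\bigr)^{1 - 2/L_\nu}$,
\[
\tfrac{d}{dt} \htfcompnorm{\nu}{r} \;=\; L_\nu\, q^{L_\nu - 1} \dot q \;=\; L_\nu\, q^{L_\nu - 2}\,( q\dot q) \;=\; L_\nu\, \bigl( \htfcompnorm{\nu}{r} \bigr)^{1 - \frac{2}{L_\nu}} \htfcompnorm{\nu}{r} \inprodbig{-\nabla\htfendloss(\tensorend)}{\htfcomp{\nu}{r}} \;=\; L_\nu\, \bigl( \htfcompnorm{\nu}{r} \bigr)^{2 - \frac{2}{L_\nu}} \inprodbig{-\nabla\htfendloss(\tensorend)}{\htfcomp{\nu}{r}} \text{\,,}
\]
which is \eqref{eq:loc_comp_norm_bal_dyn}. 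The one place needing care is when $q(t_0) = 0$ for some $t_0$ (equivalently $\htfcompnorm{\nu}{r}(t_0) = 0$): then all $\wbf_i(t_0)$ vanish, and the estimate $\bigl|\tfrac{d}{dt}q^2\bigr| \leq 2\norm{\Phi^*(\mathbf g)}\, q^{L_\nu}$, with $\norm{\Phi^*(\mathbf g)}$ bounded on compact time intervals and $L_\nu \geq 2$, forces $q \equiv 0$ thereafter, so both sides of \eqref{eq:loc_comp_norm_bal_dyn} are zero under the stated convention; the division by $q$ used above is then harmless.

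I expect the main obstacle to be the first step — rigorously justifying $\tensorend = \Phi(\wbf_1 \tenp \cdots \tenp \wbf_{L_\nu}) + \E$ and, more delicately, that the ``normalize, and zero out the other local components at $\nu$'' prescription in the theorem coincides with evaluating $\Phi$ at the unit‑normalized local component. This needs careful bookkeeping through the recursion \eqref{eq:ht_end_tensor} with its mode‑permutation operators $\pi_\nu$, and through the interplay between a node's row indices (which index its local components) and its children's column indices. Once that is settled, every remaining step mirrors the matrix‑ and tensor‑factorization derivations recalled in \sect~\ref{sec:prelim}, specialized to the order‑$L_\nu$ multilinear form $\Phi(\wbf_1 \tenp \cdots \tenp \wbf_{L_\nu})$.
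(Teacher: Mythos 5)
Your proof mirrors the paper's in all essentials: the multilinear decomposition $\tensorend = \Phi(\wbf_1 \tenp \cdots \tenp \wbf_{L_\nu}) + \E$ plays the role of \lem~\ref{lem:ht_multilinear} and \lem~\ref{lem:tensorend_comp_eq_zeroing_cols_or_rows}; your identity $\tfrac{d}{dt}\norm{\wbf_i(t)}^2 = 2\htfcompnorm{\nu}{r}(t)\inprodbig{-\nabla\htfendloss(\tensorend(t))}{\htfcomp{\nu}{r}(t)}$ is exactly \lem~\ref{lem:weightvec_sq_norm_time_deriv}; and the reduction via a common radius $q(t)$ with $\htfcompnorm{\nu}{r} = q^{L_\nu}$ is algebraically the same calculation the paper performs by applying the product rule to $\htfcompnorm{\nu}{r}(t)^2 = \prod_{\wbf \in \localcomp(\nu,r)}\norm{\wbf(t)}^2$ and then dividing by $2\htfcompnorm{\nu}{r}$. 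You also correctly flag where the technical work actually lies, namely in establishing the $\Phi$-decomposition and matching it to the definition of $\htfcomp{\nu}{r}$ (the bookkeeping through $\pi_\nu$ and the row/column indexing that the paper carries out in \lem~\ref{lem:tensorend_comp_eq_zeroing_cols_or_rows}).

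The one step where you diverge, and where the argument as written is not yet closed, is the degenerate case $\htfcompnorm{\nu}{r}(t_0) = 0$. The paper's \lem~\ref{lem:bal_zero_stays_zero} constructs an explicit gradient-flow solution with the local component frozen at zero and invokes Picard--Lindel\"of uniqueness to conclude $\htfcompnorm{\nu}{r} \equiv 0$ on all of $[0,\infty)$, after which both sides of \eq~\eqref{eq:loc_comp_norm_bal_dyn} are trivially zero and differentiability is a non-issue. Your Gr\"onwall bound $\bigl|\tfrac{d}{dt}q^2\bigr| \leq 2\norm{\Phi^*(\mathbf g)}\,q^{L_\nu}$ is a legitimate alternative, but as stated it only propagates the zero \emph{forward} (``thereafter''). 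If $\htfcompnorm{\nu}{r}$ were positive on $[0, t_0)$ and first vanished at $t_0$, the left-derivative of $\htfcompnorm{\nu}{r}$ at $t_0$ still needs to be pinned down; saying ``both sides are zero'' only handles the right side and the right derivative. Two quick patches close this: either run the inequality backward in time (for $u := q^2 \leq 1$ one has $u' \geq -2\norm{\Phi^*(\mathbf g)}\,u$ since $L_\nu \geq 2$, and Gr\"onwall in reverse from $u(t_0) = 0$ forces $u \equiv 0$ on $[0,t_0]$ as well), or note directly that $q(t) = O(|t-t_0|)$ near $t_0$ by smoothness of the ODE trajectory, so $\htfcompnorm{\nu}{r}(t) = q(t)^{L_\nu} = O\bigl(|t - t_0|^{L_\nu}\bigr)$ with $L_\nu \geq 2$, giving a two-sided derivative of zero at $t_0$ without any invariance argument at all.
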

\begin{proof}[Proof sketch (proof in \subapp~\ref{app:proofs:loc_comp_norm_bal_dyn})]
	If $\htfcompnorm{\nu}{r} (t)$ is zero at some $t \geq 0$, then we show that it must be identically zero through time, leading both sides of \eq~\eqref{eq:loc_comp_norm_bal_dyn} to be equal to zero.
	Otherwise, differentiating the local component's norm with respect to time, we obtain:
	\[
	\begin{split}
		\tfrac{d}{dt} \htfcompnorm{\nu}{r} (t) & =
		\inprodbig{ - \nabla \htfendloss ( 	\tensorend (t)) }{ \htfcomp{\nu}{r} (t) } \\
		& \hspace{4mm}\cdot \sum\nolimits_{ \wbf \in \localcomp (\nu, r) } \prod\nolimits_{ \wbf' \in \localcomp (\nu, r) \setminus \{ \wbf \} } \norm{ \wbf' (t) }^2
		\text{\,.}
	\end{split}
	\]
	Since the unbalancedness magnitude is zero at initialization, \lem~\ref{lem:loc_comp_sq_norm_diff_invariant} implies that $\normnoflex{ \wbf (t) }^2 = \normnoflex{ \wbf' (t) }^2 = \htfcompnorm{\nu}{r} (t)^{2 / L_\nu}$ for all $\wbf, \wbf' \in \localcomp (\nu, r)$, which together with the expression above for $\tfrac{d}{dt} \htfcompnorm{\nu}{r} (t)$ establishes \eq~\eqref{eq:loc_comp_norm_bal_dyn}.
\end{proof}
\vspace{-0.5mm}

As can be seen from \eq~\eqref{eq:loc_comp_norm_bal_dyn}, the evolution of local component norms in a hierarchical tensor factorization is structurally identical to the evolution of singular values in matrix factorization (\eq~\eqref{eq:sing_val_mf_dyn}) and component norms in tensor factorization (\eq~\eqref{eq:comp_norm_tf_dyn}).
Specifically, it is dictated by two factors: a projection term, $\inprodbig{- \nabla \htfendloss ( \tensorend (t) ) }{ \htfcomp{\nu}{r} (t) }$, and a self-dependence term, $\htfcompnorm{\nu}{r} (t)^{2 - \frac{2}{ L_{\nu} } }  L_\nu$.
Analogous to a singular component $\mfcomp{r} (t)$ in matrix factorization and a normalized component $\tfcomp{r} (t)$ in tensor factorization, $\htfcomp{\nu}{r} (t)$ is the direction that the $(\nu, r)$'th local component imposes on $\tensorend (t)$.\footnote{
	Indeed, just as in matrix factorization $\matrixend = \sum_{r} \mfsing{r} \cdot \mfcomp{r}$, and in tensor factorization $\tftensorend = \sum_{r} \tfcompnorm{r} \cdot \tfcomp{r}$, the end tensor of a hierarchical tensor factorization decomposes as $\tensorend  = \sum_{r = 1}^{R_\nu} \htfcompnorm{\nu}{r} \cdot \htfcomp{\nu}{r}$ (implied by \lems~\ref{lem:ht_multilinear} and~\ref{lem:tensorend_comp_eq_zeroing_cols_or_rows} in \app~\ref{app:proofs}).
}
The projection of $\htfcomp{\nu}{r} (t)$ onto $- \nabla \htfendloss ( \tensorend (t) )$ therefore promotes growth of local components that align $\tensorend (t)$ with $- \nabla \htfendloss ( \tensorend (t) )$, the direction of steepest descent.
More critical is the self-dependence term, $\htfcompnorm{\nu}{r} (t)^{2 - \frac{2}{ L_{\nu} } } L_\nu$, which induces a momentum-like effect that attenuates the movement of small local components and accelerates the movement of large ones.
It suggests that, in analogy with matrix and tensor factorizations, local components tend to be learned incrementally, yielding a bias towards low hierarchical tensor rank.
This prospect is affirmed empirically in \fig~\ref{fig:mf_tf_htf_dynamics} (right) as well as \app~\ref{app:experiments}, and is supported theoretically in \subsect~\ref{sec:inc_rank_lrn:imp_htr_min}.

\vspace{-1.25mm}

\paragraph*{Evolution of local component norms under arbitrary initialization}
\thm~\ref{thm:loc_comp_norm_bal_dyn} can be extended to account for arbitrary initialization, \ie~for initialization with unbalancedness magnitude different from zero.
For conciseness we defer this extension to \app~\ref{app:dyn_arbitrary}, while noting that if initialization has small unbalancedness magnitude~---~as is the case with any near-zero initialization~---~then local component norms approximately evolve per \eq~\eqref{eq:loc_comp_norm_bal_dyn}, \ie~the result of \thm~\ref{thm:loc_comp_norm_bal_dyn} approximately holds.

\subsection{Implicit Hierarchical Tensor Rank Minimization}
\label{sec:inc_rank_lrn:imp_htr_min}

As discussed in \sect~\ref{sec:prelim}, under certain technical conditions, the incremental matrix and tensor rank learning phenomena, induced by the implicit regularization in matrix and tensor factorizations, can be used to prove exact matrix and tensor rank minimization, respectively.
Below we consider similar technical conditions, and provide theoretical results aimed at facilitating an analogous proof for hierarchical tensor factorization, \ie~a proof that its implicit regularization leads to exact hierarchical tensor rank minimization.
We begin by illustrating how, under said conditions, the incremental hierarchical tensor rank learning phenomenon established in \thm~\ref{thm:loc_comp_norm_bal_dyn} leads to solutions with many small local components (\subsect~\ref{sec:inc_rank_lrn:imp_htr_min:illustration}).
We then show that this implies proximity to low hierarchical tensor rank (\subsect~\ref{sec:inc_rank_lrn:imp_htr_min:proximity}).
Throughout the above, the main step missing in order to derive a complete proof of exact hierarchical tensor rank minimization, is confirmation that a certain alignment inequality (\eq~\eqref{eq:alignment_assump}) holds throughout optimization.
We regard this as an important direction for future work.

\subsubsection{Illustrative Demonstration of Small Local Components}
\label{sec:inc_rank_lrn:imp_htr_min:illustration}

Below we qualitatively demonstrate how the dynamical characterization derived in \subsect~\ref{sec:inc_rank_lrn:evolution} implies that the implicit regularization in hierarchical tensor factorization can lead to solutions with small local components.
Under the setting and notation of \thm~\ref{thm:loc_comp_norm_bal_dyn}, consider an initialization $\big (  \Ubf^{(\nu)}  \in \R^{R_\nu, R_{\parent (\nu)} } \big )_{\nu \in \htmodetree}$ for the weight matrices of the hierarchical tensor factorization, scaled by $\alpha \in \R_{ > 0}$.
That is, $\weightmat{\nu} (0) = \alpha \cdot \Ubf^{(\nu)}$ for all $\nu \in \htmodetree$.
Focusing on some interior node $\nu \in \interior (\htmodetree)$, let $r, \bar{r} \in [R_\nu]$, and assume for simplicity that $\nu$ is not degenerate, in the sense that~it has more than one child.
Suppose also that at initialization the norm of the $(\nu, r)$'th local component is greater than the norm of the $(\nu, \bar{r})$'th local component, \ie~$\htfcompnorm{\nu}{r} (0) > \htfcompnorm{\nu}{\bar{r}} (0)$, and that $\htfcomp{\nu}{r} (t)$ is at least as aligned as $\htfcomp{\nu}{\bar{r}} (t)$ with the direction of steepest descent up to a time $T > 0$, \ie~for all $t \in [0, T]$: 
\be
\inprodbig{- \nabla \htfendloss ( \tensorend (t) ) }{ \htfcomp{\nu}{r} (t) } \geq \inprodbig{- \nabla \htfendloss ( \tensorend (t) ) }{ \htfcomp{\nu}{\bar{r}} (t) }
\text{.}
\label{eq:alignment_assump}
\ee
Then, by \thm~\ref{thm:loc_comp_norm_bal_dyn} for all $t \in [0, T]$:\footnote{
	A local component cannot reach the origin unless it was initialized there (implied by \lem~\ref{lem:bal_zero_stays_zero} in \app~\ref{app:proofs}).
	Accordingly, we disregard the trivial case where $\htfcompnorm{\nu}{\bar{r}} (t) = 0$ for some $t \in [0, T]$.
}
\[
\htfcompnorm{\nu}{\bar{r}} (t)^{ - 2 + \frac{2}{L_\nu}} \frac{d}{dt} \htfcompnorm{\nu}{\bar{r}} (t) \leq  \htfcompnorm{\nu}{r} (t)^{- 2 + \frac{2}{L_\nu} } \frac{d}{dt} \htfcompnorm{\nu}{r} (t)
\text{\,.}
\]
Integrating both sides with respect to time, we may upper bound $\htfcompnorm{\nu}{\bar{r}} (t)$ with a function of $\htfcompnorm{\nu}{r} (t)$:
\be
\htfcompnorm{\nu}{\bar{r}} (t) \leq \Big [ \htfcompnorm{\nu}{r} (t)^{ - \frac{L_\nu - 2}{L_\nu}} + \alpha^{ - (L_\nu - 2) } \cdot const \Big ]^{ - \frac{L_\nu}{L_\nu - 2} }
\text{\,,}
\label{eq:illustrative_upper_bound_comp_norms}
\ee
where $const$ stands for a positive value that does not depend on $t$ and $\alpha$.
\eq~\eqref{eq:illustrative_upper_bound_comp_norms} reveals a gap between \smash{$\htfcompnorm{\nu}{r} (t)$} and \smash{$\htfcompnorm{\nu}{\bar{r}} (t)$} that is more significant the smaller the initialization scale $\alpha$ is.
In particular, regardless of how large $\htfcompnorm{\nu}{r} (t)$ is, $\htfcompnorm{\nu}{\bar{r}} (t)$ is upper bounded by a value that approaches zero as $\alpha \to 0$.
Hence, initializing near zero produces solutions with small local components.

\subsubsection{Small Local Components Imply Proximity to Low Hierarchical Tensor Rank}
\label{sec:inc_rank_lrn:imp_htr_min:proximity}

The following proposition establishes that small local components in a hierarchical tensor factorization imply that its end tensor can be well approximated with low hierarchical tensor rank.

\begin{proposition}
	\label{prop:low_rank_dist_bound}
	Consider an assignment for the weight matrices \smash{$\big ( \weightmat{\nu} \in \R^{R_\nu, R_{\parent (\nu)} }  \big )_{ \nu \in \htmodetree}$} of a hierarchical tensor factorization, and let $B := \max_{\nu \in \htmodetree} \normnoflex{ \weightmat{\nu}}$.
	Assume without loss of generality that at each $\nu \in \interior (\htmodetree)$, local components are ordered by their norms, \ie~$\htfcompnorm{\nu}{1} \geq \cdots \geq \htfcompnorm{\nu}{R_\nu}$.
	Then, for any $\epsilon \geq 0$ and $( R'_{\nu} \in \N )_{\nu \in \interior (\htmodetree)}$, if \,$\sum\nolimits_{r = R'_\nu + 1}^{R_\nu} \htfcompnorm{\nu}{r} \leq \epsilon \cdot (\abs{\htmodetree} - N)^{-1} B^{\abs{\children(\nu)} + 1 - \abs{\htmodetree}}$ for all $\nu \in \interior (\htmodetree)$, it holds that:
	\[
	\inf_{ \substack{ \W \in \R^{D_1, \ldots, D_N} \text{ s.t. } \\ \forall \nu \in \htmodetree \setminus \{ [N] \} :~\rank \mat{\W }{ \nu} \leq R'_{\parent (\nu) } } } \norm{ \tensorend - \W } \leq \epsilon
	\text{\,,}
	\]
	\ie~$\tensorend$ is within $\epsilon$-distance from the set of tensors whose hierarchical tensor rank is no greater (element-wise) than $\big ( R'_{\parent (\nu)} \big )_{\nu \in \htmodetree \setminus \{ [N] \}}$.
\end{proposition}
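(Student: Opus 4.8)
The plan is to exhibit a concrete tensor $\W$ that lies in the feasible set and within $\epsilon$ of $\tensorend$, obtained by truncating local components. For every interior node $\nu$, zero out rows $R'_\nu + 1, \ldots, R_\nu$ of $\weightmat{\nu}$; by \defin~\ref{def:local_comp} this discards the $R_\nu - R'_\nu$ smallest local components at $\nu$, and it leaves every other weight matrix unchanged. Let $\W$ be the end tensor (\eq~\eqref{eq:ht_end_tensor}) of the resulting hierarchical tensor factorization. Since this factorization has (effectively) at most $R'_\nu$ local components at each $\nu \in \interior(\htmodetree)$ — formally, one may also delete the zeroed rows of $\weightmat{\nu}$ together with the now-redundant columns $R'_\nu+1,\ldots,R_\nu$ of each $\weightmat{\nu_c}$, $\nu_c \in \children(\nu)$, without affecting the end tensor — \lem~\ref{lem:loc_comp_ht_rank_bound} yields $\rank \mat{\W}{\nu_c} \leq R'_\nu$ for every interior $\nu$ and child $\nu_c$, i.e., $\rank \mat{\W}{\mu} \leq R'_{\parent(\mu)}$ for all $\mu \in \htmodetree \setminus \{[N]\}$. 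Thus $\W$ belongs to the set over which the infimum is taken, and it remains to prove $\norm{\tensorend - \W} \leq \epsilon$ (the case $B = 0$ being trivial, as then $\tensorend = \W = 0$).

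To bound the distance, enumerate the interior nodes $\nu_1, \ldots, \nu_m$ with $m = \abs{\interior(\htmodetree)} = \abs{\htmodetree} - N$ and build $\W$ step by step: put $\mathcal F_0$ equal to the original factorization and, for $i \in [m]$, let $\mathcal F_i$ be $\mathcal F_{i-1}$ with rows $R'_{\nu_i}+1, \ldots, R_{\nu_i}$ of $\weightmat{\nu_i}$ additionally zeroed, with end tensor $\tensorend_i$, so that $\tensorend_0 = \tensorend$ and $\tensorend_m = \W$. By the triangle inequality $\norm{\tensorend - \W} \leq \sum_{i=1}^m \norm{\tensorend_{i-1} - \tensorend_i}$, so it suffices to show $\norm{\tensorend_{i-1} - \tensorend_i} \leq \epsilon/m$ for each $i$. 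Fix $i$, write $\nu := \nu_i$, and apply to $\mathcal F_{i-1}$ the decomposition of the end tensor along node $\nu$, namely $\tensorend = \sum_{r=1}^{R_\nu} \htfcompnorm{\nu}{r} \cdot \htfcomp{\nu}{r}$ (the identity recorded in the footnote following \thm~\ref{thm:loc_comp_norm_bal_dyn}, resting on multilinearity of the end-tensor map in the weight matrices, \lem~\ref{lem:ht_multilinear}). Each summand equals the end tensor obtained by keeping only row $r$ of $\weightmat{\nu}$ (and only column $r$ of each child's matrix), so this decomposition is linear in the rows of $\weightmat{\nu}$; zeroing rows $r > R'_\nu$ therefore deletes precisely those summands, giving $\norm{\tensorend_{i-1} - \tensorend_i} \leq \sum_{r = R'_\nu + 1}^{R_\nu} \sigma_r \norm{T_r}$, where $\sigma_r$ and $T_r$ are the values of $\htfcompnorm{\nu}{r}$ and $\htfcomp{\nu}{r}$ in $\mathcal F_{i-1}$.

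It remains to establish $\sigma_r \norm{T_r} \leq \htfcompnorm{\nu}{r} \cdot B^{\abs{\htmodetree} - L_\nu}$ with $L_\nu = \abs{\children(\nu)} + 1$; substituting the hypothesis $\sum_{r > R'_\nu} \htfcompnorm{\nu}{r} \leq \epsilon (\abs{\htmodetree} - N)^{-1} B^{L_\nu - \abs{\htmodetree}}$ then gives $\norm{\tensorend_{i-1} - \tensorend_i} \leq \epsilon/m$, and the proposition follows by summing over $i$. For $\sigma_r$: prior truncations only zero entries, so column norms can only shrink and hence $\sigma_r \leq \htfcompnorm{\nu}{r}$, its value in the original factorization. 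For $\norm{T_r}$: by \lem~\ref{lem:ht_multilinear} the component tensor $\sigma_r T_r$ is the end tensor of $\mathcal F_{i-1}$ with only row $r$ of $\weightmat{\nu}$ and only column $r$ of each $\weightmat{\nu_c}$ retained and the remaining matrices left intact, so a single normalized channel is propagated from $\nu$ to the root and each of the $\abs{\htmodetree} - L_\nu$ weight matrices outside $\{\nu\} \cup \children(\nu)$ enters once; using that $\tenp$ is multiplicative with respect to $\norm{\cdot}$, that each mode permutation $\pi_\mu$ is an isometry, and that the per-node combine operations are sub-multiplicative in operator norm, one obtains $\norm{T_r} \leq \prod_{\mu \in \htmodetree \setminus (\{\nu\} \cup \children(\nu))} \normnoflex{\weightmat{\mu}} \leq B^{\abs{\htmodetree} - L_\nu}$ (the norms in $\mathcal F_{i-1}$ are still at most $B$). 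I expect this last amplification estimate to be the crux: making it precise requires an auxiliary lemma — proved by induction over the mode tree — stating that the linear \emph{frame} maps associated with subtrees have operator norm bounded by the product of the Frobenius norms of the weight matrices they involve, the subtle part being to account for these maps through operator norms rather than a crude triangle inequality, so as to avoid spurious $\sqrt{R_\nu}$-type factors that the stated bound would not absorb.
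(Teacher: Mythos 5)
Your proposal follows essentially the same route as the paper's: truncate the smallest local components, invoke \lem~\ref{lem:loc_comp_ht_rank_bound} on the truncated factorization to place the resulting end tensor in the feasible set, and bound $\norm{\tensorend - \W}$ by telescoping with a per-component estimate of the form $\htfcompnorm{\nu}{r}\cdot B^{\abs{\htmodetree}-L_\nu}$. The paper iterates over individual local components across all nodes rather than node by node, but this is cosmetic; the decomposition $\tensorend = \sum_r \htfcompnorm{\nu}{r}\,\htfcomp{\nu}{r}$ that you use, and the observation that pruning only shrinks the remaining local-component norms and weight-matrix norms, are both present in the paper's argument as well.

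The one piece you explicitly leave open, the amplification bound $\norm{T_r} \leq \prod_{\mu \in \htmodetree \setminus (\{\nu\}\cup\children(\nu))} \normnoflex{\weightmat{\mu}}$, is exactly the paper's \lem~\ref{lem:dist_from_pruned_local_comp_end_tensor}, which rests on \lems~\ref{lem:tensorpart_norm_bound} and~\ref{lem:stacked_tensorpart_norm_bound}. Your instinct that the danger is a spurious $\sqrt{R_\nu}$-type factor is correct, but the resolution is not a passage to spectral norms. The paper writes $\tensorpart{\nu}{r}$ as a matrix--vector product $A\,\weightmat{\nu}_{:,r}$ where the columns of $A$ are the vectorized $\tenp_{\nu_c}\tensorpart{\nu_c}{r'}$, uses the plain Frobenius bound $\normnoflex{A\,\weightmat{\nu}_{:,r}} \leq \normnoflex{A}\normnoflex{\weightmat{\nu}_{:,r}}$, and then removes the overcount via $\normnoflex{A}^2 = \sum_{r'}\prod_{\nu_c}\normnoflex{\tensorpart{\nu_c}{r'}}^2 \leq \prod_{\nu_c}\sum_{r'}\normnoflex{\tensorpart{\nu_c}{r'}}^2 = \prod_{\nu_c}\normnoflex{\tensorpart{\nu_c}{:}}^2$, i.e.\ a sum of products of nonnegatives bounded by the product of sums. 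Summing over $r$ yields $\normnoflex{\tensorpart{\nu}{:}} \leq \normnoflex{\weightmat{\nu}}\prod_{\nu_c}\normnoflex{\tensorpart{\nu_c}{:}}$, which, propagated from the root down to $\nu$ and from $\nu$'s grandchildren down to the leaves, gives precisely the estimate you want. So your proof is incomplete only at the step you yourself flag; once that lemma is supplied, the argument is sound and coincides with the paper's.
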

\begin{proof}[Proof sketch (proof in \subapp~\ref{app:proofs:low_rank_dist_bound})]
	Let $\widebar{\W}_{HT}^{\S}$ be the end tensor obtained after pruning all local components indexed by $\S := \{ (\nu, r) : \nu \in \interior (\htmodetree) , r \in \{ R'_{\nu} + 1, \ldots, R_\nu \} \}$, \ie~after setting to zero the $r$'th row of $\weightmat{\nu}$ and the $r$'th column of $\weightmat{\nu_c}$ for all $(\nu, r) \in \S$ and $\nu_c \in \children (\nu)$.
	The desired result follows by showing that $\rank \mat{ \widebar{\W}_{HT}^{\S} }{ \nu} \leq R'_{ \parent (\nu) }$ for all $\nu \in \htmodetree \setminus \{ [N] \}$, and upper bounding $\norm{ \tensorend - \widebar{\W}_{HT}^{\S} }$ by $\epsilon$.
\end{proof}

\subsection{Partially Ordered Complexity Measure}
\label{sec:inc_rank_lrn:partial_order}

Existing attempts to explain implicit regularization in deep learning typically argue for reduction of some complexity measure that is \emph{totally ordered} (meaning that within any two values for this measure, there must be one smaller than or equal to the other), for example a norm~\cite{gunasekar2017implicit,soudry2018implicit,li2018algorithmic,woodworth2020kernel,lyu2021gradient}.
Recent evidence suggests that obtaining a complete explanation through such complexity measures may not be possible \cite{razin2020implicit,vardi2021implicit}.
Hierarchical tensor rank (which we have shown to be implicitly reduced by a class of deep non-linear convolutional networks) represents a new type of complexity measure, in the sense that it is \emph{partially ordered}.  Specifically, while it entails a standard (product) partial order~---~$(r_1, \ldots, r_K) \leq (r'_1, \ldots, r'_K)$ if and only if $r_i \leq r'_i$ for all $i \in [K]$~---~it does not admit a natural total order.  
Indeed, \prop~\ref{prop:htr_multiple_minima} below shows that there exist simple learning problems in which, among the data-fitting solutions, there are multiple minimal hierarchical tensor ranks, none smaller than or equal to the other.  
We believe the notion of a partially ordered complexity measure may pave the way to furthering our understanding of implicit regularization in deep learning.

\begin{proposition}
	\label{prop:htr_multiple_minima}
	For every order $N \in \N_{\geq 3}$ and mode dimensions $D_1, \ldots, D_N \in \N_{\geq 2}$, there exists a tensor completion problem (\ie~a loss $\L ( \W ) = \frac{1}{\abs{\Omega}} \sum_{(d_1, \ldots, d_N) \in \Omega} ( \W_{d_1, \ldots, d_N} - \W^*_{d_1, \ldots, d_N})^2$ with ground truth $\W^* \in \R^{D_1, \ldots, D_N}$ and set of observed entries $\Omega \subset [D_1] \times \cdots \times [D_N]$) in which, for every mode tree $\htmodetree$ over $[N]$ (\defin~\ref{def:mode_tree}), the set of hierarchical tensor ranks for tensors fitting the observations includes multiple minimal elements (under the standard product partial order), none smaller than or equal to the other.
	That is, the set $\RR_\htmodetree := \left \{ (\rank \mat{\W }{ \nu})_{\nu \in \htmodetree \setminus \{ [N] \}} : \W \in \R^{D_1, \ldots, D_N} , \L ( \W ) = 0 \right \}$ includes elements $(R_\nu )_{\nu \in \htmodetree \setminus \{ [N]\}}$ and $(R'_\nu )_{\nu \in \htmodetree \setminus \{ [N]\}}$ for which the following hold: \emph{(i)} there exists no $(R''_\nu )_{\nu \in \htmodetree \setminus \{ [N]\}} \in \RR_\htmodetree \setminus \{ (R_{\nu} )_{\nu  \in \htmodetree \setminus \{ [N] \}} , (R'_{\nu } )_{\nu \in \htmodetree \setminus \{ [N]\} } \}$ satisfying $(R''_\nu )_{\nu} \leq (R_\nu )_{\nu}$ or $(R''_\nu )_{\nu} \leq (R'_\nu )_{\nu}$; and \emph{(ii)} neither $(R_\nu )_{\nu} \leq (R'_\nu )_{\nu}$ nor $(R'_\nu )_{\nu} \leq (R_\nu )_{\nu}$.
\end{proposition}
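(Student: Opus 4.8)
The plan is to exhibit a single tensor completion problem --- depending on $N$ and $D_1,\dots,D_N$ but not on any mode tree --- that, via a zero-padding reduction, is governed by a fixed $2\times 2\times 2$ ``core'' problem, engineered so that the only multilinear ranks realized by data-fitting completions of the core are $(2,2,1)$, $(1,2,2)$, and $(2,2,2)$, with the first two incomparable and the third dominating both componentwise. The three corresponding hierarchical tensor rank profiles will then be exactly the elements of $\RR_\htmodetree$, and the first two of them will serve as the desired minimal, mutually incomparable elements for every mode tree $\htmodetree$.

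\textbf{The core problem.} I would take $\A,\B\in\R^{2,2,2}$ defined by $\A_{ijk}=\indc{i=j}$ and $\B_{ijk}=\indc{j=k}$; a short computation yields $\rank\mat{\A}{\{1\}}=\rank\mat{\A}{\{2\}}=2$, $\rank\mat{\A}{\{3\}}=1$ and, symmetrically, $\rank\mat{\B}{\{1\}}=1$, $\rank\mat{\B}{\{2\}}=\rank\mat{\B}{\{3\}}=2$. Let the observed set be the entries on which $\A$ and $\B$ coincide --- this works out to $\{(1,1,1),(1,2,1),(2,1,2),(2,2,2)\}$, with values $1,0,0,1$, so the zero tensor does not fit --- and let the ground truth be the restriction of $\A$ to it. Writing the four remaining entries as free parameters and computing the ranks of the three $2\times 4$ matricizations, I would verify that every fitting $\U\in\R^{2,2,2}$ has $\rank\mat{\U}{\{2\}}=2$, that $\rank\mat{\U}{\{1\}}=1$ and $\rank\mat{\U}{\{3\}}=1$ are each equivalent to an explicit polynomial condition on the parameters, and that these two conditions cannot hold simultaneously. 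Hence the multilinear rank of any fitting $\U$ lies in $\{(2,2,1),(1,2,2),(2,2,2)\}$, with $(2,2,1)$ attained by $\A$, $(1,2,2)$ by $\B$, and $(2,2,2)$ generically.

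\textbf{The reduction.} For general $N\geq 3$ and $D_i\geq 2$, I would define the ground truth and observations on $[D_1]\times\cdots\times[D_N]$ by: observing value $0$ at every entry with $(d_4,\dots,d_N)\neq(1,\dots,1)$; observing value $0$ at every entry with $(d_4,\dots,d_N)=(1,\dots,1)$ and $(d_1,d_2,d_3)\notin\{1,2\}^3$; and placing the core problem inside the remaining $\{1,2\}^3$ block. Then $\W$ fits if and only if $\W=\U'\tenp\vbf^{(4)}\tenp\cdots\tenp\vbf^{(N)}$, where $\vbf^{(i)}\in\R^{D_i}$ is the first standard basis vector and $\U'\in\R^{D_1,D_2,D_3}$ is a fitting core completion $\U$ zero-padded to full size (for $N=3$ the tensor product over $i\geq 4$ is empty and $\W=\U'$). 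Using that a matricization of a tensor product of tensors equals, up to a permutation of rows and columns, the Kronecker product of the matricizations of the factors --- which follows directly from \defin~\ref{def:matricization} --- and that zero-padding preserves matrix rank, one gets $\rank\mat{\W}{\nu}=\rank\mat{\U}{\nu\cap\{1,2,3\}}$ for every $\nu\in\htmodetree\setminus\{[N]\}$. Since, for $S\subseteq\{1,2,3\}$ and nonzero $\U$, $\rank\mat{\U}{S}$ depends only on $S$ and the multilinear rank of $\U$, every element of $\RR_\htmodetree$ is a function of the multilinear rank of the underlying core completion; together with the previous step this gives $\RR_\htmodetree=\{P_{(2,2,1)},P_{(1,2,2)},P_{(2,2,2)}\}$, where $P_{\mathbf r}:=\big(\rank\mat{\U_{\mathbf r}}{\nu\cap\{1,2,3\}}\big)_{\nu\in\htmodetree\setminus\{[N]\}}$ for any $\U_{\mathbf r}$ of multilinear rank $\mathbf r$.

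\textbf{Conclusion, and the hard part.} For every mode tree, the leaves $\{1\}$ and $\{3\}$ belong to $\htmodetree\setminus\{[N]\}$; on the $\{1\}$-coordinate $P_{(2,2,1)}$ equals $2$ while $P_{(1,2,2)}$ equals $1$, and on the $\{3\}$-coordinate they equal $1$ and $2$ respectively, whereas $P_{(2,2,2)}$ equals $2$ on both. Thus $P_{(2,2,1)}$ and $P_{(1,2,2)}$ are distinct, mutually incomparable, and both strictly dominated by $P_{(2,2,2)}$; hence they are precisely the minimal elements of $\RR_\htmodetree$, which is exactly the assertion (with $(R_\nu)_\nu=P_{(2,2,1)}$ and $(R'_\nu)_\nu=P_{(1,2,2)}$). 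The hard part will be the core step --- the exhaustive rank case analysis that rules out every multilinear rank outside $\{(2,2,1),(1,2,2),(2,2,2)\}$, in particular $(1,1,1)$ and the ``mixed'' ranks --- together with a careful statement and proof of the matricization/Kronecker-product identity underlying the reduction; everything else is a direct unwinding of the definitions in \sect~\ref{sec:htf}.
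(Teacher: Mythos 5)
Your construction is correct, but it takes a genuinely different route from the paper's. The paper observes only four entries (all indexed by ones in the first $N-3$ modes and by a $2\times 2 \times 2$ pattern in the last three), explicitly exhibits two fitting tensors $\W,\W'$ whose hierarchical tensor ranks are incomparable (via leaves $\{N-2\}$ and $\{N-1\}$), and then carries out a multi-case contradiction argument to rule out any $\W''$ whose hierarchical rank lies strictly below either of them; the argument splits on which of $N-2, N-1, N$ the node $\nu$ contains and invokes a Tucker-style upper bound on tensor rank to finish one of the cases. You instead observe all but four entries, forcing every fitting tensor to be a zero-padded elementary extension of a four-parameter $2\times 2\times 2$ core; you then fully classify the multilinear ranks of fitting cores as $\{(2,2,1),(1,2,2),(2,2,2)\}$, show via the Kronecker/matricization identity (correctly noting the paper's \lem~\ref{lem:tenp_eq_kronp} really holds only up to a row/column permutation, which preserves rank) that $\rank\mat{\W}{\nu}=\rank\mat{\U}{\nu\cap\{1,2,3\}}$ for every $\nu$, and observe that for order-$3$ cores this quantity is a function of the multilinear rank alone. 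This pins down $\RR_\htmodetree$ to exactly three elements whose partial order is transparent, after which minimality and incomparability of the two bottom elements is immediate. The trade-offs: the paper's construction is a ``small-data'' completion problem (only four observations) at the cost of a lengthy case analysis, while yours observes nearly the entire tensor but in return gets a fully structural, essentially computation-free characterization of $\RR_\htmodetree$ --- which is in fact a slightly stronger conclusion than the proposition demands, since it identifies the whole poset of achievable ranks, not merely two incomparable minima. I checked your core computations (the rank-one conditions $a=d=0,\ bc=1$ for mode $1$ and $b=c=0,\ ad=1$ for mode $3$, their incompatibility, and the fact that $\rank\mat{\U}{\{2\}}$ is always $2$ because $\U_{111}=1$ and $\U_{222}=1$ lie in distinct rows of a $2\times 4$ matrix with a forced zero opposite the former), and they all hold.
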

\begin{proof}[Proof sketch (proof in \subapp~\ref{app:proofs:htr_multiple_minima})]
	We construct a tensor completion problem and two solutions $\W$ and $\W'$ (tensors fitting observed entries) such that, for every mode tree $\htmodetree$, the hierarchical tensor ranks with respect to $\htmodetree$ of $\W$ and $\W'$ are two different minimal elements of $\RR_\htmodetree$.
\end{proof}

	\section{Low Hierarchical Tensor Rank Implies Locality} 
\label{sec:low_htr_implies_locality}

In \sect~\ref{sec:inc_rank_lrn} we established that the implicit regularization in hierarchical tensor factorization favors solutions with low hierarchical tensor rank.
A natural question that arises is what are the implications of this tendency for the class of deep convolutional networks equivalent to hierarchical tensor factorization (illustrated in \fig~\ref{fig:tf_htf_as_convnet} (bottom)).
It is known~\cite{cohen2017inductive,levine2018benefits,levine2018deep} that for this class of networks, hierarchical tensor rank measures the strength of dependencies modeled between spatially distant input regions (patches of pixels in the context of image classification)~---~see brief explanation in \subsect~\ref{sec:low_htr_implies_locality:sep_rank} below, and formal derivation in \app~\ref{app:ht_sep_rank}.
An implicit regularization towards low hierarchical tensor rank thus implies a bias towards local (short-range) dependencies.
While seemingly benign, this observation is shown in \sect~\ref{sec:countering_locality} to bring forth a practical method for improving performance of contemporary convolutional networks (\eg~ResNet18 and ResNet34 from~\citet{he2016deep}) on tasks with long-range dependencies.

\subsection{Locality via Separation Rank}
\label{sec:low_htr_implies_locality:sep_rank}

Given a multivariate function $f$ with scalar output, a popular measure of dependencies between a set of input variables and its complement is known as \emph{separation rank}.  
The separation rank, formally presented in \defin~\ref{def:sep_rank} below, was originally introduced in~\citet{beylkin2002numerical}, and has since been employed for various applications~\cite{harrison2003multiresolution,hackbusch2006efficient,beylkin2009multivariate}, as well as analyses of expressiveness in deep learning~\cite{cohen2017inductive,cohen2017analysis,levine2018benefits,levine2018deep,levine2020limits,wies2021transformer,levine2022inductive}.
It is also prevalent in quantum mechanics, where it serves as a measure of entanglement~\cite{levine2018deep}.

Consider the convolutional network equivalent to a hierarchical tensor factorization with mode tree $\htmodetree$.
It turns out (see formal derivation in \app~\ref{app:ht_sep_rank}) that for functions realized by this network, separation ranks measuring dependencies between distinct regions of the input are precisely equal to entries of the hierarchical tensor rank with respect to $\htmodetree$ (recall that, as discussed in \sect~\ref{sec:htf}, the hierarchical tensor rank is a tuple).  
Thus, low hierarchical tensor rank implies that the separation ranks are low, which in turn means that dependencies modeled between distinct input regions are weak, \ie~that only local dependencies are prominent.

\begin{definition}
\label{def:sep_rank}
The \emph{separation rank} of $f : \times_{n = 1}^N \R^{D_n} \to \R$ with respect to $I \subset [N]$, denoted $\seprank (f; I)$, is the minimal $R \in \N \cup \{ 0 \}$ for which there exist $g_1, \ldots, g_R : \times_{i \in I} \R^{D_i} \to \R$ and $\bar{g}_1, \ldots, \bar{g}_R :\times_{j \in [N] \setminus I} \R^{D_j} \to \R$ such that:
\[
f \bigl ( \xbf^{(1)}, \ldots, \xbf^{(N)} \bigr ) \! = \! \sum_{r = 1}^R g_r \big ( \big ( \xbf^{(i)} \big )_{i \in I} \big ) \cdot \bar{g}_r \bigl ( \bigl ( \xbf^{(j)} \bigr )_{j \in [N] \setminus I } \bigr )
\text{.}
\]
\end{definition}

\paragraph*{Interpretation}
The separation rank of~$f$ with respect to~$I$ is the minimal number of summands required to express~$f$, where each summand is a product of two functions~---~one that operates over variables indexed by~$I$, and another that operates over the remaining variables.
If $\seprank (f; I) = 1$, the function is separable, meaning it does not model any interaction between the sets of variables.
In a statistical setting, where $f$ is a probability density function, this would mean that $( \xbf^{(i)} )_{i \in I}$ and $( \xbf^{(j)} )_{j \in [N] \setminus I}$ are statistically independent.
The higher $\seprank (f; I)$ is, the farther $f$ is from separability, \ie~the stronger the dependencies it models between $( \xbf^{(i)} )_{i \in I}$ and $( \xbf^{(j)} )_{j \in [N] \setminus I}$.

	\section{Countering Locality of Convolutional Networks via Regularization} 
\label{sec:countering_locality}

\begin{figure*}[t]
	\vspace{-1mm}
	\begin{center}
			\hspace{-5mm}
			\includegraphics[width=0.84\textwidth]{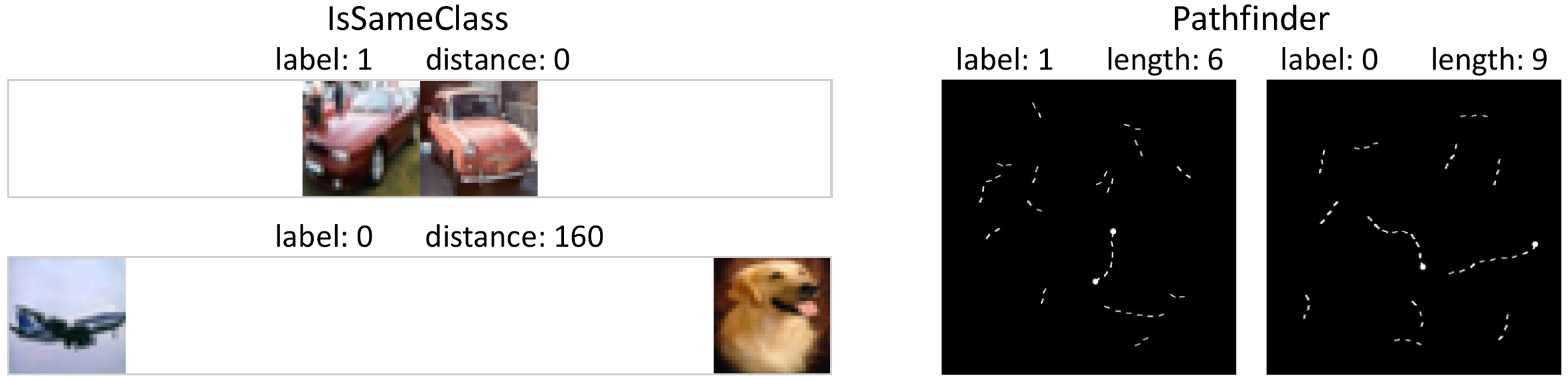}
	\end{center}
	\vspace{-2mm}
	\caption{
		Samples from IsSameClass and Pathfinder datasets.
		For further details on their creation process see \subapp~\ref{app:experiments:details:conv}.
		\textbf{Left:} positive and negative samples from IsSameClass datasets with $0$ and $160$ pixels between images, respectively.
		The label is $1$ if the two CIFAR10 images are of the same class, and $0$ otherwise.
		For the sake of illustration, background is displayed as white instead of black, and padding is not shown (\ie~only the raw $32 \times 224$ input is presented).
		\textbf{Right:} positive and negative samples from Pathfinder challenge~\cite{linsley2018learning} datasets with connecting path lengths $6$ and $9$, respectively. 
		A connecting path is one that joins the two circles, and if present, its length is measured in the number of dashes.
		The label of a sample is $1$ if it includes a connecting path (\ie~if the two circles are connected), and $0$ otherwise.
	}
	\label{fig:long_range_datasets_samples}
	\vspace{-2mm}
\end{figure*}

\begin{figure*}[t]
	\vspace{0.5mm}
	\begin{center}
		\hspace{-3mm}
		\includegraphics[width=0.855\textwidth]{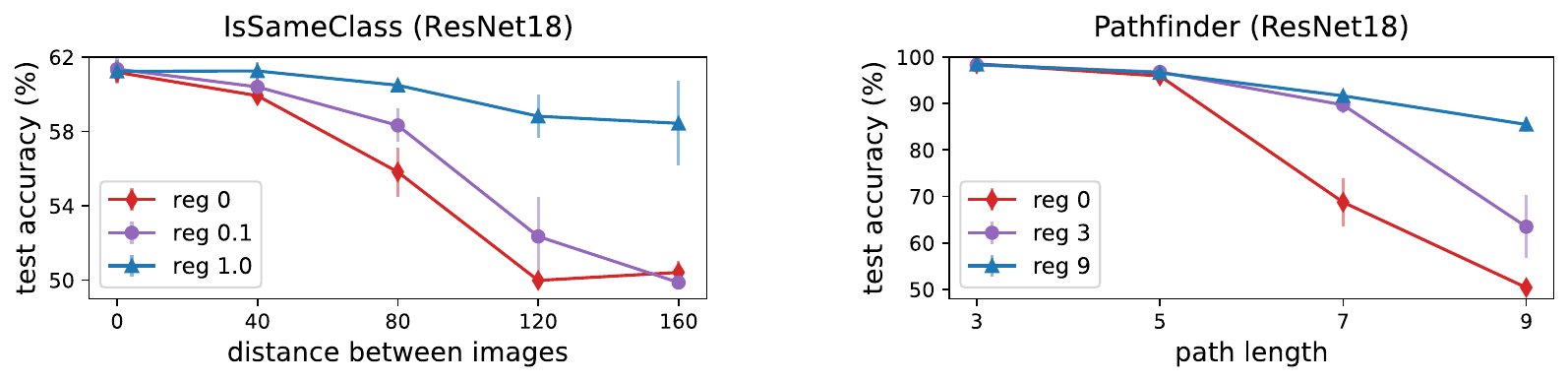}
	\end{center}
	\vspace{-4mm}
	\caption{
		Dedicated explicit regularization can counter the locality of convolutional networks, significantly improving performance on tasks with long-range dependencies.
		Plots present test accuracies achieved by a randomly initialized ResNet18 over IsSameClass (left) and Pathfinder (right) datasets, with varying spatial distances between salient regions of the input (CIFAR10 images in IsSameClass and connected circles in Pathfinder --- see \fig~\ref{fig:long_range_datasets_samples}).
		For each dataset, the network was trained via stochastic gradient descent to minimize a regularized objective, consisting of the binary cross-entropy loss and the dedicated regularization described in \subsect~\ref{sec:countering_locality:reg}.
		The legend specifies the regularization coefficients used.
		Markers and error bars report means and standard deviations, respectively, taken over five different runs for the corresponding combination of dataset and regularization coefficient.
		As expected, when increasing the (spatial) range of dependencies required to be modeled, the test accuracy obtained by an unregularized network (regularization coefficient zero) substantially deteriorates, reaching the vicinity of the trivial value $50\%$.
		Conventional wisdom attributes this failure to a limitation in the expressive capability of convolutional networks (\ie~to their inability to represent functions modeling long-range dependencies). 
		However, as can be seen, applying the dedicated regularization significantly improved performance, without any architectural modification.
		\app~\ref{app:experiments} provides further implementation details, as well as additional experiments: \emph{(i)} using ResNet34; and \emph{(ii)} showing similar improvements when the baseline network (“reg $0$'') is already regularized via standard techniques (weight decay or dropout).
	}
	\label{fig:long_range_and_reg_results}
	\vspace{-1.5mm}
\end{figure*}

Convolutional networks often struggle or completely fail to learn tasks that entail strong dependence between spatially distant regions of the input (patches of pixels in image classification or tokens in natural language processing tasks)~---~see, \eg,~\citet{wang2016temporal,linsley2018learning,mlynarski2019convolutional,hong2020graph, kim2020disentangling}.
Conventional wisdom attributes this failure to the local nature of the architecture, \ie~to its inability to express long-range dependencies (see, \eg,~\citet{cohen2017inductive,linsley2018learning,kim2020disentangling}).
This suggests that addressing the problem requires modifying the architecture.
Our theory reveals that there is also an implicit regularization at play, giving rise to the possibility of countering the locality of convolutional networks via \emph{explicit} regularization, without modifying their architecture.
In the current section we affirm this possibility, demonstrating that carefully designed regularization can greatly improve the performance of contemporary convolutional networks on tasks involving long-range dependencies.
For brevity, we defer some implementation details and experiments to \app~\ref{app:experiments}.

We conducted a series of experiments, using the ubiquitous ResNet18 and ResNet34 convolutional networks~\cite{he2016deep}, over two types of image classification datasets in which the distance between salient regions can be controlled.
The first type, referred to as "IsSameClass," comprises datasets we constructed, where the goal is to predict whether two randomly sampled CIFAR10~\cite{krizhevsky2009learning} images are of the same class.
Each input sample is a $32 \times 224$ image filled with zeros, in which the CIFAR10 images are placed (symmetrically around the center) at a predetermined distance from each other (to comply with ResNets, inputs were padded to have size $224 \times 224$).
By increasing the predetermined distance between CIFAR10 images, we produce datasets requiring stronger modeling of long-range dependencies.
The second type of datasets is taken from the Pathfinder challenge~\cite{linsley2018learning,kim2020disentangling,tay2021long}~---~a standard benchmark for modeling long-range dependencies.
In Pathfinder, each image contains two white circles and multiple dashed paths (curves) over a black background, and the goal is to predict whether the circles are connected by a path.  
The length of connecting paths is predetermined, allowing control over the (spatial) range of dependencies necessary to model.
Representative examples from IsSameClass and Pathfinder datasets are displayed in \fig~\ref{fig:long_range_datasets_samples}.

\fig~\ref{fig:long_range_and_reg_results} shows that when fitting IsSameClass and Pathfinder datasets, increasing the strength of long-range dependencies (\ie~the distance between images in IsSameClass, and the connecting path length in Pathfinder) leads to significant degradation in test accuracy, oftentimes resulting in performance no better than random guessing.
This phenomenon complies with existing evidence from~\citet{linsley2018learning,kim2020disentangling} showing failure of convolutional networks in learning tasks with long-range dependencies.
However, while \citet{linsley2018learning,kim2020disentangling} address the problem by modifying the architecture, we tackle it through explicit regularization (described in \subsect~\ref{sec:countering_locality:reg} below) designed to promote high separation ranks (\defin~\ref{def:sep_rank}), \ie~long-range dependencies between image regions.
As evident in \fig~\ref{fig:long_range_and_reg_results}, our regularization significantly improves test accuracy.
This implies that the tendency towards locality of modern convolutional networks may in large part be due to implicit regularization, and not an inherent limitation of expressive power as often believed.
Our findings showcase that deep learning architectures considered suboptimal for certain tasks may be greatly improved through a right choice of explicit regularization.
Theoretical understanding of implicit regularization may be key to discovering such regularizers.

\subsection{Explicit Regularization Promoting Long-Range Dependencies}
\label{sec:countering_locality:reg}

We describe below the explicit regularization applied in our experiments to counter the locality of convolutional networks.
We emphasize that this regularization is based on our theory, and merely serves as an example to how the performance of convolutional networks on tasks involving long-range dependencies can be improved without modifying their architecture.
Further evaluation and improvement of our regularization are regarded as promising directions for future work.

Denote by $f_\Theta ( \Xbf )$ the output of a neural network, where $\Theta$ stands for its learnable weights, and $\Xbf := ( \xbf^{(1)}, \ldots, \xbf^{(N)} )$ represents an input image, with each $\xbf^{(n)}$ standing for a pixel.
Suppose we are given a subset of indices $I \subset [N]$, with complement $J := [N] \setminus I$, and we would like to encourage the network to learn a function $f_\Theta$ that models strong dependence between $\Xbf_I := ( \xbf^{(i)} )_{i \in I}$ (pixels indexed by $I$) and $\Xbf_J := ( \xbf^{(j)} )_{j \in J}$ (those indexed by~$J$).
As discussed in \subsect~\ref{sec:low_htr_implies_locality:sep_rank}, a standard measure of such dependence is the separation rank, provided in \defin~\ref{def:sep_rank}.
If the separation rank of $f_\Theta$ with respect to $I$ is one, meaning no dependence between $\Xbf_I$ and~$\Xbf_J$ is modeled, then we may write $f_\Theta ( \Xbf ) = g (\Xbf_I) \cdot \bar{g} (\Xbf_J)$ for some functions $g$ and $\bar{g}$.
This implies that $\nabla_{\Xbf_I} f_\Theta ( \Xbf ) = \bar{g} (\Xbf_{J}) \cdot \nabla g (\Xbf_I)$, meaning that a change in $\Xbf_J$ (with $\Xbf_I$ held fixed) does not affect the direction of $\nabla_{\Xbf_I} f_\Theta (\Xbf)$, only its magnitude (and possibly its sign).
This observation suggests that, in order to learn a function $f_\Theta$ modeling strong dependence between $\Xbf_I$ and $\Xbf_J$, one may add a regularization term that promotes a change in the direction of $\nabla_{\Xbf_I} f_\Theta (\Xbf)$ whenever $\Xbf_J$ is altered (with $\Xbf_I$ held fixed).

The regularization applied in our experiments is of the type outlined above, with $I$ and $J$ chosen to promote long-range dependencies.
Namely, at each iteration of stochastic gradient descent we randomly choose disjoint subsets of indices $I$ and $J$ corresponding to contiguous (distinct) image regions.
Then, for each image $\Xbf$ in the iteration's batch, we let $\Xbf'$ be the result of replacing the pixels in $\Xbf$ indexed by $J$ with alternative values taken from a different image in the training set.
Finally, we compute $\abs{ \inprod{ \nabla_{\Xbf_I} f_\Theta (\Xbf) }{ \nabla_{\Xbf_I} f_\Theta (\Xbf') } } \cdot  \normnoflex{ \nabla_{\Xbf_I} f_\Theta (\Xbf) }^{-1}  \normnoflex{ \nabla_{\Xbf_I} f_\Theta (\Xbf') }^{-1}$~---~(absolute value of) cosine of the angle between $\nabla_{\Xbf_I} f_\Theta (\Xbf)$ and $\nabla_{\Xbf_I} f_\Theta (\Xbf')$~---~average it across the batch, multiply the average by a constant coefficient, and add the result to the minimized objective.\footnote{
	Each artificially generated image $\Xbf'$ is used only to compute the regularization term, not as an additional training instance incurring its own loss.
	Our proposed regularization is therefore fundamentally different from data augmentation.
}
For further details see \subapp~\ref{app:experiments:details:conv}.

	\section{Related Work} \label{sec:related}

A large and growing body of literature has theoretically investigated the implicit regularization brought forth by gradient-based optimization.
Works along this line have treated various models, including: linear predictors~\cite{soudry2018implicit,gunasekar2018implicit,nacson2019convergence,ji2019implicit,shachaf2021theoretical}; polynomially parameterized linear models with a single output~\cite{ji2019gradient,woodworth2020kernel,moroshko2020implicit,azulay2021implicit,haochen2021shape,pesme2021implicit,li2021happens,chou2021more}; shallow non-linear neural networks~\cite{hu2020surprising,vardi2021implicit,sarussi2021towards,mulayoff2021implicit,lyu2021gradient}; homogeneous networks~\cite{lyu2020gradient,vardi2021margin}; and ultra-wide networks~\cite{oymak2019overparameterized,chizat2020implicit}.
Arguably the most widely analyzed model is matrix factorization, whose study was extended to tensor factorization~\cite{gunasekar2017implicit,du2018algorithmic,li2018algorithmic,arora2019implicit,gidel2019implicit,mulayoff2020unique,blanc2020implicit,gissin2020implicit,razin2020implicit,chou2020gradient,eftekhari2021implicit,yun2021unifying,min2021explicit,li2021towards,razin2021implicit,milanesi2021implicit,ge2021understanding}.
Our work generalizes existing results for matrix and tensor factorizations (see \sect~\ref{sec:prelim}) to hierarchical tensor factorization~---~a considerably richer and more complex model.

Hierarchical tensor factorization was originally introduced in~\citet{hackbusch2009new}.
By virtue of its equivalence to different types of (non-linear) neural networks, it has been paramount to the study of expressiveness in deep learning~\cite{cohen2016expressive,sharir2016tensorial,cohen2016convolutional,cohen2017inductive,cohen2017analysis,sharir2018expressive,cohen2018boosting,levine2018benefits,levine2018deep,balda2018tensor,khrulkov2018expressive,khrulkov2019generalized,levine2019quantum}.
It is also used in different contexts, for example recovery of low (hierarchical tensor) rank tensors~\cite{da2015optimization,steinlechner2016riemannian,rauhut2017low,kargas2020nonlinear,kargas2021supervised}. 
To the best of our knowledge, this paper is the first to study implicit regularization of gradient-based optimization over hierarchical tensor factorization.

With regards to convolutional networks, theoretical investigations of their implicit regularization are scarce.
Existing works in this category treat linear~\cite{gunasekar2018implicit,jagadeesan2021inductive,kohn2021geometry} and homogeneous~\cite{nacson2019lexicographic,lyu2020gradient,ji2020directional} models.\footnote{
There have also been works studying implicit effects of explicit regularizers for convolutional networks~\cite{ergen2021implicit}, but these are outside the scope of our paper.
} None of these works have pointed out an implicit regularization towards local dependencies, as our theory does (\sects~\ref{sec:inc_rank_lrn} and \ref{sec:low_htr_implies_locality}).
Although the locality of convolutional networks is widely accepted, it is typically ascribed to expressive properties determined by their architecture (see, \eg,~\citet{cohen2017inductive,linsley2018learning,kim2020disentangling}).
Our work is the first to indicate that it also originates from implicit regularization.
As we demonstrate in \sect~\ref{sec:countering_locality}, this observation can have far reaching implications to the performance of convolutional networks in practice.
	
	\section{Summary}
\label{sec:summary}

Incremental matrix rank learning in matrix factorization~\cite{arora2019implicit,gidel2019implicit,gissin2020implicit,chou2020gradient,li2021towards} and incremental tensor rank learning in tensor factorization~\cite{razin2021implicit,ge2021understanding} were important discoveries on the path to explaining implicit regularization in deep learning.
The current paper takes an additional step along this path, establishing incremental hierarchical tensor rank learning in hierarchical tensor factorization.
It circumvents the complexity of the hierarchical tensor factorization model by introducing the notion of local components, and theoretically analyzing their evolution throughout optimization.
Experiments validate the theory.

While matrix factorization corresponds to linear neural networks and tensor factorization to certain shallow (depth two) non-linear convolutional neural networks~\cite{cohen2016expressive,razin2021implicit}, hierarchical tensor factorization is equivalent to a class of \emph{deep} non-linear convolutional neural networks~\cite{cohen2016expressive}.
It therefore jointly accounts for both non-linearity and depth~---~two critical aspects of deep learning.
For the class of convolutional networks equivalent to hierarchical tensor factorization, low hierarchical tensor rank translates to weak modeling of dependencies between spatially distant input regions~\cite{cohen2017inductive,levine2018benefits,levine2018deep}.
Our theory thus suggests an implicit regularization towards locality in convolutional networks.
While the locality of convolutional networks is widely accepted, it is typically ascribed to expressive properties determined by their architecture (see, \eg,~\citet{cohen2017inductive,linsley2018learning,kim2020disentangling}).
The fact that implicit regularization also plays a role indicates that it might be possible to counter this locality via explicit regularization. 
We verify this prospect empirically, demonstrating that explicit regularization designed to promote high hierarchical tensor rank vastly improves the performance of modern convolutional networks (ResNet18 and ResNet34 from~\citet{he2016deep}) on tasks with long-range dependencies.

Taken together, the theory and experiments presented in this paper bring forth the possibility that deep learning architectures considered suboptimal for certain tasks (\eg~convolutional networks for natural language processing tasks) may be greatly improved through a right choice of explicit regularization.
Theoretical understanding of implicit regularization may be key to discovering such regularizers.

	\ifdefined\NEURIPS
		\begin{ack}
			This work was supported by a Google Research Scholar Award, a Google Research Gift, the Yandex Initiative in Machine Learning, the Israel Science Foundation (grant 1780/21), Len Blavatnik and the Blavatnik Family Foundation, and Amnon and Anat Shashua.
NR is supported by the Apple Scholars in AI/ML and the Tel Aviv University Center for AI and Data Science (TAD) PhD fellowships.
		\end{ack}
	\else
		\newcommand{This work was supported by a Google Research Scholar Award, a Google Research Gift, the Yandex Initiative in Machine Learning, the Israel Science Foundation (grant 1780/21), Len Blavatnik and the Blavatnik Family Foundation, and Amnon and Anat Shashua.
NR is supported by the Apple Scholars in AI/ML and the Tel Aviv University Center for AI and Data Science (TAD) PhD fellowships.}{}
	\fi
	\ifdefined\ARXIV
		\section*{Acknowledgments}
		
	\else
		\ifdefined\COLT
			\acks{}
		\else
			\ifdefined\CAMREADY
				\ifdefined\ICLR
					\newcommand*{\subsuback}{}
				\fi
				\ifdefined\NEURIPS
				\else
					\section*{Acknowledgments}
					
				\fi
			\fi
		\fi
	\fi

	\section*{References}
	{\small
		\ifdefined\ICML
			\bibliographystyle{icml2022}
		\else
			\bibliographystyle{plainnat}
		\fi
		\bibliography{refs}
	}

	\clearpage
	\appendix
	
	\onecolumn
	
	\ifdefined\ENABLEENDNOTES
		\theendnotes
	\fi
	

	\section{Hierarchical Tensor Factorization as Deep Non-Linear Convolutional Network}
\label{app:htf_cnn}

\begingroup
\setlength{\columnsep}{17pt}
\begin{wrapfigure}{r}{0.36\textwidth}
	\vspace{-4.6mm}
	\hspace*{-0.5mm}
	\includegraphics[width=0.31\textwidth]{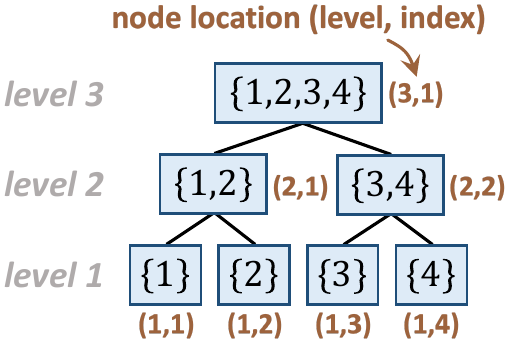}
	\vspace{-1mm}
	\caption{
		Perfect $P$-ary mode tree that combines adjacent indices, for order $N = 4$ and $P = 2$.
	}
	\vspace{-2mm}
	\label{fig:pary_mode_tree_with_locs}
	\vspace{-3mm}
\end{wrapfigure}
In this appendix, we formally state and prove a known correspondence between hierarchical tensor factorization and certain deep non-linear convolutional networks (\cf~\citet{cohen2016expressive}).
For conciseness, we assume the tensor order $N$ is a power of $P \in \N_{\geq 2}$ and the mode dimensions $D_1, \ldots, D_N$ are equal, and focus on the factorization induced by a perfect $P$-ary mode tree (\defin~\ref{def:mode_tree}) that combines nodes with adjacent indices.

Let $L := \log_P N$ denote the height of the mode tree, and associate each of its nodes with a respective location $(l, n)$, where $l \in [L + 1]$ is the level in the tree (numbered from leaves to root in ascending order), and $n \in [ N / P^{l - 1} ]$ is the index inside the level (see \fig~\ref{fig:pary_mode_tree_with_locs} for an illustration).
Adapting \eq~\eqref{eq:ht_end_tensor} to the current setting, the end tensor is computed as follows:
\be
\begin{split}
	&\text{for all $n \in [N]$ and $r \in [ R_{ 1 } ]$:} \\
	& \quad\underbrace{ \tensorpart{1, n}{r} }_{\text{order $1$}} := \weightmat{1, n}_{:, r} \text{\,,} \\[1mm]
	&\text{for all $l \in \{ 2, \ldots, L \}, n \in [ N / P^{l - 1}]$, and $r \in [ R_{ l } ]$ (traverse interior nodes of $\htmodetree$ from leaves to root, non-inclusive):} \\
	&\quad \underbrace{\tensorpart{l, n}{r}}_{\text{order $P^{l - 1}$}} := \sum\nolimits_{r' = 1}^{R_{l - 1}} \weightmat{l, n}_{r', r} \left [ \tenp_{p = (n - 1) \cdot P + 1}^{n \cdot P} \tensorpart{l - 1, p }{r'} \right ] \text{\,,} \\[0mm]
	&\underbrace{\tensorend}_{\text{order $N$}} := \sum\nolimits_{r' = 1}^{R_{L}} \weightmat{L + 1, 1}_{r', 1} \left [ \tenp_{p = 1}^{P} \tensorpart{L, p }{r'} \right ]
	\text{\,,}
\end{split}
\label{eq:ht_pary_end_tensor}
\ee
where $\big ( \weightmat{l, n} \in \R^{R_{l - 1}, R_{l}} \big )_{l \in [L + 1], n \in [N / P^{l - 1}]}$ are the factorization's weight matrices, $R_{L + 1} = 1$, and $R_{0} := D_1 = \cdots =~D_N$.

The deep non-linear convolutional network corresponding to the above factorization (illustrated in \fig~\ref{fig:tf_htf_as_convnet} (bottom)) has $L$ hidden layers, the $l$'th one comprising a locally connected linear operator with $R_{l}$ channels followed by channel-wise product pooling with window size $P$ (multiplicative non-linearity).
Denoting by $\bigl ( \hbf^{(l - 1, 1)}, \ldots,  \hbf^{(l - 1, N / P^{l - 1})} \bigr ) \in \R^{R_{l - 1}} \times \cdots \times \R^{R_{l - 1}}$ the output of the $l - 1$'th hidden layer, where $\bigl ( \hbf^{ (0, 1) }, \ldots,  \hbf^{ (0, N) } \bigr ) := \bigl ( \xbf^{ (1) }, \ldots, \xbf^{ (N) } \bigr )$ is the network's input, the locally connected operator of the $l$'th layer computes $\big ( \weightmat{l, n} \big )^\top \hbf^{(l - 1, n)}$ for each index $n \in [N / P^{l - 1}]$.
We refer to this operator as “$1 \times 1$ conv'' in appeal to the case of weight sharing, where $\weightmat{l, 1} = \cdots = \weightmat{l, N / P^{l - 1}}$.
Following the locally connected operator, for each $n \in [N / P^l]$ and $r \in [R_{l}]$, the pooling operator computes $\prod_{p = (n - 1) \cdot P + 1}^{n \cdot P}  \big [ \big ( \weightmat{l, p} \big )^\top \hbf^{(l - 1, p)} \big ]_r$, thereby producing $\big ( \hbf^{(l, 1)}, \ldots, \hbf^{(l, N / P^l)} \big )$.
After passing the input through all hidden layers, a final linear layer, whose weights are $\weightmat{L + 1, 1}$, yields the scalar output of the network $\bigl ( \weightmat{L + 1, 1} \bigr )^\top \hbf^{(L, 1)}$.
Notice that the weight matrices of the hierarchical tensor factorization are exactly the learnable weights of the network, and $R_{l - 1}$~---~the number of local components (\defin~\ref{def:local_comp}) at nodes in level $l$ of the factorization~---~is the width of the network's $l - 1$'th hidden layer.

The above formulation of the network supports not only sequential inputs (\eg~audio and text), but also inputs arranged as multi-dimensional arrays (\eg~two-dimensional images).  
The choice of how to assign the indices $1, \ldots, N$ to input elements determines the geometry of pooling windows throughout the network~\cite{cohen2017inductive}.

\prop~\ref{prop:htf_cnn} below implies that we may view solution of a prediction task using the deep convolutional network described above as a hierarchical tensor factorization problem, and vice versa.
For example, solving tensor completion and certain sensing problems using hierarchical tensor factorization amounts to applying the corresponding network to a regression task.
\endgroup
\begin{proposition}[adapted from~\citet{cohen2016expressive}]
\label{prop:htf_cnn}
Let $f_\Theta : \times_{n = 1} \R^{D_n} \to \R$ be the function realized by the deep non-linear convolutional network described above, where $\Theta := \big ( \weightmat{l, n} \big )_{l \in [L + 1], n \in [N / P^{l - 1}]}$ stands for the network's weights.
Denote by $\tensorend$ the end tensor of the hierarchical tensor factorization specified in \eq~\eqref{eq:ht_pary_end_tensor}.
Then, for all $\xbf^{(1)} \in \R^{D_1}, \ldots, \xbf^{(N)} \in \R^{D_N}$:
\[
f_\Theta \big ( \xbf^{(1)}, \ldots, \xbf^{(N)} \big ) = \inprodbig{ \tenp_{n = 1}^N \xbf^{(n)}  }{ \tensorend }
\text{\,.}
\]
\end{proposition}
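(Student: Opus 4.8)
The plan is to prove the identity by a single induction on the levels of the perfect $P$-ary mode tree (equivalently, on the hidden layers of the network), tracking how the pre-pooling hidden activations relate to the intermediate tensors $\tensorpart{l,n}{r}$ of \eq~\eqref{eq:ht_pary_end_tensor}. The claim I would carry through the induction is: for every level $l \in [L]$, position $n \in [N / P^{l-1}]$, and channel $r \in [R_{l+1}]$,
\[
\bigl [ ( \weightmat{l,n} )^\top \hbf^{(l-1,n)} \bigr ]_r = \inprodbig{ \tenp_{p = (n-1) P^{l-1} + 1}^{n P^{l-1}} \xbf^{(p)} }{ \tensorpart{l,n}{r} } ,
\]
i.e.\ the $r$'th pre-pooling activation at node $(l,n)$ is the contraction of $\tensorpart{l,n}{r}$ against the tensor product of the input vectors indexed by the leaves under that node. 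Since the mode tree joins \emph{adjacent} indices, these leaves form a contiguous block and all the mode-permutation operators $\pi_\nu$ of the general \eq~\eqref{eq:ht_end_tensor} specialize to the identity here — which is precisely why \eq~\eqref{eq:ht_pary_end_tensor}, the form I would work with, carries no $\pi$'s.

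For the base case $l = 1$ I note $\hbf^{(0,n)} = \xbf^{(n)}$, $\tensorpart{1,n}{r} = \weightmat{1,n}_{:,r}$, and the leaf block is the singleton $\{n\}$, so both sides equal $\inprod{\xbf^{(n)}}{\weightmat{1,n}_{:,r}}$. For the inductive step (with $l \geq 2$), I would start from the product pooling of layer $l-1$, $[\hbf^{(l-1,n)}]_{r'} = \prod_{m = (n-1)P + 1}^{nP} [(\weightmat{l-1,m})^\top \hbf^{(l-2,m)}]_{r'}$, substitute the inductive hypothesis into each factor, and then invoke multilinearity of the standard inner product with respect to tensor products (a product of contractions equals the contraction of the tensor products) together with the fact that the nested index ranges $m \in [(n-1)P+1, nP]$, $p \in [(m-1)P^{l-2}+1, mP^{l-2}]$ tile exactly the block $[(n-1)P^{l-1}+1, nP^{l-1}]$. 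This yields $[\hbf^{(l-1,n)}]_{r'} = \inprodbig{ \tenp_{p=(n-1)P^{l-1}+1}^{nP^{l-1}} \xbf^{(p)} }{ \tenp_{m=(n-1)P+1}^{nP} \tensorpart{l-1,m}{r'} }$. Applying the layer-$l$ locally connected operator, $[(\weightmat{l,n})^\top \hbf^{(l-1,n)}]_r = \sum_{r'=1}^{R_l} \weightmat{l,n}_{r',r} [\hbf^{(l-1,n)}]_{r'}$, and pulling this linear combination inside the (bi)linear inner product produces $\sum_{r'} \weightmat{l,n}_{r',r} \tenp_{m} \tensorpart{l-1,m}{r'} = \tensorpart{l,n}{r}$, exactly the recursion defining $\tensorpart{l,n}{r}$ in \eq~\eqref{eq:ht_pary_end_tensor}, closing the induction.

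Finally, for the output I would write $f_\Theta(\xbf^{(1)},\dots,\xbf^{(N)}) = (\weightmat{L+1,1})^\top \hbf^{(L,1)} = \sum_{r'=1}^{R_{L+1}} \weightmat{L+1,1}_{r',1} [\hbf^{(L,1)}]_{r'}$, expand $[\hbf^{(L,1)}]_{r'}$ via the level-$L$ pooling and the just-established claim at $l = L$, merge index ranges and use multilinearity exactly as above to get $[\hbf^{(L,1)}]_{r'} = \inprodbig{ \tenp_{p=1}^N \xbf^{(p)} }{ \tenp_{m=1}^P \tensorpart{L,m}{r'} }$, and then pull $\sum_{r'} \weightmat{L+1,1}_{r',1}(\cdot)$ inside the inner product to recover $\tensorend$ from \eq~\eqref{eq:ht_pary_end_tensor}, giving $f_\Theta = \inprodbig{ \tenp_{p=1}^N \xbf^{(p)} }{ \tensorend }$. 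I expect the only real difficulty to be bookkeeping rather than mathematics: keeping the layer-versus-level offset consistent, checking that the pooling-window composition at consecutive levels tiles each node's contiguous leaf block without gaps or overlaps, and confirming the triviality of all $\pi_\nu$ for this adjacent-index mode tree (so that \eq~\eqref{eq:ht_pary_end_tensor} is indeed the correct specialization of \eq~\eqref{eq:ht_end_tensor}); the algebraic steps — multilinearity of $\inprod{\cdot}{\cdot}$ over tensor products and linearity for extracting sums — are routine.
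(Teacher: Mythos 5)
Your proof is correct and follows essentially the same inductive argument as the paper's: you track the pre-pooling activation $\bigl[(\weightmat{l,n})^\top \hbf^{(l-1,n)}\bigr]_r$ and identify it with $\inprodbig{\tenp_{p} \xbf^{(p)}}{\tensorpart{l,n}{r}}$ by induction over levels, closing with the output layer. The only cosmetic difference is that the paper absorbs the final linear layer into the induction by viewing it as a $1\times 1$ convolution with a single output channel and running $l$ up to $L+1$, whereas you handle that last step separately; the underlying computation is identical.
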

\begin{proof}[Proof sketch (proof in \subapp~\ref{app:proofs:htf_cnn})]
By induction over the layers of the network, we show that the output of the $l$'th convolutional layer (linear output layer for $l = L + 1$) at index $n$ and channel $r$ is $\inprodbig{ \tenp_{p = (n -1 ) \cdot P^{l - 1} + 1}^{n \cdot P^{l - 1}} \xbf^{(p)} }{ \tensorpart{l, n}{r} }$, where \smash{$\tensorpart{L + 1, 1}{1} := \tensorend$}, and all other $\tensorpart{l,n}{r}$ are the intermediate tensors formed when computing $\tensorend$ according to \eq~\eqref{eq:ht_pary_end_tensor}.
Since $f_\Theta \big ( \xbf^{(1)}, \ldots, \xbf^{(N)} \big )$ is the output of the $L + 1$'th layer at index $1$ and channel $1$, applying the inductive claim for $l = L + 1, n = 1$, and $r = 1$ concludes the proof.
\end{proof}

We conclude this appendix by noting that in the special case where $P = N$, if the weight matrix of the root node holds ones, the hierarchical tensor factorization reduces to a tensor factorization, and the corresponding convolutional network has a single hidden layer (with global product pooling) followed by a final summation layer. 
We thus obtain the equivalence between tensor factorization and a shallow non-linear convolutional network as a corollary of \prop~\ref{prop:htf_cnn}.

	\section{Evolution of Local Component Norms Under Arbitrary Initialization}
\label{app:dyn_arbitrary}

\thm~\ref{thm:loc_comp_norm_bal_dyn} in \subsect~\ref{sec:inc_rank_lrn:evolution} characterizes the evolution of local component norms in a hierarchical tensor factorization, under the assumption of unbalancedness magnitude zero at initialization.
\thm~\ref{thm:loc_comp_norm_unbal_dyn} below extends the characterization to account for arbitrary initialization. 
It establishes that if the unbalancedness magnitude at initialization is small --- as is the case under any near-zero initialization~---~local component norms approximately evolve per \thm~\ref{thm:loc_comp_norm_bal_dyn}.

\begin{theorem}
	\label{thm:loc_comp_norm_unbal_dyn}
	With the context and notations of \thm~\ref{thm:loc_comp_norm_bal_dyn}, assume unbalancedness magnitude $\epsilon \geq 0$ at initialization.
	Then, for any $\nu \in \interior (\htmodetree)$, $r \in [R_\nu]$, and time $t \geq 0$ at which $\htfcompnorm{\nu}{r} (t) > 0$:\footnote{
		Since norms are not differentiable at the origin, when $\htfcompnorm{\nu}{r} (t)$ is equal to zero it may not be differentiable with respect to time.
	}
	\begin{itemize}
		\item If $\inprodbig{- \nabla \htfendloss ( \tensorend (t) ) }{ \htfcomp{\nu}{r} (t) } \geq 0$, then:
		\be
		\begin{split}
			&\frac{d}{dt} \htfcompnorm{\nu}{r} (t) \leq \left ( \htfcompnorm{\nu}{r} (t)^{\frac{2}{  L_{\nu} } } + \epsilon \right )^{L_{\nu} - 1} \cdot L_{\nu} \inprodbig{- \nabla \htfendloss ( \tensorend (t) ) }{ \htfcomp{\nu}{r} (t) }
			\text{\,,} \\[1mm]
			&\frac{d}{dt} \htfcompnorm{\nu}{r} (t) \geq \frac{ \htfcompnorm{\nu}{r} (t)^2 }{ \htfcompnorm{\nu}{r} (t)^{ \frac{2}{  L_{\nu} } }  + \epsilon } \cdot  L_{\nu} \inprodbig{- \nabla \htfendloss ( \tensorend (t) ) }{ \htfcomp{\nu}{r} (t) }
			\text{\,;}
		\end{split}
		\label{eq:loc_comp_norm_unbal_pos_bound}
		\ee
		\item otherwise, if $\inprodbig{- \nabla \htfendloss ( \tensorend (t) ) }{ \htfcomp{\nu}{r} (t) } < 0$, then:
		\be
		\begin{split}
			&\frac{d}{dt} \htfcompnorm{\nu}{r} (t) \geq \left ( \htfcompnorm{\nu}{r} (t)^{\frac{2}{ L_{\nu} } } + \epsilon \right )^{  L_{\nu} - 1 } \cdot  L_{\nu} \inprodbig{- \nabla \htfendloss ( \tensorend (t) ) }{ \htfcomp{\nu}{r} (t) }
			\text{\,,} \\[1mm]
			&\frac{d}{dt} \htfcompnorm{\nu}{r} (t) \leq \frac{ \htfcompnorm{\nu}{r} (t)^2 }{ \htfcompnorm{\nu}{r} (t)^{ \frac{2}{ L_{\nu} } }  + \epsilon } \cdot  L_{\nu} \inprodbig{- \nabla \htfendloss ( \tensorend (t) ) }{ \htfcomp{\nu}{r} (t) }
			\text{\,.}
		\end{split}
		\label{eq:loc_comp_norm_unbal_neg_bound}
		\ee
	\end{itemize}
\end{theorem}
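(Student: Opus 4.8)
The plan is to reduce the claim to the general form of the local component norm dynamics that already appears en route to \thm~\ref{thm:loc_comp_norm_bal_dyn} (before balancedness is assumed), and then to sandwich a single scalar quantity using the invariant of \lem~\ref{lem:loc_comp_sq_norm_diff_invariant}. Fix $\nu \in \interior (\htmodetree)$, $r \in [R_\nu]$, and a time $t \ge 0$ with $\htfcompnorm{\nu}{r}(t) > 0$; then each $\normnoflex{\wbf(t)} > 0$ for $\wbf \in \localcomp (\nu, r)$ (a product of nonnegative reals is positive only if every factor is), so $\htfcompnorm{\nu}{r}$ is differentiable in a neighborhood of $t$ and, by the computation in the proof of \thm~\ref{thm:loc_comp_norm_bal_dyn} carried out prior to invoking balancedness,
\[
\tfrac{d}{dt}\htfcompnorm{\nu}{r}(t) = \inprodbig{-\nabla\htfendloss(\tensorend(t))}{\htfcomp{\nu}{r}(t)} \cdot \!\! \sum_{\wbf \in \localcomp (\nu, r)} \, \prod_{\wbf' \in \localcomp (\nu, r) \setminus \{ \wbf \}} \!\! \normnoflex{\wbf'(t)}^2 .
\]
Writing $a_\wbf := \normnoflex{\wbf(t)}^2$, so that $\prod_\wbf a_\wbf = \htfcompnorm{\nu}{r}(t)^2 =: P$ and $\prod_{\wbf' \ne \wbf} a_{\wbf'} = P / a_\wbf$, the double sum equals $P \sum_\wbf a_\wbf^{-1}$. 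Hence the theorem reduces to the two scalar estimates
\[
\frac{L_\nu P}{P^{1/L_\nu} + \epsilon} \;\le\; P \! \sum_{\wbf \in \localcomp (\nu, r)} \! a_\wbf^{-1} \;\le\; L_\nu \bigl ( P^{1/L_\nu} + \epsilon \bigr )^{L_\nu - 1} ,
\]
followed by multiplying through by $\inprodnoflex{-\nabla\htfendloss(\tensorend(t))}{\htfcomp{\nu}{r}(t)}$ and orienting the inequalities by its sign, which yields \eqref{eq:loc_comp_norm_unbal_pos_bound} when it is nonnegative and \eqref{eq:loc_comp_norm_unbal_neg_bound} when it is negative (using $P^{1/L_\nu} = \htfcompnorm{\nu}{r}(t)^{2/L_\nu}$).

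For the scalar estimates I would use \lem~\ref{lem:loc_comp_sq_norm_diff_invariant} together with \defin~\ref{def:unbal_mag}: the unbalancedness magnitude is time-invariant, so $\abs{a_\wbf - a_{\wbf'}} \le \epsilon$ for all $\wbf, \wbf' \in \localcomp (\nu, r)$. Let $m := \min_\wbf a_\wbf$ and $M := \max_\wbf a_\wbf$, so $M - m \le \epsilon$. For the upper bound, $P = \prod_\wbf a_\wbf \ge m^{L_\nu}$ gives $m \le P^{1/L_\nu}$, hence $a_\wbf \le m + \epsilon \le P^{1/L_\nu} + \epsilon$ for every $\wbf$; therefore $P / a_\wbf = \prod_{\wbf' \ne \wbf} a_{\wbf'} \le (P^{1/L_\nu} + \epsilon)^{L_\nu - 1}$, and summing over the $L_\nu$ terms gives $P \sum_\wbf a_\wbf^{-1} \le L_\nu (P^{1/L_\nu} + \epsilon)^{L_\nu - 1}$. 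For the lower bound, if $M \ge \epsilon$ then $P = \prod_\wbf a_\wbf \ge (M - \epsilon)^{L_\nu}$ forces $M \le P^{1/L_\nu} + \epsilon$, and if $M < \epsilon$ this holds trivially; so $a_\wbf^{-1} \ge M^{-1} \ge (P^{1/L_\nu} + \epsilon)^{-1}$ for every $\wbf$, and summing gives $P \sum_\wbf a_\wbf^{-1} \ge L_\nu P / (P^{1/L_\nu} + \epsilon)$.

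I expect the only genuinely non-routine ingredient to be the starting identity for $\tfrac{d}{dt}\htfcompnorm{\nu}{r}(t)$, which rests on the multilinearity of $\tensorend$ in the weight vectors of a single local component and on the decomposition $\tensorend = \sum_{r} \htfcompnorm{\nu}{r} \htfcomp{\nu}{r}$; both are already established on the way to \thm~\ref{thm:loc_comp_norm_bal_dyn} (via \lems~\ref{lem:ht_multilinear} and~\ref{lem:tensorend_comp_eq_zeroing_cols_or_rows}), so here they can simply be quoted, and what remains is the elementary bookkeeping above. As a sanity check, setting $\epsilon = 0$ collapses both scalar bounds to $L_\nu P^{(L_\nu - 1)/L_\nu} = L_\nu \htfcompnorm{\nu}{r}(t)^{2 - 2/L_\nu}$, so \eqref{eq:loc_comp_norm_unbal_pos_bound}--\eqref{eq:loc_comp_norm_unbal_neg_bound} reduce to the equality \eqref{eq:loc_comp_norm_bal_dyn}, as they must.
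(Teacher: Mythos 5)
Your proof is correct and follows essentially the same route as the paper's: both start from the pre-balancedness identity $\tfrac{d}{dt}\htfcompnorm{\nu}{r}(t) = \inprodnoflex{-\nabla\htfendloss(\tensorend(t))}{\htfcomp{\nu}{r}(t)}\sum_{\wbf}\prod_{\wbf'\neq\wbf}\normnoflex{\wbf'(t)}^2$, use the time-invariance of unbalancedness from \lem~\ref{lem:loc_comp_sq_norm_diff_invariant} to bound $\normnoflex{\wbf(t)}^2 \le \htfcompnorm{\nu}{r}(t)^{2/L_\nu}+\epsilon$, and then respectively bound the product terms (upper) or the reciprocals (lower). The only cosmetic difference is that your lower bound re-derives the upper bound on $M=\max_\wbf a_\wbf$ via a case split, whereas the already-established bound $a_\wbf \le P^{1/L_\nu}+\epsilon$ for all $\wbf$ covers $M$ directly; this is harmless but redundant.
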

\begin{proof}[Proof sketch (proof in \subapp~\ref{app:proofs:loc_comp_norm_unbal_dyn})]
	The proof follows a line similar to that of \thm~\ref{thm:loc_comp_norm_bal_dyn}, except that here conservation of unbalancedness magnitude leads to $\normnoflex{ \wbf (t) }^2 \leq \htfcompnorm{\nu}{r} (t)^{ \frac{2}{ L_\nu } } + \epsilon$ for all $\wbf \in \localcomp (\nu, r)$.
	Applying this inequality to $\tfrac{d}{dt} \htfcompnorm{\nu}{r} (t) = \inprodbig{ - \nabla \htfendloss ( \tensorend (t)) }{ \htfcomp{\nu}{r} (t) } \sum\nolimits_{ \wbf \in \localcomp (\nu, r) } \prod\nolimits_{ \wbf' \in \localcomp (\nu, r) \setminus \{ \wbf \} } \norm{ \wbf' (t) }^2$ yields \eqs~\eqref{eq:loc_comp_norm_unbal_pos_bound} and~\eqref{eq:loc_comp_norm_unbal_neg_bound}.
\end{proof}

	\section{Hierarchical Tensor Rank as Measure of Long-Range Dependencies}
\label{app:ht_sep_rank}

\subsect~\ref{sec:low_htr_implies_locality} discusses the known fact by which the hierarchical tensor rank (\defin~\ref{def:ht_rank}) of a hierarchical tensor factorization measures the strength of long-range dependencies modeled by the equivalent convolutional network (see~\citet{cohen2017inductive,levine2018benefits,levine2018deep}).
For the convenience of the reader, the current appendix formally explains this fact.

Consider a hierarchical tensor factorization with mode tree $\htmodetree$ (\defin~\ref{def:mode_tree}), weight matrices \smash{$\Theta := \big ( \weightmat{\nu} \big )_{\nu \in \htmodetree}$}, and an equivalent convolutional network realizing a parametric input-output function $f_\Theta$.
As claimed in \sect~\ref{sec:htf} (and formally justified in \app~\ref{app:htf_cnn}), the function realized by the convolutional network takes the form $f_\Theta \big ( \xbf^{(1)}, \ldots, \xbf^{(N)} \big ) = \inprodbig{ \tenp_{n = 1}^N \xbf^{(n)}  }{ \tensorend }$, where $\tensorend$ stands for the end tensor of the factorization (\eq~\eqref{eq:ht_end_tensor}).
\prop~\ref{prop:matrank_eq_seprank} below establishes that for any subset of indices $I \subset [N]$, the matrix rank of~$\tensorend$'s matricization according to $I$ is equal to the separation rank (\defin~\ref{def:sep_rank}) of $f_\Theta$ with respect to $I$, \ie~$\rank \mat{\tensorend}{I} = \seprank ( f_\Theta ; I)$.
In particular, the hierarchical tensor rank of $\tensorend$ with respect to $\htmodetree$~---~$\big ( \rank \mat{\tensorend}{\nu} \big )_{\nu \in \htmodetree  \setminus \{ [N] \} }$~---~amounts to $\big ( \seprank ( f_\Theta ; \nu )\big )_{\nu \in \htmodetree \setminus \{ [N] \} }$.
In the canonical case where nodes in $\htmodetree$ hold adjacent indices, the separation ranks of  $f_\Theta$ with respect to them measure the dependencies modeled between distinct areas of the input, \ie~the non-local (long-range) dependencies.

\begin{proposition}[adaptation of Claim 1 in~\citet{cohen2017inductive}]
\label{prop:matrank_eq_seprank}
Consider a hierarchical tensor factorization with mode tree $\htmodetree$ (\defin~\ref{def:mode_tree}) and weight matrices $\Theta := \big ( \weightmat{\nu} \big )_{\nu \in \htmodetree}$, and denote its end tensor by $\tensorend$ (\eq~\eqref{eq:ht_end_tensor}). Let $f_\Theta : \times_{n = 1} \R^{D_n} \to \R$ be defined by $f_\Theta \big ( \xbf^{(1)}, \ldots, \xbf^{(N)} \big ) := \inprodbig{ \tenp_{n = 1}^N \xbf^{(n)}  }{ \tensorend }$.
Then, for all $I \subset [ N ]$:
\[
\rank \mat{\tensorend}{ I} = \seprank (f_\Theta ; I)
\text{\,.}
\]
\end{proposition}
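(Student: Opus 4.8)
The plan is to reduce the equality to two inequalities, each obtained by passing between the tensor $\tensorend$ and the matrix $\mat{\tensorend}{I}$ through the defining identity of the network's input-output map. The first ingredient is the elementary observation that $f_\Theta$ is the multilinear form associated with $\tensorend$: substituting standard basis vectors $\ebf^{(n)}_{d_n} \in \R^{D_n}$ gives $f_\Theta \big ( \ebf^{(1)}_{d_1}, \ldots, \ebf^{(N)}_{d_N} \big ) = (\tensorend)_{d_1, \ldots, d_N}$, since $f_\Theta \big ( \xbf^{(1)}, \ldots, \xbf^{(N)} \big ) = \inprodbig{ \tenp_{n=1}^N \xbf^{(n)} }{ \tensorend } = \sum_{d_1, \ldots, d_N} \big ( \prod_{n=1}^N \xbf^{(n)}_{d_n} \big ) (\tensorend)_{d_1, \ldots, d_N}$. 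Grouping the summation indices into those in $I$ and those in $[N] \setminus I$ additionally identifies $f_\Theta$ with the bilinear form $( (\xbf^{(i)})_{i \in I} , (\xbf^{(j)})_{j \notin I} ) \mapsto \vect{ \tenp_{i \in I} \xbf^{(i)} }^\top \, \mat{\tensorend}{I} \, \vect{ \tenp_{j \in [N] \setminus I} \xbf^{(j)} }$, where the two vectorizations are ordered to match the row and column indexing of the matricization (\defin~\ref{def:matricization}).

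For $\seprank (f_\Theta ; I) \le \rank \mat{\tensorend}{I}$, let $R := \rank \mat{\tensorend}{I}$ and fix a rank decomposition $\mat{\tensorend}{I} = \sum_{r = 1}^R \abf_r \bbf_r^\top$, with $\abf_r \in \R^{\prod_{i \in I} D_i}$ and $\bbf_r \in \R^{\prod_{j \in [N] \setminus I} D_j}$ indexed consistently with the matricization. Define $g_r \big ( (\xbf^{(i)})_{i \in I} \big ) := \sum_{(d_i)_{i \in I}} \big ( \prod_{i \in I} \xbf^{(i)}_{d_i} \big ) (\abf_r)_{(d_i)_{i \in I}}$ and, analogously, $g'_r \big ( (\xbf^{(j)})_{j \notin I} \big ) := \sum_{(d_j)_{j \notin I}} \big ( \prod_{j \notin I} \xbf^{(j)}_{d_j} \big ) (\bbf_r)_{(d_j)_{j \notin I}}$. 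A direct expansion, using $(\tensorend)_{d_1, \ldots, d_N} = \sum_{r = 1}^R (\abf_r)_{(d_i)_{i \in I}} (\bbf_r)_{(d_j)_{j \notin I}}$, shows $\sum_{r = 1}^R g_r \cdot g'_r = \inprodbig{ \tenp_{n = 1}^N \xbf^{(n)} }{ \tensorend } = f_\Theta$, which is a separating representation of size $R$ (the functions produced are multilinear, which \defin~\ref{def:sep_rank} certainly allows).

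For the reverse inequality $\rank \mat{\tensorend}{I} \le \seprank (f_\Theta ; I)$, let $R := \seprank (f_\Theta ; I)$ and fix functions $g_1, \ldots, g_R, g'_1, \ldots, g'_R$ realizing it. Evaluating both sides of $f_\Theta = \sum_{r = 1}^R g_r \cdot g'_r$ at tuples of standard basis vectors and invoking the first ingredient gives $(\tensorend)_{d_1, \ldots, d_N} = \sum_{r = 1}^R A_{r, (d_i)_{i \in I}} B_{r, (d_j)_{j \notin I}}$ for all multi-indices, where $A_{r, (d_i)} := g_r \big ( (\ebf^{(i)}_{d_i})_{i \in I} \big )$ and $B_{r, (d_j)} := g'_r \big ( (\ebf^{(j)}_{d_j})_{j \notin I} \big )$. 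Read as a matrix identity, this says $\mat{\tensorend}{I} = A^\top B$ with $A \in \R^{R, \prod_{i \in I} D_i}$ and $B \in \R^{R, \prod_{j \notin I} D_j}$, hence $\rank \mat{\tensorend}{I} \le R$. Combining the two inequalities yields $\rank \mat{\tensorend}{I} = \seprank (f_\Theta ; I)$.

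This is a known fact (\cf~Claim 1 in~\citet{cohen2017inductive}) and no genuine difficulty is expected; the only point requiring care — and the likeliest source of a slip — is keeping index orderings consistent, i.e.\ ensuring that the vectorizations/reshapings of $\abf_r, \bbf_r$ and the row/column enumeration of $\mat{\tensorend}{I}$ follow exactly the conventions of \defin~\ref{def:matricization}, so that the entrywise identities above hold literally. It is also worth stressing that \defin~\ref{def:sep_rank} permits arbitrary (not necessarily multilinear) separating functions, which is precisely what makes the evaluation-at-basis-vectors step in the second direction legitimate, while the first direction happens to produce multilinear functions, which are a fortiori valid.
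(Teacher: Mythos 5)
Your proof is correct and takes essentially the same approach as the paper: both directions use a rank-$R$ factorization of $\mat{\tensorend}{I}$ to build the separating functions, and evaluation of $f_\Theta$ on standard basis tuples to recover the entries of $\tensorend$. The only cosmetic difference is that the paper packages the upper-bound direction as a standalone lemma about grid tensors (\lem~\ref{lem:grid_tensor_mat_rank_ub_by_sep_rank}), whereas you inline the same argument directly.
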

\vspace{-3mm}
\begin{proof}[Proof sketch (proof in \subapp~\ref{app:proofs:matrank_eq_seprank})]
To prove that $\rank \mat{\tensorend }{ I } \geq \seprank (f_\Theta ; I)$, we derive a representation of $f_\Theta$ as a sum of $\rank \mat{\tensorend}{I}$ terms, each being a product between a function that operates over $( \xbf^{(i)} )_{i \in I}$ and another that operates over the remaining input variables.
For the converse, $\rank \mat{\tensorend}{I} \leq \seprank (f_\Theta ; I)$, we prove that for any grid tensor $\W$ of a function $f$, \ie~tensor holding the outputs of $f$ over a grid of inputs, it holds that $\rank \mat{\W}{I} \leq \seprank (f; I)$.
We conclude by showing that $\tensorend$ is a grid tensor of~$f_\Theta$.
\end{proof}

	\section{Further Experiments and Implementation Details}
\label{app:experiments}

\subsection{Further Experiments} \label{app:experiments:further}

\figs~\ref{fig:tc_o4_p2},~\ref{fig:ts_o4_p2}, and~\ref{fig:tc_o9_p3} supplement \fig~\ref{fig:mf_tf_htf_dynamics} by including, respectively: \emph{(i)} plots of additional local component norms and singular values during optimization in the experiment presented by \fig~\ref{fig:mf_tf_htf_dynamics} (right); \emph{(ii)} experiments with tensor sensing loss; and \emph{(iii)} experiments with different hierarchical tensor factorization orders and mode trees, as well as different ground truth hierarchical tensor ranks.
\fig~\ref{fig:long_range_and_reg_results_resnet34} portrays an experiment identical to that of \fig~\ref{fig:long_range_and_reg_results}, but with ResNet34 in place of ResNet18.  
\figs~\ref{fig:long_range_other_reg_results_resnet18} and~\ref{fig:long_range_other_reg_results_resnet34} extend \figs~\ref{fig:long_range_and_reg_results} and~\ref{fig:long_range_and_reg_results_resnet34}, respectively, by presenting results obtained with baseline networks that are already regularized using standard techniques (weight decay and dropout).

\subsection{Implementation Details} \label{app:experiments:details}

In this subappendix we provide implementation details omitted from our experimental reports (\fig~\ref{fig:mf_tf_htf_dynamics}, \sect~\ref{sec:countering_locality}, and \subapp~\ref{app:experiments:further}).
Source code for reproducing our results and figures, based on the PyTorch framework~\cite{paszke2017automatic},\ifdefined\CAMREADY
~can be found at \url{https://github.com/asafmaman101/imp_reg_htf}.
\else
~is attached as supplementary material and will be made publicly available.
\fi
All experiments were run on a single Nvidia RTX 2080 Ti GPU.

\subsubsection{Incremental Hierarchical Tensor Rank Learning (\figs~\ref{fig:mf_tf_htf_dynamics},~\ref{fig:tc_o4_p2},~\ref{fig:ts_o4_p2}, and~\ref{fig:tc_o9_p3})}
\label{app:experiments:details:inc_rank_lrn}

\textbf{\fig~\ref{fig:mf_tf_htf_dynamics} (left):} the minimized matrix completion loss was $\mfendloss ( \matrixend ) = \frac{1}{\abs{\Omega}} \sum\nolimits_{(i, j) \in \Omega} ( (\matrixend)_{i, j} - \Wbf^*_{i, j} )^2$, where $\Omega$ denotes a set of $2048$ observed entries chosen uniformly at random (without repetition) from a matrix rank $5$ ground truth $\Wbf^* \in \R^{64, 64}$.
We generated $\Wbf^*$ by computing $\Wbf^{* (1) } \Wbf^{* (2) }$, with each entry of $\Wbf^{* (1) } \in \R^{64, 5}$ and $\Wbf^{* (2)} \in \R^{5, 64}$ drawn independently from the standard normal distribution, and subsequently normalizing the result to be of Frobenius norm $64$ (square root of its number of entries).
Reconstruction error with respect to $\Wbf^*$ is based on normalized Frobenius distance, \ie~for a solution $\matrixend$ it is $\norm{ \matrixend - \Wbf^* } / \norm{ \Wbf^* }$.
The matrix factorization applied to the task was of depth $3$ and had hidden dimensions $64$ between its layers so that its rank was unconstrained.
Standard deviation for initialization was set to $0.001$.

\textbf{\fig~\ref{fig:mf_tf_htf_dynamics} (middle):} the minimized tensor completion loss was $\tfendloss ( \tftensorend ) = \frac{1}{\abs{\Omega}} \sum\nolimits_{(d_1, d_2, d_3) \in \Omega} ( (\tftensorend)_{d_1, d_2, d_3} - \W^*_{d_1,  d_2, d_3} )^2$, where $\Omega$ denotes a set of $2048$ observed entries chosen uniformly at random (without repetition) from a tensor rank $5$ ground truth $\W^* \in \R^{16, 16, 16}$.
We generated $\W^*$ by computing $\sum_{r = 1}^5 \Wbf^{* (1)}_{:, r} \tenp \Wbf^{* (2)}_{:, r} \tenp \Wbf^{* (3)}_{:, r}$, with each entry of $\Wbf^{* (1)}, \Wbf^{* (2)},$ and $\Wbf^{* (3)} \in \R^{16, 5}$ drawn independently from the standard normal distribution, and subsequently normalizing the result to be of Frobenius norm $64$ (square root of its number of entries).
Reconstruction error with respect to $\W^*$ is based on normalized Frobenius distance, \ie~for a solution $\tftensorend$ it is $\norm{ \tftensorend - \W^* } / \norm{ \W^* }$.
The tensor factorization applied to the task had $R = 256$ components so that its tensor rank was unconstrained.\footnote{
For any $D_1, \ldots, D_N \in \N$, setting $R = ( \prod_{n = 1}^N D_n) / \max \{ D_n \}_{n = 1}^N$ suffices for expressing all tensors in $\R^{D_1, \ldots, D_N}$ (Lemma~3.41 in~\citet{hackbusch2012tensor}).
} Standard deviation for initialization was set to~$0.001$.

\begin{figure*}[t!]
	\vspace{-1mm}
	\begin{center}
		\hspace{-3.8mm}
		\includegraphics[width=0.95\textwidth]{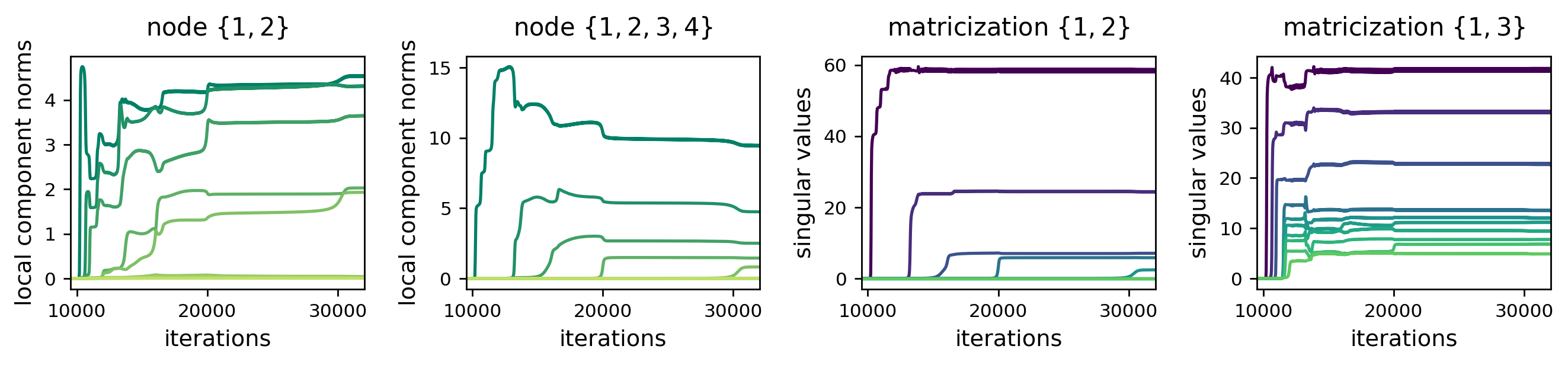}
	\end{center}
	\vspace{-4mm}
	\caption{
		Dynamics of gradient descent over order four hierarchical tensor factorization with a perfect binary mode tree (on tensor completion task)~---~incremental learning leads to low hierarchical tensor rank. 
		For the hierarchical tensor factorization experiment in \fig~\ref{fig:mf_tf_htf_dynamics} (right), plots present the evolution of additional quantities during optimization.
		\textbf{Left and second to left:} top $10$ local component norms at nodes $\{ 1, 2 \}$ and $\{ 1, 2, 3, 4\}$ (respectively) in the mode tree (the latter also appears in \fig~\ref{fig:mf_tf_htf_dynamics} (right)).
		\textbf{Second to right and right:} top $10$ singular values of the end tensor's matricizations according to $\{ 1, 2\}$ and $\{ 1, 3 \}$ (respectively).
		The former corresponds to a node in the mode tree, meaning its rank is part of the end tensor's hierarchical tensor rank, whereas the latter does not.
		\textbf{All:} notice that, in line with our analysis (\sect~\ref{sec:inc_rank_lrn}), local component norms move slower when small and faster when large, creating an incremental process that leads to low hierarchical tensor rank solutions.
		Moreover, the singular values of the end tensor's matricizations according to nodes in the mode tree exhibit a similar behavior, whereas those of matricizations according to index sets outside the mode tree do not.
		The rank of a matricization lower bounds the (non-hierarchical) tensor rank (Remark~6.21 in~\citet{hackbusch2012tensor}). Thus, while the hierarchical tensor rank of the obtained solution is low, its tensor rank is high.
		For further implementation details, such as loss definition and factorization size, see \subapp~\ref{app:experiments:details}.
	}
	\label{fig:tc_o4_p2}
\end{figure*}

\begin{figure*}[t!]
	\vspace{-2.5mm}
	\begin{center}
		\hspace{-3.8mm}
		\includegraphics[width=0.95\textwidth]{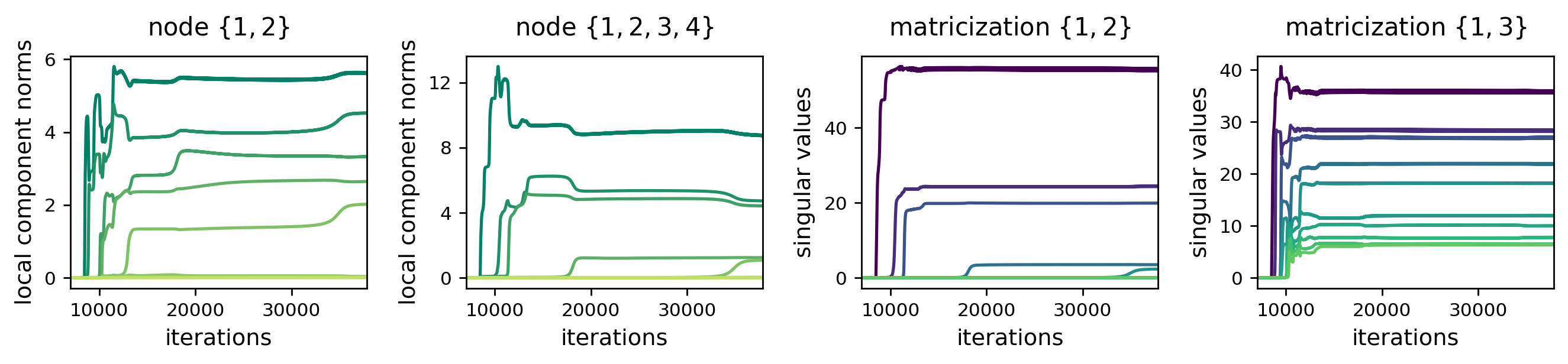}
	\end{center}
	\vspace{-4mm}
	\caption{
		Dynamics of gradient descent over order four hierarchical tensor factorization with a perfect binary mode tree (on tensor sensing task)~---~incremental learning leads to low hierarchical tensor rank.
		This figure is identical to \fig~\ref{fig:tc_o4_p2}, except that the minimized mean squared error was based on random linear measurements (instead of randomly chosen entries).
		For further implementation details, such as loss definition and factorization size, see \subapp~\ref{app:experiments:details}.
	}
	\label{fig:ts_o4_p2}
\end{figure*}

\begin{figure*}[t!]
	\vspace{-2.5mm}
	\begin{center}
		\hspace{-3.8mm}
		\includegraphics[width=0.95\textwidth]{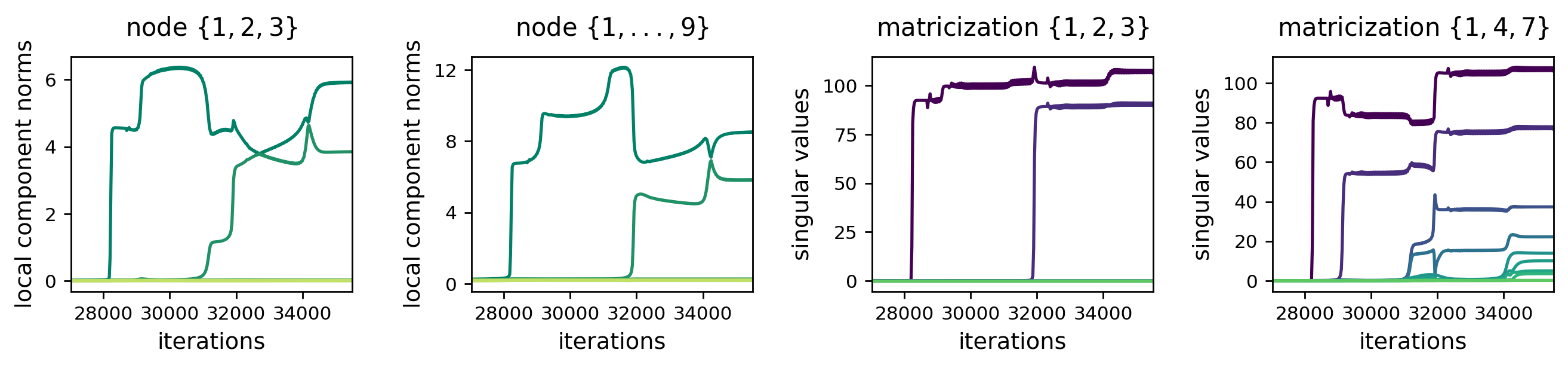}
	\end{center}
	\vspace{-4mm}
	\caption{
		Dynamics of gradient descent over order nine hierarchical tensor factorization with a perfect ternary mode tree~---~incremental learning leads to low hierarchical tensor rank.
		This figure is identical to \fig~\ref{fig:tc_o4_p2}, except that: \emph{(i)} the hierarchical tensor factorization employed had order nine and complied with a perfect ternary mode tree; and \emph{(ii)} the ground truth tensor was of hierarchical tensor rank $(2, \ldots, 2)$ (\defin~\ref{def:ht_rank}).
		For further implementation details, such as loss definition and factorization size, see \subapp~\ref{app:experiments:details}.
	}
	\label{fig:tc_o9_p3}
	\vspace{-2mm}
\end{figure*}

\textbf{\fig~\ref{fig:mf_tf_htf_dynamics} (right):} the minimized tensor completion loss was $\htfendloss ( \tensorend ) = \frac{1}{\abs{\Omega}} \sum\nolimits_{(d_1, \ldots, d_4) \in \Omega} ( (\tensorend)_{d_1, \ldots, d_4} - \W^*_{d_1,  \ldots, d_4} )^2$, where $\Omega$ denotes a set of $2048$ observed entries chosen uniformly at random (without repetition) from a hierarchical tensor rank $(5, 5, 5, 5, 5, 5)$ ground truth $\W^* \in \R^{8, 8, 8, 8}$.
We generated $\W^*$ according to \eq~\eqref{eq:ht_end_tensor} using a perfect binary mode tree $\htmodetree$ over $[4]$ and weight matrices $\big ( \Wbf^{*(\nu)} \big )_{\nu \in \htmodetree}$, where $\Wbf^{* (\nu) } \in \R^{8, 5}$ for $\nu \in \{ \{1\}, \ldots, \{4\} \}$, $\Wbf^{* (\nu) } \in \R^{5, 5}$ for $\nu \in \interior (\htmodetree) \setminus \{ [4] \}$, and $\Wbf^{* ( [4] ) }  \in \R^{5, 1}$.
We sampled the entries of $\big ( \Wbf^{* (\nu) } \big )_{\nu \in \htmodetree}$ independently from the standard normal distribution, and subsequently normalized the ground truth to be of Frobenius norm $64$ (square root of its number of entries).
Reconstruction error with respect to $\W^*$ is based on normalized Frobenius distance, \ie~for a solution $\tensorend$ it is $\norm{ \tensorend - \W^* } / \norm{ \W^* }$.
The hierarchical tensor factorization applied to the task had $512$ local components at all interior nodes due to computational and memory considerations (increasing the number of local components had no substantial impact on the dynamics).
Standard deviation for initialization was set to~$0.01$.

\textbf{\fig~\ref{fig:tc_o4_p2}:} plots correspond to the same experiment presented in \fig~\ref{fig:mf_tf_htf_dynamics} (right).

\textbf{\fig~\ref{fig:ts_o4_p2}:} implementation details are identical to those of \fig~\ref{fig:mf_tf_htf_dynamics} (right), except that the following tensor sensing loss was minimized: $\htfendloss ( \tensorend ) = \sum_{i = 1}^{2048} \big ( \inprodbig{ \tenp_{n = 1}^4 \xbf^{(i, n)} }{ \tensorend} - \inprodbig{ \tenp_{n = 1}^4 \xbf^{(i, n)} }{\W^*} \big )^2$, where the entries of $\big ( ( \xbf^{(i, 1)}, \ldots, \xbf^{(i, 4)} ) \in \R^8 \times \cdots \times \R^8 \big )_{i = 1}^{2048}$ were sampled independently from a zero-mean Gaussian distribution with standard deviation $4096^{- 1 / 8}$ (ensures each measurement tensor $\tenp_{n = 1}^4 \xbf^{(i, n)}$ has expected square Frobenius norm $1$).

\textbf{\fig~\ref{fig:tc_o9_p3}:} implementation details are identical to those of \fig~\ref{fig:mf_tf_htf_dynamics} (right), except that: \emph{(i)} the ground truth tensor was of order $9$ with modes of dimension $3$, Frobenius norm $\sqrt{19683}$ (square root of its number of entries), hierarchical tensor rank $(2, \ldots, 2)$, and was generated according to a perfect ternary mode tree; \emph{(ii)} reconstruction was based on $9840$ entries chosen uniformly at random; \emph{(iii)} the hierarchical tensor factorization applied to the task had $100$ local components at all  interior nodes; and \emph{(iv)} standard deviation for initialization was set to $0.1$.

\textbf{All:} using sample sizes smaller than those specified above led to similar results, up until a point where solutions found had fewer non-zero singular values, components, or local components (at all nodes) than the ground truths.
Gradient descent was initialized randomly by sampling each weight in the factorization independently from a zero-mean Gaussian distribution, and was run until the loss remained under $5 \cdot 10^{-5}$ for $100$ iterations in a row.
For each figure, experiments were carried out with initialization standard deviations $0.1, 0.05, 0.01, 0.005, 0.001,$ and $0.0005$.
Reported are representative runs striking a balance between the potency of the incremental learning effect and run time.
Reducing standard deviations further did not yield a significant change in the dynamics, yet resulted in longer optimization times due to vanishing gradients around the origin.

To facilitate more efficient experimentation, we employed an adaptive learning rate scheme, where at each iteration a base learning rate is divided by the square root of an exponential moving average of squared gradient norms.
That is, for base learning rate $\eta = 10^{-2}$ and weighted average coefficient $\beta = 0.99$, at iteration~$t$ the learning rate was set to $\eta_t = \eta / (\sqrt{\gamma_t / (1 - \beta^t)} + 10^{-6})$, where \smash{$\gamma_t = \beta \cdot \gamma_{t-1} + (1 - \beta) \cdot \normnoflex{ \nicefrac{\partial}{\partial \Theta} \phi ( \Theta (t) ) }^2$}, $\gamma_0 = 0$, $\phi$ stands for any one of $\mfobj, \tfobj$, or $\htfobj$, and $\Theta$ denotes the corresponding factorization's weights.
We emphasize that only the learning rate (step size) is affected by this scheme, not the direction of movement.
Comparisons between the scheme and optimization with a fixed learning rate showed no significant difference in terms of the dynamics, while run times of the former were considerably shorter.

\subsubsection{Countering Locality of Convolutional Networks via Regularization (\figs~\ref{fig:long_range_and_reg_results},~\ref{fig:long_range_and_reg_results_resnet34},~\ref{fig:long_range_other_reg_results_resnet18}, and~\ref{fig:long_range_other_reg_results_resnet34})} 
\label{app:experiments:details:conv}

In all experiments, we randomly initialized the ResNet18 and ResNet34 networks according to the default PyTorch~\cite{paszke2017automatic} implementation.
The (regularized) binary cross-entropy loss was minimized via stochastic gradient descent with learning rate $0.01$, momentum coefficient $0.9$, and batch size $64$ (for ResNet34 we used a batch size of $32$ and accumulated gradients over two batches due to GPU memory considerations).
Optimization proceeded until perfect training accuracy was attained for $20$ consecutive epochs or $150$ epochs elapsed (runs without regularization always reached perfect training accuracy).
For each dataset and model combination, runs were carried out using the regularization described in \subsect~\ref{sec:countering_locality:reg} with coefficients $0, 0.1, 0.5, 1, 3, 6, 9,$ and $10$.
Values lower than those reported in \figs~\ref{fig:long_range_and_reg_results} and~\ref{fig:long_range_and_reg_results_resnet34} had no noticeable impact, whereas higher values typically did not allow fitting the training data.
\tab~\ref{tab:other_reg_hyperparams} specifies the hyperparameters used for the different regularizations in the experiments of~\figs~\ref{fig:long_range_other_reg_results_resnet18} and~\ref{fig:long_range_other_reg_results_resnet34}.
Dropout layers shared the same probability hyperparameter, and were inserted before blocks expanding the number of channels, \ie~before the first convolutional layers with $128$, $256$, and $512$ output channels (the default ResNet18 and ResNet34 implementations do not include dropout).

\begin{figure*}[t]
	\vspace{-1mm}
	\begin{center}
		\hspace{-3mm}
		\includegraphics[width=0.868\textwidth]{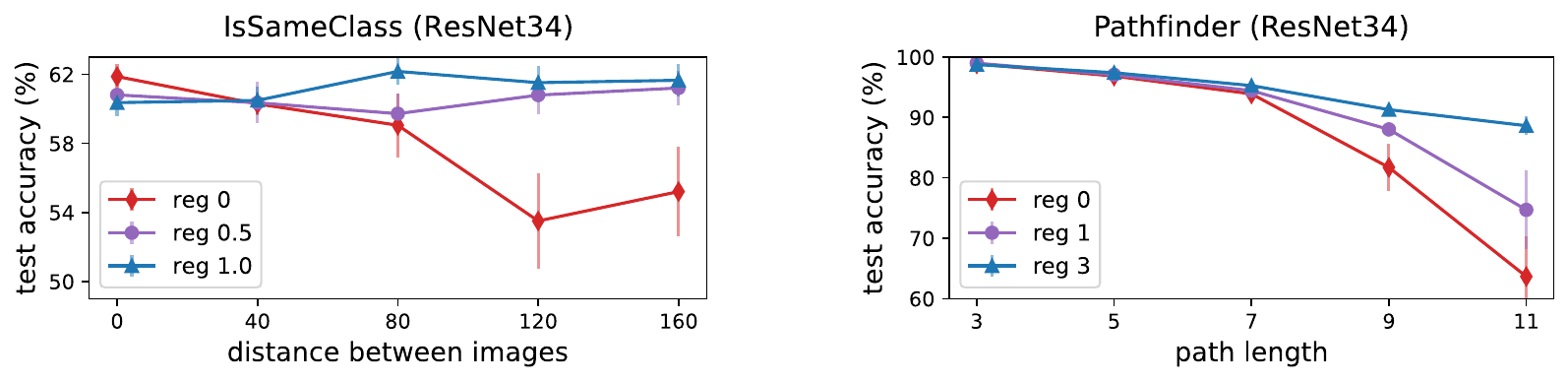}
	\end{center}
	\vspace{-4mm}
	\caption{	
		Dedicated explicit regularization can counter the locality of convolutional networks, significantly improving performance on tasks with long-range dependencies.
		This figure is identical to \fig~\ref{fig:long_range_and_reg_results}, except that: \emph{(i)} experiments were carried out using a randomly initialized ResNet34 (as opposed to ResNet18); and \emph{(ii)} it includes evaluation over a Pathfinder dataset with path length $11$, since up until path length $9$ an unregularized network still obtained non-trivial performance.
		For further details see \subapp~\ref{app:experiments:details:conv}.
	}
	\label{fig:long_range_and_reg_results_resnet34}
	\vspace{-2mm}
\end{figure*}

\begin{figure*}[t!]
	\vspace{0mm}
	\begin{center}
		\hspace{-3mm}
		\includegraphics[width=0.98\textwidth]{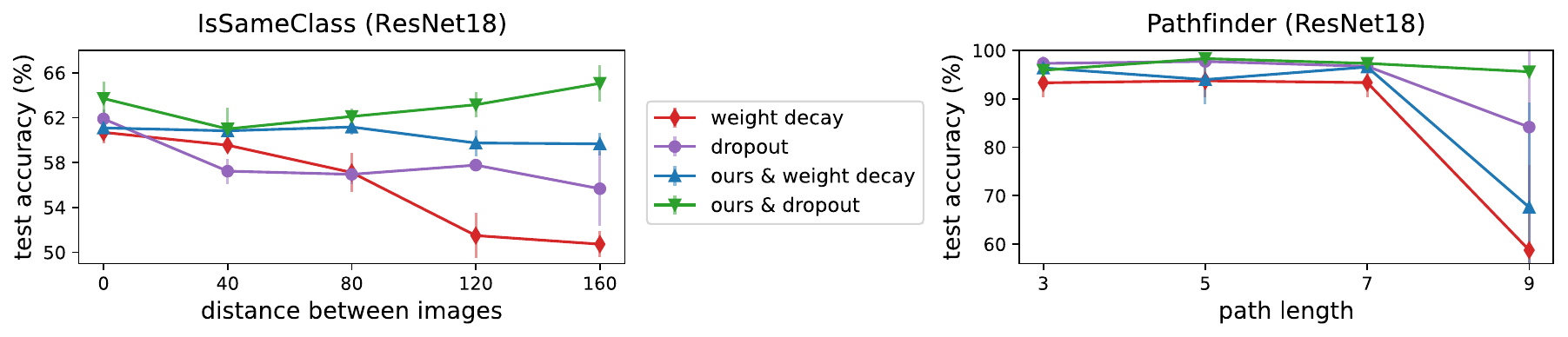}
	\end{center}
	\vspace{-4mm}
	\caption{	
		Dedicated explicit regularization can counter the locality of convolutional networks (regularized via standard techniques), significantly improving performance on tasks with long-range dependencies.
		This figure is identical to \fig~\ref{fig:long_range_and_reg_results}, except that instead of applying our regularizer (\subsect~\ref{sec:countering_locality:reg}) to a baseline unregularized network, the baseline networks here were regularized using either weight decay or dropout, and are compared to the results obtained when applying our regularization in addition to them.
		\fig~\ref{fig:long_range_and_reg_results} shows that the test accuracy obtained by an unregularized network substantially deteriorates when increasing the (spatial) range of dependencies required to be modeled.
		From the plots above it is evident that, even when employing standard regularization techniques such as weight decay or dropout, a similar degradation in performance occurs.
		As was the case for unregularized networks, applying our dedicated regularization, in addition to these techniques, significantly improved performance.
		In particular, for the combination of our regularization and dropout, the test accuracy was high across all datasets.
		For further details such as regularization hyperparameters, see \subapp~\ref{app:experiments:details:conv}.
	}
	\label{fig:long_range_other_reg_results_resnet18}
	\vspace{-2mm}
\end{figure*}

\begin{figure*}[t!]
	\vspace{0mm}
	\begin{center}
		\hspace{-3mm}
		\includegraphics[width=0.98\textwidth]{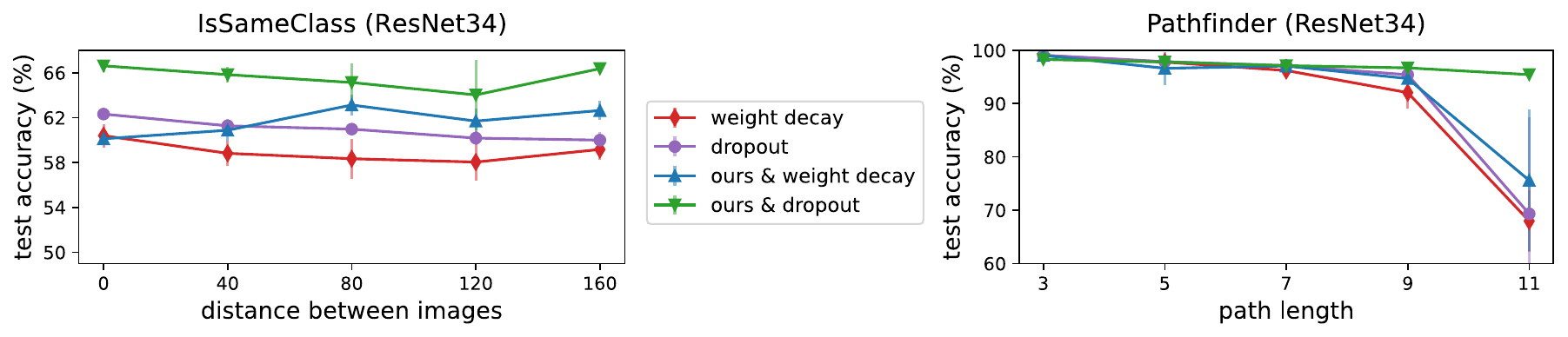}
	\end{center}
	\vspace{-4mm}
	\caption{	
		Dedicated explicit regularization can counter the locality of convolutional networks (regularized via standard techniques), significantly improving performance on tasks with long-range dependencies.
		This figure is identical to \fig~\ref{fig:long_range_other_reg_results_resnet18},  except that: \emph{(i)} experiments were carried out using a randomly initialized ResNet34 (as opposed to ResNet18); and \emph{(ii)} it includes evaluation over a Pathfinder dataset with path length $11$, since up until path length $9$ networks regularized using weight decay or dropout still obtained non-trivial performance.
		For further details such as regularization hyperparameters, see \subapp~\ref{app:experiments:details:conv}.	
	}
	\label{fig:long_range_other_reg_results_resnet34}
	\vspace{-2mm}
\end{figure*}

At each stochastic gradient descent iteration, the subset of indices $I$ and $J$ used for computing the regularized objective were sampled as follows.
For IsSameClass datasets, we set $I$ to be the indices marking either the left or right CIFAR10 image uniformly at random, and then let $J$ be the indices corresponding to the remaining CIFAR10 image.
For Pathfinder datasets, $I$ and $J$ were set to non-overlapping $2 \times 2$ patches chosen uniformly across the input.
In order to prevent additional computational overhead, alternative values for pixels indexed by $J$ were taken from other images in the batch (as opposed to from the whole training set).
Specifically, we used a permutation without fixed points to shuffle the pixel patches indexed by~$J$ across the batch.

IsSameClass datasets consisted of $5000$ training and $10000$ test samples.
Each sample was generated by first drawing uniformly at random a label from $\{0, 1\}$ and an image from CIFAR10.
Then, depending on the chosen label, another image was sampled either from the same class (for label $1$) or from all other classes (for label $0$).
Lastly, the CIFAR10 images were placed at a predetermined horizontal distance from each other around the center of a $224 \times 224$ image filled with zeros.
For example, when the horizontal distance is $0$, the CIFAR10 images are adjacent, and when it is $160$, they reside in opposite borders of the $224 \times 224$ input.
Pathfinder datasets consisted of $10000$ training and $10000$ test samples.
Given a path length, the corresponding dataset was generated according to the protocol of~\citet{linsley2018learning}, with hyperparameters: circle radius $3$, paddle length $5$, paddle thickness $2$, inner paddle margin $3$, and continuity $1.8$.
See~\citet{linsley2018learning} for additional information regarding the data generation process.
As to be expected, when running a subset of all experiments using larger training set sizes (for both IsSameClass and Pathfinder datasets), we observed improved generalization across the board.
Nevertheless, the addition of training samples did not alleviate the degradation in test accuracy observed for larger horizontal distances and path lengths, nor did it affect the beneficial impact of our regularization.
That is, the trends observed in \figs~\ref{fig:long_range_and_reg_results},~\ref{fig:long_range_and_reg_results_resnet34},~\ref{fig:long_range_other_reg_results_resnet18}, and~\ref{fig:long_range_other_reg_results_resnet34} remained intact up to a certain shift upwards.

\begin{table*}[t]
	\caption{
		Hyperparameters for the regularizations employed in the experiments of~\figs~\ref{fig:long_range_other_reg_results_resnet18} and~\ref{fig:long_range_other_reg_results_resnet34}.
		For every model and dataset type combination, table reports the weight decay coefficient and dropout probability used when applied individually, as well as when combined with our regularization (described in \subsect~\ref{sec:countering_locality:reg}), whose coefficients are also specified.
		These hyperparameters were tuned on the datasets with largest spatial range between salient regions of the input.
		That is, for each model separately, their values on IsSameClass datasets were set to those achieving the best test accuracy over a dataset with $160$ pixels between CIFAR10 images.
		Similarly, their values on Pathfinder datasets were set to those achieving the best test accuracy over a dataset with connecting path length $9$ for ResNet18 and path length $11$ for ResNet34.
		For further details see the captions of \figs~\ref{fig:long_range_other_reg_results_resnet18} and~\ref{fig:long_range_other_reg_results_resnet34}, as well as \subapp~\ref{app:experiments:details:conv}.
	}
	\label{tab:other_reg_hyperparams}
	\vspace{-2mm}
	\begin{center}
		\begin{small}
			\begin{tabular}{lcccc}
				\toprule
				& \multicolumn{2}{c}{ResNet18} & \multicolumn{2}{c}{ResNet34} \\
				\cmidrule(lr){2-3}\cmidrule(lr){4-5}
				& IsSameClass & Pathfinder & IsSameClass & Pathfinder \\
				\midrule
				Weight Decay & $0.001$ & $0.01$ & $0.01$ & $0.001$ \\
				Dropout & $0.6$ & $0.5$ & $0.3$ & $0.2$ \\
				Ours \& Weight Decay & $1$ ~\&~ $0.001$ & $0.1$ ~\&~ $0.01$ & $1$ ~\&~ $0.0001$ & $0.1$ ~\&~ $0.001$ \\
				Ours \& Dropout & $1$ ~\&~ $0.5$ & $0.1$ ~\&~ $0.4$ & $1$ ~\&~ $0.5$ & $0.5$ ~\&~ $0.3$ \\
				\bottomrule
			\end{tabular}
		\end{small}
	\end{center}
	\vskip -0.15in
\end{table*}

\section{Deferred Proofs}
\label{app:proofs}

\subsection{Additional Notation}
\label{app:proofs:notation}

Before delving into the proofs, we introduce the following notation.

\vspace{-2mm}

\paragraph*{General}
A colon is used to indicate a range of entries in a mode, \eg~$\Wbf_{i, :} \in \R^{D'}$ and $\Wbf_{:, j} \in \R^D$ are the $i$'th row and $j$'th column of $\Wbf \in \R^{D, D'}$, respectively, and $\Wbf_{:i, :j} \in \R^{i, j}$ is the sub-matrix of $\Wbf$ consisting of its first $i$ rows and $j$ columns.
For $\W \in \R^{D_1, \ldots, D_N}$, we let $\vectnoflex{\W} \in \R^{\prod_{n = 1}^N D_n}$ be its arrangement as a vector.
The tensor and Kronecker products are denoted by $\tenp$ and $\kronp$, respectively.

\vspace{-2mm}

\paragraph*{Hierarchical tensor factorization}
For a mode tree $\htmodetree$ over $[N]$ (\defin~\ref{def:mode_tree}), we denote the set of nodes in the sub-tree of~$\htmodetree$ whose root is $\nu \in \htmodetree$ by $\subtree (\nu) \subset \htmodetree$.
The sets of left and right siblings of $\nu \in \htmodetree$ are denoted by $\lsib (\nu)$ and $\rsib (\nu)$, respectively.
For $\nu \in \htmodetree$, we let $\tensorpart{\nu}{:}$ be the tensor obtained by stacking $\big ( \tensorpart{\nu}{r} \big )_{r = 1}^{R_{ \parent (\nu) } }$ into a single tensor, \ie~$\tensorpart{\nu}{:}_{:, \ldots, :, r} = \tensorpart{\nu}{r}$ for all $r \in [ R_{\parent (\nu)} ]$.
Given weight matrices $\big ( \weightmat{\nu} \in \R^{R_\nu, R_{ \parent (\nu) } } \big )_{\nu \in \htmodetree}$, the function mapping them to the end tensor they produce according to \eq~\eqref{eq:ht_end_tensor} is denoted by $\tensorendmap \big ( \big ( \weightmat{ \nu } \big )_{ \nu \in \htmodetree } \big )$.
For $\nu \in \htmodetree$, with slight abuse of notation we let $\tensorendmap \big ( \big ( \weightmat{ \nu' } \big )_{ \nu' \in \htmodetree \setminus \subtree (\nu)}, \tensorpart{\nu}{:} \big )$ be the function mapping $\big ( \tensorpart{\nu}{r} \big )_{r = 1}^{ R_{ \parent(\nu) } }$ and weight matrices outside of $\subtree (\nu)$ to the end tensor they produce.

\subsection{Useful Lemmas}
\label{app:proofs:useful_lemmas}

\subsubsection{Technical}
\label{app:proofs:useful_lemmas:technical}

\begin{lemma}
\label{lem:tenp_eq_kronp}
For any $\U \in \R^{D_1, \ldots, D_N}, \V \in \R^{ H_1, \ldots, H_K }$, and $I \subset [N + K]$:
\[
\mat{ \U \tenp \V }{ I}= \mat{ \U }{ I \cap [N] } \kronp \mat{ \V }{ I - N \cap [K]}
\text{\,,}
\]
where $I - N := \{ i - N : i \in I \}$.
\end{lemma}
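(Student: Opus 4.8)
The plan is to reduce the identity to an entry-wise comparison, using the explicit index formulas for matricization (Definition~\ref{def:matricization}) and the standard index formula for the Kronecker product. First I would fix notation: write $I \cap [N] = \{i_1 < \cdots < i_p\}$ for the row-indices of $\U$ that survive, and $[N] \setminus I$ for the remaining modes of $\U$; similarly, let $(I - N) \cap [K] = \{k_1 < \cdots < k_q\}$ be the row-indices of $\V$ that survive, with $[K]\setminus\big((I-N)\cap[K]\big)$ the rest. The key combinatorial observation is that, because $\tenp$ concatenates the modes of $\U$ before those of $\V$, the set $I \subset [N+K]$ splits cleanly as $(I \cap [N]) \cup \big(N + ((I-N)\cap[K])\big)$, and likewise for the complement; moreover every element of $I\cap[N]$ is smaller than every element of $N + ((I-N)\cap[K])$, so the ascending orderings of $I$ and of $[N+K]\setminus I$ are exactly the concatenations of the corresponding orderings for $\U$ and $\V$. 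This is the one place a small amount of care is needed, but it is elementary.

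Given this, I would carry out the proof in three steps. Step one: pick an arbitrary entry of $\U\tenp\V$, say $[\U\tenp\V]_{e_1,\dots,e_{N+K}} = \U_{e_1,\dots,e_N}\cdot \V_{e_{N+1},\dots,e_{N+K}}$, and compute, using Definition~\ref{def:matricization}, the row index $\rho$ and column index $\gamma$ into which it is placed in $\mat{\U\tenp\V}{I}$. Step two: using the mode-splitting observation above, show that the sum defining $\rho$ factors as $\rho = (\rho_{\U} - 1)\prod_{l}D_{?} + \cdots$ in exactly the pattern $1 + (\rho_{\U}-1) + (\rho_{\V}-1)\prod_{i\in I\cap[N]}D_i$, i.e. $\rho = \rho_{\U} + (\rho_{\V}-1)\cdot \big(\prod_{i\in I\cap[N]}D_i\big)$ where $\rho_{\U}$ is the row index of $\U_{e_1,\dots,e_N}$ in $\mat{\U}{I\cap[N]}$ and $\rho_{\V}$ that of $\V$ in $\mat{\V}{(I-N)\cap[K]}$; here one uses that the partial products $\prod_{l'=1}^{l-1}$ over the $\V$-block pick up the full product over the $\U$-block as a prefactor, precisely because the $\U$-indices come first. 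Do the analogous computation for the column index $\gamma$. Step three: recall that for matrices $A\in\R^{a,b}$, $B\in\R^{c,d}$ the Kronecker product satisfies $[A\kronp B]_{(i-1)c + i', (j-1)d + j'} = A_{i,j}B_{i',j'}$; matching this against $\rho = \rho_{\U} + (\rho_{\V}-1)\cdot(\#\text{rows of }\mat{\U}{\cdot})$ — wait, one must check the ordering convention matches, i.e. whether it is $\U$-outer or $\V$-outer — and against the analogous $\gamma$, conclude $[\mat{\U\tenp\V}{I}]_{\rho,\gamma} = \U_{e_1,\dots,e_N}\V_{e_{N+1},\dots,e_{N+K}} = [\mat{\U}{I\cap[N]}]_{\rho_{\U},\gamma_{\U}}\cdot[\mat{\V}{(I-N)\cap[K]}]_{\rho_{\V},\gamma_{\V}} = [\mat{\U}{I\cap[N]}\kronp\mat{\V}{(I-N)\cap[K]}]_{\rho,\gamma}$. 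Since $(e_1,\dots,e_{N+K})$ was arbitrary and the map $(e_\bullet)\mapsto(\rho,\gamma)$ is a bijection onto the full index range of the matrix, this proves the claimed equality of matrices.

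The main obstacle is purely bookkeeping: getting the prefactor $\prod_{i\in I\cap[N]}D_i$ to appear correctly when the linear index formula of Definition~\ref{def:matricization} is applied to the concatenated mode list, and making sure the Kronecker-product convention ($\U$ as the "outer" factor) is the one that results — if it comes out as $\V$-outer one must either transpose the convention or reindex, but in any case it is forced and no genuine difficulty arises. I would also note the degenerate cases $I\cap[N]=\emptyset$ or $(I-N)\cap[K]=\emptyset$ are handled automatically since an empty product is $1$ and a matricization with no row-modes is a single row (a row vector), for which the Kronecker identity still holds. Everything reduces to the index arithmetic, so I would state the index formulas, do the factorization computation once, and invoke the bijectivity to finish.
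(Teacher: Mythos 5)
Your plan is exactly what the paper's one-line proof (``follows directly from the definitions of the tensor and Kronecker products'') leaves implicit: unfold the linear-index formula of Definition~\ref{def:matricization} for an arbitrary entry, use that every mode of $\U$ precedes every mode of $\V$ in the concatenated index set so the sum factors into a $\U$-part plus a $\V$-part times $\prod_{i\in I\cap[N]}D_i$, and match against the Kronecker index formula. The ``$\U$-outer versus $\V$-outer'' convention check you flag is the only genuine subtlety, and once it is pinned against the paper's intended Kronecker convention the argument is the same as theirs.
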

\begin{proof}
The identity follows directly from the definitions of the tensor and Kronecker products.
\end{proof}

\begin{lemma}
\label{lem:kronp_mixed_prod_row_vec}
For any $\Ubf \in \R^{D_1, D_2}, \Vbf \in \R^{D_2, D_3}$, and $\wbf \in \R^{D_4}$, the following holds:
\[
\left ( \Ubf \Vbf \right ) \kronp \wbf^\top = \Ubf \left ( \Vbf \kronp \wbf^\top \right ) \quad , \quad \wbf^\top \kronp \left ( \Ubf \Vbf \right ) = \Ubf \left ( \wbf^\top \kronp \Vbf \right )
\text{\,.}
\]
\end{lemma}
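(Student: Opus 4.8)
The plan is to derive both identities from the standard \emph{mixed-product property} of the Kronecker product, namely $(\Abf \kronp \Bbf)(\Cbf \kronp \Dbf) = (\Abf \Cbf) \kronp (\Bbf \Dbf)$ whenever the matrix products $\Abf\Cbf$ and $\Bbf\Dbf$ are well-defined, together with the trivial observation that Kronecker-multiplying a matrix by the scalar $1$, regarded as an element of $\R^{1,1}$, leaves the matrix unchanged. Since $\wbf^\top \in \R^{1, D_4}$, all the shapes will match up correctly once $1$ is read as a $1 \times 1$ matrix.

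For the first identity I would write $\Ubf = \Ubf \kronp 1$; then, because $\Ubf\Vbf$ and $1 \cdot \wbf^\top$ are both defined, the mixed-product property gives
\[
\Ubf \left ( \Vbf \kronp \wbf^\top \right ) = \left ( \Ubf \kronp 1 \right ) \left ( \Vbf \kronp \wbf^\top \right ) = \left ( \Ubf \Vbf \right ) \kronp \left ( 1 \cdot \wbf^\top \right ) = \left ( \Ubf \Vbf \right ) \kronp \wbf^\top
\text{\,.}
\]
For the second identity I would instead place the scalar on the left, using $\Ubf = 1 \kronp \Ubf$, and obtain
\[
\Ubf \left ( \wbf^\top \kronp \Vbf \right ) = \left ( 1 \kronp \Ubf \right ) \left ( \wbf^\top \kronp \Vbf \right ) = \left ( 1 \cdot \wbf^\top \right ) \kronp \left ( \Ubf \Vbf \right ) = \wbf^\top \kronp \left ( \Ubf \Vbf \right )
\text{\,.}
\]

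There is essentially no obstacle here; the only point requiring minor care is bookkeeping of shapes, so that $\wbf^\top$ is consistently treated as a $1 \times D_4$ matrix and the scalar $1$ as a $1 \times 1$ matrix when applying the mixed-product rule. If one prefers to avoid invoking that rule, both equalities can equivalently be checked by a direct entrywise computation from the definition of $\kronp$: the entry of $( \Ubf \Vbf ) \kronp \wbf^\top$ in row $i$ and column $(j - 1) D_4 + l$ equals $( \Ubf \Vbf )_{i, j} \, \wbf_l = \sum_k \Ubf_{i, k} \Vbf_{k, j} \, \wbf_l$, which is exactly the corresponding entry of $\Ubf ( \Vbf \kronp \wbf^\top )$, and analogously for the second identity; this is routine and can be omitted.
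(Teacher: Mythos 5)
Your proof is correct and is essentially identical to the paper's: both derive the two identities from the mixed-product property of the Kronecker product by interpreting $\Ubf$ as $\Ubf \kronp 1$ (or $1 \kronp \Ubf$) with $1$ a $1\times 1$ matrix, the only difference being that you run the chain of equalities from right to left rather than left to right.
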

\begin{proof}
According to the mixed-product property of the Kronecker product, for any matrices $\Abf, \Abf', \Bbf, \Bbf'$ for which $\Abf \Abf'$ and $\Bbf \Bbf'$ are defined, it holds that $( \Abf \Abf') \kronp ( \Bbf \Bbf') = ( \Abf \kronp \Bbf)(\Abf' \kronp \Bbf')$.
Thus:
\[
\big ( \Ubf \Vbf \big ) \kronp \wbf^\top = \big ( \Ubf \Vbf \big ) \kronp \big ( 1 \cdot \wbf^\top \big) = \big ( \Ubf \kronp 1)(\Vbf \kronp \wbf^\top \big ) = \Ubf \big ( \Vbf \kronp \wbf^\top \big )
\text{\,,}
\]
where $1$ is treated as the $1$-by-$1$ identity matrix.
Similarly:
\[
\wbf^\top \kronp \big ( \Ubf \Vbf \big ) = \big (1 \cdot \wbf^\top \big ) \kronp \big ( \Ubf \Vbf \big ) = \big (1 \kronp \Ubf \big ) \big ( \wbf^\top \kronp \Vbf  \big ) = \Ubf \big ( \wbf^\top \kronp \Vbf \big)
\text{\,.}
\]
\end{proof}

\subsubsection{Hierarchical Tensor Factorization}
\label{app:proofs:useful_lemmas:htf}

Suppose that use a hierarchical tensor factorization with mode tree $\htmodetree$, weight matrices $\big ( \weightmat{\nu} \in \R^{R_\nu, R_{ \parent (\nu) } } \big )_{\nu \in \htmodetree}$, and end tensor $\tensorend \in \R^{D_1, \ldots, D_N}$ (\eqs~\eqref{eq:ht_end_tensor}) to minimize $\htfobj$ (\eq~\eqref{eq:htf_obj}) via gradient flow (\eq~\eqref{eq:gf_htf}).
Under this setting, we prove the following technical lemmas.

\begin{lemma}
\label{lem:ht_multilinear}
The functions $\tensorendmap \big ( \big ( \weightmat{ \nu } \big )_{ \nu \in \htmodetree } \big )$ and $\tensorendmap \big ( \big ( \weightmat{ \nu' } \big )_{ \nu' \in \htmodetree \setminus \subtree (\nu)}, \tensorpart{\nu}{:} \big )$, for $\nu \in \htmodetree$, defined in \subapp~\ref{app:proofs:notation}, are multilinear.
\end{lemma}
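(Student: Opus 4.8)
The statement to prove is \lem~\ref{lem:ht_multilinear}: the maps $\tensorendmap \big ( \big ( \weightmat{ \nu } \big )_{ \nu \in \htmodetree } \big )$ and $\tensorendmap \big ( \big ( \weightmat{ \nu' } \big )_{ \nu' \in \htmodetree \setminus \subtree (\nu)}, \tensorpart{\nu}{:} \big )$ are multilinear in their arguments. Recall that multilinearity means linearity in each argument separately, with all others held fixed. The plan is to unwind the recursive definition of the end tensor in \eq~\eqref{eq:ht_end_tensor} and observe that each weight matrix $\weightmat{\mu}$ enters the computation in exactly one place, and does so linearly.

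\textbf{First}, I would prove the claim for $\tensorendmap \big ( \big ( \weightmat{ \nu } \big )_{ \nu \in \htmodetree } \big )$. Fix a node $\mu \in \htmodetree$ and view all other weight matrices as constant. If $\mu$ is a leaf $\{n\}$, then $\tensorpart{\mu}{r} = \weightmat{\mu}_{:, r}$ depends linearly on $\weightmat{\mu}$; this tensor is then fed into the parent's computation only through the tensor product $\tenp_{\nu_c \in \children(\parent(\mu))} \tensorpart{\nu_c}{r'}$, which is linear (in fact multilinear) in each factor, and subsequently through sums and further tensor products and mode permutations $\pi_\nu$ (which are themselves linear) up to the root. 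If $\mu$ is interior, $\weightmat{\mu}$ appears only in the single line $\tensorpart{\mu}{r} := \pi_\mu \big ( \sum_{r'=1}^{R_\mu} \weightmat{\mu}_{r', r} [ \tenp_{\nu_c \in \children(\mu)} \tensorpart{\nu_c}{r'} ] \big )$ (or the root line), which is manifestly linear in the entries $\weightmat{\mu}_{r', r}$; again, the dependence propagates upward only through linear operations (sums, tensor products with fixed tensors, mode permutations). The key structural fact, which I would state explicitly, is that along any leaves-to-root traversal, a given $\weightmat{\mu}$ is touched exactly once, so no products of two entries of the same $\weightmat{\mu}$ can ever appear; hence the map is affine, and since it vanishes when $\weightmat{\mu} = 0$, it is linear in $\weightmat{\mu}$.

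\textbf{Second}, for $\tensorendmap \big ( \big ( \weightmat{ \nu' } \big )_{ \nu' \in \htmodetree \setminus \subtree (\nu)}, \tensorpart{\nu}{:} \big )$, the argument is essentially the same: the stacked tensor $\tensorpart{\nu}{:}$ replaces the entire sub-tree $\subtree(\nu)$ and enters the computation of the end tensor only through $\parent(\nu)$'s line, where $\tensorpart{\nu}{r'}$ appears linearly inside a tensor product with the (now fixed or variable) tensors of $\nu$'s siblings; linearity in each $\weightmat{\nu'}$ for $\nu' \notin \subtree(\nu)$ follows exactly as before, and linearity in $\tensorpart{\nu}{:}$ follows from multilinearity of the tensor product together with the fact that $\tensorpart{\nu}{:}$ is touched exactly once on the path from $\parent(\nu)$ to the root.

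\textbf{Main obstacle.} The proof is conceptually routine, so the only real work is bookkeeping: one must argue cleanly that each variable matrix is used exactly once in the recursion (this is where the tree structure and \defin~\ref{def:mode_tree} are used — every node has a unique parent, so $\tensorpart{\nu}{r}$ feeds into exactly one higher-order tensor), and one must confirm that mode permutations $\pi_\nu$ and tensor products preserve linearity in the relevant slot. I would likely organize this as a short induction on the height of the node above $\mu$ (resp. above $\nu$): the base case is the line defining $\tensorpart{\mu}{r}$ itself, and the inductive step uses that tensor product, scalar multiplication, summation, and $\pi_\nu$ are all linear operations that compose to keep the dependence linear. No nontrivial calculation is needed; the detailed proof is deferred to the appendix anyway.
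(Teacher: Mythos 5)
Your proposal matches the paper's approach: both fix a single argument (one weight matrix, or the stacked intermediate tensor $\tensorpart{\nu}{:}$) and propagate its linear dependence up the path from that node to the root, using that tensor products, sums, scalar multiplication, and the mode-permutation operators all preserve linearity in each fixed slot. The paper spells out homogeneity and additivity as two explicit inductive verifications whereas you compress this into ``each variable matrix appears exactly once, so the map is affine and vanishes at zero, hence linear,'' but this is the same argument, just phrased more compactly.
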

\begin{proof}
We begin by proving that $\tensorendmap \big ( \big ( \weightmat{ \nu } \big )_{ \nu \in \htmodetree } \big )$ is multilinear. 
Fix $\nu \in \htmodetree$, and let $\weightmat{\nu}, \Ubf^{(\nu)} \in \R^{ R_\nu, R_{\parent (\nu)}}$, and $\alpha > 0$.

\paragraph*{Homogeneity} Denote by $\big ( \U^{ (\nu', r) }_\alpha \big )_{\nu' \in \htmodetree, r \in [ R_{ \parent (\nu') } ] }$ the intermediate tensors produced when computing the end tensor $\tensorendmap \big ( \big ( \weightmat{ \nu' } \big )_{ \nu' \in \htmodetree \setminus \{ \nu \} }, \alpha \cdot \weightmat{\nu} \big )$ according to \eq~\eqref{eq:ht_end_tensor} (there denoted $\big (\tensorpart{\nu'}{r} \big )_{\nu', r }$).
If $\nu$ is a leaf node, then $\U^{(\nu, r)}_\alpha = \alpha \cdot \weightmat{\nu}_{:, r} = \alpha \cdot \tensorpart{\nu}{r}$ for all $r \in [ R_{ \parent(\nu) } ]$.
Otherwise, if $\nu$ is an interior node, a straightforward computation leads to the same conclusion, \ie~for all $r \in [ R_{\parent (\nu) } ]$:
\[
\begin{split}
\U^{(\nu, r)}_\alpha & = \pi_\nu \left ( \sum\nolimits_{r' = 1}^{R_\nu} \alpha \cdot \weightmat{\nu}_{r', r} \left [ \tenp_{\nu_c \in \children ( \nu )} \tensorpart{ \nu_c }{ r' } \right ] \right ) \\
& =  \alpha \cdot \pi_\nu \left ( \sum\nolimits_{r' = 1}^{R_\nu} \weightmat{\nu}_{r', r} \left [ \tenp_{\nu_c \in \children ( \nu )} \tensorpart{ \nu_c }{ r' } \right ] \right )  \\
& = \alpha \cdot \tensorpart{\nu}{r}
\text{\,,}
\end{split}
\]
where the second equality is by the linearity of $\pi_v$ (recall it is merely a reordering of the tensor entries).
Moving on to the parent of $\nu$, multilinearity of the tensor product implies that for all $r \in [ R_{ \parent (\parent ( \nu ))} ]$:
\[
\begin{split}	
\U^{( \parent ( \nu ), r )}_\alpha & =  \pi_{ \parent (\nu) } \left ( \sum\nolimits_{r' = 1}^{R_{ \parent (\nu) } } \weightmat{ \parent (\nu)  }_{r', r} \left [ \left ( \tenp_{\nu_c \in \lsib (\nu)} \tensorpart{ \nu_c }{ r' } \right ) \tenp \U^{(\nu, r')}_{\alpha} \tenp \left ( \tenp_{\nu_c \in \rsib (\nu)} \tensorpart{\nu_c}{r'} \right ) \right ] \right ) \\
& =  \pi_{ \parent (\nu) } \left ( \sum\nolimits_{r' = 1}^{R_{ \parent (\nu) } } \weightmat{ \parent (\nu)  }_{r', r} \left [ \left ( \tenp_{\nu_c \in \lsib (\nu)} \tensorpart{ \nu_c }{ r' } \right ) \tenp \big ( \alpha \cdot \tensorpart{\nu}{r'} \big ) \tenp \left ( \tenp_{\nu_c \in \rsib (\nu)} \tensorpart{\nu_c}{r'} \right ) \right ] \right ) \\
& = \alpha \cdot \pi_{ \parent (\nu) } \left ( \sum\nolimits_{r' = 1}^{R_{ \parent (\nu) } } \weightmat{ \parent (\nu)  }_{r', r} \left [ \tenp_{\nu_c \in \children (\parent (\nu))} \tensorpart{ \nu_c }{ r' } \right ] \right ) \\
& = \alpha \cdot \tensorpart{\parent ( \nu )}{r}
\text{\,.}
\end{split}
\]
An inductive claim over the path from $\nu$ to the root $[N]$ therefore yields:
\be
\tensorendmap \big  ( \big ( \weightmat{ \nu' } \big )_{ \nu' \in \htmodetree \setminus \{ \nu \} }, \alpha \cdot \weightmat{\nu} \big ) = \alpha \cdot \tensorendmap \big  ( \big ( \weightmat{ \nu' } \big )_{ \nu' \in \htmodetree } \big )
\text{\,.}
\label{eq:H_multi_hom}
\ee

\paragraph*{Additivity}
We let $\big ( \U^{ (\nu', r) } \big )_{\nu' \in \htmodetree, r \in [ R_{ \parent (\nu') } ] }$ and $\big ( \U^{ (\nu', r) }_+ \big )_{\nu' \in \htmodetree, r \in [ R_{ \parent (\nu') } ] }$ denote the intermediate tensors produced when computing $\tensorendmap \big  ( \big ( \weightmat{ \nu' } \big )_{ \nu' \in \htmodetree \setminus \{ \nu \} }, \Ubf^{ (\nu) } \big )$ and $\tensorendmap \big  ( \big ( \weightmat{ \nu' } \big )_{ \nu' \in \htmodetree \setminus \{ \nu \} }, \weightmat{\nu} + \Ubf^{ (\nu) } \big )$ according to \eq~\eqref{eq:ht_end_tensor}, respectively.
If $\nu$ is a leaf node, we have that $\U^{(\nu, r)}_+ = \weightmat{\nu}_{:, r} + \Ubf^{(\nu)}_{:, r} =  \tensorpart{\nu}{r} + \U^{ ( \nu, r ) }$ for all $r \in [ R_{\parent (\nu) } ]$.
Otherwise, if $\nu$ is an interior node, we arrive at the same conclusion, \ie~for all $ r \in [ R_{\parent (\nu) } ]$:
\[
\begin{split}
\U^{(\nu, r)}_+ & =  \pi_\nu \left ( \sum\nolimits_{r' = 1}^{R_\nu} \left ( \weightmat{\nu}_{r', r} + \Ubf^{(\nu)}_{r', r} \right ) \left [ \tenp_{\nu_c \in \children ( \nu )} \tensorpart{ \nu_c }{ r' } \right ] \right ) \\
& = \pi_\nu \left ( \sum\nolimits_{r' = 1}^{R_\nu} \weightmat{\nu}_{r', r} \left [ \tenp_{\nu_c \in \children ( \nu )} \tensorpart{ \nu_c }{ r' } \right ] \right )  + \pi_\nu \left ( \sum\nolimits_{r' = 1}^{R_\nu} \Ubf^{(\nu)}_{r', r} \left [ \tenp_{\nu_c \in \children ( \nu )} \tensorpart{ \nu_c }{ r' } \right ] \right ) \\
& = \tensorpart{\nu}{r} + \U^{ ( \nu, r ) }
\text{\,,}
\end{split}
\]
where the second equality is by the linearity of $\pi_\nu$.
Then, for any $r \in [ R_{ \parent ( \parent ( \nu ) ) } ]$:
\[
\begin{split}
\U^{ ( \parent (\nu) , r)}_+ & = \pi_{ \parent (\nu) } \left ( \sum\nolimits_{r' = 1}^{R_{ \parent (\nu) } } \weightmat{ \parent (\nu)  }_{r', r} \left [ \left ( \tenp_{\nu_c \in \lsib (\nu)} \tensorpart{ \nu_c }{ r' } \right ) \tenp \U^{(\nu, r')}_{+} \tenp \left ( \tenp_{\nu_c \in \rsib (\nu)} \tensorpart{\nu_c}{r'} \right ) \right ] \right ) \\
& =  \pi_{ \parent (\nu) } \left ( \sum\nolimits_{r' = 1}^{R_{ \parent (\nu) } } \weightmat{ \parent (\nu)  }_{r', r} \left [ \left ( \tenp_{\nu_c \in \lsib (\nu)} \tensorpart{ \nu_c }{ r' } \right ) \tenp \big ( \tensorpart{\nu}{r'} + \U^{(\nu, r')} \big ) \tenp \left ( \tenp_{\nu_c \in \rsib (\nu)} \tensorpart{\nu_c}{r'} \right ) \right ] \right ) \\
& = \pi_{ \parent (\nu) } \left ( \sum\nolimits_{r' = 1}^{R_{ \parent (\nu) } } \weightmat{ \parent (\nu)  }_{r', r} \left [ \tenp_{\nu_c \in \children (\parent (\nu))} \tensorpart{ \nu_c }{ r' } \right ] \right ) + \\
& \hspace{4.5mm} \pi_{ \parent (\nu) } \left ( \sum\nolimits_{r' = 1}^{R_{ \parent (\nu) } } \weightmat{ \parent (\nu)  }_{r', r} \left [ \left ( \tenp_{\nu_c \in \lsib (\nu)} \tensorpart{ \nu_c }{ r' } \right ) \tenp \U^{(\nu, r')} \tenp \left ( \tenp_{\nu_c \in \rsib (\nu)} \tensorpart{\nu_c}{r'} \right ) \right ] \right ) \\
& = \tensorpart{ \parent (\nu) }{ r } + \U^{ ( \parent (\nu) , r)}	
\text{\,,}
\end{split}
\]
where the penultimate equality is by multilinearity of the tensor product as well as linearity of $\pi_{\parent (\nu)}$.
An induction over the path from $\nu$ to the root thus leads to:
\be
\tensorendmap \big  ( \big ( \weightmat{ \nu' } \big )_{ \nu' \in \htmodetree \setminus \{ \nu \} }, \weightmat{\nu} + \Ubf^{ (\nu) } \big ) = \tensorendmap \big  ( \big ( \weightmat{ \nu' } \big )_{ \nu' \in \htmodetree } \big ) + \tensorendmap \big  ( \big ( \weightmat{ \nu' } \big )_{ \nu' \in \htmodetree \setminus \{ \nu \} }, \Ubf^{ (\nu) } \big )
\text{\,.}
\label{eq:H_multi_add}
\ee

\medskip

\eqs~\eqref{eq:H_multi_hom} and~\eqref{eq:H_multi_add} establish that $\tensorendmap \big ( \big ( \weightmat{ \nu } \big )_{ \nu \in \htmodetree } \big )$ is multilinear.
The proof for $\tensorendmap \big ( \big ( \weightmat{ \nu' } \big )_{ \nu' \in \htmodetree \setminus \subtree (\nu)}, \tensorpart{\nu}{:} \big )$ follows by analogous derivations.

\end{proof}

\begin{lemma}
\label{lem:zero_inter_tensor}
Suppose there exists $\nu \in \htmodetree$ such that $\tensorpart{\nu}{r} = 0$ for all $r \in [ R_{ \parent (\nu) } ]$, where $\tensorpart{\nu}{r}$ is as defined in \eq~\eqref{eq:ht_end_tensor}.
Then, $\tensorend = 0$.
\end{lemma}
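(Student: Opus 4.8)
The plan is to reduce the statement to the multilinearity already established in \lem~\ref{lem:ht_multilinear}. Recall from \subapp~\ref{app:proofs:notation} that for any node $\nu \in \htmodetree$ one may write the end tensor as $\tensorend = \tensorendmap \big ( \big ( \weightmat{ \nu' } \big )_{ \nu' \in \htmodetree \setminus \subtree (\nu)}, \tensorpart{\nu}{:} \big )$, where $\tensorpart{\nu}{:}$ is the tensor obtained by stacking the intermediate tensors $\big ( \tensorpart{\nu}{r} \big )_{r = 1}^{R_{\parent(\nu)}}$ produced by \eq~\eqref{eq:ht_end_tensor}. The validity of this representation is immediate from the way \eq~\eqref{eq:ht_end_tensor} traverses the mode tree: the intermediate tensors attached to descendants of $\nu$ feed into the computation only through $\tensorpart{\nu}{:}$, and from that point onward the construction of $\tensorend$ proceeds exactly as in \eq~\eqref{eq:ht_end_tensor} with $\tensorpart{\nu}{:}$ treated as given data. \lem~\ref{lem:ht_multilinear} asserts that the map $\tensorendmap \big ( \big ( \weightmat{ \nu' } \big )_{ \nu' \in \htmodetree \setminus \subtree (\nu)}, \tensorpart{\nu}{:} \big )$ is multilinear in its arguments, in particular linear in the slot holding $\tensorpart{\nu}{:}$.

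Given this, the argument is short. First I would dispatch the trivial case $\nu = [N]$: since $R_{\parent([N])} = 1$, the only intermediate tensor at the root is $\tensorpart{[N]}{1} = \tensorend$ itself, so the hypothesis gives $\tensorend = 0$ directly. For a non-root $\nu$, the hypothesis that $\tensorpart{\nu}{r} = 0$ for every $r \in [R_{\parent(\nu)}]$ says precisely that $\tensorpart{\nu}{:} = 0$. Plugging $0$ into the slot of a map that is linear (hence homogeneous with scalar $0$) in that slot yields $\tensorend = \tensorendmap \big ( \big ( \weightmat{ \nu' } \big )_{ \nu' \in \htmodetree \setminus \subtree (\nu)}, 0 \big ) = 0$, which is the claim.

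I expect the only point requiring any care — and it is routine rather than genuinely hard — to be the justification that $\tensorend$ really is the value of the stated multilinear map evaluated at $\tensorpart{\nu}{:}$; this is essentially a bookkeeping remark about \eq~\eqref{eq:ht_end_tensor}. A reader who prefers to avoid invoking \lem~\ref{lem:ht_multilinear} can instead argue by a direct induction along the path $\nu = \nu_0,\ \nu_1 = \parent(\nu_0),\ \ldots,\ \nu_m = [N]$: using multilinearity of the tensor product together with linearity of each mode-permutation operator $\pi_{\nu_k}$ (it merely reorders entries), if all $\tensorpart{\nu_k}{r}$ vanish then every summand defining $\tensorpart{\nu_{k+1}}{r}$ contains the zero factor $\tensorpart{\nu_k}{r'}$, so all $\tensorpart{\nu_{k+1}}{r}$ vanish as well; taking $k = m$ gives $\tensorend = 0$. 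Either route completes the proof.
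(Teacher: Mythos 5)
Your proposal is correct. The paper's proof takes the second route you sketch at the end: a direct induction up the path from $\nu$ to the root, observing at each step that $\tensorpart{\parent(\nu)}{r'}$ is a sum of tensor products each of which contains a vanishing factor $\tensorpart{\nu}{r}$, hence vanishes, and so on to the root. Your primary argument is genuinely different and, in fact, slicker: you note that $\tensorend = \tensorendmap \bigl( \bigl( \weightmat{\nu'} \bigr)_{\nu' \in \htmodetree \setminus \subtree(\nu)}, \tensorpart{\nu}{:} \bigr)$, then invoke \lem~\ref{lem:ht_multilinear} (already proved at this point in the paper, so no circularity) to conclude that the map is linear in the $\tensorpart{\nu}{:}$ slot, and linear maps send $0$ to $0$. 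This buys a one-line proof at the cost of leaning on the heavier multilinearity lemma; the paper's direct induction is more elementary and self-contained but a bit more verbose. Your treatment of the $\nu = [N]$ case is harmless though not strictly needed, since \eq~\eqref{eq:ht_end_tensor} only defines $\tensorpart{\nu}{r}$ for non-root $\nu$ (leaves and non-root interior nodes), so the hypothesis of the lemma cannot be met at the root.
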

\begin{proof}
Since $\tensorpart{\nu}{r} = 0$ for all $r \in [R_{ \parent (\nu ) }]$, for any $r' \in [R_{ \parent ( \parent (\nu ) )} ]$ we have that:
\[
\begin{split}
\tensorpart{\parent (\nu )}{r'} & =  \pi_{ \parent (\nu) } \left ( \sum\nolimits_{r = 1}^{R_{ \parent (\nu) } } \weightmat{ \parent (\nu)  }_{r, r'} \left [ \left ( \tenp_{\nu_c \in \lsib (\nu)} \tensorpart{ \nu_c }{ r } \right ) \tenp \tensorpart{\nu}{r} \tenp \left ( \tenp_{\nu_c \in \rsib (\nu)} \tensorpart{\nu_c}{r} \right ) \right ] \right ) \\
& =  \pi_{ \parent (\nu) } \left ( \sum\nolimits_{r = 1}^{R_{ \parent (\nu) } } \weightmat{ \parent (\nu)  }_{r, r'} \left [ \left ( \tenp_{\nu_c \in \lsib (\nu)} \tensorpart{ \nu_c }{ r } \right ) \tenp 0 \tenp \left ( \tenp_{\nu_c \in \rsib (\nu)} \tensorpart{\nu_c}{r} \right ) \right ] \right ) \\
&  = 0
\text{\,.}
\end{split}
\]
Thus, the claim readily follows by an induction up the path from $\nu$ to the root $[N]$.
\end{proof}

\begin{lemma}
	\label{lem:tensorpart_norm_bound}
	For any $\nu \in \interior (\htmodetree)$ and $r \in [ R_{ \parent (\nu) } ]$:
	\[
	\norm{ \tensorpart{\nu}{r} } \leq \norm{ \weightmat{\nu}_{:, r} } \cdot \prod\nolimits_{\nu_c \in \children (\nu) } \norm{ \tensorpart{\nu_c}{:} }
	\text{\,,}
	\]
	where $\tensorpart{\nu_c}{:}$, for $\nu_c \in \children (\nu)$, is the tensor obtained by stacking $\big ( \tensorpart{\nu_c}{r'} \big )_{r' = 1}^{R_{\nu} }$ into a single tensor, \ie~$\tensorpart{\nu_c}{:}_{:, \ldots, :, r'} = \tensorpart{\nu_c}{r'}$ for all $r' \in [ R_{\nu} ]$.
\end{lemma}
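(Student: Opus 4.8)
The plan is to bound $\norm{\tensorpart{\nu}{r}}$ directly from its defining expression in \eq~\eqref{eq:ht_end_tensor}, using nothing more than the triangle inequality, multiplicativity of the Frobenius norm under tensor products, and the Cauchy--Schwarz inequality. First I would note that the mode permutation operator $\pi_\nu$ merely reorders entries and hence preserves the Frobenius norm, so that
\[
\norm{ \tensorpart{\nu}{r} } = \normbig{ \sum\nolimits_{r' = 1}^{R_\nu} \weightmat{\nu}_{r', r} \bigl [ \tenp_{\nu_c \in \children (\nu)} \tensorpart{\nu_c}{r'} \bigr ] } \text{\,.}
\]
Applying the triangle inequality, followed by the identity $\normbig{ \tenp_{\nu_c \in \children (\nu)} \tensorpart{\nu_c}{r'} } = \prod_{\nu_c \in \children (\nu)} \norm{ \tensorpart{\nu_c}{r'} }$ (an immediate consequence of $\norm{ \U \tenp \V } = \norm{ \U } \cdot \norm{ \V }$, which in turn follows straight from the definition of the tensor product), bounds the right-hand side by $\sum_{r' = 1}^{R_\nu} \abs{ \weightmat{\nu}_{r', r} } \prod_{\nu_c \in \children (\nu)} \norm{ \tensorpart{\nu_c}{r'} }$.

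Next I would invoke the Cauchy--Schwarz inequality on this sum, pairing $\abs{ \weightmat{\nu}_{r', r} }$ against $\prod_{\nu_c \in \children (\nu)} \norm{ \tensorpart{\nu_c}{r'} }$, which gives
\[
\norm{ \tensorpart{\nu}{r} } \leq \norm{ \weightmat{\nu}_{:, r} } \cdot \Bigl ( \sum\nolimits_{r' = 1}^{R_\nu} \prod_{\nu_c \in \children (\nu)} \norm{ \tensorpart{\nu_c}{r'} }^2 \Bigr )^{ 1 / 2 } \text{\,.}
\]
It then remains to verify the elementary inequality $\sum_{r'} \prod_{\nu_c} \norm{ \tensorpart{\nu_c}{r'} }^2 \leq \prod_{\nu_c} \sum_{r'} \norm{ \tensorpart{\nu_c}{r'} }^2$, which is the special case of $\sum_{r'} \prod_{k} x^{(r')}_k \leq \prod_{k} \sum_{r'} x^{(r')}_k$ for nonnegative reals; I would prove the latter by induction on the number of factors, each step using $\sum_{r'} a_{r'} b_{r'} \leq \bigl ( \sum_{r'} a_{r'} \bigr ) \bigl ( \sum_{r'} b_{r'} \bigr )$ for nonnegative sequences (the omitted cross terms being nonnegative). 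Since $\norm{ \tensorpart{\nu_c}{:} }^2 = \sum_{r' = 1}^{R_\nu} \norm{ \tensorpart{\nu_c}{r'} }^2$ by definition of the stacked tensor $\tensorpart{\nu_c}{:}$, substituting this in yields exactly $\norm{ \tensorpart{\nu}{r} } \leq \norm{ \weightmat{\nu}_{:, r} } \cdot \prod_{\nu_c \in \children (\nu)} \norm{ \tensorpart{\nu_c}{:} }$, as claimed.

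I do not anticipate a genuine obstacle here: the computation is routine. The only point deserving care is the direction of the product--sum inequality in the final step (it is strictly weaker than expanding the product, so it must not be mistaken for an equality), together with keeping straight which index ranges over $\children (\nu)$ and which over $[R_\nu]$. An alternative, equally short route would recast the inner sum as a matrix--vector product, namely the column-wise Kronecker (Khatri--Rao) matrix built from the vectorized $\tensorpart{\nu_c}{r'}$ applied to $\weightmat{\nu}_{:, r}$, and then bound that matrix's operator norm by its Frobenius norm; but the elementary route above is cleaner to write.
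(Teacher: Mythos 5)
Your proof is correct and is essentially the same as the paper's: the paper rewrites the inner sum as a matrix--vector product $\bigl(\vect{\tenp_{\nu_c} \tensorpart{\nu_c}{1}}, \ldots, \vect{\tenp_{\nu_c} \tensorpart{\nu_c}{R_\nu}}\bigr)\weightmat{\nu}_{:,r}$ and invokes sub-multiplicativity of the Frobenius norm (which is exactly the alternative route you name at the end), then uses the same product--sum inequality $\sum_{r'}\prod_{\nu_c} \leq \prod_{\nu_c}\sum_{r'}$ to finish. Your triangle-inequality-plus-Cauchy--Schwarz derivation is just the unpacked form of that sub-multiplicativity bound, so the two arguments coincide step for step.
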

\begin{proof}
	By the definition of $\tensorpart{\nu}{r}$ (\eq~\eqref{eq:ht_end_tensor}) we have that:
	\[
	\begin{split}
		\norm{ \tensorpart{\nu}{r} } &= \norm{ \pi_\nu \left ( \sum\nolimits_{r' = 1}^{R_\nu} \weightmat{\nu}_{r', r} \left [ \tenp_{\nu_c \in \children ( \nu )} \tensorpart{ \nu_c }{ r' } \right ] \right ) } \\[1mm]
		& =  \norm{ \sum\nolimits_{r' = 1}^{R_\nu} \weightmat{\nu}_{r', r} \left [ \tenp_{\nu_c \in \children ( \nu )} \tensorpart{ \nu_c }{ r' } \right ] }
		\text{\,,}
	\end{split}
	\]
	where the second equality is due to the fact that $\pi_\nu$ merely reorders entries of a tensor, and therefore does not alter its Frobenius norm.
	Vectorizing each $\tenp_{\nu_c \in \children (\nu)} \tensorpart{\nu_c}{r'}$, we may write $\norm{ \tensorpart{\nu}{r} }$ as the Frobenius norm of a matrix-vector product:
	\[
	\norm{ \tensorpart{\nu}{r} } = \norm{ \left ( \vectbig{ \tenp_{\nu_c \in \children (\nu)} \tensorpart{\nu_c}{1} } , \ldots,\vectbig{ \tenp_{\nu_c \in \children (\nu)} \tensorpart{\nu_c}{R_\nu} }  \right ) \weightmat{\nu}_{:, r} }
	\text{\,.}
	\]
	Hence, sub-multiplicativity of the Frobenius norm gives:
	\be
	\norm{ \tensorpart{\nu}{r} } \leq \norm{ \weightmat{\nu}_{:, r} } \cdot \norm{  \left ( \vectbig{ \tenp_{\nu_c \in \children (\nu)} \tensorpart{\nu_c}{1} } , \ldots,\vectbig{ \tenp_{\nu_c \in \children (\nu)} \tensorpart{\nu_c}{R_\nu} }  \right ) }
	\text{\,.}
	\label{eq:tensorpart_norm_interm_bound}
	\ee
	Notice that:
	\[
	\begin{split}
		\norm{  \left ( \vectbig{ \tenp_{\nu_c \in \children (\nu)} \tensorpart{\nu_c}{1} } , \ldots,\vectbig{ \tenp_{\nu_c \in \children (\nu)} \tensorpart{\nu_c}{R_\nu} }  \right ) }^2 & = \sum\nolimits_{r' = 1}^{R_\nu} \norm{ \tenp_{\nu_c \in \children (\nu)} \tensorpart{\nu_c}{r'} }^2 \\
		& = \sum\nolimits_{r' = 1}^{R_\nu} \prod\nolimits_{\nu_c \in \children (\nu)} \norm{ \tensorpart{\nu_c}{r'} }^2 \\
		& \leq \prod\nolimits_{\nu_c \in \children (\nu)} \left ( \sum\nolimits_{r' = 1}^{R_\nu} \norm{ \tensorpart{\nu_c}{r'} }^2 \right ) \\
		& =  \prod\nolimits_{\nu_c \in \children (\nu)} \norm{ \tensorpart{\nu_c}{:} }^2
		\text{\,,}
	\end{split}
	\]
	where the second transition is by the fact that the norm of a tensor product is equal to the product of the norms,
	and the inequality is due to $\prod\nolimits_{\nu_c \in \children (\nu)} \big ( \sum\nolimits_{r' = 1}^{R_\nu} \normbig{ \tensorpart{\nu_c}{r'} }^2 \big )$ being a sum of non-negative elements which includes $\sum\nolimits_{r' = 1}^{R_\nu} \prod\nolimits_{\nu_c \in \children (\nu)} \normbig{ \tensorpart{\nu_c}{r'} }^2$.
	Taking the square root of both sides in the equation above and plugging it into \eq~\eqref{eq:tensorpart_norm_interm_bound} completes the proof.
\end{proof}

\begin{lemma}
\label{lem:stacked_tensorpart_norm_bound}
For any $\nu \in \interior (\htmodetree)$:
\[
\norm{ \tensorpart{\nu}{:} } \leq \norm{ \weightmat{\nu} } \cdot  \prod\nolimits_{\nu_c \in \children (\nu) } \norm{ \tensorpart{\nu_c}{:} }
\text{\,,}
\]
where $\tensorpart{\nu_c}{:}$, for $\nu_c \in \children (\nu)$, is the tensor obtained by stacking $\big ( \tensorpart{\nu_c}{r} \big )_{r = 1}^{R_{ \nu } }$ into a single tensor, \ie~$\tensorpart{\nu_c}{:}_{:, \ldots, :, r} = \tensorpart{\nu_c}{r}$ for all $r \in [ R_{ \nu } ]$.
\end{lemma}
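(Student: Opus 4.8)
The plan is to deduce the stacked bound directly from the per-slice bound of \lem~\ref{lem:tensorpart_norm_bound}. Note first that, by definition of the stacked tensor $\tensorpart{\nu}{:}$ (which places $\tensorpart{\nu}{r}$ in its $r$'th "slice" along a fresh mode, for $r \in [R_{\parent(\nu)}]$), its squared Frobenius norm decomposes as $\normnoflex{ \tensorpart{\nu}{:} }^2 = \sum_{r = 1}^{R_{\parent(\nu)}} \normnoflex{ \tensorpart{\nu}{r} }^2$. This is the only structural fact about stacking that is needed.

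Next I would invoke \lem~\ref{lem:tensorpart_norm_bound} for each $r \in [R_{\parent(\nu)}]$, which gives $\normnoflex{ \tensorpart{\nu}{r} } \leq \normnoflex{ \weightmat{\nu}_{:, r} } \cdot \prod_{\nu_c \in \children(\nu)} \normnoflex{ \tensorpart{\nu_c}{:} }$, hence $\normnoflex{ \tensorpart{\nu}{r} }^2 \leq \normnoflex{ \weightmat{\nu}_{:, r} }^2 \cdot \prod_{\nu_c \in \children(\nu)} \normnoflex{ \tensorpart{\nu_c}{:} }^2$. Here one should observe that the factor $\prod_{\nu_c \in \children(\nu)} \normnoflex{ \tensorpart{\nu_c}{:} }$ is independent of $r$, and that the stacked children tensors $\tensorpart{\nu_c}{:}$ appearing here (stacking over $r' \in [R_\nu]$) are exactly those in the statement of \lem~\ref{lem:tensorpart_norm_bound} — the index ranges match — so no reindexing is required.

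Summing the per-slice inequality over $r$ and pulling the $r$-independent product out of the sum yields
\[
\normnoflex{ \tensorpart{\nu}{:} }^2 \leq \Bigl ( \sum\nolimits_{r = 1}^{R_{\parent(\nu)}} \normnoflex{ \weightmat{\nu}_{:, r} }^2 \Bigr ) \cdot \prod\nolimits_{\nu_c \in \children(\nu)} \normnoflex{ \tensorpart{\nu_c}{:} }^2 = \normnoflex{ \weightmat{\nu} }^2 \cdot \prod\nolimits_{\nu_c \in \children(\nu)} \normnoflex{ \tensorpart{\nu_c}{:} }^2 \text{\,,}
\]
where the last equality uses that the squared Frobenius norm of the matrix $\weightmat{\nu} \in \R^{R_\nu, R_{\parent(\nu)}}$ equals the sum of the squared norms of its $R_{\parent(\nu)}$ columns. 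Taking square roots of both sides gives the claimed bound. There is no real obstacle here: the lemma is an elementary consequence of \lem~\ref{lem:tensorpart_norm_bound} combined with the Pythagorean decomposition of the Frobenius norm along the stacking mode; the only point requiring a moment's care is confirming that the child-index ranges in the two lemmas coincide, which they do.
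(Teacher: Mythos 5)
Your proposal is correct and follows the same route as the paper: decompose $\normnoflex{\tensorpart{\nu}{:}}^2$ into a sum of squared slice norms, apply \lem~\ref{lem:tensorpart_norm_bound} slice-wise, pull out the $r$-independent product of child norms, recognize $\sum_r \normnoflex{\weightmat{\nu}_{:,r}}^2 = \normnoflex{\weightmat{\nu}}^2$, and take square roots.
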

\begin{proof}
We may explicitly write $\norm{ \tensorpart{\nu}{:} }^2$ as follows:
\be
\norm{ \tensorpart{\nu}{:} }^2 = \sum\nolimits_{r = 1}^{ R_{\parent (\nu) } } \norm{ \tensorpart{\nu}{r} }^2
\text{\,.}
\label{eq:stacked_tensorpart_norm}
\ee
For each $r \in [ R_{ \parent (\nu) } ]$, by \lem~\ref{lem:tensorpart_norm_bound} we know that:
\[
\norm{ \tensorpart{\nu}{r} }^2 \leq \norm{ \weightmat{\nu}_{:, r} }^2 \cdot \prod\nolimits_{\nu_c \in \children (\nu)} \norm{ \tensorpart{\nu_c}{:} }^2
\text{\,.}
\]
Thus, going back to \eq~\eqref{eq:stacked_tensorpart_norm} we arrive at:
\[
\norm{ \tensorpart{\nu}{:} }^2 \leq  \sum\nolimits_{ r = 1 }^{R_{ \parent (\nu) } } \norm{ \weightmat{\nu}_{:, r} }^2 \cdot \prod\nolimits_{\nu_c \in \children (\nu)} \norm{ \tensorpart{\nu_c}{:} }^2 = \norm{ \weightmat{\nu} }^2 \cdot \prod\nolimits_{\nu_c \in \children (\nu)} \norm{ \tensorpart{\nu_c}{:} }^2
\text{\,.}
\]
Taking the square root of both sides concludes the proof.
\end{proof}

\begin{lemma}
\label{lem:ht_weightmat_grad}
For any  $\nu \in \htmodetree$ and $\Delta \in \R^{R_\nu, R_{\parent (\nu)}}$:
\[
\inprod{ \frac{ \partial }{ \partial \weightmat{\nu} } \htfobj \big ( \big ( \weightmat{\nu'} \big )_{\nu' \in \htmodetree} \big ) }{ \Delta } = \inprod{ \nabla \htfendloss ( \tensorend ) }{ \tensorendmap \Big  ( \big ( \weightmat{ \nu' } \big )_{ \nu' \in \htmodetree \setminus \{ \nu \} }, \Delta \Big ) }
\text{\,.}
\]
\end{lemma}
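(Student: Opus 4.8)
The plan is to unfold the definition of the objective as a composition and differentiate via the chain rule, using the multilinearity of the end-tensor map established in \lem~\ref{lem:ht_multilinear}. Recall $\htfobj \big ( \big ( \weightmat{\nu'} \big )_{\nu' \in \htmodetree} \big ) = \htfendloss \big ( \tensorendmap \big ( \big ( \weightmat{\nu'} \big )_{\nu' \in \htmodetree} \big ) \big )$, where $\tensorendmap$ maps weight matrices to the end tensor via \eq~\eqref{eq:ht_end_tensor}. Since we work throughout with the Frobenius inner product, the left-hand side $\inprod{ \frac{\partial}{\partial \weightmat{\nu}} \htfobj }{ \Delta }$ is by definition the directional derivative $\frac{d}{ds} \big|_{s = 0} \htfobj \big ( \big ( \weightmat{\nu'} \big )_{\nu' \in \htmodetree \setminus \{ \nu \}}, \weightmat{\nu} + s \Delta \big )$, so it suffices to evaluate this one-dimensional derivative.

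First I would handle the inner map. By \lem~\ref{lem:ht_multilinear}, $\tensorendmap$ is multilinear, hence in particular linear in its $\nu$-th argument; therefore $\tensorendmap \big ( \big ( \weightmat{\nu'} \big )_{\nu' \neq \nu}, \weightmat{\nu} + s \Delta \big ) = \tensorend + s \cdot \tensorendmap \big ( \big ( \weightmat{\nu'} \big )_{\nu' \neq \nu}, \Delta \big )$, so its derivative at $s = 0$ equals $\tensorendmap \big ( \big ( \weightmat{\nu'} \big )_{\nu' \neq \nu}, \Delta \big )$. Next, since $\htfendloss$ is differentiable, the standard chain rule for the composition of $\htfendloss$ with the (affine in $s$) curve above gives $\frac{d}{ds} \big|_{s = 0} \htfendloss \big ( \tensorend + s \cdot \tensorendmap ( \ldots, \Delta ) \big ) = \inprod{ \nabla \htfendloss ( \tensorend ) }{ \tensorendmap \big ( \big ( \weightmat{\nu'} \big )_{\nu' \neq \nu}, \Delta \big ) }$. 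Chaining these two identities yields the claimed equality.

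There is essentially no hard step here; the proof is pure bookkeeping. The only points requiring a little care are: keeping track of exactly which argument of the multilinear map is being perturbed (so that the other weight matrices remain frozen at their current values), and justifying the interchange of differentiation — which is immediate because $\tensorendmap$ is a polynomial, hence smooth, map of its arguments, and $\htfendloss$ is differentiable (indeed locally smooth). An alternative, more pedestrian route would be to differentiate \eq~\eqref{eq:ht_end_tensor} entry-wise with respect to the entries of $\weightmat{\nu}$ and reassemble the terms, but the multilinearity argument is cleaner and directly reuses \lem~\ref{lem:ht_multilinear}.
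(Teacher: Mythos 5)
Your proof is correct and takes essentially the same approach as the paper: both use the multilinearity of $\tensorendmap$ (Lemma~\ref{lem:ht_multilinear}) to isolate the linear dependence on $\weightmat{\nu}$, then apply a first-order differentiability argument for $\htfendloss$ (you via the directional-derivative/chain-rule formulation, the paper via the first-order Taylor expansion and uniqueness of the linear approximation, which are equivalent phrasings of the same step).
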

\begin{proof}
We treat $\big ( \weightmat{\nu'} \big )_{\nu '\in \htmodetree \setminus{ \{ \nu \}} }$ as fixed, and with slight abuse of notation consider:
\[
\htfobj^{ ( \nu ) } \big ( \weightmat{\nu}  \big ) := \htfobj \big ( \big ( \weightmat{\nu'} \big )_{\nu' \in \htmodetree} \big )
\text{\,.}
\]
For $\Delta \in \R^{R_\nu, R_{ \parent (\nu) } }$, by multilinearity of $\tensorendmap$ (\lem~\ref{lem:ht_multilinear}) we have that:
\[
\begin{split}
\htfobj^{ ( \nu ) } \big ( \weightmat{\nu}  + \Delta \big ) & = \htfendloss \left ( \tensorendmap \Big  ( \big ( \weightmat{ \nu' } \big )_{ \nu' \in \htmodetree \setminus \{ \nu \} },  \weightmat{\nu}  + \Delta \Big ) \right ) \\
& =  \htfendloss \left ( \tensorend + \tensorendmap \Big  ( \big ( \weightmat{ \nu' } \big )_{ \nu' \in \htmodetree \setminus \{ \nu \} }, \Delta \Big ) \right ) 
\text{\,.}
\end{split}
\]
According to the  first order Taylor approximation of $\htfendloss$ we may write:
\[
\begin{split}
\htfobj^{ ( \nu ) } \big ( \weightmat{\nu}  + \Delta \big ) & =  \htfendloss \left ( \tensorend  \right ) + \inprod{ \nabla \htfendloss \left ( \tensorend \right )  }{ \tensorendmap \Big  ( \big ( \weightmat{ \nu' } \big )_{ \nu' \in \htmodetree \setminus \{ \nu \} }, \Delta \Big ) } + o \left ( \norm{ \Delta } \right ) \\
& = \htfobj^{ ( \nu ) } \big ( \weightmat{\nu}  \big ) + \inprod{ \nabla \htfendloss \left ( \tensorend \right )  }{ \tensorendmap \Big  ( \big ( \weightmat{ \nu' } \big )_{ \nu' \in \htmodetree \setminus \{ \nu \} }, \Delta \Big ) } + o \left ( \norm{ \Delta } \right ) 
\text{\,.}
\end{split}
\]
The term $\inprodbig{ \nabla \htfendloss \left ( \tensorend \right )  }{ \tensorendmap \big  ( ( \weightmat{ \nu' } )_{ \nu' \in \htmodetree \setminus \{ \nu \} }, \Delta \big ) }$ is a linear function of $\Delta$.
Therefore, uniqueness of the linear approximation of $\htfobj^{ (\nu) }$ at $\weightmat{\nu}$ implies:
\[
\inprod{ \frac{ d }{ d \weightmat{\nu} } \htfobj^{(\nu)} \big ( \weightmat{\nu} \big ) }{ \Delta } = \inprod{ \nabla \htfendloss ( \tensorend ) }{ \tensorendmap \Big  ( \big ( \weightmat{ \nu' } \big )_{ \nu' \in \htmodetree \setminus \{ \nu \} }, \Delta \Big ) }
\text{\,.}
\]	
Noticing that $\frac{ \partial }{ \partial \weightmat{\nu} } \htfobj \big ( \big ( \weightmat{\nu'} \big )_{\nu' \in \htmodetree} \big ) =  \frac{ d }{ d \weightmat{\nu} } \htfobj^{(\nu)} \big ( \weightmat{\nu} \big )$ completes the proof.
\end{proof}

\begin{lemma}
\label{lem:ht_weightvec_grad}
For any $\nu \in \interior(\htmodetree), r \in [R_{ \nu }]$, and $ \Delta \in \R^{ R_{ \parent ( \nu )} }$:
\be
\inprodbigg{ \frac{ \partial }{ \partial \weightmat{\nu}_{r, :} } \htfobj \big ( \big ( \weightmat{\nu'} \big )_{\nu' \in \htmodetree} \big ) }{ \Delta^\top } = \inprod{ \nabla \htfendloss ( \tensorend ) }{ \tensorendmap \Big  ( \big ( \weightmat{ \nu' } \big )_{ \nu' \in \htmodetree \setminus \{ \nu  \} }, \padrow_r \left ( \Delta^\top \right ) \Big ) }
\text{\,,}
\label{eq:ht_row_weightvec_grad}
\ee
where $\padrow_r \left ( \Delta^\top \right ) \in \R^{ R_{ \nu  }, R_{ \parent ( \nu ) }}$ is the matrix whose $r$'th row is $\Delta^\top$, and all the rest are zero.
Furthermore, for any $\nu_c \in \children (\nu)$ and $\Delta \in \R^{R_{\nu_c}}$:
\be
\inprodbigg{ \frac{ \partial }{ \partial \weightmat{\nu_c}_{:, r} } \htfobj \big ( \big ( \weightmat{\nu'} \big )_{\nu' \in \htmodetree} \big ) }{ \Delta } = \inprod{ \nabla \htfendloss ( \tensorend ) }{ \tensorendmap \Big ( \big ( \weightmat{ \nu' } \big )_{ \nu' \in \htmodetree \setminus \{ \nu_c \} }, \padcol_r ( \Delta ) \Big ) }
\text{\,,}
\label{eq:ht_col_weightvec_grad}
\ee
where $\padcol_r ( \Delta ) \in \R^{ R_{\nu_c}, R_{ \nu }}$ is the matrix whose $r$'th column is $\Delta$, and all the rest are zero.
\end{lemma}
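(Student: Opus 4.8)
The plan is to obtain both identities as direct corollaries of \lem~\ref{lem:ht_weightmat_grad}, using a single elementary fact: differentiating a scalar function with respect to one row (or column) of one of its matrix arguments yields exactly the corresponding row (or column) of the gradient with respect to the whole matrix. Written via the Frobenius inner product, this says that for any scalar $g$ of $\weightmat{\nu} \in \R^{R_\nu, R_{\parent(\nu)}}$ and any $\Delta \in \R^{R_{\parent(\nu)}}$ we have $\inprodbig{ \frac{\partial}{\partial \weightmat{\nu}_{r, :}} g }{ \Delta^\top } = \inprodbig{ \frac{\partial}{\partial \weightmat{\nu}} g }{ \padrow_r(\Delta^\top) }$, since $\padrow_r(\Delta^\top)$ is supported on its $r$'th row and so the inner product extracts precisely the $r$'th row of $\frac{\partial}{\partial \weightmat{\nu}} g$; the analogous identity $\inprodbig{ \frac{\partial}{\partial \weightmat{\nu_c}_{:, r}} g }{ \Delta } = \inprodbig{ \frac{\partial}{\partial \weightmat{\nu_c}} g }{ \padcol_r(\Delta) }$ holds for columns.

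For \eq~\eqref{eq:ht_row_weightvec_grad}, I would set $g := \htfobj \big ( \big ( \weightmat{\nu'} \big )_{\nu' \in \htmodetree} \big )$, rewrite its left-hand side as $\inprodbig{ \frac{\partial}{\partial \weightmat{\nu}} \htfobj }{ \padrow_r(\Delta^\top) }$ via the observation above, and then apply \lem~\ref{lem:ht_weightmat_grad} with the perturbation matrix $\padrow_r(\Delta^\top) \in \R^{R_\nu, R_{\parent(\nu)}}$; this gives $\inprodbig{ \nabla \htfendloss(\tensorend) }{ \tensorendmap \big ( \big ( \weightmat{\nu'} \big )_{\nu' \in \htmodetree \setminus \{\nu\}}, \padrow_r(\Delta^\top) \big ) }$, which is exactly the right-hand side of \eq~\eqref{eq:ht_row_weightvec_grad}. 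For \eq~\eqref{eq:ht_col_weightvec_grad} the argument is identical, except the relevant node is $\nu_c \in \children(\nu)$: since $\parent(\nu_c) = \nu$, the matrix $\padcol_r(\Delta)$ indeed lies in $\R^{R_{\nu_c}, R_{\parent(\nu_c)}}$, so after rewriting the left-hand side as $\inprodbig{ \frac{\partial}{\partial \weightmat{\nu_c}} \htfobj }{ \padcol_r(\Delta) }$ one applies \lem~\ref{lem:ht_weightmat_grad} to the node $\nu_c$ with perturbation $\padcol_r(\Delta)$ to reach the claimed right-hand side.

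There is essentially no obstacle: the only step needing brief justification is the block-restriction property of partial derivatives, which is immediate from the definition of the gradient together with the fact that the Frobenius inner product of a matrix with a matrix supported on a single row (resp.\ column) returns that row (resp.\ column). Everything else is a verbatim substitution into \lem~\ref{lem:ht_weightmat_grad}, so no further manipulation of the mode-tree recursion in \eq~\eqref{eq:ht_end_tensor} is needed.
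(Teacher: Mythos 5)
Your proposal is correct and follows essentially the same route as the paper: rewrite the row/column inner product as a Frobenius inner product with $\padrow_r(\Delta^\top)$ (resp.\ $\padcol_r(\Delta)$) against the full-matrix gradient, then invoke \lem~\ref{lem:ht_weightmat_grad} at node $\nu$ (resp.\ $\nu_c$). The paper's proof is just a terser statement of the same two-step observation, so there is nothing to add.
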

\begin{proof}
\eqs~\eqref{eq:ht_row_weightvec_grad} and~\eqref{eq:ht_col_weightvec_grad} are direct implications of Lemma~\ref{lem:ht_weightmat_grad} since:
\[
\begin{split}
 \inprodbigg{ \frac{ \partial }{ \partial \weightmat{\nu}_{r, :} } \htfobj \big ( \big ( \weightmat{\nu'} \big )_{\nu' \in \htmodetree} \big ) }{ \Delta^\top } & = \inprodbigg{ \frac{ \partial }{ \partial \weightmat{\nu} } \htfobj \Big ( \big ( \weightmat{\nu'} \big )_{\nu' \in \htmodetree} \Big ) }{  \padrow_r \left (\Delta^\top \right )}
\end{split}
\text{\,,}
\]
and
\[
\begin{split}
	\inprodbigg{ \frac{ \partial }{ \partial \weightmat{\nu_c}_{:, r} } \htfobj \big ( \big ( \weightmat{\nu'} \big )_{\nu' \in \htmodetree} \big ) }{ \Delta } & = \inprodbigg{ \frac{ \partial }{ \partial \weightmat{\nu_c} } \htfobj \Big ( \big ( \weightmat{\nu'} \big )_{\nu' \in \htmodetree} \Big ) }{  \padcol_r ( \Delta ) }
	\text{\,.}
\end{split}
\]
\end{proof}

\begin{lemma}
	\label{lem:ht_weightvec_grad_zero_comp}
	Let $\nu \in \interior(\htmodetree)$, $\nu_c \in \children (\nu)$, and $r \in [R_{ \nu }]$.
	If both $\weightmat{\nu}_{r, :} = 0$ and $\weightmat{\nu_c}_{:, r} = 0$, then:
	\be
	\frac{ \partial }{ \partial \weightmat{\nu}_{r, :} } \htfobj \big ( \big ( \weightmat{\nu'} \big )_{\nu' \in \htmodetree} \big ) = 0
	\text{\,,}
	\label{eq:ht_row_weightvec_grad_zero}
	\ee
	and
	\be
	\frac{ \partial }{ \partial \weightmat{\nu_c}_{:, r} } \htfobj \big ( \big ( \weightmat{\nu'} \big )_{\nu' \in \htmodetree} \big ) = 0
	\text{\,.}
	\label{eq:ht_col_weightvec_grad_zero}
	\ee
\end{lemma}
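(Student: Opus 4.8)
The plan is to reduce both identities to Lemma~\ref{lem:zero_inter_tensor} by way of the gradient formulas in Lemma~\ref{lem:ht_weightvec_grad}. By \eq~\eqref{eq:ht_row_weightvec_grad}, for every $\Delta \in \R^{R_{\parent(\nu)}}$ the inner product of $\tfrac{\partial}{\partial \weightmat{\nu}_{r,:}} \htfobj$ with $\Delta^\top$ equals $\inprodbig{ \nabla \htfendloss(\tensorend) }{ \tensorendmap( (\weightmat{\nu'})_{\nu' \in \htmodetree \setminus \{\nu\}}, \padrow_r(\Delta^\top) ) }$, and by \eq~\eqref{eq:ht_col_weightvec_grad} the inner product of $\tfrac{\partial}{\partial \weightmat{\nu_c}_{:,r}} \htfobj$ with any $\Delta \in \R^{R_{\nu_c}}$ equals $\inprodbig{ \nabla \htfendloss(\tensorend) }{ \tensorendmap( (\weightmat{\nu'})_{\nu' \in \htmodetree \setminus \{\nu_c\}}, \padcol_r(\Delta) ) }$. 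Hence it suffices to show that, under the lemma's hypotheses, each of these two perturbed end tensors is the zero tensor: then the right-hand sides vanish for all $\Delta$, giving \eqs~\eqref{eq:ht_row_weightvec_grad_zero} and~\eqref{eq:ht_col_weightvec_grad_zero}.

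For \eqref{eq:ht_row_weightvec_grad_zero} I would trace the intermediate tensors of \eq~\eqref{eq:ht_end_tensor} when $\weightmat{\nu}$ is replaced by $\padrow_r(\Delta^\top)$, whose only nonzero row is the $r$-th. Since nothing in the subtree $\subtree(\nu_c)$ is changed, $\tensorpart{\nu_c}{r}$ is the original intermediate tensor there; by \eq~\eqref{eq:ht_end_tensor} it depends linearly on the column $\weightmat{\nu_c}_{:,r}$ ($\tensorpart{\nu_c}{r} = \weightmat{\nu_c}_{:,r}$ if $\nu_c$ is a leaf, and $\tensorpart{\nu_c}{r} = \pi_{\nu_c}(\sum_{r''} \weightmat{\nu_c}_{r'',r}[\tenp_{\mu \in \children(\nu_c)} \tensorpart{\mu}{r''}])$ otherwise), so the assumption $\weightmat{\nu_c}_{:,r} = 0$ forces $\tensorpart{\nu_c}{r} = 0$. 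Because only row $r$ of $\padrow_r(\Delta^\top)$ is nonzero, for every $r'$ the intermediate tensor at $\nu$ reduces to $\tensorpart{\nu}{r'} = \pi_\nu( (\Delta^\top)_{r'} [\tenp_{\mu \in \children(\nu)} \tensorpart{\mu}{r}] )$, and one of the factors of this tensor product, namely $\tensorpart{\nu_c}{r}$, is zero; hence $\tensorpart{\nu}{r'} = 0$ for all $r'$. Applying Lemma~\ref{lem:zero_inter_tensor} at node $\nu$ yields that the perturbed end tensor vanishes. (If $\nu = [N]$ there is no parent, and the displayed expression with $r' = 1$ is already the end tensor, so the conclusion is immediate.)

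For \eqref{eq:ht_col_weightvec_grad_zero} I would argue dually. Replacing $\weightmat{\nu_c}$ by $\padcol_r(\Delta)$, whose only nonzero column is the $r$-th, kills all intermediate tensors $\tensorpart{\nu_c}{r'}$ with $r' \ne r$ (each depends only on the $r'$-th column of $\weightmat{\nu_c}$, exactly as above). Substituting into the definition of $\tensorpart{\nu}{r'}$ in \eq~\eqref{eq:ht_end_tensor}, only the $r'' = r$ summand survives, so $\tensorpart{\nu}{r'} = \pi_\nu( \weightmat{\nu}_{r,r'}[\tenp_{\mu \in \children(\nu)} \tensorpart{\mu}{r}] )$; the assumption $\weightmat{\nu}_{r,:} = 0$ makes the scalar coefficient $\weightmat{\nu}_{r,r'}$ zero, so again $\tensorpart{\nu}{r'} = 0$ for all $r'$, and Lemma~\ref{lem:zero_inter_tensor} (or the direct observation when $\nu = [N]$) gives that the perturbed end tensor is zero.

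There is no real conceptual obstacle here: the statement is a bookkeeping consequence of the multilinearity of \eq~\eqref{eq:ht_end_tensor}, and the only things requiring care are the boundary cases already flagged above — the root $\nu = [N]$, where $R_{\parent([N])} := 1$ and the conclusion follows immediately rather than being routed through Lemma~\ref{lem:zero_inter_tensor}, and a leaf child $\nu_c$, where one invokes the base case $\tensorpart{\nu_c}{r} = \weightmat{\nu_c}_{:,r}$ of \eq~\eqref{eq:ht_end_tensor} in place of the recursive step.
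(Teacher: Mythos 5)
Your proposal is correct and follows essentially the same route as the paper's proof: use \lem~\ref{lem:ht_weightvec_grad} to express the partial derivatives as inner products against $\tensorendmap(\cdot, \padrow_r(\Delta^\top))$ or $\tensorendmap(\cdot, \padcol_r(\Delta))$, trace the perturbed intermediate tensors to see that $\weightmat{\nu_c}_{:,r}=0$ (resp.\ $\weightmat{\nu}_{r,:}=0$) kills them all at node $\nu$, and invoke \lem~\ref{lem:zero_inter_tensor} to conclude the perturbed end tensor vanishes. The extra remarks on the root and leaf boundary cases are harmless; \lem~\ref{lem:zero_inter_tensor} already covers $\nu=[N]$ trivially since $\tensorpart{[N]}{1}$ is the end tensor.
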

\begin{proof}
We show that $\tensorendmap \big  ( \big ( \weightmat{ \nu' } \big )_{ \nu' \in \htmodetree \setminus \{ \nu  \} }, \padrow_r \left ( \Delta^\top \right ) \big ) = 0$ for all $\Delta \in \R^{\parent(\nu)}$.
\eq~\eqref{eq:ht_row_weightvec_grad_zero} then follows from \eq~\eqref{eq:ht_row_weightvec_grad} in \lem~\ref{lem:ht_weightvec_grad}.
Fix some $\Delta \in \R^{\parent(\nu)}$ and let $\big ( \U^{ (\nu', r' ) } \big )_{\nu' \in \htmodetree, r' \in [ R_{ \parent (\nu') } ] }$ be the intermediate tensors produced when computing $\tensorendmap \big  ( \big ( \weightmat{ \nu' } \big )_{ \nu' \in \htmodetree \setminus \{ \nu  \} }, \padrow_r \left ( \Delta^\top \right ) \big )$ according to \eq~\eqref{eq:ht_end_tensor} (there denoted $\big ( \tensorpart{\nu'}{ r' } \big )_{\nu', r'}$).
For any $\bar{r} \in [ R_{\parent (\nu)} ]$ we have that:
\[
\U^{(\nu, \bar{r})} = \pi_{\nu} \left ( \sum\nolimits_{r' = 1}^{R_\nu} \padrow_r \big ( \Delta^\top \big )_{r', \bar{r}} \left [  \tenp_{\nu' \in \children (\nu)} \tensorpart{\nu'}{ r' }  \right ] \right ) =  \pi_\nu \left ( \Delta_{ \bar{r} } \left [  \tenp_{\nu' \in \children (\nu)} \tensorpart{\nu'}{r}  \right ] \right )
\text{\,.}
\]
The fact that $\weightmat{\nu_c}_{:, r} = 0$ implies that $\tensorpart{\nu_c}{r} := \pi_{\nu_c} \big ( \sum\nolimits_{r' = 1}^{R_{\nu_c}} \weightmat{\nu_c}_{r', r} \big [ \tenp_{\nu' \in \children ( \nu )} \tensorpart{ \nu' }{ r' } \big ] \big ) = 0$, and so for every $r' \in [ R_{\parent(\nu)} ]$:
\[
\U^{(\nu, \bar{r})} = \pi_\nu \left ( \Delta_{ \bar{r} } \left [ \left ( \tenp_{\nu' \in \lsib (\nu_c)} \tensorpart{\nu'}{r} \right ) \tenp 0 \tenp \left ( \tenp_{\nu' \in \rsib (\nu_c)} \tensorpart{\nu'}{r}  \right ) \right ] \right ) = 0
\text{\,.}
\]
\lem~\ref{lem:zero_inter_tensor} then gives $\tensorendmap \big  ( ( \weightmat{ \nu' } )_{ \nu' \in \htmodetree \setminus \{ \nu  \} }, \padrow_r \left ( \Delta^\top \right ) \big ) = 0$, completing this part of the proof.

\medskip

Next, we show that $\tensorendmap \big ( \big ( \weightmat{ \nu' } \big )_{ \nu' \in \htmodetree \setminus \{ \nu_c \} }, \padcol_r ( \Delta ) \big ) = 0$ for all $\Delta \in \R^{\nu_c}$.
\eq~\eqref{eq:ht_col_weightvec_grad} in \lem~\ref{lem:ht_weightvec_grad} then yields \eq~\eqref{eq:ht_col_weightvec_grad_zero}.
Fix some $\Delta \in \R^{\nu_c}$ and let $\big ( \V^{ (\nu', r') } \big )_{\nu' \in \htmodetree, r' \in [ R_{ \parent (\nu') } ] }$ be the intermediate tensors produced when computing $\tensorendmap \big ( \big ( \weightmat{ \nu' } \big )_{ \nu' \in \htmodetree \setminus \{ \nu_c \} }, \padcol_r ( \Delta ) \big )$ according to \eq~\eqref{eq:ht_end_tensor}.
For any $\bar{r} \in [ R_{\nu} ] \setminus \{ r\}$:
\[
\U^{(\nu_c, \bar{r})} = \pi_{\nu_c} \left ( \sum\nolimits_{ r' = 1}^{R_{\nu_c}}  \padcol_r \big ( \Delta \big )_{r' , \bar{r}}  \left [  \tenp_{\nu' \in \children (\nu_c)} \tensorpart{\nu'}{ r' }  \right ] \right ) = \pi_{\nu_c} \left ( \sum\nolimits_{ r' = 1}^{R_{\nu_c}} 0 \cdot  \left [  \tenp_{\nu' \in \children (\nu_c)} \tensorpart{\nu'}{ r' }  \right ] \right ) = 0
\text{\,.}
\]
Thus, for any $\hat{r} \in [ R_{ \parent( \nu) } ]$ we may write $\U^{(\nu, \hat{r})} = \pi_{\nu} \big ( \weightmat{\nu}_{r, \hat{r}} \big [  \tenp_{\nu' \in \children (\nu)} \U^{(\nu', r)}  \big ] \big )$.
Since $\weightmat{\nu}_{r, :} = 0$, we get that $\U^{(\nu, \hat{r})} = 0$ for all $\hat{r} \in [ R_{ \parent( \nu) } ]$, which by \lem~\ref{lem:zero_inter_tensor} leads to $\tensorendmap \big ( \big ( \weightmat{ \nu' } \big )_{ \nu' \in \htmodetree \setminus \{ \nu_c \} }, \padcol_r ( \Delta ) \big ) = 0$.
\end{proof}

\begin{lemma}
\label{lem:tensorend_comp_eq_zeroing_cols_or_rows}
For any $\nu \in \interior (\htmodetree)$, $\nu_c \in \children (\nu)$, and $r \in [ R_{\nu}]$, the following hold:
\be
\tensorendmap \big  ( \big ( \weightmat{ \nu' } \big )_{ \nu' \in \htmodetree \setminus \{ \nu  \} }, \padrow_r \big ( \weightmat{\nu}_{r, :} \big ) \big ) = \htfcompnorm{\nu}{r} \cdot \htfcomp{ \nu }{ r }
\text{\,,}
\label{eq:tensorend_zero_rows_loc_comp}
\ee
and
\be
\tensorendmap \big ( \big ( \weightmat{ \nu' } \big )_{ \nu' \in \htmodetree \setminus \{ \nu_c \} }, \padcol_r \big (  \weightmat{ \nu_c }_{:, r} \big ) \big ) = \htfcompnorm{\nu}{r} \cdot \htfcomp{ \nu }{ r }
\text{\,,}
\label{eq:tensorend_zero_cols_loc_comp}
\ee
where $\padrow_r \left ( \Delta^\top \right ) \in \R^{ R_{ \nu  }, R_{ \parent ( \nu ) }}$ is the matrix whose $r$'th row is $\Delta^\top$, and all the rest are zero,
$\padcol_r ( \Delta ) \in \R^{ R_{\nu_c}, R_{ \nu }}$ is the matrix whose $r$'th column is $\Delta$, and all the rest are zero, and $\htfcomp{\nu}{r}$ is as defined in \thm~\ref{thm:loc_comp_norm_bal_dyn}.
\end{lemma}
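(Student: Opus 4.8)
The plan is to reduce both identities to the homogeneity of $\tensorendmap$ in its node-$\nu$ argument — a special case of the multilinearity established in \lem~\ref{lem:ht_multilinear} — after observing that the two padding operations produce, at node $\nu$, one and the same intermediate tensor, namely $\htfcompnorm{\nu}{r}$ times the tensor used to define $\htfcomp{\nu}{r}$ in \thm~\ref{thm:loc_comp_norm_bal_dyn}. The only thing needing care is the bookkeeping that both modified factorizations funnel into this common intermediate tensor; beyond that the argument is routine.

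First I would set
\[
\tensorparthat{\nu}{\bar{r}} := \pi_\nu \bigl ( \weightmat{\nu}_{r, \bar{r}} \big [ \tenp_{\nu_c \in \children (\nu)} \tensorpart{\nu_c}{r} \big ] \bigr ) \,, \quad \bar{r} \in [ R_{\parent (\nu)} ] \,,
\]
where $\tensorpart{\nu_c}{r}$ are the intermediate tensors of the original (unmodified) factorization, and denote by $\tensorparthat{\nu}{:}$ their stacking. For \eq~\eqref{eq:tensorend_zero_rows_loc_comp}: replacing $\weightmat{\nu}$ by $\padrow_r ( \weightmat{\nu}_{r, :} )$ leaves untouched every weight matrix in $\subtree (\nu_c)$ for $\nu_c \in \children (\nu)$, hence leaves the $\tensorpart{\nu_c}{r}$ unchanged; plugging $\padrow_r ( \weightmat{\nu}_{r, :} )$ into \eq~\eqref{eq:ht_end_tensor} collapses the sum over $r'$ to the single term $r' = r$, giving exactly $\tensorparthat{\nu}{\bar{r}}$ at index $\bar{r}$. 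For \eq~\eqref{eq:tensorend_zero_cols_loc_comp}: replacing $\weightmat{\nu_c}$ by $\padcol_r ( \weightmat{\nu_c}_{:, r} )$ makes $\tensorpart{\nu_c}{\bar{r}} = 0$ for all $\bar{r} \neq r$ while keeping $\tensorpart{\nu_c}{r}$ intact; since $\tenp_{\nu_c' \in \children (\nu)} \tensorpart{\nu_c'}{r'}$ carries $\tensorpart{\nu_c}{r'}$ as a factor, it vanishes unless $r' = r$, so the recursion at $\nu$ once more collapses to $\tensorparthat{\nu}{\bar{r}}$. In both cases only the weight matrices in $\htmodetree \setminus \subtree (\nu)$ remain in play above $\nu$, so both left-hand sides equal $\tensorendmap \bigl ( ( \weightmat{\nu'} )_{\nu' \in \htmodetree \setminus \subtree (\nu)}, \tensorparthat{\nu}{:} \bigr )$.

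To conclude, suppose first $\htfcompnorm{\nu}{r} > 0$. Using linearity of $\pi_\nu$, the definition of $\htfcomp{\nu}{r}$ in \thm~\ref{thm:loc_comp_norm_bal_dyn} reads
\[
\htfcomp{\nu}{r} = \tensorendmap \bigl ( ( \weightmat{\nu'} )_{\nu' \in \htmodetree \setminus \subtree (\nu)}, ( \htfcompnorm{\nu}{r} )^{-1} \tensorparthat{\nu}{:} \bigr ) \,,
\]
so homogeneity of $\tensorendmap$ in the node-$\nu$ slot (\lem~\ref{lem:ht_multilinear}) yields $\tensorendmap \bigl ( ( \weightmat{\nu'} )_{\nu' \in \htmodetree \setminus \subtree (\nu)}, \tensorparthat{\nu}{:} \bigr ) = \htfcompnorm{\nu}{r} \cdot \htfcomp{\nu}{r}$, proving both equations. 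For the degenerate case $\htfcompnorm{\nu}{r} = 0$ (where $\htfcomp{\nu}{r} := 0$ by convention), note by \defin~\ref{def:local_comp} and the fact that the norm of a tensor product is the product of norms that $\htfcompnorm{\nu}{r} = \norm{ \weightmat{\nu}_{r, :} } \prod_{\nu_c \in \children (\nu)} \norm{ \weightmat{\nu_c}_{:, r} }$, so either $\weightmat{\nu}_{r, :} = 0$ or $\weightmat{\nu_c}_{:, r} = 0$ for some child $\nu_c$. In either situation $\tensorparthat{\nu}{\bar{r}} = 0$ for all $\bar{r}$ (in the latter because $\weightmat{\nu_c}_{:, r} = 0$ forces $\tensorpart{\nu_c}{r} = 0$), whence \lem~\ref{lem:zero_inter_tensor} gives $\tensorendmap \bigl ( ( \weightmat{\nu'} )_{\nu' \in \htmodetree \setminus \subtree (\nu)}, \tensorparthat{\nu}{:} \bigr ) = 0 = \htfcompnorm{\nu}{r} \cdot \htfcomp{\nu}{r}$, completing the proof.
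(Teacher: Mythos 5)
Your proof is correct and follows essentially the same route as the paper's: in both, the padded weight matrix collapses the sum at node $\nu$ to the single index $r$, producing the intermediate tensor $\pi_\nu\big(\weightmat{\nu}_{r,\bar{r}}[\tenp_{\nu_c}\tensorpart{\nu_c}{r}]\big)$, after which multilinearity of $\tensorendmap$ (\lem~\ref{lem:ht_multilinear}) extracts the factor $\htfcompnorm{\nu}{r}$ and \lem~\ref{lem:zero_inter_tensor} handles the degenerate case. Your only departure is a minor streamlining — you observe up front that both paddings funnel into the same node-$\nu$ tensor and thus prove both equations in one stroke, where the paper proves the first fully and then says the second follows "by analogous derivations."
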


\begin{proof}
Starting with \eq~\eqref{eq:tensorend_zero_rows_loc_comp}, let $\big ( \U^{ (\nu', r') } \big )_{\nu' \in \htmodetree, r' \in [ R_{ \parent (\nu') } ] }$ be the intermediate tensors formed when computing $\tensorendmap \big ( \big ( \weightmat{ \nu' } \big )_{ \nu' \in \htmodetree \setminus \{ \nu  \} }, \padrow_r \big ( \weightmat{\nu}_{r, :} \big ) \big )$ according to \eq~\eqref{eq:ht_end_tensor} (there denoted $\big ( \tensorpart{\nu'}{ r' } \big )_{\nu', r' }$).
Clearly, for any $\nu' \in \subtree (\nu) \setminus \{ \nu \}$~---~a node in the subtree of $\nu$ which is not $\nu$~---~it holds that $\U^{(\nu', r')} = \tensorpart{\nu'}{r'}$ for all $r' \in [ R_{\parent (\nu')} ]$.
Thus, for all $r' \in [ R_{ \parent(\nu) } ]$ we have that:
\be
\U^{(\nu, r')} = \pi_{\nu} \left ( \sum\nolimits_{\bar{r} = 1}^{R_\nu} \padrow_r \big ( \weightmat{\nu}_{r, :} \big )_{\bar{r}, r'} \left [  \tenp_{\nu' \in \children (\nu)} \tensorpart{\nu'}{\bar{r}}  \right ] \right ) =  \pi_{\nu} \left ( \weightmat{\nu}_{r, r'} \left [  \tenp_{\nu' \in \children (\nu)} \tensorpart{\nu'}{r}  \right ] \right )
\text{\,.}
\label{eq:tensorpart_U}
\ee
If $\htfcompnorm{\nu}{r} = \normbig{ \weightmat{\nu}_{r, :} \tenp \big ( \tenp_{\nu' \in \children (\nu)} \weightmat{\nu'}_{:, r} \big ) } = \normbig{\weightmat{\nu}_{r, :}} \prod_{\nu' \in \children (\nu)} \normbig{\weightmat{\nu'}_{:, r}} = 0$, then either $\weightmat{\nu}_{r, :} = 0$ or $\weightmat{\nu'}_{:, r} = 0$ for some $\nu' \in \children (\nu)$.
We claim that in both cases $\U^{(\nu, r')} = 0$ for all $r' \in [R_{\parent(\nu)}]$.
Indeed, if $\weightmat{\nu}_{r, :} = 0$ this immediately follows from \eq~\eqref{eq:tensorpart_U}.
On the other hand, if $\weightmat{\nu'}_{:, r} = 0$ for some $\nu' \in \children (\nu)$, then $\tensorpart{\nu'}{r} = 0$, which combined with \eq~\eqref{eq:tensorpart_U} also implies that $\U^{(\nu, r')} = 0$ for all $r' \in [R_{\parent(\nu)}]$.
Hence, \lem~\ref{lem:zero_inter_tensor} establishes \eq~\eqref{eq:tensorend_zero_cols_loc_comp} for the case of $\htfcompnorm{\nu}{r} = 0$:
\[
\tensorendmap \Big  ( \big ( \weightmat{ \nu' } \big )_{ \nu' \in \htmodetree \setminus \{ \nu  \} }, \padrow_r \big ( \weightmat{\nu}_{r, :} \big ) \Big ) = 0 = \htfcompnorm{\nu}{r} \cdot \htfcomp{\nu}{r}
\text{\,.}
\]
Now, suppose that $\htfcompnorm{\nu}{r} \neq 0$ and let $\U^{(\nu, :)}$ be the tensor obtained by stacking $\big ( \U^{(\nu, r') } \big )_{r' = 1}^{R_{ \parent (\nu) }}$ into a single tensor, \ie~$\U^{(\nu, :)}_{:, \ldots, :, r'} = \U^{(\nu, r')}$ for all $r' \in [R_{\parent(\nu)}]$.
Multilinearity of $\tensorendmap \big ( \big ( \weightmat{ \nu' } \big )_{ \nu' \in \htmodetree \setminus \subtree (\nu) }, \U^{(\nu, :)} \big )$ (\lem~\ref{lem:ht_multilinear}) leads to:
\be
\begin{split}
	\tensorendmap \Big  ( \big ( \weightmat{ \nu' } \big )_{ \nu' \in \htmodetree \setminus \{ \nu  \} }, \padrow_r \big ( \weightmat{\nu}_{r, :} \big ) \Big ) & = \tensorendmap \Big ( \big ( \weightmat{ \nu' } \big )_{ \nu' \in \htmodetree \setminus \subtree (\nu) }, \U^{(\nu, :)} \Big ) \\
	& = \htfcompnorm{\nu}{r} \cdot \tensorendmap \Big ( \big ( \weightmat{ \nu' } \big )_{ \nu' \in \htmodetree \setminus \subtree (\nu) }, \big ( \htfcompnorm{\nu}{r} \big )^{-1} \U^{(\nu, :)} \Big ) 
	\text{\,.}
\end{split}
\label{eq:tensorend_zero_cols_loc_comp_intermid_I}
\ee
From \eq~\eqref{eq:tensorpart_U} we know that $\big ( \htfcompnorm{\nu}{r} \big )^{-1} \U^{(\nu, :)}_{:,\ldots, :, r'} = \pi_\nu \big ( \big ( \htfcompnorm{\nu}{r} \big )^{-1} \weightmat{\nu}_{r, r'} \big [  \tenp_{\nu' \in \children (\nu)} \tensorpart{\nu'}{r}  \big ] \big )$ for all $r' \in [R_{\parent(\nu)}]$.
Thus, by the definition of $\htfcomp{\nu}{r}$ we may conclude that:
\be
\tensorendmap \Big ( \big ( \weightmat{ \nu' } \big )_{ \nu' \in \htmodetree \setminus  \subtree (\nu) }, \big ( \htfcompnorm{\nu}{r} \big )^{-1} \U^{(\nu, :)} \Big ) = \htfcomp{\nu}{r}
\text{\,.}
\label{eq:tensorend_zero_cols_loc_comp_intermid_II}
\ee
Combining \eqs~\eqref{eq:tensorend_zero_cols_loc_comp_intermid_I} and~\eqref{eq:tensorend_zero_cols_loc_comp_intermid_II} yields \eq~\eqref{eq:tensorend_zero_cols_loc_comp}, completing this part of the proof.
	
\medskip

Turning our attention to \eq~\eqref{eq:tensorend_zero_cols_loc_comp}, let $\big ( \V^{ (\nu', r') } \big )_{\nu' \in \htmodetree, r' \in [ R_{ \parent (\nu') } ] }$ be the intermediate tensors produced when computing $\tensorendmap \big ( \big ( \weightmat{ \nu' } \big )_{ \nu' \in \htmodetree \setminus \{ \nu_c \} }, \padcol_r \big (  \weightmat{ \nu_c }_{:, r} \big ) \big )$ according to \eq~\eqref{eq:ht_end_tensor} (there denoted $\big ( \tensorpart{\nu'}{ r' } \big )_{\nu', r' }$).
Clearly, for any $\nu' \in \subtree ( \nu ) \setminus \{ \nu, \nu_c \}$~---~a node in the subtree of $\nu$ which is not $\nu$ nor $\nu_c$~---~it holds that $\V^{(\nu', r')} = \tensorpart{\nu'}{r'}$ for all $r' \in [R_{\parent (\nu')}]$.
Thus, $\V^{ (\nu_c, r) } = \pi_{\nu_c} \big ( \sum\nolimits_{\bar{r} = 1}^{R_{\nu_c}} \padcol_r \big ( \weightmat{\nu_c}_{:, r}\big )_{\bar{r}, r} \big [  \tenp_{\nu' \in \children (\nu_c)} \tensorpart{\nu'}{\bar{r}}  \big ] \big ) = \tensorpart{\nu_c}{r}$, whereas for any $r' \in [ R_{\nu} ] \setminus \{r\}$:
\[
\begin{split}
\V^{ (\nu_c, r') } & = 
\pi_{\nu_c} \left ( \sum\nolimits_{\bar{r} = 1}^{R_{\nu_c}} \padcol_r \big ( \weightmat{\nu_c}_{:, r}\big )_{\bar{r}, r'} \left [  \tenp_{\nu' \in \children (\nu_c)} \tensorpart{\nu'}{\bar{r}}  \right ] \right ) \\
& = \pi_{\nu} \left ( \sum\nolimits_{\bar{r} = 1}^{R_{\nu_c}} 0 \cdot \left [  \tenp_{\nu' \in \children (\nu_c)} \tensorpart{\nu'}{\bar{r}}  \right ] \right ) \\
& = 0
\text{\,.}
\end{split}
\]
Putting it all together, for any $r' \in [R_{\parent(\nu)}]$ we may write:
\[
\V^{(\nu, r')} = \pi_{\nu} \left ( \sum\nolimits_{\bar{r} = 1}^{R_\nu} \weightmat{\nu}_{\bar{r}, r'} \left [  \tenp_{\nu' \in \children (\nu)} \U^{(\nu', \bar{r})}  \big ] \right ) = \pi_{\nu} \left ( \weightmat{\nu}_{r, r'} \right [  \tenp_{\nu' \in \children (\nu)} \tensorpart{\nu'}{r}  \big ] \right )
\text{\,.}
\]
From this point, following steps analogous to those used for proving \eq~\eqref{eq:tensorend_zero_rows_loc_comp} based on \eq~\eqref{eq:tensorpart_U} yields \eq~\eqref{eq:tensorend_zero_cols_loc_comp}.
\end{proof}

\begin{lemma}
\label{lem:weightvec_sq_norm_time_deriv}
For any $\nu \in \interior (\htmodetree)$, $\nu_c \in \children (\nu)$, and $r \in [ R_{\nu}]$:
\[
\frac{d}{dt} \norm { \weightmat{\nu}_{r, :} (t) }^2 = 2 \htfcompnorm{\nu}{r} (t) \inprod{ - \nabla \htfendloss ( \tensorend (t)) }{ \htfcomp{\nu}{r} (t) } = \frac{d}{dt} \norm{ \weightmat{\nu_c}_{:,r} (t) }^2 
\text{\,,}
\]
where $\htfcomp{\nu}{r} (t)$ is as defined in \thm~\ref{thm:loc_comp_norm_bal_dyn}.
\end{lemma}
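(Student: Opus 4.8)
The plan is to differentiate the squared norms directly using the gradient-flow dynamics (\eq~\eqref{eq:gf_htf}), and then rewrite the resulting inner products via the gradient formulas of \lem~\ref{lem:ht_weightvec_grad} together with the identification of the relevant perturbed end tensors as rescaled local component directions supplied by \lem~\ref{lem:tensorend_comp_eq_zeroing_cols_or_rows}.

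First I would compute $\tfrac{d}{dt} \normnoflex{ \weightmat{\nu}_{r, :} (t) }^2 = 2 \inprodnoflex{ \weightmat{\nu}_{r, :} (t) }{ \tfrac{d}{dt} \weightmat{\nu}_{r, :} (t) }$. Since, by the block structure of the gradient, the restriction of \eq~\eqref{eq:gf_htf} to the $r$'th row of $\weightmat{\nu}$ reads $\tfrac{d}{dt} \weightmat{\nu}_{r, :} (t) = - \tfrac{\partial}{\partial \weightmat{\nu}_{r, :}} \htfobj \big ( \big ( \weightmat{\nu'} (t) \big )_{\nu' \in \htmodetree} \big )$, this gives $\tfrac{d}{dt} \normnoflex{ \weightmat{\nu}_{r, :} (t) }^2 = - 2 \inprodnoflex{ \tfrac{\partial}{\partial \weightmat{\nu}_{r, :}} \htfobj }{ \weightmat{\nu}_{r, :} (t) }$. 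Applying \eq~\eqref{eq:ht_row_weightvec_grad} of \lem~\ref{lem:ht_weightvec_grad} with $\Delta^\top := \weightmat{\nu}_{r, :} (t)$, the inner product on the right equals $\inprodnoflex{ \nabla \htfendloss ( \tensorend (t) ) }{ \tensorendmap \big ( \big ( \weightmat{\nu'} (t) \big )_{\nu' \neq \nu}, \padrow_r ( \weightmat{\nu}_{r, :} (t) ) \big ) }$, and by \eq~\eqref{eq:tensorend_zero_rows_loc_comp} of \lem~\ref{lem:tensorend_comp_eq_zeroing_cols_or_rows} the end tensor inside this expression is exactly $\htfcompnorm{\nu}{r} (t) \cdot \htfcomp{\nu}{r} (t)$. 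Hence $\tfrac{d}{dt} \normnoflex{ \weightmat{\nu}_{r, :} (t) }^2 = 2 \htfcompnorm{\nu}{r} (t) \inprodnoflex{ - \nabla \htfendloss ( \tensorend (t) ) }{ \htfcomp{\nu}{r} (t) }$, which is the middle expression in the statement.

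Next I would run the identical argument for the column vector: $\tfrac{d}{dt} \normnoflex{ \weightmat{\nu_c}_{:, r} (t) }^2 = - 2 \inprodnoflex{ \tfrac{\partial}{\partial \weightmat{\nu_c}_{:, r}} \htfobj }{ \weightmat{\nu_c}_{:, r} (t) }$, which by \eq~\eqref{eq:ht_col_weightvec_grad} of \lem~\ref{lem:ht_weightvec_grad} (with $\Delta := \weightmat{\nu_c}_{:, r} (t)$) equals $- 2 \inprodnoflex{ \nabla \htfendloss ( \tensorend (t) ) }{ \tensorendmap \big ( \big ( \weightmat{\nu'} (t) \big )_{\nu' \neq \nu_c}, \padcol_r ( \weightmat{\nu_c}_{:, r} (t) ) \big ) }$, and \eq~\eqref{eq:tensorend_zero_cols_loc_comp} identifies the argument of $\tensorendmap$ once more as $\htfcompnorm{\nu}{r} (t) \cdot \htfcomp{\nu}{r} (t)$. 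So this quantity too equals $2 \htfcompnorm{\nu}{r} (t) \inprodnoflex{ - \nabla \htfendloss ( \tensorend (t) ) }{ \htfcomp{\nu}{r} (t) }$, establishing the full chain of equalities.

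There is no substantive obstacle here; the only points requiring care are bookkeeping ones. The first is confirming that the restriction of the gradient-flow ODE to a single row (resp. column) of a weight matrix is precisely the partial derivative of $\htfobj$ with respect to that row (resp. column) — immediate from the fact that $\tfrac{\partial}{\partial \weightmat{\nu}} \htfobj$ decomposes block-wise over entries. The second is matching the transpose conventions of \lem~\ref{lem:ht_weightvec_grad}, where the row-vector case is phrased in terms of $\Delta^\top$ and $\padrow_r$, so that the substitution $\Delta^\top := \weightmat{\nu}_{r, :}(t)$ is applied consistently. Everything else is a direct substitution of the two cited lemmas.
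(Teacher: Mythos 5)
Your proof is correct and follows essentially the same route as the paper's: differentiate each squared norm along the gradient flow, rewrite the resulting inner product via \lem~\ref{lem:ht_weightvec_grad}, and then identify the perturbed end tensor as $\htfcompnorm{\nu}{r}(t)\cdot\htfcomp{\nu}{r}(t)$ via \lem~\ref{lem:tensorend_comp_eq_zeroing_cols_or_rows}. The only difference is that you treat the row vector first and the column vector second, whereas the paper does them in the opposite order; this is immaterial.
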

\begin{proof}
Differentiating $\normbig{ \weightmat{\nu_c}_{:, r} (t) }^2$ with respect to time we get:
\[
\frac{ d }{ dt } \norm{\weightmat{ \nu_c}_{:, r} (t) }^2 = 2 \inprod{ \weightmat{ \nu_c }_{:, r} (t) }{ \frac{d}{dt} \weightmat{\nu_c}_{:, r} (t) } = - 2 \inprod{  \weightmat{ \nu_c }_{:, r} (t) }{ \frac{ \partial }{ \partial \weightmat{\nu_c}_{:, r} } \htfobj \big ( \big ( \weightmat{\nu'} (t) \big )_{\nu' \in \htmodetree} \big ) }
\text{\,.}
\]
By \eq~\eqref{eq:ht_col_weightvec_grad} from \lem~\ref{lem:ht_weightvec_grad} we have that:
\[
\frac{ d }{ dt } \norm{\weightmat{ \nu_c}_{:, r} (t) }^2 = - 2 \inprod{ \nabla \htfendloss ( \tensorend (t) ) }{ \tensorendmap \Big ( \big ( \weightmat{ \nu' } (t) \big )_{ \nu' \in \htmodetree \setminus \{ \nu_c \} }, \padcol_r \big (  \weightmat{ \nu_c }_{:, r} (t) \big ) \Big ) }
\text{\,.}
\]
Then, applying \eq~\eqref{eq:tensorend_zero_cols_loc_comp} from \lem~\ref{lem:tensorend_comp_eq_zeroing_cols_or_rows} concludes:
\[
\frac{d}{dt} \norm{ \weightmat{\nu_c}_{:,r} (t) }^2 = 2 \htfcompnorm{\nu}{r} (t) \inprod{ - \nabla \htfendloss ( \tensorend (t)) }{ \htfcomp{\nu}{r} (t) }
\text{\,.}
\]

A similar argument yields the desired result for $\normbig{ \weightmat{\nu}_{r, :} (t) }^2$.
Differentiating with respect to time we obtain:
\[
\frac{d}{dt} \norm { \weightmat{\nu}_{r, :} (t) }^2 = 2 \inprod{ \weightmat{ \nu }_{r, :} (t) }{ \frac{d}{dt} \weightmat{\nu}_{r, :} (t) } = - 2 \inprod{  \weightmat{ \nu }_{r, :} (t) }{ \frac{ \partial }{ \partial \weightmat{\nu}_{r, :} } \htfobj \big ( \big ( \weightmat{\nu'} (t) \big )_{\nu' \in \htmodetree} \big ) }
\text{\,.}
\]
By \eq~\eqref{eq:ht_row_weightvec_grad} from \lem~\ref{lem:ht_weightvec_grad} we may write:
\[
\frac{d}{dt} \norm { \weightmat{\nu}_{r, :} (t) }^2 = - 2 \inprod{ \nabla \htfendloss ( \tensorend (t) ) }{ \tensorendmap \Big ( \big ( \weightmat{ \nu' } (t) \big )_{ \nu' \in \htmodetree \setminus \{ \nu \} }, \padrow_r \big (  \weightmat{ \nu }_{r, :} (t) \big ) \Big ) }
\text{\,.}
\]
Lastly, applying \eq~\eqref{eq:tensorend_zero_rows_loc_comp} from \lem~\ref{lem:tensorend_comp_eq_zeroing_cols_or_rows} completes the proof:
\[
\frac{d}{dt} \norm { \weightmat{\nu}_{r, :} (t) }^2 = 2 \htfcompnorm{\nu}{r} (t) \inprod{ - \nabla \htfendloss ( \tensorend (t)) }{ \htfcomp{\nu}{r} (t) }
\text{\,.}
\]
\end{proof}

\begin{lemma}
\label{lem:bal_zero_stays_zero}
Let $\nu \in \interior (\htmodetree)$ and $r \in [ R_\nu ]$.
If there exists a time $t_0 \geq 0$ at which $\wbf (t_0) = 0$ for all $\wbf \in \localcomp (\nu, r)$, then:
\[
\wbf (t) = 0 \quad , t \geq 0 ~,~ \wbf \in \localcomp (\nu, r)
\text{\,,}
\]
\ie~$\wbf (t)$ is identically zero for all $\wbf \in \localcomp( \nu, r)$.
\end{lemma}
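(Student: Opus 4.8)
The plan is to prove that the linear subspace $V := \bigl\{ \bigl(\weightmat{\nu'}\bigr)_{\nu' \in \htmodetree} : \wbf = 0 \text{ for every } \wbf \in \localcomp(\nu, r) \bigr\}$ --- the set of weight configurations whose $(\nu,r)$'th local component vanishes entirely --- is invariant under the gradient flow (\eq~\eqref{eq:gf_htf}), and then to conclude via uniqueness of solutions. Since the hypothesis says the trajectory lies in $V$ at time $t_0$, invariance of $V$ will force $\wbf(t) = 0$ for all $\wbf \in \localcomp(\nu,r)$ throughout the interval on which the flow is defined, in particular for all $t \geq 0$.

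First I would show that the gradient flow vector field $-\nabla\htfobj$ is tangent to $V$ along $V$. The subspace $V$ is cut out by the vanishing of the coordinates $\weightmat{\nu}_{r,:}$ and $\weightmat{\nu_c}_{:,r}$ for $\nu_c \in \children(\nu)$, so tangency amounts to the vanishing of the corresponding partial derivatives of $\htfobj$ on $V$. Because $\nu \in \interior(\htmodetree)$ it has at least one child, and at any configuration in $V$ both $\weightmat{\nu}_{r,:} = 0$ and $\weightmat{\nu_c}_{:,r} = 0$ hold for every child $\nu_c$. Hence \lem~\ref{lem:ht_weightvec_grad_zero_comp} applies and gives $\tfrac{\partial}{\partial\weightmat{\nu}_{r,:}}\htfobj = 0$ (using any single child) together with $\tfrac{\partial}{\partial\weightmat{\nu_c}_{:,r}}\htfobj = 0$ for each $\nu_c \in \children(\nu)$, so all components of $-\nabla\htfobj$ along the coordinates defining $V$ vanish on $V$.

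Next I would invoke uniqueness. The objective $\htfobj = \htfendloss \circ \tensorendmap$ is locally smooth: $\tensorendmap$ is multilinear, hence a polynomial map, by \lem~\ref{lem:ht_multilinear}, and $\htfendloss$ is locally smooth by assumption, so $\nabla\htfobj$ is locally Lipschitz and solutions of (\eq~\eqref{eq:gf_htf}) are unique. The restriction of $-\nabla\htfobj$ to $V$ is then a locally Lipschitz vector field on $V$, generating a (local) flow on $V$; a trajectory of this restricted flow started from $\bigl(\weightmat{\nu'}(t_0)\bigr)_{\nu'} \in V$ also solves the full system (\eq~\eqref{eq:gf_htf}), since its velocity agrees with $-\nabla\htfobj$ on the $V$-coordinates and is zero --- as it must be --- on the complementary ones. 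By uniqueness this trajectory coincides with $\bigl(\weightmat{\nu'}(t)\bigr)_{\nu'}$ wherever both are defined; a standard continuation argument (the set of times at which $\bigl(\weightmat{\nu'}(t)\bigr)_{\nu'}$ lies in $V$ is nonempty, relatively open, and relatively closed in the interval of existence) then yields $\bigl(\weightmat{\nu'}(t)\bigr)_{\nu'} \in V$ for all such $t$, which is the claim.

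The only delicate point is this ODE argument itself --- confirming $\nabla\htfobj$ is locally Lipschitz and handling the interval of existence --- and it is routine. A more hands-on alternative avoids ODE theory altogether: by \lem~\ref{lem:weightvec_sq_norm_time_deriv} all the squared norms $\normnoflex{\wbf(t)}^2$ with $\wbf \in \localcomp(\nu,r)$ share the same time derivative, so they remain mutually equal once equal at $t_0$; writing $g(t) := \sum_{\wbf \in \localcomp(\nu,r)} \normnoflex{\wbf(t)}^2$ and $L_\nu := \abs{\children(\nu)} + 1$, one then obtains $\tfrac{d}{dt} g(t) = 2 L_\nu (g(t)/L_\nu)^{L_\nu/2} \inprodbig{-\nabla\htfendloss(\tensorend(t))}{\htfcomp{\nu}{r}(t)}$, and since $\normnoflex{\htfcomp{\nu}{r}(t)}$ and $\normnoflex{\nabla\htfendloss(\tensorend(t))}$ are bounded on compact time intervals this is a bound of the form $\abs{\dot g} \leq C\, g^{L_\nu/2}$; as $g(t_0) = 0$ and $\int_{0^+} g^{-L_\nu/2}\, dg = \infty$ for $L_\nu \geq 2$, an Osgood-type uniqueness argument forces $g \equiv 0$, hence $\wbf \equiv 0$ for all $\wbf \in \localcomp(\nu,r)$.
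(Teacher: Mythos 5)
Your first argument is essentially the paper's own proof: the paper likewise reduces the claim to uniqueness of gradient-flow trajectories and uses \lem~\ref{lem:ht_weightvec_grad_zero_comp} to show that the subspace on which the $(\nu,r)$'th local component vanishes is invariant under the flow. (The paper phrases this by explicitly constructing a solution of \eq~\eqref{eq:gf_htf} that keeps $\localcomp(\nu,r)$ frozen at zero and runs gradient flow on the reduced objective for all the other weights; verifying that this is a bona fide solution of \eq~\eqref{eq:gf_htf} is exactly the tangency check you perform on $V$.) Your alternative Osgood-type argument, however, is a genuinely different and more elementary route: instead of invoking uniqueness for the full vector ODE it controls only the scalar $g(t) := \sum_{\wbf \in \localcomp(\nu, r)} \normnoflex{\wbf(t)}^2$, combining \lem~\ref{lem:weightvec_sq_norm_time_deriv} with the preserved balance of the component's weight-vector norms to obtain $\abs{\dot g(t)} \leq C\, g(t)^{L_\nu/2}$ with $L_\nu \geq 2$ (since $\nu$ is interior), whence $g \equiv 0$ by Osgood's criterion. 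This dispenses with the tangency-plus-uniqueness machinery and with \lem~\ref{lem:ht_weightvec_grad_zero_comp} altogether, at the modest cost of needing local-in-time boundedness of $\normnoflex{\htfcomp{\nu}{r}(t)}$ and $\normnoflex{\nabla \htfendloss(\tensorend(t))}$, which holds but is not recorded in the paper. Both arguments are correct; the second is shorter and more self-contained for this particular lemma, while the first reuses the same ODE-uniqueness mechanism the paper relies on elsewhere.
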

\begin{proof}
Standard existence and uniqueness theorems (\eg~Theorem 2.2 in~\citet{teschl2012ordinary}) imply that the system of differential equations governing gradient flow over $\htfobj$ (\eq~\eqref{eq:gf_htf}) has a unique solution that passes through $\big ( \weightmat{\nu'} (t_0) \big )_{\nu' \in \htmodetree}$ at time $t_0$.
It therefore suffices to show that there exist $\big (\widebar{\Wbf}^{(\nu')} (t) \big )_{\nu' \in \htmodetree}$ satisfying \eq~\eqref{eq:gf_htf} such that $\widebar{\Wbf}^{(\nu')} (t_0) = \weightmat{\nu'} (t_0)$ for all $\nu' \in \htmodetree$, for which $\widebar{\Wbf}^{ (\nu) }_{r, :} (t)$ and $\big ( \widebar{\Wbf}^{(\nu_c)}_{:, r} (t) \big )_{\nu_c \in \children (\nu)}$ are zero for all $t \geq 0$ (recall that $\localcomp (\nu, r)$ consists of $\weightmat{\nu}_{r, :}$ and $\big ( \weightmat{\nu_c}_{:, r} \big )_{\nu_c \in \children (\nu)}$).

We denote by $\Theta_{\nu, r} (t)$ all factorization weights at time $t \geq 0$, except for those in $\localcomp (\nu, r)$, \ie:
\[
\Theta_{\nu , r } (t) := \big ( \Wbf^{(\nu')} (t)  \big )_{\nu' \in \htmodetree \setminus \left ( \{ \nu \} \cup \children (\nu) \right ) } \cup \big ( \Wbf^{(\nu_c)}_{:, r'} (t) \big )_{\nu_c \in \children (\nu), r' \in [R_\nu] \setminus \{ r\} } \cup \big ( \Wbf^{(\nu)}_{r', :} (t) \big )_{r' \in [R_\nu] \setminus \{ r \}}
\text{\,.}
\]
We construct $\big (\widebar{\Wbf}^{(\nu')} (t) \big )_{\nu' \in \htmodetree}$ as follows.
First, let $\widebar{\Wbf}^{ (\nu) }_{r, :} (t) := 0$ and $\widebar{\Wbf}^{(\nu_c)}_{:, r} (t) := 0$ for all $\nu_c \in \children (\nu)$ and $t \geq 0$.
Then, considering $\Wbf^{ (\nu) }_{r, :} (t)$ and $\big ( \Wbf^{(\nu_c)}_{:, r} (t) \big )_{\nu_c \in \children (\nu)}$ as fixed to zero, we denote by $\widebar{\phi}_{HT} ( \Theta_{\nu, r} (t) )$ the induced objective over all other weights, and let
\[
\widebar{\Theta}_{\nu , r } (t) := \big ( \widebar{\Wbf}^{(\nu')} (t)  \big )_{\nu' \in \htmodetree \setminus \left ( \{ \nu \} \cup \children (\nu) \right ) } \cup \big ( \widebar{\Wbf}^{(\nu_c)}_{:, r'} (t) \big )_{\nu_c \in \children (\nu) , r' \in [R_\nu] \setminus \{ r\}} \cup \big ( \widebar{\Wbf}^{(\nu)}_{r', :} (t) \big )_{r' \in [R_\nu] \setminus \{ r \}}
\text{\,}
\]
be a gradient flow path over $\widebar{\phi}_{HT}$ satisfying $\widebar{\Theta}_{\nu, r} (t_0) =\Theta_{\nu, r} (t_0)$.
By definition, it holds that $\widebar{ \Wbf }^{(\nu')} (t_0) = \weightmat{\nu'} (t_0)$ for all $\nu' \in \htmodetree$.
Thus, it remains to show that $\big ( \widebar{\Wbf}^{(\nu')} (t) \big )_{\nu' \in \htmodetree}$ obey the differential equations defining gradient flow over $\htfobj$ (\eq~\eqref{eq:gf_htf}).
To see it is so, notice that since $\widebar{\Wbf}^{ (\nu) }_{r, :} (t)$ and $\big ( \widebar{\Wbf}^{(\nu_c)}_{:, r} (t) \big )_{\nu_c \in \children (\nu)}$ are identically zero, by the definition of $\widebar{\phi}_{HT}$ we have that:
\be
\frac{d}{dt} \widebar{\Theta}_{\nu, r} (t) = - \frac{ d }{ d \Theta_{\nu, r} } \widebar{\phi}_{HT} \left ( \widebar{\Theta}_{\nu, r} (t) \right ) = - \frac{ \partial }{ \partial \Theta_{\nu, r} } \htfobj \big (  \big ( \widebar{\Wbf}^{(\nu')} (t) \big )_{\nu' \in \htmodetree} \big ) 
\text{\,.}
\label{eq:bar_all_but_local_comp_time_deriv}
\ee
Furthermore, by \lem~\ref{lem:ht_weightvec_grad_zero_comp} we obtain:
\be
\frac{d}{dt} \widebar{\Wbf}^{ (\nu) }_{r, :} (t) = 0 = - \frac{ \partial }{ \partial \weightmat{\nu}_{r, :} } \htfobj \big ( \big ( \widebar{\Wbf}^{(\nu')} (t) \big )_{\nu' \in \htmodetree} \big )
\text{\,,}
\label{eq:bar_row_zero_time_deriv}
\ee
and for all $\nu_c \in \children (\nu)$:
\be
\frac{d}{dt} \widebar{\Wbf}^{ (\nu_c) }_{:, r} (t) = 0 = - \frac{ \partial }{ \partial \weightmat{\nu_c}_{:, r} } \htfobj \big ( \big ( \widebar{\Wbf}^{(\nu')} (t) \big )_{\nu' \in \htmodetree} \big )
\text{\,.}
\label{eq:bar_col_zero_time_deriv}
\ee
Combining \eqs~\eqref{eq:bar_all_but_local_comp_time_deriv},~\eqref{eq:bar_row_zero_time_deriv}, and~\eqref{eq:bar_col_zero_time_deriv}, completes the proof:
\[
\frac{d}{dt} \widebar{ \Wbf }^{(\nu')} (t) = - \frac{ \partial }{ \partial \weightmat{\nu'} } \htfobj \big ( \big ( \widebar{\Wbf}^{(\bar{\nu})} (t) \big )_{\bar{\nu} \in \htmodetree} \big ) \quad , \nu' \in \htmodetree
\text{\,.}
\]
\end{proof}

\subsection{Proof of Lemma~\ref{lem:loc_comp_ht_rank_bound}}
\label{app:proofs:loc_comp_ht_rank_bound}
The proof follows a line similar to that of \thm~7 in~\citet{cohen2018boosting}, extending from binary to arbitrary trees its upper bound on $( \rank \mat{\tensorend }{ \nu} )_{\nu \in \htmodetree}$.

Towards deriving a matricized form of \eq~\eqref{eq:ht_end_tensor}, we define the notion of \emph{index set reduction}.
The reduction of $\nu \in \htmodetree$ onto $\nu' \in \htmodetree$, whose elements are denoted by $i_1 < \cdots < i_{ \abs{\nu'} }$, is defined by:
\[
\nu |_{\nu'} := \left \{ n \in [ \abs{ \nu' } ] : i_n \in \nu \cap \nu' \right \}
\text{\,.}
\]

Now, fix $\nu \in \interior (\htmodetree)$ and $\nu_c \in \children (\nu)$.
By \lem~\ref{lem:tenp_eq_kronp} and the linearity of the matricization operator, we may write the computation of $\mat{ \tensorend }{ \nu_c}$ based on \eq~\eqref{eq:ht_end_tensor} as follows:
\[
\begin{split}
	&\text{For $\bar{\nu} \in \left \{ \{1\}, \ldots, \{ N\} \right \}$ (traverses leaves of $\htmodetree$):} \\
	&\quad \tensorpart{\bar{\nu}}{r} := \weightmat{\bar{\nu}}_{:, r} \quad , r \in [ R_{ \parent ( \bar{\nu} ) } ] \text{\,,} \\[2.5mm]
	&\text{for $\bar{\nu} \in \interior (\htmodetree) \setminus \left \{ [N] \right \}$ (traverses interior nodes of $\htmodetree$ from leaves to root, non-inclusive):} \\
	&\quad \mat{ \tensorpart{\bar{\nu}}{r} }{ \nu_c |_{ \bar{\nu} }} :=  \Qbf^{(\bar{\nu})} \left ( \sum\nolimits_{r' = 1}^{R_{\bar{\nu}}} \weightmat{\bar{\nu}}_{r', r} \left [ \kronp_{\nu' \in \children ( \bar{\nu} )} \mat{ \tensorpart{ \nu' }{ r' } }{ \nu_c |_{\nu'} } \right ] \right ) \widebar{\Qbf}^{(\bar{\nu})} \quad ,r \in [ R_{\parent (\bar{\nu} )} ] \text{\,,} \\[2.5mm]
	& \mat{ \tensorend }{ \nu_c} = \Qbf^{([N])} \left ( \sum\nolimits_{r' = 1}^{R_{ [N] } } \weightmat{ [N] }_{r', 1} \left [ \kronp_{ \nu' \in \children ( [N] )} \mat{ \tensorpart{\nu'}{r'} }{ \nu_c |_{\nu'}  } \right ] \right ) \widebar{\Qbf}^{([N])}
	\text{\,,}
\end{split}
\]
where $\Qbf^{(\bar{\nu})}$ and $\widebar{\Qbf}^{ (\bar{\nu})}$, for $\bar{\nu} \in \interior (\htmodetree)$, are permutation matrices rearranging the rows and columns, respectively, to accord with an ascending order of $\bar{\nu}$, \ie~they fulfill the role of $\pi_{\bar{\nu}}$ in \eq~\eqref{eq:ht_end_tensor}.
For $r \in [R_{ \parent (\nu) }]$, let us focus on $\matbig{\tensorpart{\nu}{r} }{ \nu_c |_\nu}$.
Since $\nu_c |_{\nu_c} = [\abs{ \nu_c } ]$ and $\nu_c |_{\nu'} = \emptyset$ for all $\nu' \in \children (\nu) \setminus \{ \nu_c \}$, we have that:
\[
\begin{split}
& \mat{\tensorpart{\nu}{r} }{ \nu_c |_\nu} \\
& \hspace{4.5mm} = \Qbf^{(\nu)} \left ( \sum_{r' = 1}^{R_{\nu}} \weightmat{\nu}_{r', r} \left [ \left ( \kronp_{\nu' \in \lsib ( \nu_c )} \mat{ \tensorpart{ \nu' }{ r' } }{ \emptyset} \right ) \kronp \mat{ \tensorpart{\nu_c}{r'} }{ [\abs{ \nu_c } ]} \kronp \left ( \kronp_{\nu' \in \rsib ( \nu_c )} \mat{ \tensorpart{ \nu' }{ r' } }{ \emptyset} \right ) \right ] \right ) \widebar{\Qbf}^{ (\nu)} 
\text{\,.}
\end{split}
\]
Notice that $\matbig{ \tensorpart{ \nu' }{ r' } }{ \emptyset}$ is a row vector, whereas $\matbig{ \tensorpart{\nu_c}{r'} }{ [\abs{ \nu_c } ]}$ is a column vector, for each $\nu' \in \children (\nu) \setminus \{ \nu_c\}$ and $r' \in [R_\nu]$.
Commutativity of the Kronecker product between a row and column vectors therefore leads to:
\[
\begin{split}
\mat{\tensorpart{\nu}{r} }{ \nu_c |_\nu} & = \Qbf^{(\nu)} \left ( \sum\nolimits_{r' = 1}^{R_{\nu}} \weightmat{\nu}_{r', r} \left [ \mat{ \tensorpart{\nu_c}{r'} }{ [\abs{ \nu_c } ]} \kronp \left ( \kronp_{\nu' \in \children ( \nu ) \setminus \{ \nu_c \}} \mat{ \tensorpart{ \nu' }{ r' } }{ \emptyset} \right ) \right ] \right ) \widebar{\Qbf}^{ (\nu)} \\
& = \Qbf^{(\nu)} \left ( \sum\nolimits_{r' = 1}^{R_{\nu}} \weightmat{\nu}_{r', r} \left [ \mat{ \tensorpart{\nu_c}{r'} }{ [\abs{ \nu_c } ]} \left ( \kronp_{\nu' \in \children ( \nu ) \setminus \{ \nu_c \}} \mat{ \tensorpart{ \nu' }{ r' } }{ \emptyset} \right ) \right ] \right ) \widebar{\Qbf}^{ (\nu)}
\text{\,,}
\end{split}
\]
where the second equality is by the fact that for any column vector $\ubf$ and row vector $\vbf$ it holds that $\ubf \kronp \vbf = \ubf \vbf$.
Defining $\Bbf^{(\nu_c)}$ to be the matrix whose column vectors are $\matbig{ \tensorpart{\nu_c}{1} }{ [\abs{ \nu_c } ]}, \ldots, \matbig{ \tensorpart{\nu_c}{ R_\nu } }{ [\abs{ \nu_c } ]}$, we can express the term between $\Qbf^{(\nu)}$ and $\widebar{\Qbf}^{ (\nu)}$ in the equation above as a $\Bbf^{(\nu_c)} \Abf^{(\nu, r)}$, where $\Abf^{(\nu, r)}$ is defined to be the matrix whose rows are $\weightmat{\nu}_{1, r} \big ( \kronp_{\nu' \in \children ( \nu ) \setminus \{ \nu_c \}} \matbig{ \tensorpart{ \nu' }{ 1 } }{ \emptyset} \big ), \ldots, \weightmat{\nu}_{R_\nu, r} \big ( \kronp_{\nu' \in \children ( \nu ) \setminus \{ \nu_c \}} \matbig{ \tensorpart{ \nu' }{ R_\nu } }{ \emptyset} \big )$.
That is:
\be
\mat{\tensorpart{\nu}{r}}{\nu_c |_\nu} = \Qbf^{(\nu)} \Bbf^{(\nu_c)} \Abf^{(\nu, r)} \widebar{\Qbf}^{ (\nu)}
\text{\,.}
\label{eq:tensorpart_nu_mat_B}
\ee
The proof proceeds by propagating  $\Bbf^{(\nu_c)}$ and the left permutation matrices up the tree, until reaching a representation of $\mat{\tensorend }{ \nu_c}$ as a product of matrices that includes $\Bbf^{(\nu_c)}$.
Since $\Bbf^{(\nu_c)}$ has $R_\nu$ columns, this will imply that the rank of $\mat{\tensorend}{\nu_c}$ is at most $R_\nu$, as required.

We begin with the propagation step from $\nu$ to $\parent (\nu)$.
For $r \in [ R_{ \parent ( \parent ( \nu ) )} ]$, we examine:
\[
\begin{split}
& \big ( \Qbf^{ (\parent (\nu))} \big )^{-1} \matbig{ \tensorpart{ \parent (\nu) }{r} }{\nu_c |_{ \parent (\nu) } } \big ( \widebar{\Qbf}^{  ( \parent (\nu) ) } \big )^{-1} \\
& \hspace{1mm} = \sum\nolimits_{r' = 1}^{ R_{ \parent (\nu) } }\weightmat{\parent (\nu)}_{r', r} \left [ \left ( \kronp_{\nu' \in \lsib ( \nu )} \mat{ \tensorpart{ \nu' }{ r' } }{\nu_c |_{\nu'}} \right ) \kronp \mat{ \tensorpart{\nu}{r'} }{\nu_c |_{\nu}} \kronp \left ( \kronp_{\nu' \in \rsib ( \nu )} \mat{ \tensorpart{ \nu' }{ r' } }{\nu_c |_{\nu'}} \right )\right ]
\text{\,.}
\end{split}
\]
Plugging in \eq~\eqref{eq:tensorpart_nu_mat_B} while noticing that $\nu_c |_{\nu'} = \emptyset$ for any $\nu'$ which is not an ancestor of $\nu_c$, we arrive~at:
\be
\sum\nolimits_{r' = 1}^{ R_{ \parent (\nu) } }\weightmat{\parent (\nu)}_{r', r} \left [ \left ( \kronp_{\nu' \in \lsib ( \nu )} \mat{ \tensorpart{ \nu' }{ r' } }{\emptyset} \right ) \kronp \left ( \Qbf^{(\nu)} \Bbf^{(\nu_c)} \Abf^{(\nu, r')} \widebar{\Qbf}^{ (\nu)} \right ) \kronp \left ( \kronp_{\nu' \in \rsib ( \nu )} \mat{ \tensorpart{ \nu' }{ r' } }{\emptyset} \right ) \right ]
\text{\,.}
\label{eq:tensorpart_pa_nu_mat_no_Q}
\ee
Let $r' \in [ R_{ \parent (\nu) } ]$.
Since $\matbig{ \tensorpart{ \nu' }{ r' } }{\emptyset}$ is a row vector for any $\nu' \in \children ( \parent (\nu) ) \setminus \{ \nu \}$, so are $\kronp_{\nu' \in \lsib ( \nu )} \matbig{ \tensorpart{ \nu' }{ r' } }{\emptyset}$ and $\kronp_{\nu' \in \rsib ( \nu )} \matbig{ \tensorpart{ \nu' }{ r' } }{\emptyset}$.
Applying \lem~\ref{lem:kronp_mixed_prod_row_vec} twice we therefore have that:
\[
\begin{split}
& \left ( \kronp_{\nu' \in \lsib ( \nu )} \mat{ \tensorpart{ \nu' }{ r' } }{\emptyset} \right ) \kronp \left ( \Qbf^{(\nu)} \Bbf^{(\nu_c)} \Abf^{(\nu, r')} \widebar{\Qbf}^{ (\nu)} \right ) \kronp \left ( \kronp_{\nu' \in \rsib ( \nu )} \mat{ \tensorpart{ \nu' }{ r' } }{\emptyset} \right ) \\[1mm]
& \quad\quad = \left ( \Qbf^{(\nu)} \Bbf^{(\nu_c)} \left [ \left ( \kronp_{\nu' \in \lsib ( \nu )} \mat{ \tensorpart{ \nu' }{ r' } }{\emptyset} \right ) \kronp \left ( \Abf^{(\nu, r')} \widebar{\Qbf}^{ (\nu)} \right ) \right ] \right ) \kronp \left ( \kronp_{\nu' \in \rsib ( \nu )} \mat{ \tensorpart{ \nu' }{ r' } }{\emptyset} \right ) \\[1mm]
& \quad\quad = \Qbf^{(\nu)} \Bbf^{(\nu_c)} \left [ \left ( \kronp_{\nu' \in \lsib ( \nu )} \mat{ \tensorpart{ \nu' }{ r' } }{\emptyset} \right ) \kronp \left ( \Abf^{(\nu, r')} \widebar{\Qbf}^{ (\nu)} \right )  \kronp \left ( \kronp_{\nu' \in \rsib ( \nu )} \mat{ \tensorpart{ \nu' }{ r' } }{\emptyset} \right ) \right ]
\text{\,.}
\end{split}
\]
Going back to \eq~\eqref{eq:tensorpart_pa_nu_mat_no_Q}, we obtain:
\be
\begin{split}
& \sum\nolimits_{r' = 1}^{ R_{ \parent (\nu) } }\weightmat{\parent (\nu)}_{r', r} \left [ \left (  \kronp_{\nu' \in \lsib ( \nu )} \mat{ \tensorpart{ \nu' }{ r' } }{\emptyset} \right ) \kronp \left ( \Qbf^{(\nu)} \Bbf^{(\nu_c)} \Abf^{(\nu, r')} \widebar{\Qbf}^{ (\nu)} \right ) \kronp \left ( \kronp_{\nu' \in \rsib ( \nu )} \mat{ \tensorpart{ \nu' }{ r' } }{\emptyset} \right ) \right ] \\[1mm]
& \hspace{0.5mm} =  \sum\nolimits_{r' = 1}^{ R_{ \parent (\nu) } }\weightmat{\parent (\nu)}_{r', r}  \left ( \Qbf^{(\nu)} \Bbf^{(\nu_c)} \left [ \left ( \kronp_{\nu' \in \lsib ( \nu )} \mat{ \tensorpart{ \nu' }{ r' } }{\emptyset} \right ) \kronp \left ( \Abf^{(\nu, r')} \widebar{\Qbf}^{ (\nu)} \right )  \kronp \left ( \kronp_{\nu' \in \rsib ( \nu )} \mat{ \tensorpart{ \nu' }{ r' } }{\emptyset} \right ) \right ] \right ) \\[1mm]
& \hspace{0.5mm} =  \Qbf^{(\nu)} \Bbf^{(\nu_c)} \left ( \sum\nolimits_{r' = 1}^{ R_{ \parent (\nu) } }\weightmat{\parent (\nu)}_{r', r} \left [ \left ( \kronp_{\nu' \in \lsib ( \nu )} \mat{ \tensorpart{ \nu' }{ r' } }{\emptyset} \right ) \kronp \left ( \Abf^{(\nu, r')} \widebar{\Qbf}^{ (\nu)} \right )\kronp \left ( \kronp_{\nu' \in \rsib ( \nu )} \mat{ \tensorpart{ \nu' }{ r' } }{\emptyset} \right ) \right ]  \right )
\text{.}
\end{split}
\label{eq:tensorpart_pa_nu_mat_no_Q_B_out}
\ee
For brevity, we denote the matrix multiplying $\Qbf^{(\nu)} \Bbf^{(\nu_c)}$ from the right in the equation above by $\Abf^{(\parent (\nu), r)}$, \ie:
\[
\Abf^{(\parent (\nu), r)} := \sum\nolimits_{r' = 1}^{ R_{ \parent (\nu) } }\weightmat{\parent (\nu)}_{r', r} \left [ \left ( \kronp_{\nu' \in \lsib ( \nu )} \mat{ \tensorpart{ \nu' }{ r' } }{\emptyset} \right ) \kronp \left ( \Abf^{(\nu, r')} \widebar{\Qbf}^{ (\nu)} \right )\kronp \left ( \kronp_{\nu' \in \rsib ( \nu )} \mat{ \tensorpart{ \nu' }{ r' } }{\emptyset} \right ) \right ]
\text{\,.}
\]
Recalling that the expression in \eq~\eqref{eq:tensorpart_pa_nu_mat_no_Q_B_out} is of $\big ( \Qbf^{ (\parent (\nu))} \big )^{-1} \matbig{ \tensorpart{ \parent (\nu) }{r} }{\nu_c |_{ \parent (\nu) } } \big ( \widebar{\Qbf}^{  ( \parent (\nu) ) } \big )^{-1}$ completes the propagation step:
\[
\mat{ \tensorpart{ \parent (\nu) }{r} }{\nu_c |_{ \parent (\nu) } } = \Qbf^{( \parent (\nu) ) }  \Qbf^{(\nu)} \Bbf^{(\nu_c)} \Abf^{ ( \parent (\nu), r)} \widebar{\Qbf}^{ ( \parent (\nu) )}
\text{\,.}
\]

Continuing this process, we propagate $\Bbf^{(\nu_c)}$, along with the left permutation matrices, upwards in the tree until reaching the root.
This brings forth the following representation of $\mat{ \tensorend }{\nu_c}$:
\[
\mat{ \tensorend }{\nu_c} =  \Qbf^{([N])} \Qbf \Bbf^{ (\nu_c )} \Abf^{ ([N])} \widebar{\Qbf}^{ ( [N] )}
\text{\,,}
\]
for appropriate $\Qbf$ and $\Abf^{([N])}$ encompassing the propagated permutation matrices and the “remainder'' of the decomposition, respectively.
Since $\Bbf^{(\nu_c)}$ has $R_\nu$ columns, we may conclude:
 \[
\rank \mat{\tensorend}{\nu_c} \leq \rank \, \Bbf^{(\nu_c)} \leq R_\nu
\text{\,.}
\]
\qed

\subsection{Proof of Lemma~\ref{lem:loc_comp_sq_norm_diff_invariant}}
\label{app:proofs:loc_comp_sq_norm_diff_invariant}
For any $\nu \in \interior (\htmodetree)$, $r \in [R_\nu]$, and $\wbf, \wbf' \in \localcomp (\nu, r)$, \lem~\ref{lem:weightvec_sq_norm_time_deriv} implies that:
\[
\frac{d}{dt} \normbig{ \wbf (t) }^2 = 2 \htfcompnorm{\nu}{r} (t) \inprod{ - \nabla \htfendloss ( \tensorend (t)) }{ \htfcomp{\nu}{r} (t) } = \frac{d}{dt} \normbig{ \wbf' (t) }^2 
\text{\,,}
\]
where $\htfcomp{\nu}{r} (t)$ is as defined in \thm~\ref{thm:loc_comp_norm_bal_dyn}.
For $t \geq 0$, integrating both sides with respect to time leads to:
\[
\normbig{ \wbf (t) }^2 - \normbig{\wbf (0) }^2 = \normbig{ \wbf' (t) }^2 - \normbig{ \wbf' (0) }^2
\text{\,.}
\]
Rearranging the equality above yields the desired result.
\qed

\subsection{Proof of Theorem~\ref{thm:loc_comp_norm_bal_dyn}}
\label{app:proofs:loc_comp_norm_bal_dyn}
Let $t \geq 0$.

First, suppose that $\htfcompnorm{\nu}{r} (t) = 0$.
Since the unbalancedness magnitude at initialization is zero, from \lem~\ref{lem:loc_comp_sq_norm_diff_invariant} we know that $\norm{ \wbf (t) } = \norm{ \wbf' (t) }$ for any $\wbf, \wbf' \in \localcomp (\nu, r)$.
Hence, the fact that $\htfcompnorm{\nu}{r} (t) = 0$ implies that $\norm{ \wbf (t) } = 0$ for all $\wbf \in \localcomp (\nu, r)$.
\lem~\ref{lem:bal_zero_stays_zero} then establishes that $\htfcompnorm{\nu}{r} (t')$ is identically zero through time, in which case both sides of \eq~\eqref{eq:loc_comp_norm_bal_dyn} are equal to zero.

We now move to the case where $\htfcompnorm{\nu}{r} (t) > 0$.
Since $\htfcompnorm{\nu}{r} (t) = \normnoflex{\tenp_{\wbf \in \localcomp (\nu, r)} \wbf (t) } = \prod\nolimits_{ \wbf \in \localcomp (\nu, r) } \norm{ \wbf (t) }$ (the norm of a tensor product is equal to the product of the norms), by the product rule of differentiation we have that:
\[
\frac{d}{dt} \htfcompnorm{\nu}{r} (t)^2 = \sum\nolimits_{ \wbf \in \localcomp (\nu, r) } \frac{d}{dt} \norm{ \wbf (t) }^2 \cdot \prod\nolimits_{ \wbf' \in \localcomp (\nu, r) \setminus \{ \wbf \} } \norm{ \wbf' (t) }^2
\text{\,.}
\]
Applying \lem~\ref{lem:weightvec_sq_norm_time_deriv} then leads to:
\[
\begin{split}
	\frac{d}{dt} \htfcompnorm{\nu}{r} (t)^2 & = \sum\nolimits_{ \wbf \in \localcomp (\nu, r) } 2 \htfcompnorm{\nu}{r} (t) \inprod{ - \nabla \htfendloss ( \tensorend (t)) }{ \htfcomp{\nu}{r} (t) } \cdot \prod\nolimits_{ \wbf' \in \localcomp (\nu, r) \setminus \{ \wbf \} } \norm{ \wbf' (t) }^2 \\
	& = 2 \htfcompnorm{\nu}{r} (t) \inprod{ - \nabla \htfendloss ( \tensorend (t)) }{ \htfcomp{\nu}{r} (t) } \sum\nolimits_{ \wbf \in \localcomp (\nu, r) } \prod\nolimits_{ \wbf' \in \localcomp (\nu, r) \setminus \{ \wbf \} } \norm{ \wbf' (t) }^2
	\text{\,.}
\end{split}
\]
From the chain rule we know that $\frac{d}{dt} \htfcompnorm{\nu}{r} (t)^2 = 2 \htfcompnorm{\nu}{r} (t) \cdot \frac{d}{dt} \htfcompnorm{\nu}{r} (t)$ (note that $\frac{d}{dt} \htfcompnorm{\nu}{r} (t)$ surely exists because $\htfcompnorm{\nu}{r} (t) > 0$).
Thus:
\be
\begin{split}
	\frac{d}{dt} \htfcompnorm{\nu}{r} (t) & = \frac{1}{2} \htfcompnorm{\nu}{r} (t)^{-1} \frac{d}{dt} \htfcompnorm{\nu}{r} (t)^2 \\
	& = \inprod{ - \nabla \htfendloss ( \tensorend (t)) }{ \htfcomp{\nu}{r} (t) } \sum\nolimits_{ \wbf \in \localcomp (\nu, r) } \prod\nolimits_{ \wbf' \in \localcomp (\nu, r) \setminus \{ \wbf \} } \norm{ \wbf' (t) }^2
	\text{\,.}
\end{split}
\label{eq:localcompnorm_time_deriv_interm_bal}
\ee
According to \lem~\ref{lem:loc_comp_sq_norm_diff_invariant}, the unbalancedness magnitude remains zero through time, and so $\norm{ \wbf (t) } = \norm{ \wbf' (t) }$ for any $\wbf, \wbf' \in \localcomp (\nu, r)$.
Recalling that $L_\nu := \children (\nu) + 1$ is the number of weight vectors in a local component at $\nu$, this implies that for each $\wbf \in \localcomp (\nu, r)$:
\be
\norm{ \wbf (t) }^2 =  \norm{ \wbf (t) }^{L_\nu \cdot \frac{2}{L_\nu} } = \left ( \prod\nolimits_{\wbf' \in \localcomp (\nu, r)} \norm{ \wbf' (t) } \right )^{\frac{2}{L_\nu}} =  \htfcompnorm{\nu}{r} (t)^{ \frac{2}{ L_\nu } }
\text{\,.}
\label{eq:sq_weightvec_norm_eq_local_comp_norm}
\ee
Plugging \eq~\eqref{eq:sq_weightvec_norm_eq_local_comp_norm} into \eq~\eqref{eq:localcompnorm_time_deriv_interm_bal} completes the proof.
\qed

\subsection{Proof of Proposition~\ref{prop:low_rank_dist_bound}}
\label{app:proofs:low_rank_dist_bound}
We begin by establishing the following key lemma, which upper bounds the distance between the end tensor $\tensorend$ and the one obtained after setting a local component to zero.

\begin{lemma}
\label{lem:dist_from_pruned_local_comp_end_tensor}
Let $\nu \in \interior ( \htmodetree )$ and $r \in [R_\nu]$.
Denote by $\widebar{\W}_{HT}^{(\nu, r)}$ the end tensor obtained by pruning the $(\nu, r)$'th local component, \ie~by setting the $r$'th row of $\weightmat{\nu}$ and the $r$'th columns of $\big ( \weightmat{\nu_c} \big )_{\nu_c \in \children (\nu)}$ to zero.
Then:
\[
\norm{ \tensorend - \widebar{\W}_{HT}^{(\nu, r)} } \leq \htfcompnorm{\nu}{r} \cdot \prod\nolimits_{\nu' \in \htmodetree \setminus ( \{ \nu \} \cup \children (\nu) )} \norm{ \weightmat{\nu'} }
\text{\,.}
\]
\end{lemma}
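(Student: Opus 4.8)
The plan is to exploit the multilinearity of the map $\tensorendmap$ established in \lem~\ref{lem:ht_multilinear}, together with the sub-multiplicative norm bounds from \lems~\ref{lem:tensorpart_norm_bound}--\ref{lem:stacked_tensorpart_norm_bound}. First I would decompose the weight matrix $\weightmat{\nu}$ as a sum of its $r$-th row (padded with zeros, using $\padrow_r$) and the ``complementary'' matrix with $r$-th row zeroed; similarly decompose each child matrix $\weightmat{\nu_c}$ via $\padcol_r$. Since $\tensorendmap$ is multilinear in the weight matrices (and, more relevantly, in the stacked intermediate tensor $\tensorpart{\nu}{:}$), the difference $\tensorend - \widebar{\W}_{HT}^{(\nu,r)}$ telescopes: the end tensor of the full factorization minus the end tensor with the $(\nu,r)$ local component pruned is exactly the contribution of that single local component. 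Concretely, I would argue that $\tensorend - \widebar{\W}_{HT}^{(\nu,r)} = \tensorendmap\big( (\weightmat{\nu'})_{\nu' \in \htmodetree \setminus \{\nu\}}, \padrow_r(\weightmat{\nu}_{r,:})\big)$, by writing the full $\weightmat{\nu}$ as $\padrow_r(\weightmat{\nu}_{r,:})$ plus the row-zeroed remainder and invoking additivity of $\tensorendmap$ in the $\nu$-argument, while noting that the row-zeroed remainder, together with the column-zeroed child matrices, reproduces $\widebar{\W}_{HT}^{(\nu,r)}$ (the extra pruning of child columns is harmless because once $\weightmat{\nu}_{r,:}$ is zero, column $r$ of the children never contributes --- this is essentially the content of \lem~\ref{lem:ht_weightvec_grad_zero_comp} / \lem~\ref{lem:tensorend_comp_eq_zeroing_cols_or_rows}).

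Then, by \eq~\eqref{eq:tensorend_zero_rows_loc_comp} of \lem~\ref{lem:tensorend_comp_eq_zeroing_cols_or_rows}, this single-local-component contribution equals $\htfcompnorm{\nu}{r} \cdot \htfcomp{\nu}{r}$. Since $\htfcomp{\nu}{r}$ is a normalized tensor obtained by plugging a unit-norm local component into the factorization with all other weight matrices outside $\subtree(\nu)$ unchanged, its norm is at most $\prod_{\nu' \in \htmodetree \setminus (\{\nu\} \cup \children(\nu))} \normnoflex{\weightmat{\nu'}}$. I would establish this last bound by applying \lem~\ref{lem:stacked_tensorpart_norm_bound} inductively along the path from $\nu$ up to the root $[N]$: each interior node above $\nu$ contributes a factor of $\normnoflex{\weightmat{\nu'}}$, and the normalized local component itself contributes a factor of $1$, while the other children's stacked intermediate tensors $\tensorpart{\nu_c}{:}$ are themselves bounded by products of $\normnoflex{\weightmat{\nu'}}$ over the nodes in their subtrees (again by iterating \lem~\ref{lem:stacked_tensorpart_norm_bound} down to the leaves, where $\normnoflex{\tensorpart{\{i\}}{:}} = \normnoflex{\weightmat{\{i\}}}$). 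Collecting all these factors, and being careful not to double-count, yields exactly the product over $\htmodetree \setminus (\{\nu\} \cup \children(\nu))$ --- note that $\nu$ itself and its direct children are excluded because the local component replaces their contributions with a unit-norm object.

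The main obstacle I anticipate is the bookkeeping in the last step: carefully tracking which nodes' weight matrices appear in the norm bound for $\htfcomp{\nu}{r}$, so that the exponents come out to exactly $1$ for each $\nu' \in \htmodetree \setminus (\{\nu\} \cup \children(\nu))$ and $0$ for $\nu$ and its children. This requires organizing the induction so that, at each ancestor of $\nu$, only the ancestor's own weight matrix gets pulled out (via \lem~\ref{lem:stacked_tensorpart_norm_bound}) while the sibling subtrees are bounded by the product of their interior-and-leaf weight matrix norms, and verifying that the union of all these index sets, over the whole upward path, is precisely $\htmodetree \setminus (\{\nu\} \cup \children(\nu))$ with no repetitions. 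Once the combinatorial identity on index sets is pinned down, the norm estimate follows mechanically, and multiplying by $\htfcompnorm{\nu}{r}$ gives the claimed bound. An alternative, possibly cleaner, route is to bound $\normnoflex{\htfcomp{\nu}{r}}$ directly: it is the image under $\tensorendmap$ of a configuration in which the sub-tree rooted at $\nu$ has been collapsed to a single unit-norm stacked tensor $\tensorpart{\nu}{:}$, so iterating \lem~\ref{lem:stacked_tensorpart_norm_bound} from $\parent(\nu)$ to the root (treating $\tensorpart{\nu}{:}$ as a leaf-like input of norm $1$, and each remaining off-path subtree via the same lemma down to actual leaves) immediately gives $\normnoflex{\htfcomp{\nu}{r}} \le \prod_{\nu' \in \htmodetree \setminus (\{\nu\} \cup \children(\nu))} \normnoflex{\weightmat{\nu'}}$, which is the whole game.
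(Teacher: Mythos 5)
Your overall strategy matches the paper's: use multilinearity of $\tensorendmap$ (\lem~\ref{lem:ht_multilinear}) to identify the difference $\tensorend - \widebar{\W}_{HT}^{(\nu,r)}$ as precisely the contribution of the pruned $(\nu,r)$'th local component, then bound its norm by iterating the sub-multiplicative estimates of \lems~\ref{lem:tensorpart_norm_bound} and~\ref{lem:stacked_tensorpart_norm_bound}. The paper carries this out slightly differently: rather than invoking \lem~\ref{lem:tensorend_comp_eq_zeroing_cols_or_rows} and the identity $\tensorend - \widebar{\W}_{HT}^{(\nu,r)} = \htfcompnorm{\nu}{r}\cdot\htfcomp{\nu}{r}$, it works directly with the \emph{unnormalized} perturbed stacked tensor $\tensorpart{\nu}{:} - \widebar{\W}^{(\nu,:)}$, computes $\normnoflex{\tensorpart{\nu}{:} - \widebar{\W}^{(\nu,:)}}^2 = \normnoflex{\weightmat{\nu}_{r,:}}^2\prod_{\nu_c}\normnoflex{\tensorpart{\nu_c}{r}}^2$, applies \lem~\ref{lem:tensorpart_norm_bound} to each $\tensorpart{\nu_c}{r}$, and lets $\htfcompnorm{\nu}{r}$ emerge as the product $\normnoflex{\weightmat{\nu}_{r,:}}\prod_{\nu_c}\normnoflex{\weightmat{\nu_c}_{:,r}}$. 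This is essentially the same argument, so the two routes are close.

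Your ``alternative, possibly cleaner, route'' at the end, however, contains a genuine error that you should not gloss over. The replaced stacked tensor $\widehat{\W}^{(\nu,:)}$ appearing in the definition of $\htfcomp{\nu}{r}$ is \emph{not} unit-norm, and the sub-tree rooted at $\nu$ is \emph{not} collapsed. Indeed, a direct computation gives $\normnoflex{\widehat{\W}^{(\nu,:)}}^2 = \prod_{\nu_c\in\children(\nu)}\normnoflex{\tensorpart{\nu_c}{r}}^2/\normnoflex{\weightmat{\nu_c}_{:,r}}^2$, which equals $1$ only if every child $\nu_c$ is a leaf; otherwise $\tensorpart{\nu_c}{r}$ still carries the full contribution of the grandchildren's subtrees, and $\normnoflex{\widehat{\W}^{(\nu,:)}}$ can be arbitrarily large or small. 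If you ``treat $\tensorpart{\nu}{:}$ as a leaf-like input of norm $1$'' and iterate \lem~\ref{lem:stacked_tensorpart_norm_bound} from $\parent(\nu)$ to the root, you get a bound over $\htmodetree\setminus\subtree(\nu)$, which is missing exactly the factors from $\subtree(\nu)\setminus(\{\nu\}\cup\children(\nu))$ that the stated bound requires. The correct statement is $\normnoflex{\widehat{\W}^{(\nu,:)}} \le \prod_{\nu'\in\subtree(\nu)\setminus(\{\nu\}\cup\children(\nu))}\normnoflex{\weightmat{\nu'}}$ (obtained by applying \lem~\ref{lem:tensorpart_norm_bound} to each $\tensorpart{\nu_c}{r}$ and then iterating \lem~\ref{lem:stacked_tensorpart_norm_bound} on the grandchildren's subtrees), and only after combining this with the outside-$\subtree(\nu)$ factors does the product over $\htmodetree\setminus(\{\nu\}\cup\children(\nu))$ appear. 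Your main route does gesture at this bookkeeping, but the phrasing ``the normalized local component itself contributes a factor of $1$'' is misleading for the same reason; the normalization by $\htfcompnorm{\nu}{r}$ cancels exactly the factors $\normnoflex{\weightmat{\nu}_{r,:}}$ and $\normnoflex{\weightmat{\nu_c}_{:,r}}$, not the entire subtree below $\nu$.
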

\begin{proof}
Let $\big ( \widebar{ \Wbf }^{(\nu')} \big )_{\nu' \in \htmodetree}$ be the weight matrices corresponding to $\widebar{\W}_{HT}^{(\nu, r)}$, \ie~$\widebar{\Wbf}^{(\nu)}$ is the weight matrix obtained by setting the $r$'th row of $\weightmat{\nu}$ to zero, $\big ( \widebar{\Wbf}^{(\nu_c)} \big )_{\nu_c \in \children (\nu)}$ are the weight matrices obtained by setting the $r$'th columns of $\big ( \weightmat{\nu_c} \big )_{\nu_c \in \children (\nu) }$ to zero, and $\widebar{ \Wbf }^{(\nu')} = \weightmat{\nu'}$ for all $\nu' \in \htmodetree \setminus ( \{ \nu \} \cup \children (\nu))$.
Accordingly, we denote by $\big ( \widebar{ \W }^{(\nu', r')} \big )_{\nu' \in \htmodetree, r' \in [R_{ \parent (\nu') }]}$ the intermediate tensors produced when computing $\widebar{\W}_{HT}^{(\nu, r)}$ according to \eq~\eqref{eq:ht_end_tensor} (there denoted $\big ( \tensorpart{\nu'}{r'} \big )_{\nu', r' }$).

By definition, $\tensorendmap \big ( \big ( \weightmat{ \nu' } \big )_{ \nu' \in \htmodetree \setminus \subtree (\nu)}, \tensorpart{\nu}{:} \big ) = \tensorend$ and $\tensorendmap \big ( \big ( \widebar{\Wbf}^{ (\nu') } \big )_{ \nu' \in \htmodetree \setminus \subtree (\nu)}, \widebar{\W}^{(\nu, :)} \big ) = \widebar{\W}_{HT}^{(\nu, r)}$.
Since $\tensorendmap$ is multilinear (\lem~\ref{lem:ht_multilinear}) and $\widebar{ \Wbf }^{(\nu')} = \weightmat{\nu'}$ for all $\nu' \in \htmodetree \setminus \subtree (\nu)$, we have that:
\[
\begin{split}
\norm{ \tensorend - \widebar{\W}_{HT}^{(\nu, r)} } & = \norm{ \tensorendmap \big ( ( \weightmat{ \nu' } )_{ \nu' \in \htmodetree \setminus \subtree (\nu)}, \tensorpart{\nu}{:} \big ) - \tensorendmap \big ( ( \widebar{\Wbf}^{ (\nu') } )_{ \nu' \in \htmodetree \setminus \subtree (\nu)}, \widebar{\W}^{(\nu, :)} \big ) } \\
& = \norm{ \tensorendmap \big ( ( \weightmat{ \nu' } )_{ \nu' \in \htmodetree \setminus \subtree (\nu)}, \tensorpart{\nu}{:} -  \widebar{\W}^{(\nu, :)} \big ) }
\text{\,.}
\end{split}
\]
Heading from the root downwards, subsequent applications of \lem~\ref{lem:stacked_tensorpart_norm_bound} over all nodes in the mode tree, except those belonging to the sub-tree whose root is $\nu$, then yield:
\be
\norm{ \tensorend - \widebar{\W}_{HT}^{(\nu, r)} } \leq \norm{  \tensorpart{\nu}{:} -  \widebar{\W}^{(\nu, :)} } \cdot \prod\nolimits_{\nu' \in \htmodetree \setminus \subtree (\nu)} \norm{ \weightmat{\nu'} }
\text{\,.}
\label{eq:tensorend_trimmed_comp_upper_bound_intermid}
\ee
Notice that for any $r' \in [R_{ \parent (\nu) } ]$:
\[
\begin{split}
\left ( \tensorpart{\nu}{:} -  \widebar{\W}^{(\nu, :)} \right )_{:, \ldots, :, r'} & = \sum\nolimits_{\bar{r} \in [R_\nu]} \weightmat{\nu}_{\bar{r}, r'} \tenp_{ \nu_c \in \children (\nu) } \tensorpart{\nu_c}{\bar{r}} - \sum\nolimits_{\bar{r} \in [R_\nu] \setminus \{ r \}} \weightmat{\nu}_{\bar{r}, r'} \tenp_{ \nu_c \in \children (\nu) } \tensorpart{\nu_c}{\bar{r}} \\
& = \weightmat{\nu}_{r, r'} \tenp_{ \nu_c \in \children (\nu) } \tensorpart{\nu_c}{r}
\text{\,.}
\end{split}
\]
Thus, a straightforward computation shows:
\[
\begin{split}
\norm{  \tensorpart{\nu}{:} -  \widebar{\W}^{(\nu, :)} }^2 & = \sum\nolimits_{r' = 1}^{ R_{ \parent (\nu') } } \norm{  \weightmat{\nu}_{r, r'} \tenp_{ \nu_c \in \children (\nu) } \tensorpart{\nu_c}{r} }^2 \\
& = \sum\nolimits_{r' = 1}^{ R_{ \parent (\nu') } } \left ( \weightmat{\nu}_{r, r'} \right )^2 \cdot \prod\nolimits_{ \nu_c \in \children (\nu) } \norm{ \tensorpart{\nu_c}{r} }^2 \\
& = \norm{ \weightmat{\nu}_{r, :} }^2 \cdot \prod\nolimits_{ \nu_c \in \children (\nu) } \norm{ \tensorpart{\nu_c}{r} }^2
\text{\,,}
\end{split}
\]
where the second equality is by the fact that the norm of a tensor product is equal to the product of the norms.
From \lem~\ref{lem:tensorpart_norm_bound} we get that $\normbig{ \tensorpart{\nu_c}{r} } \leq \normbig{ \weightmat{ \nu_c }_{:, r} } \cdot \prod\nolimits_{ \nu' \in \children (\nu_c) } \normbig{ \tensorpart{\nu'}{:} }$ for all $\nu_c \in \children (\nu)$, which leads to:
\[
\begin{split}
\norm{  \tensorpart{\nu}{:} -  \widebar{\W}^{(\nu, :)} }^2 & \leq \norm{ \weightmat{\nu}_{r, :} }^2 \cdot \prod\nolimits_{ \nu_c \in \children (\nu) } \left ( \norm{ \weightmat{ \nu_c }_{:, r} }^2 \cdot \prod\nolimits_{ \nu' \in \children (\nu_c) } \norm{ \tensorpart{\nu'}{:} }^2 \right ) \\
& = \left ( \htfcompnorm{\nu}{r} \right )^2 \cdot \prod\nolimits_{ \nu_c \in \children (\nu), \nu' \in \children (\nu_c) } \norm{ \tensorpart{\nu'}{:} }^2
\text{\,.}
\end{split}
\]
Taking the square root of both sides and plugging the inequality above into \eq~\eqref{eq:tensorend_trimmed_comp_upper_bound_intermid}, we arrive at:
\[
\norm{ \tensorend - \widebar{\W}_{HT}^{(\nu, r)} } \leq \htfcompnorm{\nu}{r} \cdot \prod\nolimits_{\nu_c \in \children (\nu), \nu' \in \children (\nu_c) } \norm{ \tensorpart{\nu'}{:} } \cdot \prod\nolimits_{\nu' \in \htmodetree \setminus \subtree (\nu)} \norm{ \weightmat{\nu'} }
\text{\,.}
\]
Applying \lem~\ref{lem:stacked_tensorpart_norm_bound} iteratively over the sub-trees whose roots are $\children (\nu_c)$ gives:
\[
\prod\nolimits_{\nu_c \in \children (\nu), \nu' \in \children (\nu_c) } \norm{ \tensorpart{\nu'}{:} } \leq \prod\nolimits_{\nu' \in \subtree (\nu) \setminus ( \{ \nu \} \cup \children (\nu) )} \norm{ \weightmat{\nu' } }
\text{\,,}
\]
concluding the proof.
\end{proof}

\medskip

With \lem~\ref{lem:dist_from_pruned_local_comp_end_tensor} in hand, we are now in a position to prove \prop~\ref{prop:low_rank_dist_bound}.
Let
\[
\S := \left \{ (\nu, r) : \nu \in \interior (\htmodetree) , r \in \{ R'_{\nu} + 1, \ldots, R_\nu \right \}
\text{\,,}
\]
and denote by $\widebar{\W}_{HT}^{\S}$ the end tensor obtained by pruning all local components in $\S$, \ie~by setting to zero the $r$'th row of $\weightmat{\nu}$ and the $r$'th column of $\weightmat{\nu_c}$ for all $(\nu, r) \in \S$ and $\nu_c \in \children (\nu)$.
As can be seen from \eq~\eqref{eq:ht_end_tensor}, we may equivalently discard these weight vectors instead of setting them to zero.
Doing so, we arrive at a representation of $\widebar{\W}_{HT}^{\S}$ as the end tensor of $\big ( \widebar{ \Wbf }^{(\nu)} \in \R^{R'_\nu, R'_{ \parent (\nu) } } \big )_{\nu \in \htmodetree}$, where $R'_{ \parent ([N])} = 1$, $R'_{ \{n\}} = D_n$ for $n \in [N]$, and $\widebar{ \Wbf }^{(\nu)} = \weightmat{\nu}_{:R'_\nu, :R'_{\parent (\nu)}}$ for all $\nu \in \htmodetree$.
Hence, \lem~\ref{lem:loc_comp_ht_rank_bound} implies that for any $\nu \in \htmodetree$ the rank of $\mat{ \widebar{\W}_{HT}^{\S} }{\nu}$ is at most $R'_{ \parent (\nu) }$.
This means that it suffices to show that:
\be
\norm{ \tensorend - \widebar{\W}_{HT}^{\S} } \leq \epsilon
\text{\,.}
\label{eq:tensorend_prunedtensorend_dist}
\ee
For $i \in [ \abs{\S} ]$, let $\S_i \subset \S$ be the set comprising the first $i$ local components in $\S$ according to an arbitrary order.
Adding and subtracting $\widebar{\W}_{HT}^{\S_i}$ for all $i \in [ \abs{\S} - 1]$, and applying the triangle inequality, we have:
\[
\norm{ \tensorend - \widebar{\W}_{HT}^{\S} } \leq \sum\nolimits_{i = 0}^{\abs{\S} - 1} \norm{ \widebar{\W}_{HT}^{\S_i} - \widebar{\W}_{HT}^{\S_{i + 1}} }
\text{\,,}
\]
where $\widebar{\W}_{HT}^{\S_0} := \tensorend$.
Upper bounding each term in the sum according to \lem~\ref{lem:dist_from_pruned_local_comp_end_tensor}, while noticing that pruning a local component can only decrease the norms of weight matrices and other local components in the factorization, we obtain:
\[
\begin{split}
\norm{ \tensorend - \widebar{\W}_{HT}^{\S} } & \leq \sum\nolimits_{\nu \in \interior (\htmodetree) } \sum\nolimits_{r = R'_\nu + 1}^{R_\nu} \htfcompnorm{\nu}{r} \cdot \prod\nolimits_{\nu' \in \htmodetree \setminus \left ( \{ \nu \} \cup \children (\nu) \right )}  \normbig{ \weightmat{\nu'} } \\
& \leq \sum\nolimits_{\nu \in \interior (\htmodetree) } B^{\abs{\htmodetree} - 1 - \abs{\children (\nu)}} \cdot \sum\nolimits_{r = R'_\nu + 1}^{R_\nu} \htfcompnorm{\nu}{r}
\text{\,,}
\end{split}
\]
where the latter inequality is by recalling that $B = \max_{\nu \in \htmodetree} \normnoflex{ \weightmat{\nu}}$.
Since for all $\nu \in \interior (\htmodetree)$ we have that $\sum\nolimits_{r = R'_\nu + 1}^{R_\nu} \htfcompnorm{\nu}{r} \leq \epsilon \cdot (\abs{\htmodetree} - N)^{-1} B^{\abs{\children (\nu)} + 1 - \abs{\htmodetree} }$, \eq~\eqref{eq:tensorend_prunedtensorend_dist} readily follows.
\qed

\subsection{Proof of Proposition~\ref{prop:htr_multiple_minima}}
\label{app:proofs:htr_multiple_minima}
Consider the tensor completion problem defined by the set of observed entries
\[
\Omega = \left \{ (1, \ldots, 1, 1, 1, 1), (1, \ldots, 1, 1, 1, 2), (1, \ldots, 1, 2, 2, 1), (1, \ldots, 1, 2, 2, 2) \right \}
\]
and ground truth $\W^* \in \R^{D_1, \ldots, D_N}$, whose values at those locations are:
\be
\W^*_{1, \ldots, 1, 1, :2, :2} = \begin{bmatrix} 1 & 0 \\ ? & ? \end{bmatrix} ~~ , ~~ \W^*_{1, \ldots, 1, 2, :2, :2} = \begin{bmatrix} ? & ? \\ 0 & 1 \end{bmatrix} 
\text{\,,}
\label{eq:ht_multiple_minima_observations}
\ee
where $?$ stands for an unobserved entry.
We define two solutions for the tensor completion problem, $\W$ and $\W'$ in $\R^{D_1, \ldots, D_N}$, as follows:
\[
\W_{1, \ldots, 1, 1, :2, :2} := \begin{bmatrix} 1 & 0 \\ 1 & 0 \end{bmatrix} ~~ , ~~ \W_{1, \ldots, 1, 2, :2, :2} := \begin{bmatrix} 0 & 1 \\ 0 & 1 \end{bmatrix}  ~ \quad , \quad ~ \W'_{1, \ldots, 1, 1, :2, :2} := \begin{bmatrix} 1 & 0 \\ 0 & 1 \end{bmatrix} ~~ , ~~ \W'_{1, \ldots, 1, 2, :2, :2} := \begin{bmatrix} 1 & 0 \\ 0 & 1 \end{bmatrix} 
\text{\,,}
\text{\,}
\]
and the remaining entries of $\W$ and $\W'$ hold zero.
Clearly, $\L (\W) = \L (\W') = 0$.

Fix a mode tree $\htmodetree$ over $[N]$.
Since $\W$ and $\W'$ fit the observed entries their hierarchical tensor ranks with respect to $\htmodetree$, $ ( \rank \mat{ \W }{\nu} )_{\nu \in \htmodetree \setminus \{ [N] \}}$ and $ ( \rank \mat{ \W' }{\nu} )_{\nu \in \htmodetree \setminus \{ [N] \}}$, are in $\RR_\htmodetree$.
We prove that neither $ ( \rank \mat{ \W }{\nu} )_{\nu \in \htmodetree \setminus \{ [N] \}} \leq  ( \rank \mat{\W'}{\nu} )_{\nu \in \htmodetree \setminus \{ [N] \}}$ nor $( \rank \mat{ \W' }{\nu} )_{\nu \in \htmodetree \setminus \{ [N] \}} \leq  ( \rank \mat{\W}{\nu} )_{\nu \in \htmodetree \setminus \{ [N] \}}$ (with respect to the standard product partial order), by examining the matrix ranks of the matricizations of $\W$ and $\W'$ according to $\{ N - 2 \} \in \htmodetree$ and $\{ N - 1\} \in \htmodetree$ (recall that any mode tree has leaves $\{1\}, \ldots, \{N\}$).
For $\{ N - 2\}$, we have that $\rank \mat{\W}{\{N - 2\}} = 2$ whereas $\rank \mat{\W'}{\{ N - 2 \}} = 1$.
To see it is so, notice that:
\[
	\mat{\W}{ \{N - 2\} }_{:2, :4} = \begin{bmatrix}
		1 & 0 & 1 & 0 \\
		0 & 1 & 0 & 1
	\end{bmatrix} 
	\quad , \quad
	\mat{\W'}{\{N - 2\}}_{:2, :4} = \begin{bmatrix}
		1 & 0 & 0 & 1 \\
		1 & 0 & 0 & 1
	\end{bmatrix} 
	\text{\,,}
\]
and all other entries of $\mat{\W}{ \{N - 2 \} }$ and $\mat{\W'}{ \{N - 2\} }$ hold zero.
This means that $( \rank \mat{ \W }{\nu} )_{\nu \in \htmodetree \setminus \{ [N] \}} \leq  ( \rank \mat{\W'}{\nu})_{\nu \in \htmodetree \setminus \{ [N] \}}$ does not hold.
On the other hand, for $\{N - 1\}$ we have that $\rank \mat{\W}{\{N - 1\}} = 1$ while $\rank \mat{\W'}{\{N - 1\}} = 2$, because:
\[
\mat{\W}{ \{N - 1\}}_{:2, :4} = \begin{bmatrix}
	1 & 0 & 0 & 1 \\
	1 & 0 & 0 & 1
\end{bmatrix} 
\quad , \quad
\mat{\W'}{\{ N - 1\}}_{:2, :4} = \begin{bmatrix}
	1 & 0 & 1 & 0 \\
	0 & 1 & 0 & 1
\end{bmatrix} 
\text{\,,}
\]
and the remaining entries of $\mat{\W}{\{ N - 1\}}$ and $\mat{\W'}{\{ N - 1 \}}$ hold zero.
This implies that $( \rank \mat{ \W' }{\nu} )_{\nu \in \htmodetree \setminus \{ [N] \}} \leq  ( \rank \mat{\W}{\nu} )_{\nu \in \htmodetree \setminus \{ [N] \}}$ does not hold, and so the hierarchical tensor ranks of $\W$ and $\W'$ are incomparable, \ie~neither is smaller than or equal to the other.

It remains to show that there exists no $( R''_\nu )_{ \nu \in \htmodetree \setminus \{ [N] \} } \in \RR_\htmodetree \setminus \left \{ ( \rank \mat{\W}{\nu} )_{\nu \in \htmodetree \setminus \{ [N] \}},  ( \rank \mat{\W'}{\nu} )_{\nu \in \htmodetree \setminus \{ [N] \}} \right \}$ satisfying $( R''_\nu )_{ \nu \in \htmodetree \setminus \{ [N] \} } \leq ( \rank \mat{ \W }{\nu} )_{\nu \in \htmodetree \setminus \{ [N] \}}$ or $( R''_\nu )_{ \nu \in \htmodetree \setminus \{ [N] \} } \leq ( \rank \mat{ \W' }{\nu} )_{\nu \in \htmodetree \setminus \{ [N] \}}$.
Assume by way of contradiction that there exists such $( R''_\nu )_{ \nu \in \htmodetree \setminus \{ [N] \} }$, and let $\W'' \in \R^{D_1, \ldots, D_N}$ be a solution of this hierarchical tensor rank.
We now prove that $( R''_\nu )_{ \nu \in \htmodetree \setminus \{ [N] \} } \leq ( \rank \mat{ \W }{\nu} )_{\nu \in \htmodetree \setminus \{ [N] \}}$ entails a contradiction.
Since $( R''_\nu )_{ \nu \in \htmodetree \setminus \{ [N] \} }$ is not equal to the hierarchical tensor rank of $\W$, there exists $\nu \in \htmodetree \setminus \{[N]\}$ for which $\rank \mat{\W''}{\nu} = R''_\nu < \rank \mat{\W}{\nu}$.
Let us examine the possible cases:
\begin{itemize}
	\item If $\nu$ does not contain $N - 2, N - 1,$ and $N$, then $\rank \mat{\W}{\nu} = 1$ as all rows but the first of this matricization are zero.
	In this case $\mat{\W''}{\nu}  = R''_\nu = 0$, implying that $\W''$ is the zero tensor, in contradiction to it fitting the (non-zero) observed entries from \eq~\eqref{eq:ht_multiple_minima_observations}.
	
	\item If $\nu$ contains $N$ but not $N-2$ and $N-1$, then $\rank \mat{\W}{\nu} = 2$ since:
	\[
	\mat{\W}{\nu}_{:2, :4} = \begin{bmatrix}
		1 & 1 & 0 & 0 \\
		0 & 0 & 1 & 1
	\end{bmatrix} 
	\text{\,,}
	\]
	and all other entries of $\mat{\W}{\nu}$ hold zero.
	In this case $\mat{\W''}{\nu}  = R''_\nu < 2$.
	However, the fact that $\W''$ fits the observed entries from \eq~\eqref{eq:ht_multiple_minima_observations} leads to a contradiction, as $\mat{\W''}{\nu}$ must contain at least two linearly independent columns.
	To see it is so, notice that:
	\[
	\mat{\W^*}{\nu}_{:2, :4} = \begin{bmatrix}
		1 & ? & ? & 0 \\
		0 & ? & ? & 1
	\end{bmatrix} 
	\text{\,,}
	\]
	where recall that $?$ stands for an unobserved entry.
	
	\item If $\nu$ contains $N - 1$ but not $N - 2$ and $N$, then $\rank \mat{\W}{\nu} = 1$ since:
	\[
	\mat{\W}{\nu}_{:2, :4} = \begin{bmatrix}
		1 & 0 & 0 & 1 \\
		1 & 0 & 0 & 1
	\end{bmatrix} 
	\text{\,,}
	\]
	and all other entries of $\mat{\W}{\nu}$ hold zero.
	In this case $\mat{\W''}{\nu}  = R''_\nu = 0$, which means that $\W''$ is the zero tensor, in contradiction to it fitting the (non-zero) observed entries from \eq~\eqref{eq:ht_multiple_minima_observations}.
	
	\item If $\nu$ contains $N - 2$ but not $N - 1$ and $N$, then $\rank \mat{\W}{\nu} = 2$ since:
	\[
	\mat{\W}{\nu}_{:2, :4} = \begin{bmatrix}
		1 & 0 & 1 & 0 \\
		0 & 1 & 0 & 1
	\end{bmatrix} 
	\text{\,,} 
	\]
	and all other entries of $\mat{\W}{\nu}$ hold zero.
	In this case $\mat{\W''}{\nu}  = R''_\nu  < 2$.
	Noticing that $\mat{\W''}{\{N - 2\}}_{:2, :4} = \mat{\W''}{\nu}_{:2, :4}$, and that entries of $\mat{\W''}{\{N - 2\}}$ outside its top $2$-by-$4$ submatrix hold zero, we get that $\mat{\W''}{\{N - 2\}} = \mat{\W''}{\nu} < 2$.
	Furthermore, from the assumption that $( R''_\nu )_{ \nu \in \htmodetree \setminus \{ [N] \} } \leq ( \rank \mat{ \W }{\nu} )_{\nu \in \htmodetree \setminus \{ [N] \}}$ and the previous three cases, we know that $R''_{\{n\}} = \mat{\W}{\{n\}} = 1$ for all $n \in [N - 3]$, $R''_{\{N\}} = \mat{\W}{\{N\}} = 2$, and $R''_{\{N - 1\}} = \mat{\W}{\{N - 1\}} = 1$.
	Any tensor $\V \in \R^{D_1, \ldots, D_N}$ that satisfies $\rank \mat{\V}{\{n\}} \leq R_{\{n\}} \in \N$ for all $n \in [N]$ can be represented as:
	\[
	\V = \sum\nolimits_{r_1 = 1}^{R_{\{1\}}} \cdots \sum\nolimits_{r_N = 1}^{R_{\{N\}}} \mathcal{C}_{r_1, \ldots, r_N} \tenp_{n = 1}^N \Ubf^{(n)}_{:, r_n}
	\text{\,,}
	\]
	where $\mathcal{C} \in \R^{R_{\{1\}}, \ldots, R_{\{N\}}}$ and $\big ( \Ubf^{(n)} \in \R^{D_n, R_{\{n\}}} \big )_{n = 1}^N$ (see, \eg,~Section 4 in~\citet{kolda2009tensor}).
	Thus, there exist $c_1, c_2 \in \R$, $\big ( \Ubf^{(n)} \in \R^{D_n, 1} \big )_{n = 1}^{N - 1}$, and $\Ubf^{(N)} \in \R^{D_N, 2}$ such that:
	\[
	\W'' = c_1 \cdot \bigl ( \tenp_{n = 1}^{N - 1} \Ubf^{(n)}_{:, 1} \bigr ) \tenp \Ubf^{(N)}_{:, 1} + c_2 \cdot \bigl ( \tenp_{n = 1}^{N - 1} \Ubf^{(n)}_{:, 1} \bigr ) \tenp \Ubf^{(N)}_{:, 2} 
	\text{\,.}
	\]
	By multilinearity of the tensor product, we may write: $\W'' = \bigl ( \tenp_{n = 1}^{N - 1} \Ubf^{(n)}_{:, 1} \bigr ) \tenp \big ( c_1 \cdot \Ubf^{(N)}_{:, 1} + c_2 \cdot \Ubf^{(N)}_{:2} \big )$, and so $\W''$ has tensor rank one (it can be represented as a single non-zero tensor product between vectors).
	Since the tensor rank of a given tensor upper bounds the ranks of its matricizations (Remark 6.21 in~\citet{hackbusch2012tensor}), $R''_{\{ n \}} = \rank \mat{ \W'' }{\{ n \}} = 1$ for all $n \in [N]$ (the matrix ranks of these matricizations cannot be zero as $\W''$ is not the zero tensor).
	Hence, we have arrived at a contradiction~---~$2 = R''_{ \{N\}} \leq 1$.
	
	\item Contradictions in the remaining cases, where $\nu$ contains $N - 2, N - 1,$ and $N$, or any two of them, readily follow from the previous cases due to the fact that $\mat{\V}{\nu} = \mat{\V}{[N] \setminus \nu}^\top$ for any tensor $\V \in \R^{D_1, \ldots, D_N}$, and that the matrix rank of a matrix is equal to the matrix rank of its transpose.
	In particular, for any such $\nu$, it holds that $\rank \mat{ \W'' }{\nu} = \rank \mat{\W''}{[N] \setminus \nu}$ and $\rank \mat{\W}{\nu} = \rank \mat{\W}{[N] \setminus \nu }$.
	Therefore, if $\rank \mat{\W''}{\nu} = R''_\nu < \rank \mat{\W}{\nu}$, then $\rank \mat{\W''}{[N] \setminus \nu} < \rank \mat{\W}{[N] \setminus \nu}$.
	Since $\nu$ contains $N - 2, N - 1,$ and $N$, or any two of them, its complement $[N] \setminus \nu$ contains none or just one of them.
	Each of these scenarios was already covered in previous cases, which imply that $\rank \mat{\W''}{[N] \setminus \nu} < \rank \mat{\W}{[N] \setminus \nu}$ entails a contradiction.
\end{itemize}

In all cases, we have established that the existence of $( R''_\nu )_{ \nu \in \htmodetree \setminus \{ [N] \} } \in \RR_\htmodetree$, different from $( \rank \mat{\W}{\nu} )_{\nu \in \htmodetree \setminus \{ [N] \}}$ and $( \rank \mat{\W'}{\nu} )_{\nu \in \htmodetree \setminus \{ [N] \}}$, satisfying $( R''_\nu )_{ \nu \in \htmodetree \setminus \{ [N] \} } \leq ( \rank \mat{ \W }{\nu} )_{\nu \in \htmodetree \setminus \{ [N] \}}$ leads to a contradiction.
The claim for $\W'$, \ie~that there exists no such $( R''_\nu )_{ \nu \in \htmodetree \setminus \{ [N] \} }$ satisfying $( R''_\nu )_{ \nu \in \htmodetree \setminus \{ [N] \} } \leq ( \rank \mat{ \W' }{\nu} )_{\nu \in \htmodetree \setminus \{ [N] \}}$, is proven analogously.
Combined with the previous part of the proof, in which we established that neither $( \rank \mat{ \W }{\nu} )_{\nu \in \htmodetree \setminus \{ [N] \}}$ nor $( \rank \mat{ \W' }{\nu} )_{\nu \in \htmodetree \setminus \{ [N] \}}$ is smaller than or equal to the other, we conclude that $( \rank \mat{ \W }{\nu} )_{\nu \in \htmodetree \setminus \{ [N] \}}$ and $( \rank \mat{ \W' }{\nu} )_{\nu \in \htmodetree \setminus \{ [N] \}}$ are two different minimal elements of $\RR_\htmodetree$.
\qed

\subsection{Proof of Proposition~\ref{prop:htf_cnn}}
\label{app:proofs:htf_cnn}
For $l \in [L]$, the output of the $l$'th convolutional layer at index $n \in [N / P^{l - 1}]$ and channel $r \in [R_{l}]$ depends solely on inputs $\xbf^{( (n - 1) \cdot P^{l - 1} + 1 )}, \ldots ,\xbf^{(n \cdot P^{l - 1})}$.
Hence, we denote it by $conv_{l, n, r} \big ( \xbf^{( (n - 1) \cdot P^{l - 1} + 1 )}, \ldots ,\xbf^{(n \cdot P^{l - 1})} \big )$.
We may view the output linear layer as a $1 \times 1$ convolutional layer with a single output channel.
Accordingly, let $conv_{L + 1, 1, 1} \big ( \xbf^{(1)}, \ldots, \xbf^{(N)} \big ) := f_\theta \big ( \xbf^{(1)}, \ldots, \xbf^{(N)} \big )$ and $\tensorpart{L +1, 1}{1} := \tensorend$.

We show by induction over the layer $l \in [L + 1]$ that for any $n \in [N / P^{l - 1}]$ and $r \in [R_{l}]$:
\be
conv_{l, n, r} \big ( \xbf^{( (n - 1) \cdot P^{l - 1} + 1 )}, \ldots ,\xbf^{(n \cdot P^{l - 1})} \big ) = \inprod{ \tenp_{p = (n -1 ) \cdot P^{l - 1} + 1}^{n \cdot P^{l - 1}} \xbf^{(p)} }{ \tensorpart{l, n}{r} }
\text{\,.}
\label{eq:htf_cnn_inductive_claim}
\ee
For $l = 1$, let $n \in [N]$ and $r \in [R_1]$.
From the definition of $\tensorpart{1,n}{r}$ (\eq~\eqref{eq:ht_pary_end_tensor}) we can see that:
\[
conv_{1, n, r} \big ( \xbf^{(n)} \big ) = \inprod{\xbf^{ (n)} }{ \weightmat{1, n}_{:, r} } = \inprod{ \xbf^{(n)} }{ \tensorpart{1, n}{r} }
\text{\,.}
\]
Now, assuming that the inductive claim holds for $l - 1 \geq 1$, we prove that it holds for $l$.
Fix some $n \in [N / P^{l - 1}]$ and $r \in [R_{l}]$.
The $l$'th convolutional layer is applied to the output of the $l - 1$'th hidden layer, denoted $\big ( \hbf^{(l - 1, 1)}, \ldots, \hbf^{(l - 1, N / P^{l - 1})} \big ) \in \R^{R_{l - 1}} \times \cdots \times \R^{R_{l - 1}}$.
Each $\hbf^{(l-1, n)}$, for $n \in [N / P^{l - 1}]$, is a result of the product pooling operation (with window size $P$) applied to the output of the $l-1$'th convolutional layer. 
Thus:
\[
\begin{split}
conv_{l, n, r} \big ( \xbf^{( (n - 1) \cdot P^{l - 1} + 1 )}, \ldots ,\xbf^{(n \cdot P^{l - 1})} \big ) & = \sum_{r' = 1}^{R_{l - 1}} \weightmat{l, n}_{r', r} \cdot \hbf^{(l - 1, n)}_{r'} \\
& = \sum_{r' = 1}^{R_{l - 1}} \weightmat{l, n}_{r', r}  \cdot \,\,  \prod_{\mathclap{p = (n - 1) \cdot P + 1}}^{n \cdot P} \,\,\, conv_{l - 1, p. r'} \big ( \xbf^{ ((p - 1) \cdot P^{l - 2} + 1) }, \ldots, \xbf^{ (p \cdot P^{l - 2}) } \big )
\text{\,.}
\end{split}
\]
The inductive assumption for $l - 1$ then implies that:
\[
conv_{l, n, r} \big ( \xbf^{( (n - 1) \cdot P^{l - 1} + 1 )}, \ldots ,\xbf^{(n \cdot P^{l - 1})} \big ) = \sum_{r' = 1}^{R_{l - 1}} \weightmat{l, n}_{r', r} \cdot \,\, \prod_{\mathclap{p = (n - 1) \cdot P + 1}}^{n \cdot P} \,\,\, \inprod{ \tenp_{n' = (p - 1) \cdot P^{l - 2} + 1}^{p \cdot P^{l - 2}} \xbf^{ (n') } }{ \tensorpart{l - 1, p }{r'} }
\text{\,.}
\]
For any tensors $\A, \A', \B, \B'$ such that $\A$ is of the same dimensions as $\A'$ and $\B$ is of the same dimensions as $\B'$, it holds that $\inprod{ \A \tenp \B}{ \A' \tenp \B' } = \inprod{ \A }{ \A' } \cdot \inprod{ \B }{ \B' }$.
We may therefore write:
\[
\begin{split}
& conv_{l, n, r} \big ( \xbf^{( (n - 1) \cdot P^{l - 1} + 1 )}, \ldots ,\xbf^{(n \cdot P^{l - 1})} \big ) \\[1mm]
&\quad\quad = \sum\nolimits_{r' = 1}^{R_{l - 1}} \weightmat{l, n}_{r', r} \cdot \inprod{ \tenp_{p = (n - 1) \cdot P + 1}^{n \cdot P} \left ( \tenp_{n' = (p - 1) \cdot P^{l - 2} + 1}^{p \cdot P^{l - 2}} \xbf^{ (n') } \right ) }{ \tenp_{p = (n - 1) \cdot P + 1}^{n \cdot P} \tensorpart{l - 1, p }{r'} } \\[1mm]
&\quad\quad = \sum\nolimits_{r' = 1}^{R_{l - 1}} \weightmat{l, n}_{r', r} \cdot \inprod{ \tenp_{p = (n - 1) \cdot P^{l - 1} + 1}^{n \cdot P^{l - 1}} \xbf^{ (p) } }{ \tenp_{p = (n - 1) \cdot P + 1}^{n \cdot P} \tensorpart{l - 1, p }{r'} } \\[1mm]\
&\quad\quad = \inprod{ \tenp_{p = (n - 1) \cdot P^{l - 1} + 1}^{n \cdot P^{l - 1}} \xbf^{ (p) } }{ \sum\nolimits_{r' = 1}^{R_{l - 1}} \weightmat{l, n}_{r', r}  \left [ \tenp_{p = (n - 1) \cdot P + 1}^{n \cdot P} \tensorpart{l - 1, p }{r'} \right ] }
\text{\,.}
\end{split}
\]
Noticing that $\tensorpart{l,n}{r} = \sum\nolimits_{r' = 1}^{R_{l - 1}} \weightmat{l, n}_{r', r} \big [ \tenp_{p = (n - 1) \cdot P + 1}^{n \cdot P} \tensorpart{l - 1, p }{r'} \big ]$ (\eq~\eqref{eq:ht_pary_end_tensor}) establishes \eq~\eqref{eq:htf_cnn_inductive_claim}.

Applying the inductive claim for $l = L + 1, n = 1,$ and $r = 1$, while recalling that $L = \log_P N$, yields:
\[
f_\theta \big ( \xbf^{(1)}, \ldots, \xbf^{(N)} \big ) = conv_{L + 1, 1, 1} \big ( \xbf^{(1)}, \ldots, \xbf^{(N)} \big ) = \inprod{ \tenp_{n = 1}^N \xbf^{(n)} }{ \tensorpart{L+1, 1}{1} } = \inprod{ \tenp_{n = 1}^N \xbf^{(n)} }{ \tensorend }
\text{\,.}
\]
\qed

\subsection{Proof of Theorem~\ref{thm:loc_comp_norm_unbal_dyn}}
\label{app:proofs:loc_comp_norm_unbal_dyn}
Let $t \geq 0$ be a time at which $\htfcompnorm{\nu}{r} (t) := \normnoflex{\tenp_{\wbf \in \localcomp (\nu, r)} \wbf (t) } = \prod\nolimits_{ \wbf \in \localcomp (\nu, r) } \norm{ \wbf (t) } > 0$.
We differentiate $\htfcompnorm{\nu}{r} (t)^2$ with respect to time as done in the proof of \thm~\ref{thm:loc_comp_norm_bal_dyn} (\subapp~\ref{app:proofs:loc_comp_norm_bal_dyn}).
From the product rule and \lem~\ref{lem:weightvec_sq_norm_time_deriv} we get that:
\[
\frac{d}{dt} \htfcompnorm{\nu}{r} (t)^2 = 2 \htfcompnorm{\nu}{r} (t) \inprod{ - \nabla \htfendloss ( \tensorend (t)) }{ \htfcomp{\nu}{r} (t) } \sum\nolimits_{ \wbf \in \localcomp (\nu, r) } \prod\nolimits_{ \wbf' \in \localcomp (\nu, r) \setminus \{ \wbf \} } \norm{ \wbf' (t) }^2
\text{\,.}
\]
Since according to the chain rule $\frac{d}{dt} \htfcompnorm{\nu}{r} (t)^2 = 2 \htfcompnorm{\nu}{r} (t) \cdot \frac{d}{dt} \htfcompnorm{\nu}{r} (t)$, the equation above leads to:
\be
\frac{d}{dt} \htfcompnorm{\nu}{r} (t) = \inprod{ - \nabla \htfendloss ( \tensorend (t)) }{ \htfcomp{\nu}{r} (t) } \sum\nolimits_{ \wbf \in \localcomp (\nu, r) } \prod\nolimits_{ \wbf' \in \localcomp (\nu, r) \setminus \{ \wbf \} } \norm{ \wbf' (t) }^2
\text{\,.}
\label{eq:localcompnorm_time_deriv_interm_unbal}
\ee
By \lem~\ref{lem:loc_comp_sq_norm_diff_invariant}, the unbalancedness magnitude is constant through time, and so it remains equal to $\epsilon$~---~its value at initialization.
Hence, for any $\wbf \in \localcomp (\nu , r)$:
\be
\norm{ \wbf (t) }^2 \leq \min_{\wbf' \in \localcomp (\nu, r) } \norm{ \wbf' (t) }^2 + \epsilon = \Big ( \min_{\wbf' \in \localcomp (\nu, r) } \norm{ \wbf' (t) } \Big )^{L_\nu \cdot \frac{2}{L_\nu} } + \epsilon \leq \htfcompnorm{\nu}{r} (t)^{ \frac{2}{ L_\nu } } + \epsilon
\text{\,.}
\label{eq:sq_weightvec_norm_up_local_comp_norm}
\ee

If $\inprodbig{- \nabla \htfendloss ( \tensorend (t) ) }{ \htfcomp{\nu}{r} (t) } \geq 0$,
applying the inequality above to each $\norm{ \wbf' (t) }^2$ in \eq~\eqref{eq:localcompnorm_time_deriv_interm_unbal} yields the upper bound from \eq~\eqref{eq:loc_comp_norm_unbal_pos_bound}:
\[
\begin{split}
	\frac{d}{dt} \htfcompnorm{\nu}{r} (t) & \leq \inprod{ - \nabla \htfendloss ( \tensorend (t)) }{ \htfcomp{\nu}{r} (t) } \sum\nolimits_{ \wbf \in \localcomp (\nu, r) } \prod\nolimits_{ \wbf' \in \localcomp (\nu, r) \setminus \{ \wbf \} } \left ( \htfcompnorm{\nu}{r} (t)^{ \frac{2}{ L_\nu } } + \epsilon \right ) \\
	& =  \left ( \htfcompnorm{\nu}{r} (t)^{\frac{2}{ L_\nu } } + \epsilon \right )^{ L_\nu - 1 } \cdot L_\nu \inprodbig{- \nabla \htfendloss ( \tensorend (t) ) }{ \htfcomp{\nu}{r} (t) }
	\text{\,.}
\end{split}
\]
To prove the lower bound from \eq~\eqref{eq:loc_comp_norm_unbal_pos_bound}, we multiply and divide each summand on the right hand side of \eq~\eqref{eq:localcompnorm_time_deriv_interm_unbal} by the corresponding $\norm{\wbf (t) }^2$ (non-zero because $\htfcompnorm{\nu}{r} (t) > 0$), \ie:
\[
\begin{split}
	\frac{d}{dt} \htfcompnorm{\nu}{r} (t)& = \inprod{ - \nabla \htfendloss ( \tensorend (t)) }{ \htfcomp{\nu}{r} (t) } \sum\nolimits_{ \wbf \in \localcomp (\nu, r) } \norm{ \wbf (t) }^{-2} \cdot \prod\nolimits_{ \wbf' \in \localcomp (\nu, r) } \norm{ \wbf' (t) }^2 \\
	& = \inprod{ - \nabla \htfendloss ( \tensorend (t)) }{ \htfcomp{\nu}{r} (t) } \htfcompnorm{\nu}{r} (t) \cdot \sum\nolimits_{ \wbf \in \localcomp (\nu, r) } \norm{ \wbf (t) }^{-2}
	\text{\,.}
\end{split}
\]
By \eq~\eqref{eq:sq_weightvec_norm_up_local_comp_norm} we know that $\norm{ \wbf (t) }^{-2} \geq \big ( \htfcompnorm{\nu}{r} (t)^{ \frac{2}{ L_\nu} } + \epsilon \big )^{-1}$.
Thus, applying this inequality to the equation above establishes the desired lower bound.

If $\inprodbig{- \nabla \htfendloss ( \tensorend (t) ) }{ \htfcomp{\nu}{r} (t) } <  0$, the upper and lower bounds in \eq~\eqref{eq:loc_comp_norm_unbal_neg_bound} readily follow by similar derivations, where the difference in the direction of inequalities is due to the negativity of $\inprodbig{- \nabla \htfendloss ( \tensorend (t) ) }{ \htfcomp{\nu}{r} (t) }$.
\qed

\subsection{Proof of Proposition~\ref{prop:matrank_eq_seprank}}
\label{app:proofs:matrank_eq_seprank}
We partition the proof into two parts: the first shows that $\rank \mat{\tensorend}{I} \geq \seprank (f_\Theta ; I)$, and the second establishes the converse.

\paragraph*{Proof of lower bound ($\rank \mat{\tensorend}{I} \geq \seprank (f_\Theta ; I)$)}
Denote $R := \rank \mat{ \tensorend}{I}$, and assume without loss of generality that $I = [\abs{I}]$.
Since $\mat {\tensorend}{I}$ is a rank $R$ matrix, there exist $\vbf^{(1)}, \ldots, \vbf^{(R)} \in \R^{\prod_{n = 1}^{\abs{I}} D_n}$ and $\bar{\vbf}^{(1)}, \ldots, \bar{\vbf}^{(R)} \in \R^{ \prod_{n = \abs{I} + 1 }^{ N } D_n }$ such that:
\[
\mat{ \tensorend }{I} = \sum\nolimits_{r = 1}^R \vbf^{(r)} \big ( \bar{\vbf}^{(r)}\big )^\top
\text{\,.}
\]
For each $r \in [R]$, let $\V^{(r)} \in \R^{D_1, \ldots, D_{ \abs{I} } }$ be the tensor whose arrangement as a column vector is equal to $\vbf^{(r)}$, \ie~$\mat{ \V^{(r)} }{I} = \vbf^{(r)}$.
Similarly, for every $r \in [R]$ let $\widebar{\V}^{(r)} \in \R^{D_{ \abs{I} + 1}, \ldots, D_N }$ be the tensor whose arrangement as a row vector is equal to $( \bar{\vbf}^{(r)} )^\top$, \ie~$\mat{ \widebar{\V}^{(r)} }{\emptyset} = ( \bar{\vbf}^{(r)} )^\top$.
Then:
\[
\begin{split}
\mat{ \tensorend }{I} &= \sum\nolimits_{r = 1}^R \vbf^{(r)}  \big ( \bar{\vbf}^{(r)} \big )^\top \\
& = \sum\nolimits_{r = 1}^R \mat{ \V^{(r)} }{I} \kronp \mat{ \widebar{\V}^{(r)} }{\emptyset} \\
& = \sum\nolimits_{r = 1}^R \mat { \V^{(r)} \tenp \widebar{\V}^{(r)} }{I} \\
& = \mat{ \sum\nolimits_{r = 1}^R \V^{(r)} \tenp \widebar{\V}^{(r)} }{I}
\text{\,,}
\end{split}
\]
where the third equality makes use of \lem~\ref{lem:tenp_eq_kronp}, and the last equality is by linearity of the matricization operator.
Since matricizations merely reorder the entries of tensors, the equation above implies that $\tensorend = \sum_{r = 1}^R \V^{(r)} \tenp \widebar{\V}^{(r)}$.
We therefore have that:
\[
\begin{split}
f_\Theta \big ( \xbf^{(1)}, \ldots, \xbf^{(N)} \big ) & = \inprod{ \tenp_{n = 1}^N \xbf^{(n)}  }{ \tensorend } \\
& = \inprodBig{ \tenp_{n = 1}^N \xbf^{(n)}  }{ \sum\nolimits_{r = 1}^R \V^{(r)} \tenp \widebar{\V}^{(r)} } \\
& = \sum\nolimits_{r = 1}^R \inprod{ \tenp_{n = 1}^N \xbf^{(n)}  }{ \V^{(r)} \tenp \widebar{\V}^{(r)} }
\text{\,.}
\end{split}
\]
For any $\A, \A' \in \R^{D_1, \ldots, D_{ \abs{I} } }$ and $\B, \B' \in \R^{ D_{ \abs{I} + 1 } , \ldots, D_N }$ it holds that $\inprod{ \A \tenp \B}{ \A' \tenp \B' } = \inprod{ \A }{ \A' } \cdot \inprod{ \B }{ \B' }$.
Thus:
\[
f_\Theta \big ( \xbf^{(1)}, \ldots, \xbf^{(N)} \big ) = \sum\nolimits_{r = 1}^R \inprod{ \tenp_{n = 1}^N \xbf^{(n)}  }{ \V^{(r)} \tenp \widebar{\V}^{(r)} } = \sum\nolimits_{r = 1}^R \inprod{ \tenp_{n = 1}^{ \abs{I} } \xbf^{ (n) } }{ \V^{(r)} } \cdot \inprod{ \tenp_{n = \abs{I}  + 1}^{ N } \xbf^{ (n) } }{ \widebar{\V}^{(r)} }
\text{\,.}
\]
By defining $g_r : \times_{n =1}^{ \abs{I} } \R^{D_n} \to \R$ and $\bar{g}_r :\times_{n = \abs{I} + 1}^{N} \R^{D_n} \to \R$, for $r \in [R]$, as:
\[
g_r \big ( \xbf^{(1)}, \ldots, \xbf^{(\abs{I} )} \big ) = \inprod{ \tenp_{n = 1}^{ \abs{I} } \xbf^{ (n) } }{ \V^{(r)} } \quad , \quad \bar{g}_r \big ( \xbf^{ ( \abs{I} + 1 ) }, \ldots, \xbf^{(N)} \big ) = \inprod{ \tenp_{n = \abs{I}  + 1}^{ N } \xbf^{ (n) } }{ \widebar{\V}^{(r)} }
\text{\,,}
\]
we arrive at the following representation of $f_\Theta$ as a sum, where each summand is a product of two functions --- one that operates over inputs indexed by $I$ and another that operates over inptus indexed by $[N] \setminus I$:
\[
f_\Theta \big ( \xbf^{(1)}, \ldots, \xbf^{(N)} \big ) = \sum\nolimits_{r = 1}^R g_r \big ( \xbf^{(1)}, \ldots, \xbf^{ ( \abs{I} ) } \big ) \cdot \bar{g}_r \big ( \xbf^{( \abs{I} + 1 )}, \ldots, \xbf^{ ( N ) } \big )
\text{\,.}
\]
Since the separation rank of $f_\Theta$ is the minimal number of summands required to express it in such a manner, we conclude that $\rank \mat{\tensorend}{I} = R \geq \seprank (f_\Theta ; I)$.

\paragraph*{Proof of upper bound ($\rank \mat{\tensorend}{I} \leq \seprank (f_\Theta ; I)$)}
Towards proving the upper bound, we establish the following lemma.
\begin{lemma}
	\label{lem:grid_tensor_mat_rank_ub_by_sep_rank}
	Given $f : \times_{n = 1}^N \R^{D_n} \to \R$ and any $\big ( \xbf^{( 1, h_1 )} \in \R^{D_1} \big )_{h_1 = 1}^{H_1}, \ldots, \big ( \xbf^{ (N, h_N) } \in \R^{D_N} \big )_{h_N = 1}^{H_N}$, let $\W \in \R^{H_1, \ldots, H_N}$ be the tensor defined by $\W_{h_1, \ldots, h_N} := f \big ( \xbf^{ (1, h_1) }, \ldots, \xbf^{ (N, h_N) }\big )$ for all $(h_1, \ldots, h_N) \in [H_1] \times \cdots \times [H_N]$.
	Then, for any $I \subset [N]$:
	\[
	\rank \mat{\W}{I} \leq \seprank (f ; I)
	\text{\,.}
	\]
	In words, for any tensor holding the outputs of $f$ over a grid of inputs, the rank of its matricization according to $I$ is upper bounded by the separation rank of $f$ with respect to $I$.
\end{lemma}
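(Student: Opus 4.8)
The plan is to directly unpack the definition of separation rank and show that a rank-$R$ separation of $f$ induces a decomposition of $\mat{\W}{I}$ into a sum of $R$ outer products. If $\seprank(f;I) = \infty$ there is nothing to prove, so assume $R := \seprank(f;I) < \infty$, and let $g_1,\ldots,g_R : \times_{i\in I}\R^{D_i}\to\R$ and $g'_1,\ldots,g'_R : \times_{j\in[N]\setminus I}\R^{D_j}\to\R$ be functions realizing the separation, so that $f\big(\xbf^{(1)},\ldots,\xbf^{(N)}\big) = \sum_{r=1}^R g_r\big((\xbf^{(i)})_{i\in I}\big) \cdot g'_r\big((\xbf^{(j)})_{j\in[N]\setminus I}\big)$ for all inputs.

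Next I would evaluate this identity on the grid points defining $\W$. Writing $i_1 < \cdots < i_{\abs{I}}$ for the elements of $I$ and $j_1 < \cdots < j_{N-\abs{I}}$ for those of $[N]\setminus I$, for each $r\in[R]$ I define a vector $\ubf^{(r)} \in \R^{\prod_{l=1}^{\abs{I}} H_{i_l}}$ whose entry in the row position that $\mat{\cdot}{I}$ assigns to $(h_{i_1},\ldots,h_{i_{\abs{I}}})$ equals $g_r\big(\xbf^{(i_1,h_{i_1})},\ldots,\xbf^{(i_{\abs{I}},h_{i_{\abs{I}}})}\big)$, and similarly a vector $\vbf^{(r)} \in \R^{\prod_{l=1}^{N-\abs{I}}H_{j_l}}$ whose entry in the column position $\mat{\cdot}{I}$ assigns to $(h_{j_1},\ldots,h_{j_{N-\abs{I}}})$ equals $g'_r\big(\xbf^{(j_1,h_{j_1})},\ldots,\xbf^{(j_{N-\abs{I}},h_{j_{N-\abs{I}}})}\big)$. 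By \defin~\ref{def:matricization} together with the displayed separation of $f$, this yields $\mat{\W}{I} = \sum_{r=1}^R \ubf^{(r)} \big(\vbf^{(r)}\big)^\top$.

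Finally, since each $\ubf^{(r)}\big(\vbf^{(r)}\big)^\top$ has rank at most one, subadditivity of matrix rank gives $\rank \mat{\W}{I} \leq R = \seprank(f;I)$, as claimed. The computation is routine; the only point requiring care is matching the index orderings used by the matricization operator (\defin~\ref{def:matricization}) with the grouping of variables in the separation, after which the identity $\mat{\W}{I} = \sum_r \ubf^{(r)}\big(\vbf^{(r)}\big)^\top$ is immediate. I do not anticipate a genuine obstacle here; this lemma is the easy direction of \prop~\ref{prop:matrank_eq_seprank}, whereas its converse — constructing a separation of $f_\Theta$ from a low-rank matricization of $\tensorend$ via \lem~\ref{lem:tenp_eq_kronp} and then identifying $\tensorend$ as a grid tensor of $f_\Theta$ — is where the real work lies.
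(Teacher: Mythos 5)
Your proof is correct and follows essentially the same route as the paper's: use a rank-$R$ separation of $f$ to produce a sum of $R$ rank-one matrices equal to $\mat{\W}{I}$, then invoke subadditivity of rank. The paper takes a slight detour by first defining grid tensors $\V^{(r)}, \widebar{\V}^{(r)}$ for the factor functions, establishing $\W = \sum_r \V^{(r)} \tenp \widebar{\V}^{(r)}$ at the tensor level, and then matricizing via \lem~\ref{lem:tenp_eq_kronp}, whereas you work directly with the vectorized factors $\ubf^{(r)}, \vbf^{(r)}$ — a harmless streamlining, not a different argument.
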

\begin{proof}
	If $\seprank (f ; I)$ is $\infty$ or zero, \ie~$f$ cannot be represented as a finite sum of separable functions (with respect to $I$) or is identically zero, then the claim is trivial. 
	Otherwise, denote $R := \seprank (f; I)$, and assume without loss of generality that $I = [\abs{I}]$.
	Let $g_1, \ldots, g_R : \times_{n =1}^{ \abs{I} } \R^{D_n} \to \R$ and $\bar{g}_1, \ldots, \bar{g}_R :\times_{n = \abs{I} + 1}^{N} \R^{D_n} \to \R$ such that:
	\be
	f \big ( \xbf^{(1)}, \ldots, \xbf^{(N)} \big ) = \sum\nolimits_{r = 1}^R g_r \big ( \xbf^{(1)}, \ldots, \xbf^{ ( \abs{I} ) } \big ) \cdot \bar{g}_r \big ( \xbf^{( \abs{I} + 1 )}, \ldots, \xbf^{ ( N ) } \big )
	\text{\,.}
	\label{eq:grid_tensor_mat_ub_by_sep_rank:sep_rank}
	\ee
	We define $\big ( \V^{(r)} \in \R^{D_1, \ldots, D_{\abs{I}} } \big )_{r = 1}^R$ to be the tensors holding the outputs of $( g_r )_{r = 1}^R$ over the grid of inputs
	\[
	\big ( \xbf^{( 1, h_1 )} \big )_{h_1 = 1}^{H_1}, \ldots, \big ( \xbf^{ (\abs{I} , h_{ \abs{I} } ) } \big )_{h_{ \abs{I} } = 1}^{ H_{ \abs{I} } }
	\text{\,,}
	\]
	\ie~for all $h_1, \ldots, h_{\abs{I}} \in [H_1] \times \cdots \times [ H_{ \abs{I} } ]$ and $r \in [R]$ it holds that $\V^{(r)}_{ h_1, \ldots, h_{\abs{I}} } = g_r \big ( \xbf^{(1, h_1)}, \ldots, \xbf^{(\abs{I}, h_{ \abs{I} } )} \big )$.
	Similarly, we let $\big ( \widebar{\V}^{(r)} \in \R^{D_{\abs{I} + 1}, \ldots, D_{N} } \big )_{r = 1}^R$ be the tensors holding the outputs of $( \bar{g}_r )_{r = 1}^R$ over their respective grid of inputs, \ie~for all $h_{\abs{I} + 1}, \ldots, h_N \in [H_{ \abs{I} + 1}] \times \cdots \times [H_N]$ and $r \in [R]$ it holds that $\widebar{\V}^{(r)}_{ h_{\abs{I} + 1}, \ldots, h_N  } = \bar{g}_r \big ( \xbf^{ ( \abs{I} + 1, h_{\abs{I} + 1} )}, \ldots, \xbf^{ ( N, h_N ) } \big )$.
	
	By \eq~\eqref{eq:grid_tensor_mat_ub_by_sep_rank:sep_rank} and the definitions of $\W, ( \V^{(r)} )_{r = 1}^R$, and $( \widebar{\V}^{(r)} )_{r = 1}^R$, we have that for any $h_1, \ldots, h_N \in [H_1] \times \cdots \times [H_N]$:
	\[
	\begin{split}
		\W_{h_1 ,\ldots, h_N} & = f \big ( \xbf^{ (1, h_1) }, \ldots, \xbf^{ (N, h_N) }\big ) \\
		& = \sum\nolimits_{r = 1}^R g_r \big ( \xbf^{(1, h_1)}, \ldots, \xbf^{ ( \abs{I}, h_{ \abs{I} } ) } \big ) \cdot \bar{g}_r \big ( \xbf^{( \abs{I} + 1, h_{\abs{I} + 1} )}, \ldots, \xbf^{ ( N, h_N ) } \big )\\
		& =  \sum\nolimits_{r = 1}^R \V^{(r)}_{h_1, \ldots, h_{ \abs{I} }} \cdot \widebar{\V}^{(r)}_{ h_{ \abs{I} +  1}, \ldots, h_N }
		\text{\,,}
	\end{split}
	\]
	which means that $\W = \sum_{r = 1}^R \V^{(r)} \tenp \widebar{\V}^{(r)}$.
	From the linearity of the matricization operator and \lem~\ref{lem:tenp_eq_kronp} we then get that $\mat{\W}{I} = \sum_{ r = 1}^R \mat{\V^{(r)}}{I} \kronp \mat{ \widebar{\V}^{(r)} }{\emptyset}$.
	Since $\mat{\V^{(r)}}{I}$ is a column vector and $\mat{ \widebar{\V}^{(r)} }{\emptyset}$ is a row vector for all $r \in [R]$, we have arrived at a representation of $\mat{\W}{I}$ as a sum of $R$ tensor products between two vectors.
	A tensor product of two vectors is a rank one matrix, and so, due to the sub-additivity of rank we conclude: $\rank \mat{\W}{I} \leq R = \seprank (f; I)$.
\end{proof}

\medskip 

Now, consider the grid of inputs defined by the standard bases of $\R^{D_1}, \ldots, R^{D_N}$, \ie~by:
\[
\big ( \ebf^{( 1, d_1 )} \in \R^{D_1} \big )_{d_1 = 1}^{D_1}, \ldots, \big ( \ebf^{ (N, d_N) } \in \R^{D_N} \big )_{d_N = 1}^{D_N} 
\text{\,,}
\]
where $\ebf^{(n, d_n)}$ is the vector holding one at its $d_n$'th entry and zero elsewhere for $n \in [N]$ and $d_n \in [D_n]$.
With Lemma~\ref{lem:grid_tensor_mat_rank_ub_by_sep_rank} in hand, $\rank \mat{\tensorend}{I} \leq \seprank (f_\Theta ; I)$ follows by showing that $\tensorend$ is the tensor holding the outputs of $f_\Theta$ over this grid of inputs.
Indeed, for all $d_1, \ldots, d_N \in [D_1] \times \cdots \times [D_N]$:
\[
f_\Theta \big ( \ebf^{(1, d_1)}, \ldots, \ebf^{(N, d_N)} \big ) = \inprodbig{ \tenp_{n = 1}^N \ebf^{(n, d_n)}  }{ \tensorend } = (\tensorend)_{d_1, \ldots, d_N}
\text{\,.}
\]
\qed

\end{document}